\documentclass{article}

\def\anonymous{0}

\newif\ifshort

\newif\iflong
\ifshort \longfalse \else \longtrue \fi

\newif\ifTPDP
\newif\ifnotTPDP
\ifTPDP \notTPDPfalse \else \notTPDPtrue \fi

\ifshort
    \ifnotTPDP
        \PassOptionsToPackage{numbers, compress}{natbib}
        \usepackage[final]{neurips_2021}

        \usepackage[utf8]{inputenc} %
        \usepackage[T1]{fontenc}    %
        \usepackage{hyperref}       %
        \usepackage{url}            %
        \usepackage{booktabs}       %
        \usepackage{amsfonts}       %
        \usepackage{nicefrac}       %
        \usepackage{microtype}      %
        \usepackage{xcolor}         %
    \fi
\fi

\usepackage{preamble}

\DeclareMathOperator*{\E}{\mathbb{E}}
\DeclareMathOperator*{\tr}{\mathrm{tr}}
\newcommand{\mc}{\mathcal}
\newcommand{\defeq}{\stackrel{{\mbox{\tiny def}}}{=}}
\newcommand{\tx}{\tilde{x}}
\newcommand{\ty}{\tilde{y}}
\newcommand{\bx}{\bar{x}}

\newcommand{\id}{\mathbb{I}}

\iflong
\makeatletter
\newcommand*{\algrule}[1][\algorithmicindent]{%
  \makebox[#1][l]{%
    \hspace*{.2em}%
  }
}

\newcount\ALG@printindent@tempcnta
\def\ALG@printindent{%
    \ifnum \theALG@nested>0%
    \ifx\ALG@text\ALG@x@notext%
    \else 
    \unskip
    \ALG@printindent@tempcnta=1
    \loop
    \algrule[\csname ALG@ind@\the\ALG@printindent@tempcnta\endcsname]%
    \advance \ALG@printindent@tempcnta 1
    \ifnum \ALG@printindent@tempcnta<\numexpr\theALG@nested+1\relax
    \repeat
    \fi
    \fi
}
\patchcmd{\ALG@doentity}{\noindent\hskip\ALG@tlm}{\ALG@printindent}{}{\errmessage{failed to patch}}
\patchcmd{\ALG@doentity}{\item[]\nointerlineskip}{}{}{} %
\makeatother

\newcommand\Algphase[1]{%
\vspace{0.1in}
\Statex\hspace*{-\algorithmicindent}\textbf{#1}%
\vspace{0.01in}
}
\fi

\title{Covariance-Aware Private Mean Estimation \\ Without Private Covariance Estimation\ifTPDP\footnote{ArXiv link: forthcoming.}\fi}
\ifnum\anonymous=1
\author{Anonymous Author(s)}
\else
    \ifTPDP
        \author{Gavin Brown\thanks{Computer Science Department, Boston University}
         \and Marco Gaboardi\footnotemark[2] \and Adam Smith\footnotemark[2] \and Jonathan Ullman\thanks{Khoury College of Computer Sciences, Northeastern University}\and Lydia Zakynthinou\footnotemark[3]}
    \else
        \iflong
        \author{Gavin Brown\thanks{Department of Computer Science, Boston University. \url{grbrown@bu.edu}}{\hspace{1.0cm}} \and Marco Gaboardi\thanks{Department of Computer Science, Boston University.  \url{gaboardi@bu.edu}}{\hspace{1.0cm}} \and Adam Smith\thanks{Department of Computer Science, Boston University.  \url{ads22@bu.edu}} \and Jonathan Ullman\thanks{Khoury College of Computer Sciences, Northeastern University.  \url{jullman@ccs.neu.edu}}{\hspace{1.0cm}} \and Lydia Zakynthinou\thanks{Khoury College of Computer Sciences, Northeastern University.  \url{zakynthinou.l@northeastern.edu}}}
        \else
        \author{Gavin Brown \\ 
        Department of Computer Science\\
        Boston University\\
        \texttt{grbrown@bu.edu} \\
        \And
        Marco Gaboardi \\ 
        Department of Computer Science\\
        Boston University\\
        \texttt{gaboardi@bu.edu} \\
        \And
        Adam Smith \\
        Department of Computer Science\\
        Boston University\\
        \texttt{ads22@bu.edu} \\
        \And
        Jonathan Ullman \\
        Khoury College of Computer Sciences \\
        Northeastern University \\
        \texttt{jullman@ccs.neu.edu} \\
        \And
        Lydia Zakynthinou \\
        Khoury College of Computer Sciences \\
        Northeastern University \\
        \texttt{zakynthinou.l@northeastern.edu} \\
        }
        \fi
    \fi
\fi

\ifTPDP
    \date{\vspace{-5ex}}
\else
    \ifnum\anonymous=1
        \date{\vspace{-5ex}}
    \else
        \date{\today}
    \fi
\fi

\begin{document}

\maketitle

\ifnotTPDP
\begin{abstract}
    We present two sample-efficient differentially private mean estimators for $d$-dimensional (sub)Gaussian distributions with unknown covariance.  Informally, given $n \gtrsim d/\alpha^2$ samples from such a distribution with mean $\mu$ and covariance $\Sigma$, our estimators output $\tilde\mu$ such that $\| \tilde\mu - \mu \|_{\Sigma} \leq \alpha$, where $\| \cdot \|_{\Sigma}$ is the \emph{Mahalanobis distance}.  All previous estimators with the same guarantee either require strong a priori bounds on the covariance matrix or require $\Omega(d^{3/2})$ samples.  
    
    Each of our estimators is based on a simple, general approach to designing differentially private mechanisms, but with novel technical steps to make the estimator private and sample-efficient.  Our first estimator samples a point with approximately maximum Tukey depth using the exponential mechanism, but restricted to the set of points of large Tukey depth.  
    Its accuracy guarantees hold even for data sets that have a small amount of adversarial corruption.
    Proving that this mechanism is private requires a novel analysis. Our second estimator perturbs the empirical mean of the data set with noise calibrated to the empirical covariance, without releasing the covariance itself. Its sample complexity guarantees hold more generally for subgaussian distributions, albeit with a slightly worse dependence on the privacy parameter.  For both estimators, careful preprocessing of the data is required to satisfy differential privacy.

\end{abstract}
\fi

\iflong
    \vfill\newpage
    \tableofcontents
    \vfill\newpage
\fi

\ifnotTPDP
\section{Introduction}

Although the goal of statistics and machine learning is to infer properties of a population, there is a growing awareness that many statistical estimators and trained models reveal a concerning amount of information about their data set, which leads to significant concerns about the \emph{privacy} of the individuals who have contributed sensitive information to that data set.  These privacy violations have been demonstrated repeatedly via \emph{reconstruction attacks}~\cite{DinurN03, DworkMT07, DworkY08, KasiviswanathanRSU10}, \emph{membership-inference attacks}~\cite{Homer+08, SankararamanOJH09, BunUV14, DworkSSUV15, ShokriSSS16, YeomGFJ18}, and instances of unwanted \emph{memorization of training data}~\cite{CarliniLEKS19,Carlini+20,Feldman20,BrownBFST21}.  In order to realize the benefits of analyzing sensitive data sets, it is crucial to develop statistical estimators and machine learning algorithms that make accurate inferences about the population but also protect the privacy of the individuals who contribute data.

In this work we study statistical estimators that satisfy a condition called \emph{differential privacy}~\cite{DworkMNS06}, which has become the standard criterion for individual privacy in statistics and machine learning.  Informally, a differentially private algorithm guarantees that no attacker, regardless of their background knowledge or resources, can infer much more about any individual than they could have learned had that individual never contributed to the data set~\cite{KasiviswanathanS14}.  A long body of work shows that differential privacy is compatible with a wide range of tasks in statistics and machine learning, and it is now seeing deployment at companies like Google~\cite{ErlingssonPK14,Bittau+17, WilsonZLDSG20}, Apple~\cite{Apple17}, Facebook~\cite{yousefpour2021opacus} and LinkedIn~\cite{Rogers+20}, as well as statistical agencies like the U.S.\ Census Bureau~\cite{Abowd18,Haney+17}.  

\mypar{Background: Differentially Private Mean Estimation.}
\fi
We revisit differentially private estimators for one of the most fundamental tasks in all of statistics and machine learning---given  $x_1,\dots,x_n \in \R^d$ sampled i.i.d.\ from a distribution with mean $\mu \in \R^d$ and covariance $\Sigma \in \R^{d \times d}$, estimate the mean $\mu$.  Mean estimation is not only an essential summary statistic in its own right, but also a building block for more sophisticated tasks like regression and stochastic optimization.

Without privacy constraints, the natural solution is to output the empirical mean $\mu_{x} = \frac1n \sum_i x_i$.  The natural way to state the sample-complexity guarantee of the empirical mean is
\begin{equation*}
    n \gtrsim \frac{d}{\alpha^2} \Longrightarrow \| \mu_{x} - \mu \|_{\Sigma} \leq \alpha\, ,
\end{equation*}
where $\gtrsim$ hides a universal multiplicative constant, and the accuracy guarantee holds with large constant probability (say, 0.99).
Importantly, $\| \mu_x - \mu \|_{\Sigma} = \| \Sigma^{-1/2}(\mu_x - \mu)\|_2$ is the error in \emph{Mahalanobis distance} scaled to the covariance $\Sigma$.  Bounding the error in Mahalanobis distance implies that in every direction $v$, the squared error is proportional to the variance $v^T \Sigma v$ in that direction.  The Mahalanobis distance is the natural way to measure the error for mean estimation, since it tightly captures the uncertainty about the true mean and is preserved under affine transformations.

Unfortunately, in high dimensions, releasing the empirical mean leads to concrete privacy breaches~\cite{DworkSSUV15, KamathLSU19}.  A natural question is thus: can we design a differentially private mean estimator that performs nearly as well as the empirical mean?

Without making additional assumptions, the answer turns out to be no---every differentially private estimator incurs a large overhead in sample complexity compared to the empirical mean~\cite{KamathSU20colt}.  However, it is known that if we further assume that the distribution satisfies some additional concentration properties, we can do much better~\cite{Smith11,KarwaV18,KamathLSU19,KamathSU20colt}.  In this work, we focus on one class of well-concentrated distributions, those that are \emph{Gaussian} (or, in some of our results, \emph{subgaussian}).  Although assuming Gaussian data is restrictive, it is a natural starting point for understanding the complexity of estimation when the distribution is not pathological.  Moreover, even in the Gaussian case, one cannot obtain error comparable to that of the empirical mean unless $n \gtrsim d$~\cite{BunUV14, DworkSSUV15, KamathLSU19}, so we will also focus on the case where the sample size is at least as large as the dimension.

For Gaussian data, if the analyst has prior information about the covariance matrix in the form of a matrix $A$ and bound $\kappa\ge 1$ such that $A \preceq \Sigma \preceq \kappa A$,\ifnotTPDP\footnote{Given two covariance matrices $A,B \in \R^{d \times d}$, the notation $A \preceq B$ indicates that in every direction $v \in \R^d$ the variance $v^T B v$ is at least as large as %
    $v^T A v$.  More generally, $A \preceq B$ iff $B-A$ is positive semidefinite.}\fi
~then there is a folklore private estimator $\cA(x)$, based on a line of work initiated by Karwa and Vadhan~\cite{KarwaV18, KamathLSU19, BiswasDKU20, AdenAliAK21}, that finds an approximate range for the data%
, truncates the points to within that range, and runs the Gaussian mechanism on the resulting empirical mean%
.
This estimator achieves
\begin{equation}
     n \gtrsim \frac{d}{\alpha^2} + \frac{d\sqrt{\kappa}}{\alpha \eps} \Longrightarrow \| \cA(x) - \mu \|_{\Sigma} \leq \alpha, \label{eq:intro_edit_1}
\end{equation}
\ifTPDP
    where for simplicity we focus only on the $\eps$ parameter, although our results and many of those we discuss require relaxations such as approximate~\cite{DworkKMMN06} or concentrated~\cite{DworkR16,BunS16} differential privacy.
\else
    where $\eps$ is the \emph{privacy parameter} controlling the level of privacy (see Definition~\ref{def:dp}), with stronger privacy as $\eps \to 0$.\footnote{To simplify the discussion, we focus only on the $\eps$ parameter, although our results and many of those we discuss require relaxations of differential privacy such as approximate~\cite{DworkKMMN06} or concentrated~\cite{DworkR16,BunS16} differential privacy, which have different parameterizations.}
\fi
Here $\gtrsim$ hides a universal multiplicative constant and polylogarithmic factors of $d,\frac{1}{\delta},\frac{1}{\eps}$ and $ \frac{1}{\alpha}$; the accuracy guarantee holds with large constant probability.
We can interpret this result as showing that the additional cost of privacy is small provided that the user has a strong a priori bound on the covariance so that $\kappa$ is small (e.g.,~$\kappa$ is a constant), and also that the privacy guarantee is not too strong (e.g.,~$\eps \geq \alpha$).  In particular, setting $\kappa=1$ corresponds to the \textit{known-covariance} setting, where the guarantee in \eqref{eq:intro_edit_1} is known to be minimax optimal up to polylogarithmic factors~\cite{DworkSSUV15, KamathLSU19} among all differentially private estimators.

However, the sample complexity in \eqref{eq:intro_edit_1} grows asymptotically %
with $\sqrt{\kappa}$, a large price to pay for the user's uncertainty.
Intuitively, this degradation arises because the algorithm perturbs the empirical mean $\mu_x$ with noise from a \emph{spherical} Gaussian distribution, whose magnitude must be proportional to the largest variance in any direction, so the noise is unnecessarily large in the directions with small variance.
In contrast, when the user is very uncertain about the covariance, there are estimators with a weaker dependence on $\kappa$ but a \emph{superlinear} dependence on the dimension.
In particular, there is an estimator~\cite[Theorem 4.3]{KamathLSU19} with an error guarantee of the form
\begin{equation}
    n \gtrsim \frac{d}{\alpha^2} + \frac{d}{\alpha \eps} + \frac{d^{3/2} \log^{1/2} \kappa}{\eps} \Longrightarrow \| \cA(x) - \mu \|_{\Sigma} \leq \alpha.
\end{equation}
Here $\gtrsim$ hides a universal multiplicative constant and logarithmic factors of $d,\frac{1}{\eps},\frac{1}{\alpha},\log\kappa$, and $\rho$, where $\|\mu\|_2\leq \rho$ is a priori knowledge; the accuracy guarantee holds with large constant probability.
Without \emph{any} prior information about the covariance, the best known approach is to estimate the mean by learning the entire distribution---both mean and covariance---which is the more difficult task considered in~\cite[Theorem 4.6]{AdenAliAK21}.
Doing so incurs an even worse dependence on the dimension:
\begin{equation}
    n \gtrsim \frac{d^2}{\alpha^2} + \frac{d^2}{\alpha \eps} \Longrightarrow \| \cA(x) - \mu \|_{\Sigma} \leq \alpha.
\end{equation}
Here $\gtrsim$ hides a universal multiplicative constant and logarithmic factors of $\frac{1}{\delta}$ and $\frac{1}{\alpha}$; 
the accuracy guarantee holds with large constant probability.

\mypar{The Covariance-Estimation Bottleneck.}  The bottleneck in the algorithms above is privately obtaining a good spectral approximation to the covariance,  i.e., a matrix $A$ such that $A \preceq \Sigma \preceq 2 A$.
With such an estimate, one can apply the known-covariance approach in \eqref{eq:intro_edit_1}. 
Without privacy constraints, the empirical covariance will have this spectral-approximation property when the sample size is $n \gtrsim d$.  
However, all known private covariance estimators require $n=\Omega(d^{3/2})$ samples, 
and there is evidence that this is an inherent limitation, as $\Omega(d^{3/2})$ samples are necessary for solving this task for a worst-case data distribution~\cite{DworkTTZ14}.\footnote{Subsequent to the publication of the conference version of this paper,~\citet{KamathMS22} proved that $n=\Omega(d^{3/2})$ samples are in fact necessary for private covariance estimation, even for Gaussian data.}

\mypar{Our Work: Sample-Efficient Private Mean Estimation.} 
We circumvent this apparent difficulty of covariance estimation 
by designing an algorithm that adapts the noise it adds to the distribution's covariance without actually providing an explicit covariance estimate, nearly matching the optimal sample complexity \eqref{eq:intro_edit_1} for the known-covariance setting.  
\begin{thm}[Informal] \label{thm:main-intro-g}
    For $\alpha \leq 1$, there is an $(\eps,\delta)$-differentially private estimator $\cA(\cdot)$ such that if $x = (x_1,\dots,x_n)$ are sampled from $\cN(\mu,\Sigma)$ for unknown $\mu$ and $\Sigma$ of full rank,
    $$
       n \gtrsim \frac{d}{\alpha^2} + \frac{d}{\alpha \eps} + \frac{\log(1/\delta)}{\eps} \Longrightarrow \| \cA(x) - \mu \|_{\Sigma} \leq \alpha.
    $$
    The above guarantee holds with high probability over the sample $x$ and the randomness of $\cA$.
    Here $\gtrsim$ hides a universal multiplicative constant and an additive $\frac{\log 1/\alpha\eps }{\alpha \eps}$ term.
\end{thm}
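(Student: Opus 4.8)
The plan is to prove this via the Tukey-depth estimator sketched in the introduction: first privately preprocess the data into a bounded, roughly isotropic configuration, and then run the exponential mechanism with the Tukey-depth score, but restricted to the super-level set of points of \emph{large} Tukey depth. For the preprocessing step I would invoke a coarse $(\eps/3,\delta/3)$-differentially private subroutine (a private location estimate together with a crude private spectral approximation $\hat A$ of $\Sigma$, in the spirit of the Karwa--Vadhan line of work) to produce an affine map $y\mapsto \hat A^{-1/2}(y-\hat\mu_0)$ that, with high probability, sends the sample to a data set whose mean has $O(1)$ Mahalanobis norm and whose covariance is within a $\mathrm{poly}(d)$ factor of the identity, and then truncate the transformed points to a ball of radius $\mathrm{poly}(d)$. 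The truncation is what makes the downstream mechanism's privacy analysis go through, since it prevents a single far-away point from distorting the depth geometry; and because both Tukey depth and $\|\cdot\|_\Sigma$ are affine-invariant, it suffices to prove the remaining claims on the transformed instance. This step is responsible for the $\log(1/\delta)/\eps$ term (plus the unavoidable $n\gtrsim d$).

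Next I would control the geometry of the high-depth region. Fix a threshold $c=(\tfrac12-\gamma)n$ for a small absolute constant $\gamma$ and let $\mathcal{T}_k=\{y:\mathrm{TD}(y;x)\ge k\}$. Using (i) convexity of the Tukey-depth super-level sets, (ii) uniform convergence of empirical to population Tukey depth over halfspaces, which holds up to additive error $O(\alpha n)$ once $n\gtrsim d/\alpha^2$, and (iii) the quantitative relation that a point at small Mahalanobis distance $r$ from $\mu$ has population depth $(\tfrac12-\Theta(r))\,n$, I would show that with high probability $\mathcal{T}_c$ is nonempty, is contained in a Mahalanobis ball of constant radius around $\mu$, has volume comparable to that of nearby super-level sets (concretely $\mathrm{Vol}(\mathcal{T}_{k-1})\le(1+O(\tfrac{1}{\alpha n}))^{d}\,\mathrm{Vol}(\mathcal{T}_{k})$ in the relevant range of $k$), and that the set of points within Mahalanobis distance $\alpha$ of $\mu$ has volume at least $\Omega(\alpha)^{d}\cdot\mathrm{Vol}(\mathcal{T}_c)$ while having depth at least $\max_z\mathrm{TD}(z;x)-O(\alpha n)$. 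The point of restricting to $\mathcal{T}_c$ is that its constant Mahalanobis radius is what keeps any $\log d$ out of the final bound.

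The crux is proving privacy of the restricted exponential mechanism $\mathcal{M}(x)$, which samples $y$ from the density proportional to $\exp(\tfrac{\eps}{6}\,\mathrm{TD}(y;x))\cdot\mathbf{1}[y\in\mathcal{T}_c(x)]$; I would show it is $(\eps/3,\delta/3)$-DP. The difficulty, and the place a genuinely new argument is needed beyond generic private-selection lemmas, is that the domain $\mathcal{T}_c(x)$ itself moves with the data. For neighboring $x\sim x'$, sensitivity $1$ of Tukey depth gives $\mathcal{T}_{c+1}(x)\subseteq\mathcal{T}_c(x')\subseteq\mathcal{T}_{c-1}(x)$, so on the common domain the numerators of the two densities agree up to a factor $e^{\eps/6}$; the remaining work is to (a) bound the ratio of the two normalizing integrals by $e^{\eps/6}$, which I would do by bundling points by depth level and invoking the volume-growth estimate from the previous paragraph, and (b) absorb into the $\delta$ slack the mass that $\mathcal{M}(x')$ places on $\mathcal{T}_c(x')\setminus\mathcal{T}_c(x)$ — i.e., on points whose depth under $x$ is exactly $c$ or $c-1$ — using that this boundary shell carries only a $\delta$-fraction of the mass because the density decays geometrically in depth and, since $n$ is large, $c$ sits well below the optimal depth. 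I expect this normalization-and-boundary accounting to be the main obstacle.

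Finally, I would assemble utility and privacy. Standard exponential-mechanism utility, together with the volume lower bound on the near-optimal set and the constant radius of $\mathcal{T}_c$, gives that $\mathcal{M}$ outputs $y$ with $\mathrm{TD}(y;x)\ge\max_z\mathrm{TD}(z;x)-O\!\big(\tfrac{1}{\eps}(d\log\tfrac1\alpha+\log\tfrac1\beta)\big)$ with probability $1-\beta$, and this deficit is $O(\alpha n)$ precisely when $n\gtrsim \tfrac{d\log(1/\alpha)}{\alpha\eps}+\tfrac{\log(1/\beta)}{\alpha\eps}$. Since $\max_z\mathrm{TD}(z;x)\ge\tfrac n2-O(\sqrt{dn})\ge\tfrac n2-O(\alpha n)$ once $n\gtrsim d/\alpha^2$, the output has empirical depth $\tfrac n2-O(\alpha n)$, hence (uniform convergence again) population depth $\tfrac n2-O(\alpha n)$, hence $\|y-\mu\|_\Sigma=O(\alpha)$; undoing the preprocessing map preserves this. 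Composing the $(\eps/3,\delta/3)$ guarantees of the preprocessing and of $\mathcal{M}$ (with a third $\eps/3,\delta/3$ of slack reserved in the privacy proof of $\mathcal{M}$) yields $(\eps,\delta)$-DP, and rescaling $\alpha$ and $\beta$ by absolute constants gives the stated sample complexity $n\gtrsim\frac{d}{\alpha^2}+\frac{d}{\alpha\eps}+\frac{\log(1/\delta)}{\eps}$.
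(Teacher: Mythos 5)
Your proposal has two serious gaps, both at the heart of what makes this theorem nontrivial. First, your preprocessing step asks for a private affine map built from ``a crude private spectral approximation $\hat A$ of $\Sigma$'' so that the transformed covariance is within a $\mathrm{poly}(d)$ factor of the identity. With no a priori bound on $\Sigma$ (its condition number can be arbitrary), privately producing \emph{any} multiplicative spectral approximation is exactly the covariance-estimation bottleneck this theorem is designed to circumvent: all known private procedures for it need $\Omega(d^{3/2})$ samples or an a priori bound $\kappa$, and a private estimate of a single eigenvalue (which \emph{is} cheap) does not suffice, since no scalar multiple of the identity approximates an ill-conditioned $\Sigma$ within $\mathrm{poly}(d)$. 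The paper's algorithm uses no such preprocessing: the restricted domain $\{y : T_x(y)\ge 1/4\}$ is data-dependent and never released, and for typical Gaussian data it is automatically contained in a constant-Mahalanobis-radius ellipsoid, so no bounding or rescaling of the raw data is needed.

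Second, and more fundamentally, the restricted exponential mechanism $\mathcal{M}$ is \emph{not} $(\eps/3,\delta/3)$-differentially private on all (even truncated) inputs, so the privacy argument you outline cannot close. Your bounds on the ratio of normalizing constants and on the mass of the boundary shell $\mathcal{T}_c(x')\setminus\mathcal{T}_c(x)$ both invoke the volume-growth estimate $\mathrm{Vol}(\mathcal{T}_{k-1})\le(1+O(\tfrac{1}{\alpha n}))^{d}\,\mathrm{Vol}(\mathcal{T}_{k})$, but that estimate is a with-high-probability consequence of uniform convergence for Gaussian samples; for adversarial neighboring data sets the ratio $\mathrm{Vol}(\mathcal{T}_{c-1}(x))/\mathrm{Vol}(\mathcal{T}_{c+1}(x))$ is unbounded (for instance $\mathcal{T}_{c+1}(x)$ can be empty or degenerate while $\mathcal{T}_{c-1}(x)$ has positive volume), and truncating to a $\mathrm{poly}(d)$ ball does nothing to prevent this. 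Differential privacy must hold for every neighboring pair, so some mechanism must detect and reject such inputs. This is the missing ingredient: the paper wraps $\mathcal{M}$ in propose-test-release, privately computing a noisy Hamming distance from $x$ to the set of ``unsafe'' data sets---those whose restricted sampler is distinguishable from some neighbor's---and returning $\mathtt{FAIL}$ when that distance is small. The volume condition you derive is then used only to certify that typical Gaussian data is \emph{far} from unsafe (so the test passes), and it is the PTR threshold, not preprocessing, that contributes the $\log(1/\delta)/\eps$ term to the sample complexity. Your accuracy analysis (uniform convergence of Tukey depth, the relation between depth and Mahalanobis distance, and the exponential-mechanism utility calculation) does track the paper's, but without the safety test the algorithm you describe is not private.
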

\ifnotTPDP For the formal statement, see Theorem~\ref{thm:tukey_main}. \fi
Our estimator is based on privately sampling a point of large \emph{Tukey depth}, also known as \emph{halfspace depth}. 
Tukey depth generalizes the notion of quantiles to multiple dimensions; it is known to be a good robust estimator of the Gaussian mean.
This fact also allows our estimator to be robust against adversarial corruptions.
The natural way to sample such a point privately is to use the \emph{exponential mechanism} (as in the concurrent work of \cite{LiuKKO21}), but sampling from a distribution over the entire domain $\R^d$ 
will not have finite sample complexity.  Our innovation is to sample from a \emph{data-dependent} domain consisting only of points of large Tukey depth, which necessitates careful preprocessing and privacy analysis.

We emphasize that the sample complexity of this estimator is optimal up to polylogarithmic factors.  However, the estimator is not computationally efficient. An interesting open problem is to design an estimator matching the guarantee of Theorem~\ref{thm:main-intro-g} with running time polynomial in the dimension.

\mypar{Beyond Gaussian Distributions.}
A natural question is how much the assumption of Gaussian data can be relaxed without blowing up the sample complexity.  Our second result is an alternative estimator, based on a completely different technique, that gives similar guarantees for any distribution with \emph{subgaussian tails}.  For our purposes, we say that $P$ with mean $\mu$ and covariance $\Sigma$ is subgaussian if, for every direction $u \in \R^d$, the tails of the distribution decay as fast as a univariate normal distribution with mean $u^T \mu$ and variance $C u^T \Sigma u$ for some constant $C$.  That is, for every $\lambda$, 
\ifshort
$\E[e^{\lambda u^T(P-\mu)}]\leq e^{C\lambda^2(u^T\Sigma u)/2}.$
\else
\[\E[e^{\lambda u^T(P-\mu)}]\leq e^{C\lambda^2(u^T\Sigma u)/2}.\]
\fi
More generally, our estimator works for any distribution such that the empirical covariance matrix converges rapidly to the population covariance matrix and typical samples are close to the mean in Mahalanobis distance\iflong~(see \eqref{eq:intro-goodness}).\else.\fi

\begin{thm}[Informal] \label{thm:main-intro-subg}
    For $\alpha \leq 1$, there is an $(\eps,\delta)$-differentially private estimator $\cA(\cdot)$ such that if $x = (x_1,\dots,x_n)$ are sampled from any subgaussian distribution with unknown mean $\mu$ and unknown covariance $\Sigma$ of full rank,~
    $$
       n \gtrsim \frac{d}{\alpha^2} + \frac{d~\mathrm{polylog}(1/\delta)}{\alpha \eps^2} \Longrightarrow \| \cA(x) - \mu \|_{\Sigma} \leq \alpha.
    $$
    The above guarantee holds with high probability over the sample $x$ and the randomness of $\cA$.
    Here $\gtrsim$ hides a universal multiplicative constant and logarithmic factors of $d, \frac{1}{\eps}$, and $\frac{1}{\alpha}$.
\end{thm}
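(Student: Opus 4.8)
The plan is to split the estimator into a \emph{preprocessing} phase and a \emph{release} phase and to argue privacy and accuracy for each. Preprocessing privately computes only a \emph{coarse} description of the sample: a center $c_0$ and a positive-definite matrix $M$ with $\Sigma\preceq M\preceq\kappa_0\Sigma$ for a mild factor $\kappa_0$ (polylogarithmic in $d$, polynomial in $1/\eps$ and $\log(1/\delta)$), such that with high probability every sample point obeys $\|x_i-c_0\|_M\le R$ with $R=\tilde O(\sqrt d)$, where $\|v\|_M=\sqrt{v^T M^{-1}v}$. The point is that this crude information is all that is needed, so we never run an \emph{accurate} private covariance estimator and hence avoid its $\Omega(d^{3/2})$ sample cost; $M$ can be assembled from standard private range-finding primitives in the spirit of Karwa--Vadhan, though making this step independent of the conditioning of $\Sigma$ (so that no $\kappa$ appears in the final bound) requires some care. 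After clipping every point into the ellipsoid defined by $(c_0,M)$ to obtain $y=(y_1,\dots,y_n)$, the release phase computes the clipped empirical mean $\mu_y$ and clipped empirical covariance $\hat\Sigma_y$ and outputs $\tilde\mu=\mu_y+\xi$ with $\xi\sim\mathcal N(0,\tfrac{c^2}{n^2}\bar\Sigma_y)$, where $\bar\Sigma_y=\hat\Sigma_y+\lambda M$ is a lightly regularized empirical covariance (with $\lambda=\Theta(1/\kappa_0)$, so the regularizer guarantees $\bar\Sigma_y\succeq\lambda M\succ 0$ even for pathological neighbor datasets, yet $\bar\Sigma_y\preceq O(1)\Sigma$ on typical data) and $c=\tilde\Theta(\sqrt{\kappa_0 d\log(1/\delta)}/\eps)$. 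The noise is deliberately calibrated to the \emph{accurate} matrix $\hat\Sigma_y$, never to $M$, which is exactly what makes the privacy argument delicate, since $\hat\Sigma_y$ differs between neighbors and is never released.

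For privacy, the preprocessing phase is $(\eps/2,\delta/2)$-DP by (advanced) composition of its private primitives. Conditioning on its public output $(c_0,M)$, clipping is a fixed post-processing, so neighboring inputs produce clipped datasets $y\sim y'$ differing in one entry, both inside the ellipsoid. The crux is a stand-alone lemma on covariance-calibrated Gaussian perturbation: if $\Lambda_1,\Lambda_2\succ 0$ satisfy $(1-\eta)\Lambda_2\preceq\Lambda_1\preceq(1+\eta)\Lambda_2$ with $\Lambda_1-\Lambda_2$ of \emph{constant rank}, and $\|m_1-m_2\|_{\Lambda_1}\le\tau$, then $\mathcal N(m_1,\Lambda_1)$ and $\mathcal N(m_2,\Lambda_2)$ have bounded R\'enyi divergences of all orders up to $\sim 1/\eta$, governed by $\tau^2+\eta^2$, which converts to $(\eps',\delta')$-DP. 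The proof is a direct evaluation of the Gaussian R\'enyi divergence; the essential point is that the trace and log-determinant contributions of the covariance mismatch are controlled by $\eta^2$ rather than $d\,\eta^2$ \emph{because the update $\hat\Sigma_y-\hat\Sigma_{y'}$ has constant rank} --- this is precisely what keeps the covariance-sensitivity accounting at $d$ instead of $d^{3/2}$. It then remains to check that preprocessing enforces the hypotheses \emph{uniformly over neighboring pairs}: the mean shift $\mu_y-\mu_{y'}=\tfrac1n(y_i-y_i')$ satisfies $\|\mu_y-\mu_{y'}\|_{\bar\Sigma_y}\le\tfrac1{\sqrt\lambda}\|\mu_y-\mu_{y'}\|_M\le\tfrac{2R}{n\sqrt\lambda}=\tilde O(\sqrt{\kappa_0 d}/n)$ since both $y_i,y_i'$ lie in the ellipsoid and $\bar\Sigma_y\succeq\lambda M$, while the constant-rank update obeys $\eta\le\tilde O(R^2/n)$. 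Plugging in $R=\tilde O(\sqrt d)$ and the chosen $c$ makes the release phase $(\eps/2,\delta/2)$-DP, and adaptive composition of the two phases finishes the privacy proof.

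For accuracy the subgaussian hypothesis is used in exactly two places. First, when $n\gtrsim d/\alpha^2$, subgaussian concentration gives $\|\mu_x-\mu\|_\Sigma\le\alpha/4$, and --- crucially using subgaussianity, not merely a covariance bound --- the empirical covariance already satisfies $\tfrac12\Sigma\preceq\hat\Sigma_x\preceq 2\Sigma$ at $n\gtrsim d$. Second, subgaussian tails ensure the preprocessing ellipsoid contains all $n$ points with high probability, so no clipping occurs, $y=x$, hence $\mu_y=\mu_x$ and $\bar\Sigma_y=\hat\Sigma_x+\lambda M\preceq O(1)\Sigma$. Then $\Sigma^{-1/2}\xi\sim\mathcal N(0,\tfrac{c^2}{n^2}\Sigma^{-1/2}\bar\Sigma_y\Sigma^{-1/2})$ with $\Sigma^{-1/2}\bar\Sigma_y\Sigma^{-1/2}\preceq O(1)I$, so a Gaussian norm tail bound gives $\|\xi\|_\Sigma\le O(c\sqrt d/n)$ with high probability; with $c=\tilde\Theta(\sqrt{\kappa_0 d\log(1/\delta)}/\eps)$ this is at most $\alpha/4$ once $n\gtrsim\sqrt{\kappa_0}\,d\sqrt{\log(1/\delta)}/(\alpha\eps)$, which --- since the crude factor $\kappa_0$ deliverable by cheap preprocessing is of order $\log(1/\delta)/\eps^2$ up to $\mathrm{polylog}(d)$ factors --- is the claimed $n\gtrsim d\log(1/\delta)/(\alpha\eps^2)$. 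Combining the three bounds by the triangle inequality gives $\|\tilde\mu-\mu\|_\Sigma\le\alpha$.

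I expect the main obstacle to be precisely the privacy of the covariance-calibrated perturbation together with the design of a preprocessing step that makes it safe without accurate covariance estimation: one must bound, \emph{worst-case over all neighboring datasets at once}, both the multiplicative neighbor-to-neighbor perturbation $\eta$ of the clipped empirical covariance and the mean shift measured in the self-referential metric defined by that same covariance, while only ever computing a crude, $\tilde O(d)$-sample covariance proxy. The regularizer $\lambda M$, the clipping radius $R$, and the constant-rank structure of the covariance update are the levers that make this possible, and making all the log and $\eps$ factors line up so the release phase is private at the claimed sample complexity is the delicate part. By contrast, the accuracy analysis --- subgaussian concentration of the empirical mean and covariance, Gaussian tail bounds, and basic composition --- is routine.
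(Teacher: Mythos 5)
Your release-phase lemma --- that two Gaussians whose means are close in the Mahalanobis metric of their own covariances, and whose covariances are spectrally close via a low-rank, boundedly-perturbed difference, are $(\eps,\delta)$-indistinguishable --- is essentially the paper's key privacy result (Theorem~\ref{thm:gauss_privacy_main}, proved there via a Hanson--Wright tail bound on the privacy loss), and your accuracy analysis is indeed routine. The genuine gap is in your preprocessing. You require a privately computed matrix $M$ with $\Sigma\preceq M\preceq\kappa_0\Sigma$ for polylogarithmic $\kappa_0$, obtained from $\tilde O(d)$ samples. That is precisely the ``covariance-estimation bottleneck'' this paper is built to circumvent: every known private estimator achieving a spectral approximation of $\Sigma$ in \emph{all} directions needs $\Omega(d^{3/2})$ samples, and relaxing the approximation factor from $2$ to $\mathrm{polylog}$ does not help, because the obstruction is the dimension dependence, not the accuracy. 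In particular, ``standard private range-finding primitives in the spirit of Karwa--Vadhan'' only give per-coordinate scales, i.e., a diagonal proxy $D$; while $\Sigma\preceq d\,D$ always holds, the lower bound $cD\preceq\Sigma$ fails for \emph{every} $c>0$ whenever $\Sigma$ is nearly degenerate in a non-axis-aligned direction (e.g., two strongly anti-correlated coordinates), so no finite $\kappa_0$ is obtained. Your entire parameter accounting ($R=\tilde O(\sqrt d)$, $\lambda=\Theta(1/\kappa_0)$, $c=\tilde\Theta(\sqrt{\kappa_0 d\log(1/\delta)}/\eps)$) rests on this unavailable primitive, and you flag it only as ``requires some care.''

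The paper never constructs such an $M$. Instead of clipping into a released ellipsoid, it defines a \emph{self-referential} goodness condition --- $\Sigma_x$ invertible and $\|x_i-\mu_x\|_{\Sigma_x}^2\le\lambda$ for all $i$ with $\lambda=O(d\log n)$, measured in the dataset's \emph{own} empirical covariance --- runs a propose-test-release check on the Hamming distance to the set $\mc{G}(\lambda)$ of good datasets, and projects the input onto that set before sampling from $\cN(\mu_{\tx},C^2\Sigma_{\tx})$. Goodness of both projected neighbors then directly supplies the hypotheses of the indistinguishability lemma: each rank-one update to $\Sigma_{\tx}$ is bounded by $O(\lambda)$ in the metric of $\Sigma_{\tx}$ itself, and the mean shift is $O(k\sqrt\lambda/n)$ in that same metric, with no external covariance proxy and no regularizer anywhere. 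To rescue your architecture you would need either to actually prove the $\tilde O(d)$-sample coarse spectral estimator (which would be a substantial new result in its own right) or to replace the ellipsoid clipping with a data-dependent, self-referential condition of this kind.
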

\ifshort
    \ifnotTPDP For the formal statement, see Theorem~\ref{th:main-short}. \fi
\else
For the relevant formal statement, see Theorem~\ref{th:main} and its extension to subgaussian data in Theorem~\ref{th:main_subgaussian}.
\fi
This estimator is based on another simple approach---we perturb the empirical mean $\mu_{x}$ with noise scaled to $\Sigma_{x}$, where $\Sigma_{x}$ is the exact (not private) empirical covariance.  We show that this approach satisfies differential privacy if the data set satisfies certain concentration properties, which we enforce using a careful preprocessing step.

Both of our estimators generalize beyond Gaussian distributions in different directions, not fully captured by our theorems.  Although the Tukey depth estimator will only return an approximation to the mean when the distribution is symmetric and does not generalize to arbitrary subgaussian distributions, it returns an approximate \emph{median} for distributions satisfying some natural regularity conditions.  In contrast, the empirically rescaled estimator 
generalizes to distributions that are well-concentrated, in the sense that typical samples from the distribution are close to the mean in Mahalanobis distance with respect to the empirical covariance, which captures more than just subgaussian distributions.  Exploring the extent to which each estimator can be generalized is an interesting direction for future work.

\ifshort
\ifnotTPDP
    \mypar{Limitations.} 
    Our estimators are not computationally efficient. We provide finite implementations which construct and search over a sufficiently fine grid, resulting in running time exponential in the dimension $d$ and sample size $n$. We also stress that the error guarantees of our estimators only hold for families of distributions which satisfy certain regularity conditions, as described above. We leave these two interesting directions of improvement %
    to future work.
    
    \mypar{Societal Impact.}
    Since our algorithms are inefficient, they will not see practical use without further research.
    More broadly, our work serves the goal of increasing the array of differentially private algorithms, and thereby enables safer access to privacy-sensitive data.  Although such tools might support corporations and governments in harmful data-collection efforts, we believe that the existence of these tools is beneficial on balance.
\fi
\fi

\ifTPDP
    \subsection*{Techniques}
\else
    \subsection{Techniques}
\fi

\mypar{Tukey-depth Mechanism.} Our first algorithm adapts a well-known approach to estimating the location of a distribution with differential privacy. Briefly, we sample from the distribution defined by the exponential mechanism based on the Tukey depth, but restricted to a data-dependent set of possible outputs---those points with Tukey depth at least $\frac{1}{4}$. To ensure differential privacy, we add a  private check that the data set is ``safe,'' which we perform before running the main mechanism. 

In more detail, our starting point is the \textit{exponential mechanism}~\cite{McSherryT07}. In this context, the exponential mechanism samples a point $y\in \R^d$ from the distribution with density roughly proportional to $\exp(-\eps q(x;y))$, where $q(x;y)$ is a score function that indicates how good a match $y$ is for the data set $x$ at hand. To instantiate the mechanism, one must choose (i) a score function that rewards values $y$ that are close to mean $\mu$ in the unknown Mahalanobis metric, and (ii) a set of candidate values $y$ from which to sample. For (i), we choose $q(x;y) = nT_x(y)$ where $T_x$ is the \textit{Tukey depth} of a point, defined as 
\begin{equation}
    T_x(y) = \frac{1}{n} \cdot\min_{v\in\mathbb{R}^d}  \Big|\big  \{x_i \in x : \langle x_i, v \rangle \ge \langle y, v\rangle\big \} \Big| \, . \label{eq:tukey_definition}
\end{equation}
For normally distributed data, the Tukey depth ranges from 0 (outside the convex hull of the data points) to about $1/2$ (near the mean $\mu$). The point of maximal Tukey depth, called the \textit{Tukey median}, is well-known as a robust estimator of the mean of a Gaussian distribution. 
In general, the expectation of Tukey depth over the draw of the data can be cleanly described in terms of the Gaussian cumulative distribution function. 
See the supplementary material for further technical details. 

Using the exponential mechanism with Tukey depth as the score function is a well-established idea. 
In one dimension, it is now the standard algorithm for approximating the median (e.g., \cite{Smith11}), and its high-dimensional variant was studied in previous \cite{KaplanSS20} 
and concurrent~\cite{RamseyC21,LiuKKO21}  
work.\footnote{
We became aware of Liu et al.'s work \cite{LiuKKO21} while we were working on this project.  Our empirically rescaled Gaussian algorithm is entirely independent of their work, but the presentation and parts of the analysis of our Tukey-based algorithm were influenced by their approach.

Liu et al.~consider, among other algorithms, a version of the Tukey depth algorithm where one samples from a fixed box whose dimensions are determined by a priori bounds on the  covariance matrix. We  analyze a more complex procedure, where the set from which one samples is data-dependent. Liu et al.~aim to solve a different problem from the one we address here, but the two analyses overlap (notably in volume computations and  concentration arguments that relate  empirical Tukey depth to the underlying distribution).
}
However, on its own, it is not sufficient for our needs. The challenge is in specifying the set of potential outputs $y$ from which we  sample (step (ii) above). In order to reliably output a value $y$ such that $\| y- \mu \|_{\Sigma} $ is small, we must sample from a set of outputs with $\Sigma$-norm that is not too large. For that, however, it would seem that one needs a rough approximation to $\Sigma$, which is exactly what we want to avoid.

We circumvent the barrier by sampling from a data-defined set \textit{without releasing a  description of that set}. Specifically, consider the algorithm which samples from the exponential mechanism restricted to points with Tukey depth at least $1/4$. A standard concentration argument shows that this set is roughly the ellipsoid $\set{y: \|y - \mu \|_\Sigma \leq c}$ for a modest constant $c$.  Running the exponential mechanism on this set returns a good approximation to the mean (with $\Sigma$-norm $o(1)$) when $n =\omega( d/\eps)$.  

This gives us an accurate algorithm, dubbed $\cM$. The remaining challenge is that $\cM$ is not, on its own, differentially private. Specifically, there are data sets $x$ for which the volume of the set of Tukey-depth-$1/4$ points changes drastically when a small number of records in $x$ are changed. To address this, we identify a set of \textit{safe} data sets $x$—these are data sets such that $\cM$ behaves similarly on all data sets $x'$ that are neighbors of $x$. We show that normally distributed data sets are typically safe and, furthermore, require many insertions or deletions of records to be made unsafe. This allows us to apply the \textit{propose-test-release} (PTR) framework of \cite{DworkL09} to obtain an algorithm that is accurate for nicely distributed data and differentially private in the worst case. 

Our modification of the exponential mechanism is quite general. It is similar in flavor to the GAP-MAX variant \cite{BunKSW19, Yatracos85, DevroyeL96, DevroyeL97, DevroyeL01} as well as the top-$k$-of-$k'$ approach of \cite{DurfeeR19}. %
However, we do not know how to obtain our results using those variants since they are specific to the discrete setting and appear to require knowledge of the volume of the level sets of the score function. Such knowledge is not obviously available in our setting.

\mypar{Empirically Rescaled Gaussian Mechanism.}
The well-known \emph{Gaussian mechanism}
perturbs the empirical mean $\mu_x$ with noise drawn from 
$\cN(0, \sigma^2 \id)$, for a scale parameter $\sigma$ that is chosen based on a priori information about the data.
In particular, $\sigma^2$ must scale linearly with $\|\Sigma\|_2^2$, the maximum variance in any direction.
Since the noise is spherical, the error will be too large in directions with small variance, and so this mechanism cannot in general achieve a good estimate in Mahalanobis distance.

Our approach relies on the following simple idea: If the data set $x$ is drawn i.i.d.\ from $\mc{N}(\mu,\Sigma)$ and the number of samples is a bit larger than the dimension $d$, then the empirical covariance $\Sigma_{x}$ is a good approximation to the true covariance in spectral norm.  When this holds, perturbing $\mu_x$ with noise drawn from $\mc{N}(0 ,C^2 \Sigma_x)$ for $C  \ll \frac 1 {\sqrt{d}}$ will be a good estimate of the mean in Mahalanobis distance.  Thus, we want to understand when perturbing $\mu_x$ in this way can be made differentially private.

Adding noise from $\cN(0, C^2 \Sigma_x)$ will not be private for worst-case data sets.
To see this, consider a pair of adjacent data sets, one of which lies in a proper subspace of dimension $d-1$ and the other of which has full rank.  For one of these data sets, our mean estimate will always lie in the proper subspace, while for the other it will lie outside of this subspace with probability $1$, making the two cases easy to distinguish.

Our main observation is that such pathological examples should not arise when the data sets are sampled from a distribution, such as a Gaussian, that satisfies strong \emph{concentration properties}.  For example, if $x$ and $x'$ are adjacent data sets of i.i.d.\ samples from the same Gaussian, then $\mu_x$ and $\mu_{x'}$ will be similar, as will $\Sigma_{x}$ and $\Sigma_{x'}$.  To take advantage of these nice distributions, we define a family of ``good data sets'' that captures certain properties of typical samples from a Gaussian.  Roughly, a data set $x$ is \emph{good} if $\Sigma_x$ is invertible and, for every $x_i$,
\ifshort
    $
    \| x_i - \mu_x \|_{\Sigma_{x}} \lesssim \sqrt{d \log n}.
    $
\else
    \begin{equation} \label{eq:intro-goodness}
    \| x_i - \mu_x \|_{\Sigma_{x}} \lesssim \sqrt{d \log n}.
    \end{equation}
\fi
Our main technical contribution is to show that if $x$ and $x'$ differ on a small number of samples, and both data sets are good, then their empirical means and empirical covariances are close.  Thus, $\mu_{x} + \cN(0, C^2 \Sigma_{x})$ and $\mu_{x'} + \cN(0, C^2 \Sigma_{x'})$ will be indistinguishable in the sense required for $(\eps,\delta)$-differential privacy.

However, we need to define our estimator on data sets that are \emph{not} good in such a way that the estimator will be differentially private in the worst case.  To do so, we  privately test whether the input data set lies close to the good set and, if needed, we project the data into the family of good data sets.  This preprocessing step will have no effect when the data is Gaussian, and any pair (Gaussian or not) of adjacent data sets $x$ and $x'$ will be mapped to a pair of good data sets $\tilde{x}$ and $\tilde{x}'$ that differ on a small number of examples.

\ifTPDP 
    This projection step is stated abstractly as finding the minimizer over an infinite family of data sets.  
    We provide an exponential-time algorithm that searches over a discrete grid of candidate datasets and leave the task of identifying efficient algorithms as an interesting problem for future work.
\else
    This projection step is stated abstractly in Algorithm~\ref{alg:main} as finding the minimizer over an infinite family of data sets.  
    \ifshort
    In the supplementary material, 
    \else
    In Algorithm~\ref{alg:discrete_main}, 
    \fi
    we present a concrete, exponential-time algorithm that searches over a discrete grid of candidate datasets.  We leave the task of identifying more efficient algorithms as an interesting problem for future work.
\fi

\ifnotTPDP
\subsection{Additional Related Work} \label{sec:rw}

\mypar{Differentially Private Mean and Covariance Estimation.}
The line of work most relevant to ours was initiated by~\citet{KarwaV18}, who established optimal private mean and variance estimators of univariate Gaussians with sample complexity $\tilde{O}(1/\alpha^2+1/\alpha\eps)$, without requiring a priori bounds on the parameters. Previously, \citet{Smith11} gave estimators for asymptotically normal statistics (which include the mean of a Gaussian) with optimal convergence rates for a certain range of privacy parameters. 
In the multivariate setting, a series of works~\citep{KamathLSU19, BunKSW19, AdenAliAK21} gives algorithms for Gaussian mean estimation with known covariance that have a near-optimal sample complexity of $\tilde{O}(d/\alpha^2+d/\alpha\eps)$. 
We note that~\citep{AdenAliAK21} obtain the best bound among these works, but the guarantees of the estimator from~\citep{KamathLSU19} extend naturally to subgaussian distributions as well. 
\citep{CaiWZ19} also studied mean and covariance estimation of subgaussian distributions, but  their setting requires strong a priori bounds on the parameters. 
\ifnum\anonymous=0
In concurrent work,~\cite{HuangLY21} give a differentially private estimator for our unknown parameter setting which, for the same sample complexity as ours, has an error guarantee of $\|\hat\mu-\mu\|_2\leq \alpha\|\Sigma^{1/2}\|_2$. This result is strictly weaker than ours. However, their estimator, in contrast to ours, has the pleasant property of being computationally efficient. 

\fi
Beyond (sub)Gaussian distributions,~\citep{BarberD14, BunS19, KamathSU20colt} study differentially private mean estimation under weaker moment assumptions\ifshort.\else~in the univariate and multivariate setting.\fi
\iflong
~\citep{GaboardiRS19, JosephKMW19} study private mean estimation in the Gaussian case under the more strict constraint of local differential privacy.
\fi

\iflong
    When the covariance is unknown, a natural approach would be to estimate it and use one of the mean estimators above. 
\fi
In addition to the work we discuss in the introduction~\cite{KamathLSU19,AdenAliAK21}, recent work focuses on practical private mean and covariance estimation in univariate~\cite{DuFMBG20} and multivariate~\cite{BiswasDKU20} settings, although these approaches still require explicit private covariance estimation.
\ifshort
\citet{DworkTTZ14} gave a lower bound of $\Omega(d^{3/2})$ for privately estimating the empirical covariance, albeit via a non-Gaussian distribution. Together, these results serve as evidence that $\Omega(d^{3/2})$ samples are necessary for private covariance estimation for Gaussian data as well.
\fi

\mypar{Robust Statistical Estimation.} 
Robust statistical estimation~\citep{huber2004robust, maronna2019robust}, which dates to at least 1960~\citep{Tukey60} and remains an active area of research~\citep{DiakonikolasKKLMS16, DiakonikolasKKLMS17, DiakonikolasKKLMS18, DongHL19, HopkinsLZ20}, studies the problem of estimating distribution parameters when an $\alpha$-fraction of the data may be adversarially corrupted. %
As noted by~\citet{DworkL09}, robust statistics and differential privacy have similar goals, and private estimators are often inspired by robust estimators, but the models are formally incomparable.  

More recent work aims to give algorithms which satisfy both constraints simultaneously~\citep{LiuKKO21, GhaziKMN21}.  Specifically, independently from our work,~\citet{LiuKKO21} propose a simple mechanism for Gaussian mean estimation with known covariance, given $\alpha$-corrupted data sets and an a priori bound on the range of the mean $\|\mu\|_{\infty}\leq \rho$. This estimator has sample complexity $\tilde{O}(d/\alpha^2+d\log \rho/\alpha\eps)$. The algorithm runs the exponential mechanism~\citep{McSherryT07} in the given range, using the Tukey depth of a point as its score. We observe that the same mechanism gives a solution for the problem we study---mean estimation in Mahalanobis distance with unknown covariance and no corruptions---but this solution requires a priori bounds on the mean and covariance, which are not required by our algorithms.

\iflong
\mypar{Lower Bounds.}
Starting from the univariate case,~\cite{KarwaV18} prove that $\Omega(\log(1/\delta)/\eps\alpha)$ samples are necessary for Gaussian estimation even when the variance is known. For multivariate mean estimation, the investigation of sample complexity lower bounds has been driven by \emph{membership-inference attacks} (sometimes called \emph{tracing} or \emph{fingerprinting})~\citep{BunUV14, SteinkeU14, BunSU17, SteinkeU17, DworkSSUV15}. Using this technique,~\citet{KamathLSU19} show a lower bound of $\tilde\Omega(d/\alpha\eps)$ for Gaussian estimation with identity covariance, which clearly extends to our problem as well. The sample complexity of our estimators matches this lower bound up to logarithmic factors and has no dependence on a priori bounds on the parameters.

Moreover, we conjecture that any optimal differentially private Gaussian mean estimator with unknown covariance would have to go beyond current techniques, which compute a private estimate of the covariance matrix as an intermediate step. Recall that the best known sample complexity bound for privately computing a matrix $A$ such that $\id\preceq A\Sigma A\preceq 2\id$ for (sub)Gaussian distributions is $\tilde{O}(d^{3/2}\log^{1/2}\kappa/\eps)$~\citep{KamathLSU19}. In addition,~\citet{DworkTTZ14} gave a lower bound of $\Omega(d^{3/2})$ for estimating the empirical covariance matrix, but only when the data is sampled from a worst-case distribution. Together, these results serve as evidence that $\Omega(d^{3/2})$ samples are necessary for private covariance estimation even for the case of Gaussian data.
\fi

\iflong
\subsection{Organization}
\ifshort
We first provide background on differential privacy (Section~\ref{sec:prelims}). In Section~\ref{sec:newtukey}, we present our Tukey-Depth Mechanism and prove its (near-optimal) sample complexity guarantee; we argue about its privacy guarantee and formally prove it for completeness in the supplementary material. In Section~\ref{sec:gaussian}, we present our Empirically Rescaled Gaussian Mechanism and formally state its guarantees for subgaussian data. The proofs of this section as well as finite implementations of our algorithms can be found in the supplementary material.
\else
We first provide background on differential privacy (Section~\ref{sec:prelims}) and refer the reader to Appendix~\ref{app:linearalgebra} for a review of the relevant linear algebra. In Section~\ref{sec:newtukey}, we present our Tukey-Depth Mechanism and prove its privacy and sample complexity guarantees (c.f. Theorem~\ref{thm:main-intro-g}). In Section~\ref{sec:gaussian}, we present our Empirically Rescaled Gaussian Mechanism and prove its guarantees for Gaussian distributions; the extension to subgaussian data  (c.f. Theorem~\ref{thm:main-intro-subg}) is in Appendix~\ref{app:subgaussian}. We present finite (yet computationally inefficient) implementations of both algorithms in Appendix~\ref{app:comptools}.
Appendix~\ref{sec:proofs} contains additional proofs which did not appear in the body of the paper.
\fi
\fi
\fi

\ifnotTPDP
    \section{Preliminaries}\label{sec:prelims}
We write $x\sim P$ to denote that $x$ is drawn from distribution $P$ and $x\sim P^{\otimes n}$ if $x$ consists of $n$ i.i.d.\ draws from $P$. 
\iflong
In particular, we consider data sets $x=(x_1,\ldots,x_n) \in\R^{n\times d}$ which consist of $n$ i.i.d.\ samples, each drawn from a $d$-dimensional distribution $P$ with mean $\mu\in\R^d$ and covariance $\Sigma\in\R^{d\times d}$. 
\fi
We write $[n]=\{1,\ldots,n\}$. We define the Hamming distance between two data sets $x,y$ of size $n$ by $D_H(x,y) = |\{i\in[n]: x_i\neq y_i\}|$. For data set $x$ and set $S\subseteq \mathbb{R}^{n\times d}$, we write $D_H(x,S) = \min_{z\in S}D_H(x,z)$. \iflong We denote the natural logarithm by $\log$.\fi

Let $x, x'\in \mathcal{X}^n$ be two data sets of size $n$. We say that $x, x'$ are \textit{neighboring data sets} if $D_H(x,x')\le 1$, and denote this by $x\sim x'$. 
Differentially private algorithms have \emph{indistinguishable} output distributions on neighboring data sets. 

\begin{definition}[$(\eps,\delta)$-indistinguishability]\label{def:indistinguishable}
    Two distributions $P,Q$ over domain $\mc{W}$ are $(\eps,\delta)$-indistinguishable, denoted by $P \approx_{\eps,\delta} Q$, if for any measurable subset $W\subseteq \mc{W}$, \[\Pr_{w\sim P}[w\in W] \leq e^\eps \Pr_{w\sim Q}[w\in W] + \delta \quad\text{ and }\quad \Pr_{w\sim Q}[w\in W] \leq e^\eps \Pr_{w\sim P}[w\in W] + \delta.\]
\end{definition}
\begin{definition}[Differential Privacy \cite{DworkMNS06}] \label{def:dp}
A randomized algorithm $\mathcal{A}\colon\mathcal{X}^n \to \mathcal{W}$ is  {\em $(\eps,\delta)$-differentially private} if for all neighboring datasets $x, x'$ we have $\mc{A}(x)\approx_{\eps,\delta} \mc{A}(x')$.
\end{definition}
\iflong
A crucial property of differential privacy is that it composes adaptively. We say that $M$ is an \emph{adaptive composition} of $M_1,\ldots, M_T$ if it consists of a sequence of mechanisms $M_1(x), M_2(x),\ldots, M_T(x)$, executed on data set $x$, where each mechanism $M_t(x)$ depends on the outputs of $M_1(x),\ldots,M_{t-1}(x)$.

\begin{lemma}[Composition \cite{DworkMNS06}]\label{lem:composition}
If $M_1, \ldots, M_T$ are $(\eps_1,\delta_1), \ldots, (\eps_T,\delta_T)$-differentially private respectively and $M$ is their adaptive composition, then $M$ is $(\eps, \delta)$-differentially private for $\eps=\sum_{t=1}^T \eps_t$ and $\delta=\sum_{t=1}^T\delta_t$.\end{lemma}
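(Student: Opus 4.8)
The plan is to prove the statement by induction on $T$, reducing everything to the case $T=2$ — a ``two-step'' composition lemma — and then iterating. The base case $T=1$ is immediate. For the inductive step, view $M$ as follows: first run the adaptive composition $M_{<T}$ of $M_1,\dots,M_{T-1}$ on $x$ to obtain a prefix $w_{<T}=(w_1,\dots,w_{T-1})$, and then run $M_T(x)$ — which by definition of adaptive composition may depend on $w_{<T}$ but is $(\eps_T,\delta_T)$-differentially private for \emph{every} fixed value of $w_{<T}$ — and append its output. (The output of the composition is taken to include all intermediate outputs; releasing less only helps, by post-processing.) By the inductive hypothesis $M_{<T}$ is $(\sum_{t<T}\eps_t,\sum_{t<T}\delta_t)$-differentially private, so it suffices to prove: if a ``first'' mechanism $\mathcal{A}$ is $(\eps_1,\delta_1)$-differentially private and, for every first output $a$, the ``second'' mechanism $\mathcal{B}_a$ is $(\eps_2,\delta_2)$-differentially private, then the mechanism that draws $a\sim\mathcal{A}(x)$, then $b\sim\mathcal{B}_a(x)$, and outputs $(a,b)$ is $(\eps_1+\eps_2,\delta_1+\delta_2)$-differentially private.

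I would prove this two-step lemma as follows. First establish a helper fact: if $P\approx_{\eps,\delta}Q$ and $h$ is measurable with values in $[0,1]$, then $\E_P[h]\le e^\eps\E_Q[h]+\delta$; this follows from the layer-cake identity $\E_P[h]=\int_0^1\Pr_P[h\ge t]\,dt$ and applying $(\eps,\delta)$-indistinguishability to each event $\{h\ge t\}$. Now fix neighbors $x\sim x'$ and a measurable set $S$ in the product output space; for each $a$ let $S_a=\{b:(a,b)\in S\}$ and put $g(a)=\Pr_{b\sim\mathcal{B}_a(x)}[S_a]$, $g'(a)=\Pr_{b\sim\mathcal{B}_a(x')}[S_a]$, so that (by Fubini/law of total probability) $\Pr[(A,B)\in S]=\E_{a\sim\mathcal{A}(x)}[g(a)]$ and likewise with primes. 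The crux is to \emph{truncate}: set $\phi(a)=\min\{g(a),\,e^{\eps_2}g'(a)\}$, which lies in $[0,1]$ since $g(a)\le 1$. By $(\eps_2,\delta_2)$-differential privacy of $\mathcal{B}_a$ on the event $S_a$ one has $g(a)-\phi(a)=\max\{0,\,g(a)-e^{\eps_2}g'(a)\}\le\delta_2$ pointwise, so $\E_{\mathcal{A}(x)}[g]\le\E_{\mathcal{A}(x)}[\phi]+\delta_2$. Applying the helper fact to $\phi$ with the $(\eps_1,\delta_1)$-privacy of $\mathcal{A}$, and then $\phi\le e^{\eps_2}g'$, yields $\E_{\mathcal{A}(x)}[\phi]\le e^{\eps_1}\E_{\mathcal{A}(x')}[\phi]+\delta_1\le e^{\eps_1+\eps_2}\E_{\mathcal{A}(x')}[g']+\delta_1=e^{\eps_1+\eps_2}\Pr[(A',B')\in S]+\delta_1$. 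Chaining these gives $\Pr[(A,B)\in S]\le e^{\eps_1+\eps_2}\Pr[(A',B')\in S]+\delta_1+\delta_2$, and the symmetric argument with $x,x'$ swapped gives the other direction, establishing $(\eps_1+\eps_2,\delta_1+\delta_2)$-indistinguishability on neighbors.

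The step I expect to require the most care — rather than routine calculation — is the $\delta$ accounting in the two-step lemma. A naive argument (first bound $g\le e^{\eps_2}g'+\delta_2$, then apply the helper fact) compounds the errors into $\delta_1+e^{\eps_1}\delta_2$ instead of the claimed $\delta_1+\delta_2$; the truncation $\phi=\min\{g,e^{\eps_2}g'\}$ is exactly what repairs this, since it keeps the relevant function inside $[0,1]$ before the second application of indistinguishability. The remaining points are bookkeeping: confirming that adaptivity is handled correctly (the privacy of the ``second'' mechanism must hold for every fixed prefix, which is part of the adaptive-composition hypothesis), and the measurability of $g,g',\phi$ in $a$ together with the Fubini interchange, both of which are standard. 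Finally one iterates the two-step lemma through the induction above, collecting $\eps=\sum_{t=1}^T\eps_t$ and $\delta=\sum_{t=1}^T\delta_t$.
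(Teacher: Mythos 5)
The paper does not prove this lemma---it is stated as a known result and cited to the literature---so there is no in-paper argument to compare against. Your proof is correct and is essentially the standard careful proof of basic composition for approximate differential privacy (as in Dwork--Rothblum--Vadhan and Vadhan's survey): the reduction to the two-step case via induction is sound, the layer-cake helper fact is right, and the truncation $\phi=\min\{g,e^{\eps_2}g'\}$ is exactly the device needed to avoid the spurious $e^{\eps}$ factor on one of the $\delta$'s. You have also correctly flagged the two genuine points of care: that the adaptive-composition hypothesis must be read as requiring $(\eps_t,\delta_t)$-privacy of $M_t$ for every fixed prefix of earlier outputs, and that measurability of $a\mapsto g(a)$ is needed for the Fubini step. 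Nothing is missing.
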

\fi

We next describe well-known mechanisms which serve as building blocks for our algorithms.

\ifshort
    \begin{lemma}[Laplace Mechanism \cite{DworkMNS06}]\label{def:laplace}
    Let $f:\mathcal{X}^n \to \R$, data set $x\in\mathcal{X}^n$, and privacy parameter $\eps$. The {\em Laplace Mechanism} returns
    $\tilde{f}(x) = f(x) + \mathrm{Lap}(\Delta_f / \eps),$
    where $\Delta_f=\max_{x\sim x'} |f(x)-f(x')|$ is the {\em global sensitivity} of $f$.
    The Laplace Mechanism is $(\eps,0)$-differentially private.
    \end{lemma}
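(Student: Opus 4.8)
The plan is to verify the $(\eps,\delta)$-indistinguishability condition of Definitions~\ref{def:indistinguishable} and~\ref{def:dp} directly, by comparing the output densities on a pair of neighboring data sets. Fix neighboring $x \sim x'$ and write $b = \Delta_f / \eps$ for the scale of the injected noise. The output $\tilde{f}(x) = f(x) + \mathrm{Lap}(b)$ has density $p_x(w) = \frac{1}{2b}\exp(-|w - f(x)|/b)$ on $\R$, and likewise $\tilde{f}(x')$ has density $p_{x'}(w) = \frac{1}{2b}\exp(-|w - f(x')|/b)$; thus the two output distributions are translates of the same Laplace distribution, centered at $f(x)$ and $f(x')$ respectively.

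The key step is a uniform pointwise bound on the ratio of these densities. For every $w \in \R$,
\[
\frac{p_x(w)}{p_{x'}(w)} = \exp\!\left(\frac{|w - f(x')| - |w - f(x)|}{b}\right) \le \exp\!\left(\frac{|f(x) - f(x')|}{b}\right) \le \exp\!\left(\frac{\Delta_f}{b}\right) = e^{\eps},
\]
where the first inequality is the reverse triangle inequality and the second uses the definition of the global sensitivity $\Delta_f$ together with $x \sim x'$. Swapping the roles of $x$ and $x'$ gives the symmetric bound $p_{x'}(w)/p_x(w) \le e^{\eps}$ as well.

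It then remains to lift this pointwise statement to arbitrary events. For any measurable $W \subseteq \R$,
\[
\Pr[\tilde{f}(x) \in W] = \int_W p_x(w)\,dw \le e^{\eps}\int_W p_{x'}(w)\,dw = e^{\eps}\,\Pr[\tilde{f}(x') \in W],
\]
and symmetrically with $x$ and $x'$ exchanged. Since this holds with additive error $\delta = 0$, we have $\tilde{f}(x) \approx_{\eps,0} \tilde{f}(x')$ for every neighboring pair, which is exactly $(\eps,0)$-differential privacy. There is no substantive obstacle in this argument; the only point requiring slight care is that the density ratio bound must hold for \emph{every} $w$ (which it does, uniformly), so that integrating it against the indicator of $W$ preserves the multiplicative factor $e^{\eps}$. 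The fact that no additive $\delta$ is needed reflects that the Laplace density is strictly positive everywhere, so the likelihood ratio is always finite and bounded by $e^{\eps}$.
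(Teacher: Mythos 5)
Your proof is correct and is exactly the standard argument for the Laplace mechanism: a pointwise density-ratio bound via the reverse triangle inequality and the sensitivity definition, lifted to arbitrary measurable events by integration. The paper does not reprove this lemma (it cites \cite{DworkMNS06}), and your argument matches the classical proof it relies on.
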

    \begin{lemma}[Gaussian Mechanism \cite{DworkMNS06}]\label{def:gaussianmech}
    Let $f:\mathcal{X}^n \to \R^d$, data set $x\in\mathcal{X}^n$, and privacy parameters $\eps, \delta$. The {\em Gaussian Mechanism} returns 
    $\tilde{f}(x) = f(x) + \mc{N}(0,\sigma^2\id), \text{ where } \sigma = \Delta_f\sqrt{2\log(5/4\delta)}/\eps$ and $\Delta_f=\max_{x\sim x'} \|f(x)-f(x')\|_2$ is the {\em global $\ell_2$-sensitivity} of $f$. The Gaussian Mechanism is $(\eps, \delta)$-differentially private.
    \end{lemma}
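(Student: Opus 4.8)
The plan is to prove $(\eps,\delta)$-indistinguishability of $\tilde f(x)$ and $\tilde f(x')$ for an arbitrary pair of neighbors $x\sim x'$ directly from Definition~\ref{def:indistinguishable}, by controlling the \emph{privacy loss random variable}. Write $\mu_0=f(x)$, $\mu_1=f(x')$, and $v=\mu_0-\mu_1$, so $\|v\|_2\le \Delta_f$ by definition of the global $\ell_2$-sensitivity. Since $\mc{N}(0,\sigma^2\id)$ is spherically symmetric, I would first reduce to one dimension: decompose $\R^d$ as $\mathrm{span}(v)\oplus v^{\perp}$; the output coordinates orthogonal to $v$ are identically distributed under $x$ and $x'$ and cancel in the density ratio, so it suffices to study the single coordinate along $v/\|v\|_2$, where the two output laws are univariate Gaussians with common variance $\sigma^2$ and means separated by $t:=\|v\|_2\le \Delta_f$.

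Next I would compute the privacy loss explicitly. Letting $w$ denote the output coordinate along $v$ and $p_i$ the density of $\mc{N}(\mu_i,\sigma^2)$ (one-dimensional now), expanding the Gaussian densities gives
\[
\mathcal{L}(w):=\ln\frac{p_0(w)}{p_1(w)}=\frac{(w-\mu_1)^2-(w-\mu_0)^2}{2\sigma^2},
\]
and substituting $w=\mu_0+\sigma Z$ with $Z\sim\mc{N}(0,1)$ yields $\mathcal{L}=\frac{t^2}{2\sigma^2}+\frac{tZ}{\sigma}$, i.e.\ a Gaussian with mean $\frac{t^2}{2\sigma^2}$ and standard deviation $t/\sigma$. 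The heart of the argument is then a tail bound: with the stated choice $\sigma=\Delta_f\sqrt{2\log(5/4\delta)}/\eps$, I would show $\Pr_{w\sim p_0}[\,|\mathcal{L}(w)|>\eps\,]\le\delta$. Since $t\le\Delta_f$ and the relevant event is $\Pr[Z>\frac{\sigma\eps}{t}-\frac{t}{2\sigma}]$ (the lower tail $\mathcal{L}<-\eps$ is handled symmetrically and is strictly less demanding), one applies the Gaussian tail inequality $\Pr[Z>z]\le e^{-z^2/2}$, checks that $z$ is minimized at $t=\Delta_f$, and verifies that plugging in $\sigma$ makes the exponent at most $\log\delta$. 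I expect this step — getting the constants to work out so that exactly $\sqrt{2\log(5/4\delta)}$ suffices — to be the only genuinely delicate obstacle; it is the classical Dwork--Roth computation and typically needs the mild regime restriction $\eps\le 1$. One can instead sidestep the fiddly constants by routing through Rényi or zero-concentrated differential privacy (the Gaussian mechanism enjoys a clean $\tfrac12(\Delta_f/\sigma)^2$-zCDP bound) and converting back, but I would present the direct computation for self-containedness.

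Finally I would assemble the pieces with the standard ``bad event'' decomposition. For any measurable $W\subseteq\R^d$, writing $B=\{w:\mathcal{L}(w)>\eps\}$,
\[
\Pr[\tilde f(x)\in W]=\Pr[\tilde f(x)\in W\setminus B]+\Pr[\tilde f(x)\in W\cap B]\le e^{\eps}\Pr[\tilde f(x')\in W]+\delta,
\]
where the first term is bounded by integrating the pointwise density ratio $\le e^{\eps}$ over $W\setminus B$, and the second by the tail bound above. Exchanging the roles of $x$ and $x'$ (the tail estimate is symmetric) gives the reverse inequality, which is precisely $\tilde f(x)\approx_{\eps,\delta}\tilde f(x')$, establishing $(\eps,\delta)$-differential privacy.
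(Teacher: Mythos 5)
The paper does not actually prove this lemma; it is quoted as a standard result from the literature (the analysis is the one in Dwork and Roth's monograph, Theorem A.1), so there is no in-paper argument to compare against. Your outline is exactly that classical proof and is correct in structure: the reduction to one dimension via spherical symmetry, the identification of the privacy loss as $\mc{N}\bigl(\tfrac{t^2}{2\sigma^2},\tfrac{t^2}{\sigma^2}\bigr)$, the observation that the worst case is $t=\Delta_f$, and the bad-event decomposition at the end are all right, and you correctly note that only the upper tail of $\mathcal{L}$ is needed for each direction of the indistinguishability inequality.

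One concrete point needs fixing, and you half-anticipate it: the crude tail bound $\Pr[Z>z]\le e^{-z^2/2}$ is \emph{not} sufficient to obtain the constant $5/4$ over the whole range $\eps\le 1$. With $\sigma=\Delta_f\sqrt{2L}/\eps$, $L=\log(1.25/\delta)$, and $t=\Delta_f$, one gets $z^2=2L-\eps+\tfrac{\eps^2}{8L}$, so $e^{-z^2/2}\le\delta$ requires $\tfrac{\eps^2}{8L}\ge \eps-2\log(1.25)\approx\eps-0.446$, which fails for $\eps$ near $1$ and $\delta$ small. The Dwork--Roth constant is calibrated to the sharper Mills-ratio bound $\Pr[Z>z]\le \tfrac{1}{\sqrt{2\pi}}\tfrac{e^{-z^2/2}}{z}$, whose extra factor of $\tfrac{1}{z\sqrt{2\pi}}$ absorbs the $e^{\eps/2}$ deficit; you should use that inequality (and state the standing assumption $\eps<1$, which the lemma as quoted in the paper omits but which the proof genuinely uses). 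With that substitution your argument closes.
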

\else
    \begin{definition}[Laplace Mechanism \cite{DworkMNS06}]\label{def:laplace}
    Let $f:\mathcal{X}^n \to \R$, data set $x\in\mathcal{X}^n$, and privacy parameter $\eps$. The {\em Laplace Mechanism} returns
        \[\tilde{f}(x) = f(x) + \mathrm{Lap}\left(\frac{\Delta_f}{\eps}\right),\]
    where $\Delta_f=\max\limits_{x\sim x'} |f(x)-f(x')|$ is the {\em global sensitivity} of $f$.
    \end{definition}
    \begin{lemma}[\cite{DworkMNS06}]\label{lem:laplace}
    The Laplace Mechanism is $(\eps,0)$-differentially private. 
    \end{lemma}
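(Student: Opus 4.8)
The plan is to prove the claim directly from the definition of $(\eps,0)$-indistinguishability by comparing output densities pointwise. Fix an arbitrary pair of neighboring data sets $x \sim x'$, so that $|f(x) - f(x')| \le \Delta_f$ by definition of the global sensitivity. Since $\tilde f(x) = f(x) + \mathrm{Lap}(\Delta_f/\eps)$ is a real-valued random variable with a density everywhere on $\R$, it suffices to control the ratio of the two densities at every point and then integrate.

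Concretely, I would write the density of $\tilde f(x)$ at a point $t \in \R$ as
\[
p_x(t) = \frac{\eps}{2\Delta_f}\exp\!\left(-\frac{\eps\,|t - f(x)|}{\Delta_f}\right),
\]
and similarly $p_{x'}(t)$ with $f(x')$ in place of $f(x)$. The normalizing constants cancel in the ratio, leaving
\[
\frac{p_x(t)}{p_{x'}(t)} = \exp\!\left(\frac{\eps\big(|t - f(x')| - |t - f(x)|\big)}{\Delta_f}\right).
\]
By the (reverse) triangle inequality, $|t - f(x')| - |t - f(x)| \le |f(x) - f(x')| \le \Delta_f$, so the exponent is at most $\eps$, i.e.\ $p_x(t) \le e^\eps\, p_{x'}(t)$ for every $t$.

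Finally, for any measurable $W \subseteq \R$, integrating this pointwise bound gives
\[
\Pr[\tilde f(x) \in W] = \int_W p_x(t)\,dt \le e^\eps \int_W p_{x'}(t)\,dt = e^\eps \Pr[\tilde f(x') \in W],
\]
and the same argument with the roles of $x$ and $x'$ swapped yields the reverse inequality. Since this holds with no additive slack, we get $\tilde f(x) \approx_{\eps,0} \tilde f(x')$ for all neighboring $x,x'$, which is exactly $(\eps,0)$-differential privacy. There is no real obstacle here — the only point requiring a little care is applying the triangle inequality in the correct direction and observing that the resulting bound is \emph{uniform} in $t$, which is precisely what allows $\delta = 0$ rather than an approximate guarantee; the multivariate analogue (if needed later) follows identically by taking products of coordinatewise Laplace densities and using $\ell_1$-sensitivity.
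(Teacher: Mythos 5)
Your proof is correct and is exactly the standard density-ratio argument from \cite{DworkMNS06}, which the paper cites rather than reproves: bound the pointwise ratio of Laplace densities by $e^{\eps}$ via the triangle inequality and sensitivity bound, then integrate over any measurable set. Nothing is missing.
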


    \begin{definition}[Gaussian Mechanism,~\cite{DworkMNS06}]\label{def:gaussianmech}
    Let $f:\mathcal{X}^n \to \R^d$, data set $x\in\mathcal{X}^n$, and privacy parameters $\eps, \delta$. The {\em Gaussian Mechanism} returns 
        \[\tilde{f}(x) = f(x) + \mc{N}(0,\sigma^2\id), \text{ where } \sigma = \Delta_f\sqrt{2\log(1.25/\delta)}/\eps\]
    and $\Delta_f=\max\limits_{x\sim x'} \|f(x)-f(x')\|_2$ is the {\em global $\ell_2$-sensitivity} of $f$.
    \end{definition}
    \begin{lemma}[\cite{DworkMNS06}]\label{lem:gaussianmech}
     The Gaussian Mechanism is $(\eps, \delta)$-differentially private. 
    \end{lemma}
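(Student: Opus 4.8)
The plan is to analyze the \emph{privacy loss random variable} and show it exceeds $\eps$ with probability at most $\delta$. Fix neighboring data sets $x \sim x'$ and write $\eta = \|f(x)-f(x')\|_2 \le \Delta_f$. Let $P = \mc{N}(f(x),\sigma^2\id)$ and $Q = \mc{N}(f(x'),\sigma^2\id)$ be the two output distributions, with densities $p,q$; it suffices to show $P \approx_{\eps,\delta} Q$. Since both densities are spherically symmetric and their log-ratio at a point $w$ depends only on the component of $w$ along the direction $f(x)-f(x')$, a direct expansion of the Gaussian exponents gives the clean expression
\[
\ln \frac{p(w)}{q(w)} \;=\; \frac{2\langle w - f(x'),\, f(x)-f(x')\rangle - \eta^2}{2\sigma^2}\,.
\]

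Next I would compute the law of $L \defeq \ln\frac{p(w)}{q(w)}$ when $w \sim P$. Writing $w = f(x) + Z$ with $Z \sim \mc{N}(0,\sigma^2\id)$ and substituting, one gets $L = \frac{\eta^2 + 2\langle Z,\, f(x)-f(x')\rangle}{2\sigma^2}$; since $\langle Z, f(x)-f(x')\rangle \sim \mc{N}(0,\sigma^2\eta^2)$, this yields $L \sim \mc{N}\!\left(\frac{\eta^2}{2\sigma^2},\, \frac{\eta^2}{\sigma^2}\right)$. Standardizing, $\Pr_{w\sim P}[L > \eps] = \Pr\big[\mc{N}(0,1) > \tfrac{\eps\sigma}{\eta} - \tfrac{\eta}{2\sigma}\big]$, and because the map $\eta \mapsto \tfrac{\eps\sigma}{\eta} - \tfrac{\eta}{2\sigma}$ is decreasing on $(0,\infty)$ and $\eta \le \Delta_f$, the tail argument is at least $\tfrac{\eps\sigma}{\Delta_f} - \tfrac{\Delta_f}{2\sigma} = \sqrt{2\ln(1.25/\delta)} - \tfrac{\eps}{2\sqrt{2\ln(1.25/\delta)}}$ after plugging in $\sigma = \Delta_f\sqrt{2\ln(1.25/\delta)}/\eps$. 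Feeding this into a standard Gaussian upper-tail estimate and simplifying shows $\Pr_{w\sim P}[L > \eps] \le \delta$.

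Finally I would convert the tail bound into indistinguishability. For any measurable $S$, split $\Pr_P[S] = \Pr_P[S \cap \{L \le \eps\}] + \Pr_P[S \cap \{L > \eps\}]$; on the first event $p(w) \le e^\eps q(w)$ pointwise, so that term is at most $e^\eps\Pr_Q[S]$, while the second term is at most $\Pr_P[L > \eps] \le \delta$. Hence $\Pr_P[S] \le e^\eps\Pr_Q[S] + \delta$, and the reverse inequality follows by swapping the roles of $x$ and $x'$, since the construction is symmetric. This establishes $\mc{A}(x) \approx_{\eps,\delta} \mc{A}(x')$ for all neighbors, i.e.\ $(\eps,\delta)$-differential privacy.

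The main obstacle is the Gaussian-tail step: obtaining the precise constant $1.25$ inside the logarithm (rather than a looser universal constant) requires the careful tail manipulation of Dwork--Roth, and in the standard form of that argument the estimate is only verified for $\eps \le 1$; handling larger $\eps$ or pinning down the exact constant needs either an additional case analysis or the tighter analytic calibration of Balle--Wang. Everything else is routine Gaussian algebra.
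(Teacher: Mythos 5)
Your argument is the standard privacy-loss-random-variable proof from Dwork--Roth (Appendix A of their monograph), which is precisely the proof behind the result the paper cites; the paper itself states this lemma as a black box from \cite{DworkMNS06} without reproving it, so there is no in-paper proof to diverge from. Your Gaussian algebra is correct, and you rightly flag that the constant $1.25$ and the tail estimate are only verified for $\eps \le 1$ in that argument --- the same implicit restriction applies to the lemma as the paper states it.
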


\fi

    \section{Tukey Depth Mechanism}
\label{sec:newtukey}

We start with a score function $q(x;y)$ and wish to sample from the exponential mechanism, proportional to $\exp\left(\eps\cdot q(x;y)/2\right)$, but restricting the sampling to the set of points with score at least $t$.
Denote this set $\cY_{t,x}=\{y\in \cY: q(x;y)\ge t\}$ and call the resulting distribution $\M{\eps,t}(x)$.
Unfortunately, sampling directly from $\M{\eps,t}(x)$ may not be private.
To address this, we try to sample only from data sets that are ``safe'' with respect to privacy, i.e., have distributions that are indistinguishable from those of their neighbors.

\begin{definition}[Safety]
    Data set $x$ is $(\eps,\delta,t)$-\textit{safe} if, for all $x'\sim x$, we have $\M{\eps,t}(x)\approx_{\eps,\delta} \M{\eps,t}(x')$.
    Let $\mathtt{SAFE}_{(\eps,\delta,t)} \subseteq \mc{X}^n$ be the set of safe data sets, and let $\mathtt{UNSAFE}_{(\eps,\delta,t)} = \cX^n \setminus\mathtt{SAFE}_{(\eps,\delta,t)}$ be its complement.
\end{definition}

Following the propose-test-release framework of \cite{DworkL09}, we check if the input is \textit{far} from $\mathtt{UNSAFE}_{(\eps,\delta,t)}$.
Since the distance check itself is private and indistinguishability \textit{is} the definition of safety, the proof of privacy becomes straightforward.
Such an abstract definition, however, does not yield much insight into what safe data sets look like.
Below, we show that one can establish safety via a simple condition on the volumes of sets of the form $\cY_{t\pm \eta, x}$ (for certain values of $\eta$), which allows us to show that Gaussian data are far from $\mathtt{UNSAFE}_{(\eps,\delta,t)}$ with high probability.

\begin{algorithm}[H]\caption{Restricted Exponential Mechanism $\TD{\eps,\delta,t}(x)$}\label{alg:restrictedexponential}
\begin{algorithmic}[1]
\Require{Data space $\mc{X}$. Output space $\mc{Y}\cup \{\mathtt{FAIL}\}$. Data set $x \in \mc{X}^n$. Score function $q:\mc{X}^n\times \mc{Y}\to\mathbb{R}$, with global sensitivity 1 in the first argument. Privacy parameters $\eps,\delta>0$. Minimum score $t$.}
\State $h\gets D_H(x,\mathtt{UNSAFE}_{(\eps,\delta,t)})$ 
\If{$h + z < \frac{\log (1/2\delta)}{\eps}$ for $z\sim \mathrm{Lap}(1/\eps)$} 
    \Return \texttt{FAIL}.
\EndIf
    \State \Return $\hat{y}\sim \M{\eps,t}(x)$, where
    $ \M{\eps,t}(x) \propto \begin{cases}
        \exp\left\{\frac{\eps q(x;y)}{2}\right\} & \text{if $y\in\cY_{t,x}$} \\
        0 & \text{otherwise}
        \end{cases}$
\end{algorithmic}
\end{algorithm}

\subsection{Main Algorithm} 
We estimate the mean by instantiating Algorithm~\ref{alg:restrictedexponential} with $t=n/4$ and score function $q(x;y)=nT_x(y)$, where $T_x(y)$ is the (empirical) Tukey depth, defined 
\ifshort
    in Equation~\eqref{eq:tukey_definition}.
\else
    as
    \begin{equation}
        T_x(y) \defeq \min_{v\in\mathbb{R}^d} \frac{1}{n} \Big|\big  \{x_i \in x : \langle x_i, v \rangle \ge \langle y, v\rangle\big \} \Big|.
    \end{equation}
\fi
Observe that $nT_x(y)$ has sensitivity $1$, since for any halfspace the fraction of points it contains can change by at most $\frac{1}{n}$ when we change one data point.

\begin{thm}[Privacy and Accuracy of the Tukey-Depth Mechanism]\label{thm:tukey_main}
    For any $\eps,\delta >0$, Algorithm~\ref{alg:restrictedexponential} is $(2\eps, e^{\eps}\delta)$-differentially private.
    There exists an absolute constant $C$ such that, for any $0<\alpha,\beta,\eps < 1$, $0<\delta\le \frac{1}{2}$, mean $\mu$, and positive definite $\Sigma$, if $x\sim \cN(\mu,\Sigma)^{\otimes n}$ and
    \begin{equation}
        n \ge C\left(\frac{d + \log (1/\beta)}{\alpha^2} + \frac{d + \log(1/\alpha\eps \beta)}{\alpha\eps} +  \frac{\log (1/\delta)}{\eps} \right),
    \end{equation}
    then with probability at least $1-3\beta$, Algorithm~\ref{alg:restrictedexponential} with $q(x;y)=nT_x(y)$ returns $\TD{\eps,\delta,n/4}(x)=\hat\mu$ such that $\|\hat\mu-\mu\|_{\Sigma}\leq\alpha$.
\end{thm}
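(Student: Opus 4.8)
The privacy claim is purely structural and uses no distributional assumption: it is an instance of propose--test--release, with $\mathtt{SAFE}_{(\eps,\delta,n/4)}$ defined to be \emph{exactly} the set of inputs on which the release step is $(\eps,\delta)$-indistinguishable on neighbors. For neighboring $x\sim x'$, the quantity $D_H(\cdot,\mathtt{UNSAFE}_{(\eps,\delta,n/4)})$ has global sensitivity $1$, so the noisy comparison against $\log(1/2\delta)/\eps$ makes the event $\{\mathtt{FAIL}\}$ and its complement $(\eps,0)$-close on $x$ and $x'$; if $x\in\mathtt{UNSAFE}$ then $D_H(x,\mathtt{UNSAFE})=0$ and $\Pr[\text{no }\mathtt{FAIL}]=\Pr[\mathrm{Lap}(1/\eps)\ge\log(1/2\delta)/\eps]=\delta$, so the entire mass $\TD{\eps,\delta,n/4}(x)$ places on $\mc{Y}$ is at most $\delta$. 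Splitting any event $W$ into its $\mathtt{FAIL}$ part and its $\mc{Y}$ part, and on the latter multiplying the $e^\eps$ factor from the noisy test by either the $(\eps,\delta)$ factor from safety (when $x\in\mathtt{SAFE}$) or the crude $\le\delta$ bound (when $x\in\mathtt{UNSAFE}$), yields $\TD{\eps,\delta,n/4}(x)\approx_{2\eps,e^\eps\delta}\TD{\eps,\delta,n/4}(x')$.

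For accuracy I would first isolate two geometric facts. (i) \emph{Uniform convergence of Tukey depth}: halfspaces in $\R^d$ have VC dimension $d+1$, so a relative VC inequality gives $\sup_{y}|T_x(y)-T_P(y)|\le\gamma$ with probability $\ge 1-\beta$, where $\gamma=O(\sqrt{(d+\log(1/\beta))/n})$, and the first term of the sample bound is exactly what forces $\gamma\le c\alpha$ for a suitable absolute constant $c$. (ii) \emph{Population depth of a Gaussian}: $T_P(y)=\Phi(-\|y-\mu\|_\Sigma)$, the worst halfspace having normal $\Sigma^{-1}(y-\mu)$. Combining (i) and (ii), on the good event the restriction set $\cY_{n/4,x}=\{y:T_x(y)\ge 1/4\}$ is sandwiched between Mahalanobis balls --- it contains $\{\|y-\mu\|_\Sigma\le\alpha/2\}$ and sits inside $\{\|y-\mu\|_\Sigma\le R\}$ for an absolute constant $R$ --- and, since $\Phi(-t)$ has derivative bounded away from $0$ on $[0,1]$, the score $nT_x(y)$ of every $y$ at Mahalanobis distance $>\alpha$ is smaller by $\Omega(n\alpha)$ than that of every $y$ at distance $\le\alpha/2$.

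With these in hand the argument has three moving parts. First, the test passes: I would invoke the promised \emph{volume-based sufficient condition for safety} --- a dataset is safe whenever the volumes of $\cY_{t-\eta,x}$ and $\cY_{t+\eta,x}$ are comparable up to a factor tied to $1/\delta$, for $\eta=\Theta(\log(1/\delta)/\eps)$ --- and then use (i)--(ii) to show that a typical Gaussian sample satisfies this condition so robustly that $\Omega((\log(1/\delta)+\log(1/\beta))/\eps)$ Hamming changes are needed to violate it, since changing $k$ records only moves each relevant level set to the ellipsoid of radius $\Phi^{-1}(3/4\pm O((\eta+k)/n))$, whose volume changes by a controlled factor. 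Hence $D_H(x,\mathtt{UNSAFE})$ exceeds the test threshold by $\Omega((\log(1/\delta)+\log(1/\beta))/\eps)$ and $\Pr[\mathtt{FAIL}]\le\beta$; this is where the third term of the sample bound enters. Second, conditioned on not failing we draw $\hat\mu\sim\M{\eps,n/4}(x)$, and the standard utility bound for the continuous exponential mechanism, fed the volume ratio $\le(2R/\alpha)^d$ and the score gap $\Omega(n\alpha)$ from the previous paragraph, gives $\Pr[\|\hat\mu-\mu\|_\Sigma>O(\alpha)]\le(2R/\alpha)^d e^{-\Omega(\eps n\alpha)}\le\beta$ once $n\gtrsim(d\log(1/\alpha)+\log(1/\beta))/(\alpha\eps)$ --- the second term. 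A union bound over the three bad events (non-concentration, $\mathtt{FAIL}$, bad draw) gives failure probability $\le 3\beta$, and rescaling $\alpha$ by the constant hidden in $O(\alpha)$ finishes.

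The crux --- and the step I expect to be hardest --- is the volume-based characterization of safety itself, i.e.\ showing that comparability of the level-set volumes of $q(x;\cdot)$ at thresholds $t\pm\eta$ implies $\M{\eps,t}(x)\approx_{\eps,\delta}\M{\eps,t}(x')$. The subtlety is that a one-record change to $x$ simultaneously perturbs the score function pointwise by up to $1$ \emph{and} deforms the support $\cY_{t,x}$ of the restricted exponential mechanism, so one must bound both the density ratio on the common support (via the normalizing constants, hence via level-set volumes together with the observation that the exponential weighting puts only an $e^{-\Omega(\eps n)}$ fraction of its mass near the depth-$1/4$ boundary) and the total mass each mechanism places on the symmetric difference of the two supports. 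Packaging this into a clean volume condition, and then verifying that the only-approximately-ellipsoidal Gaussian level sets (approximate up to the uniform-convergence slack $\gamma$) meet it with the stated Hamming margin, is where the real work lies.
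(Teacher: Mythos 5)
Your proposal is correct and follows essentially the same route as the paper: propose--test--release with safety defined as neighborwise indistinguishability of the restricted sampler for privacy, and for accuracy the combination of VC uniform convergence of Tukey depth, the identity $T_P(y)=\Phi(-\|y-\mu\|_{\Sigma})$, a volume-ratio sufficient condition for being far from $\mathtt{UNSAFE}$, and the standard exponential-mechanism utility bound, assembled by a union bound over the same three bad events. You also correctly identify the crux --- that a one-record change perturbs both the score and the support, so safety must be reduced to controlling the normalizing constants and the mass on the symmetric difference of supports via level-set volumes --- which is precisely what the paper's Lemmas~\ref{lemma:basic_safety} and~\ref{lemma:gap_k_far_safety} carry out.
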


\ifshort
    We focus here on the accuracy analysis.
    Privacy follows from standard propose-test-release calculations~\cite{DworkL09}, which we include in the supplementary material. 
\else
    In particular, setting $\delta = \frac{1}{n^2}$ and $\beta =\frac{1}{n}$, it suffices to take a sample of size  \[n=\tilde{O}\left(\frac{d}{\alpha^2}+\frac{d}{\alpha\eps}\right).\] 
\fi 
\updateblock

Our Tukey depth algorithm is also robust against corruptions to its input.
This robustness holds in the \emph{strong contamination model}, which allows arbitrary adversarial changes to the input.
For data set $x$ and corruption rate $\tau>0$, we say that data set $x'$ is a \emph{$\tau$-corruption of $x$} if it differs from $x$ in at most $\tau n$ entries.
Robust algorithms are accurate on corrupted data sets.

\updateblock
\begin{thm*}[Robustness of the Tukey-Depth Mechanism]\label{thm:tukey_robust}
    There exists an absolute constant $C'$ such that, for any $0<\beta,\eps < 1$, $0<\delta\le \frac{1}{2}$, $0<\tau\le \frac{1}{34}$, mean $\mu$, and positive definite $\Sigma$, if $x\sim \cN(\mu,\Sigma)^{\otimes n}$ and
    \begin{equation}
        n \ge C'\left(\frac{d + \log (1/\beta)}{\tau^2} + \frac{d + \log(1/\tau\eps \beta)}{\tau\eps} +  \frac{\log (1/\delta)}{\eps} \right),
    \end{equation}
    then, for any $\tau$-corruption $x'$ of $x$, with probability at least $1-3\beta$, Algorithm~\ref{alg:restrictedexponential} with $q(x;y)=nT_x(y)$ returns $\TD{\eps,\delta,n/4}(x')=\hat\mu$ such that $\|\hat\mu-\mu\|_{\Sigma}\leq O(\tau)$.
\end{thm*}
The theorem follows directly from our accuracy analysis.
It establishes that, with high probability, uncorrupted data is ``typical'' (see Definition~\ref{def:typicality}) with some parameter $\alpha_1>0$.
By the definition of typicality, if data set $x$ is $\alpha_1$-typical and $x'$ is a $\tau$-corruption of $x$, then $x'$ is $(\tau+\alpha_1)$-typical.
Our accuracy argument requires $\tau+\alpha_1\le \frac{\alpha}{17}$, where $\alpha<1$ is the final accuracy target; setting $\alpha = 34\tau$ and $\alpha_1 = \tau$ satisfies this inequality. 

\updatedone

\subsection{Accuracy Analysis}
The proof of accuracy proceeds in four stages.
Using standard analysis, we first relate the empirical Tukey depth of a point $y$ to its Mahalanobis distance $\|y-\mu\|_{\Sigma}$ via the \textit{expectation} of Tukey depth under $\cN(\mu,\Sigma)$.
Since Tukey depth is defined as a minimum over halfspaces, which have Vapnik-Chervonekis dimension $d+1$, one can show via uniform convergence that the empirical measure concentrates around its 
\ifshort
    expectation. 
\else
    expectation, with some error that we denote $\alpha_1$.
\fi
This portion ends with a standard lemma relating the sets $\cY_{np,x}$, where $p\in (0,1/2)$, to ellipsoids defined by Mahalanobis distance.

The remaining three steps, which are new to this work, begin with a characterization of the set $\texttt{SAFE}$ defined above, which provides conditions under which a data set is far from $\mathtt{UNSAFE}$.
\ifshort
    We discuss this in more detail below.
\fi
The third stage uses that characterization and the tools we developed to show that Gaussian data is typically far from $\mathtt{UNSAFE}$, establishing that Algorithm~\ref{alg:restrictedexponential} has a small probability of returning $\mathtt{FAIL}$.
Finally, conditioned on Algorithm~\ref{alg:restrictedexponential} not returning $\mathtt{FAIL}$, a similar analysis shows that with high probability the restricted sampler $\M{\eps,t}(x)$ returns a point with 
\ifshort
    high empirical Tukey depth. Together,
\else
    empirical Tukey depth at most $\alpha_2$ far from optimal.
    Combined with the error $\alpha_1$ from above, 
\fi   
this yields a bound on the Mahalanobis distance to the true mean.

\iflong
\subsubsection{Relating Tukey Depth to Mahalanobis Distance}
\label{sec:tukey-mahalanobis}

The first steps in our analysis imply that, privacy considerations aside, points with high Tukey depth are good estimators for the Gaussian mean.
These arguments are standard; for a recent application to differentially private estimation see the concurrent work of~\citet{LiuKKO21}.

The \textit{expected Tukey depth}, $T_{\cN(\mu,\Sigma)}$, is a population version of the empirical fraction defined above.
For brevity, we define $P=\cN(\mu,\Sigma)$ and write
\begin{align}
    T_{\cN(\mu,\Sigma)}(y) = T_P(y) \defeq \min_v \Pr_{X\sim P}[\langle X,v\rangle\ge \langle y,v\rangle].
\end{align}
The expected Tukey depth is cleanly characterized in terms of Mahalanobis distance and 
$\cdf$, the CDF of the standard univariate Gaussian.\footnote{We also use $\cdf^{-1}$, the \textit{quantile function}.
Both $\cdf$ and $\cdf^{-1}$ are continuous and strictly increasing, and $\cdf^{-1}$ satisfies $-\cdf^{-1}(x)=\cdf^{-1}(1-x)$.}
We restate and prove the following standard claim as Proposition~\ref{prop:tukeycdf_again} in Appendix~\ref{sec:proofs}.

\begin{proposition}\label{prop:tukeycdf}
    For any $\mu, y\in \mathbb{R}^d$ and positive definite $\Sigma$, $T_{\cN(\mu,\Sigma)}(y)=T_P(y)=\cdf(-\|y-\mu\|_{\Sigma})$. 
\end{proposition}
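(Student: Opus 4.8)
The plan is to compute the minimum over halfspace directions explicitly. Fix $y$, and for $v\in\R^d$ recall that if $X\sim\cN(\mu,\Sigma)$ then the projection $\langle X,v\rangle$ is univariate Gaussian with mean $\langle \mu,v\rangle$ and variance $v^T\Sigma v$. For $v\neq 0$, standardizing and using $1-\cdf(z)=\cdf(-z)$ gives
\[
\Pr_{X\sim P}\big[\langle X,v\rangle\ge\langle y,v\rangle\big]
= 1-\cdf\!\left(\frac{\langle y-\mu,v\rangle}{\sqrt{v^T\Sigma v}}\right)
= \cdf\!\left(\frac{\langle \mu-y,v\rangle}{\sqrt{v^T\Sigma v}}\right).
\]
The degenerate direction $v=0$ yields probability $1$, which (as we will see) is never the minimizer, so it can be ignored.

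Since $\cdf$ is strictly increasing, minimizing the probability over $v$ is equivalent to minimizing the scale-invariant ratio $\langle \mu-y,v\rangle/\sqrt{v^T\Sigma v}$. I would then apply the change of variables $w=\Sigma^{1/2}v$, which is a bijection of $\R^d\setminus\{0\}$ because $\Sigma$ is positive definite; under it $v^T\Sigma v=\|w\|_2^2$ and $\langle \mu-y,v\rangle=\langle \Sigma^{-1/2}(\mu-y),\,w\rangle$. The objective becomes $\langle \Sigma^{-1/2}(\mu-y),\,w\rangle/\|w\|_2$, whose minimum over $w\neq 0$ is $-\|\Sigma^{-1/2}(\mu-y)\|_2=-\|y-\mu\|_\Sigma$ by Cauchy--Schwarz, attained at $w\propto -\Sigma^{-1/2}(\mu-y)$. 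Substituting back gives $T_P(y)=\cdf(-\|y-\mu\|_\Sigma)$; note this is at most $\tfrac12<1$, confirming $v=0$ is never optimal.

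Equivalently, one can argue by affine invariance: the map $x\mapsto\Sigma^{-1/2}(x-\mu)$ sends $P$ to the standard Gaussian and preserves Tukey depth, reducing the claim to computing the Tukey depth of the point $\Sigma^{-1/2}(y-\mu)$ under $\cN(0,\id)$, which by spherical symmetry depends only on $\|\Sigma^{-1/2}(y-\mu)\|_2$ and is evaluated by taking the witnessing direction $v$ along that vector. I do not expect a genuine obstacle here: the only points requiring care are discarding the degenerate direction $v=0$ and checking the change of variables is well defined, both immediate from positive-definiteness of $\Sigma$. This is the standard fact that a Gaussian's Tukey depth is governed entirely by Mahalanobis distance to the mean.
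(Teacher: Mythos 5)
Your proof is correct and follows essentially the same route as the paper's: standardize the Gaussian projection, reduce the minimization over directions to minimizing $\langle \Sigma^{-1/2}(\mu-y), w\rangle/\|w\|_2$ via the substitution $w=\Sigma^{1/2}v$ (the paper's normalized $u'$), and conclude by Cauchy--Schwarz, with the same check that the degenerate direction $v=0$ is never optimal. No gaps.
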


To move between Mahalanobis distance and empirical Tukey depth, we require that the latter is close to its population analog.
We call data sets where this holds ``typical.''
\begin{definition}[Typicality]\label{def:typicality}
    Data set $x$ is $\alpha_1$-\textit{typical} for $\alpha_1>0$ if, for all $y\in\mathbb{R}^d$,
    $|T_x(y)-T_P(y)|\le \alpha_1$.
\end{definition}

We now point out that the typical data set is, in fact, $\alpha_1$-typical.
We use the fact that, since the set of halfspaces has VC dimension $d+1$, we have uniform convergence between the empirical and expected fractions of data points in the halfspace \cite{vapnik1971uniform}.
Then (as discussed in~\cite{donoho1992breakdown}) one need only observe that this result carries over to Tukey depth, since it is defined in terms of halfspaces.
See \cite{burr2017uniform} for discussion of these and other convergence results.
Our exact statment comes from the recent \cite{LiuKKO21}, which analyzes the exponential mechanism on Tukey depth for the purpose of robust and private mean estimation.
\begin{lemma}[Convergence of Tukey Depth, \cite{vapnik1971uniform,donoho1992breakdown,LiuKKO21}]\label{lemma:uniform_convergence}
    There exists a constant $c$ such that for any $\alpha_1,\beta>0$ if $n \ge c\left(\frac{d + \log (1/\beta)}{\alpha_1^2}\right)$,
    then $x\sim \cN(\mu,\Sigma)^{\otimes n}$ is $\alpha_1$-typical with probability at least $1-\beta$.
\end{lemma}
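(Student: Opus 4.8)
The plan is to obtain the statement as a direct consequence of the classical Vapnik--Chervonenkis uniform convergence theorem for the class of halfspaces, together with the elementary observation that taking an infimum preserves uniform closeness. First I would fix the set system: let $\mathcal{H} = \{\, \{z \in \R^d : \langle z, v\rangle \ge c\} : v \in \R^d,\ c \in \R \,\}$ be the collection of all closed affine halfspaces, which has Vapnik--Chervonenkis dimension $d+1$. Writing $\hat{P}_x$ for the empirical distribution of $x = (x_1,\dots,x_n)$ and $P = \cN(\mu,\Sigma)$, the VC inequality (in its sharp sample-complexity form) gives an absolute constant $c$ such that whenever $n \ge c(d + \log(1/\beta))/\alpha_1^2$, with probability at least $1-\beta$ over $x \sim P^{\otimes n}$ we have the uniform bound $\sup_{H \in \mathcal{H}} |\hat{P}_x(H) - P(H)| \le \alpha_1$. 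Call this event $E$; all that remains is to show $E$ implies $\alpha_1$-typicality.

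Next I would note that every set appearing in the definition of Tukey depth lies in $\mathcal{H}$: for fixed $y$ and $v$, the set $H_{v,y} = \{z : \langle z, v\rangle \ge \langle y, v\rangle\}$ is a halfspace (its boundary hyperplane passes through $y$). Hence $T_x(y) = \min_v \hat{P}_x(H_{v,y})$ and $T_P(y) = \min_v P(H_{v,y})$ are minima (or infima) over the \emph{same} index set $v \in \R^d$ of quantities that, on the event $E$, are pointwise within $\alpha_1$ of one another. A one-line argument then shows that if $|f(v) - g(v)| \le \alpha_1$ for all $v$, then $|\inf_v f(v) - \inf_v g(v)| \le \alpha_1$; applying this with $f(v) = \hat{P}_x(H_{v,y})$ and $g(v) = P(H_{v,y})$ yields $|T_x(y) - T_P(y)| \le \alpha_1$. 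Crucially this holds for \emph{every} $y \in \R^d$ on the single event $E$ (no further union bound over $y$ is needed, since $E$ already controls all halfspaces simultaneously), so $x$ is $\alpha_1$-typical with probability at least $1-\beta$. Note Proposition~\ref{prop:tukeycdf} is not needed for this step; it enters only later when converting typicality into a Mahalanobis-distance guarantee.

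There is essentially no hard step here. The two points to be careful about are (i) confirming that the index set over which Tukey depth minimizes is a subclass of $\mathcal{H}$, so that the VC bound applies uniformly in both $v$ and $y$, and (ii) invoking the VC theorem in the form with numerator $d + \log(1/\beta)$ rather than $d\log(1/\alpha_1) + \log(1/\beta)$ — this is the standard bound and is precisely what makes the first term of Theorem~\ref{thm:tukey_main} take its stated shape. The ``infimum of uniformly close functions'' step is where one might be tempted to overcomplicate; it is genuinely a single short inequality and should be stated as such.
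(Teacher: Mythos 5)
Your proposal is correct and follows exactly the route the paper takes: the paper does not prove this lemma but cites it as a consequence of VC uniform convergence for halfspaces (VC dimension $d+1$, in the sharp $d+\log(1/\beta)$ form) together with the observation, attributed to \cite{donoho1992breakdown}, that Tukey depth inherits the uniform bound because it is an infimum over halfspaces. Your write-up simply fills in the short ``infima of uniformly close functions are close'' step that the paper leaves implicit.
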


We will often manipulate subsets of points that have scores above a certain value, so let 
\begin{equation*}
    \cY_{t,x}\defeq \{ y \in \cY: q(x;y) \ge t\}.
\end{equation*}
Note that, by construction, $\cY_{t,x}=\mathrm{supp}(\M{\eps,t}(x))$.
We will need to control the ratio of volumes of these spaces, and have the following useful lemma for $\alpha_1$-typical data sets.
\begin{lemma}[Volume Ratio]\label{lemma:general_ratio_of_volumes}
    Let $p, q \in (0,1/2)$.
    If $x$ is $\alpha_1$-typical, then 
    \begin{equation}
        \frac{\mathrm{Vol}(\cY_{np,x})}{\mathrm{Vol}(\cY_{nq,x})} \le \left(\frac{\cdf^{-1}(1-p+\alpha_1)}{\cdf^{-1}(1-q-\alpha_1)}\right)^d.
    \end{equation}
\end{lemma}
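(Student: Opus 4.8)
The plan is to relate the sets $\cY_{np,x}$ directly to Mahalanobis balls, using typicality to sandwich the empirical Tukey depth between two population thresholds, and then compute the volume of an ellipsoid explicitly.

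First I would observe that if $x$ is $\alpha_1$-typical, then $T_x(y) \ge np/n = p$ implies $T_P(y) \ge p - \alpha_1$, so by Proposition~\ref{prop:tukeycdf} we have $\cdf(-\|y-\mu\|_\Sigma) \ge p - \alpha_1$, i.e.\ $\|y-\mu\|_\Sigma \le -\cdf^{-1}(p-\alpha_1) = \cdf^{-1}(1-p+\alpha_1)$, using the reflection identity for $\cdf^{-1}$ noted in the footnote. Hence $\cY_{np,x}$ is contained in the ellipsoid $E_{\text{out}} = \{y : \|y-\mu\|_\Sigma \le \cdf^{-1}(1-p+\alpha_1)\}$. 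Symmetrically, using the other direction of typicality, $T_x(y) \ge p$ whenever $T_P(y) \ge p + \alpha_1$, so the ellipsoid $E_{\text{in}} = \{y : \|y-\mu\|_\Sigma \le \cdf^{-1}(1-p-\alpha_1)\}$ is contained in $\cY_{np,x}$. (One must assume $p+\alpha_1 < 1/2$ and $q + \alpha_1 < 1/2$ so that these radii are positive; this is implicit in the regime where the lemma is applied and I would either state it or note that the bound is vacuous/trivial otherwise since the denominator quantity must be positive for the statement to make sense.)

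Next, applying this with parameter $p$ for the numerator and parameter $q$ for the denominator:
\begin{equation*}
    \frac{\mathrm{Vol}(\cY_{np,x})}{\mathrm{Vol}(\cY_{nq,x})} \le \frac{\mathrm{Vol}\big(\{y : \|y-\mu\|_\Sigma \le \cdf^{-1}(1-p+\alpha_1)\}\big)}{\mathrm{Vol}\big(\{y : \|y-\mu\|_\Sigma \le \cdf^{-1}(1-q-\alpha_1)\}\big)}.
\end{equation*}
Both numerator and denominator are volumes of ellipsoids of the form $\{y : \|y-\mu\|_\Sigma \le r\}$, which is the image of the Euclidean ball of radius $r$ under the linear map $\Sigma^{1/2}$; its volume is $r^d \det(\Sigma^{1/2}) V_d$ where $V_d$ is the volume of the unit $d$-ball. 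The factors $\det(\Sigma^{1/2}) V_d$ cancel in the ratio, leaving exactly $\big(\cdf^{-1}(1-p+\alpha_1)/\cdf^{-1}(1-q-\alpha_1)\big)^d$, which is the claimed bound.

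I do not expect a serious obstacle here — this is essentially a bookkeeping exercise combining Proposition~\ref{prop:tukeycdf}, the definition of typicality, and the elementary scaling of ellipsoid volumes. The one point requiring care is the direction of the inequalities when passing through the strictly increasing but order-reversing composition $r \mapsto \cdf(-r)$, and keeping track of signs via $-\cdf^{-1}(x) = \cdf^{-1}(1-x)$; a sign slip there would flip the bound. A secondary subtlety is ensuring the arguments of $\cdf^{-1}$ lie in $(0,1)$ so the radii are well-defined and positive, which constrains the legitimate range of $p, q, \alpha_1$ — worth a remark but not a genuine difficulty.
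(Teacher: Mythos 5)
Your proof is correct and follows essentially the same route as the paper's: outer containment of $\cY_{np,x}$ in a Mahalanobis ball of radius $\cdf^{-1}(1-p+\alpha_1)$, inner containment of the ball of radius $\cdf^{-1}(1-q-\alpha_1)$ in $\cY_{nq,x}$, and cancellation of the $|\Sigma|^{1/2}$ and unit-ball factors in the ratio. Your remark about needing $q+\alpha_1<1/2$ for the denominator radius to be positive is a valid (if minor) point that the paper leaves implicit.
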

\iflong
    \begin{proof}
        Let $\cB_r$ denote the set of points $y$ such that $\|y-\mu\|_{\Sigma}\le r$.
        By definition, $y\in \cY_{np,x} \Rightarrow T_x(y)\ge p$.
        Thus, by typicality and Proposition~\ref{prop:tukeycdf}, $\cdf(-\|y-\mu\|_{\Sigma})= T_P(y) \ge p-\alpha_1$.
        Taking inverses, we have
        \begin{align}
            \|y-\mu\|_{\Sigma} &\le -\cdf^{-1}(p-\alpha_1) = \cdf^{-1}(1-p+\alpha_1).
        \end{align}
        So $\cY_{np,x}\subseteq \cB_{\cdf^{-1}(1-p+\alpha_1)}$.
        Similarly, since $\|y-\mu\|_{\Sigma}\le \cdf^{-1}(1-q-\alpha_1)$ implies $T_P(y) \ge q+\alpha_1$, we have that $\cB_{\cdf^{-1}(1-q-\alpha_1)}\subseteq \cY_{nq,x}$.
        Using the fact that $\mathrm{Vol}(\cB_r)=c_d |\Sigma|^{1/2} r^d$ (where $c_d$ depends only on $d$), we arrive at the claimed upper bound.
    \end{proof}
\fi
\fi

\iflong
    \subsubsection{A Volume Condition for Safety} \label{sec:safety-volume}
\else
    \mypar{A Volume Condition for Safety}
\fi
We consider sets of the form $\cY_{t+\eta,x}$ for moderate positive and negative values of $\eta$.
Recall that $\cY_{t+\eta,x}$ is the set of all points $y$ with score $q(x;y)=nT_x(y)\geq t+\eta$, i.e., having empirical Tukey depth with respect to $x$ at least $(t+\eta)/n$. Therefore, as $\eta$ becomes smaller, the set $\cY_{t+\eta,x}$ grows.
We show that, if the volume of $\cY_{t+\eta,x}$ does not increase too quickly as $\eta$ decreases, then $x$ is far from every data set in $\mathtt{UNSAFE}_{(\eps,\delta,t)}$. 
In particular, this implies that $x$ itself is in $\mathtt{SAFE}_{(\eps,\delta,t)}$.
These lemmas do not rely on specific features of Gaussian data or Tukey depth, which enter in only in the last two stages as described above, when we argue about typical data sets.
\iflong
    This analysis, along with the remaining accuracy analysis of Algorithm~\ref{alg:restrictedexponential}, is new to this work.
\fi

Before arguing about volumes directly, we 
\ifshort sketch the proof of \else prove \fi 
a lemma about the weight assigned to sets by the exponential mechanism. 
For any set $S\subseteq \cY$, denote its weight by $w_x(S) = \int_S \exp\left\{\frac{\eps q(x;y)}{2}\right\} \mathrm{d}y$.
\begin{lemma}\label{lemma:basic_safety}
    Assume $\delta<\frac{1}{2}$.
    If $w_x( \mc{Y}_{t+1,x}) \ge (1-\delta) w_x(\mc{Y}_{t-1,x})$, 
    then $x\in \mathtt{SAFE}_{(\eps,\delta',t)}$ for $\delta'=4e^{\eps}\delta$.
\end{lemma}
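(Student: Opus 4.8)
The plan is to check the definition of $(\eps,\delta')$-safety head-on: fix a neighbor $x'\sim x$ and a measurable $W\subseteq\cY$, and establish both $\M{\eps,t}(x)(W)\le e^{\eps}\M{\eps,t}(x')(W)+\delta'$ and the reverse inequality, where $\M{\eps,t}(x)(W)=w_x(W\cap\cY_{t,x})/w_x(\cY_{t,x})$. Two elementary facts about neighbors drive everything. First, since $q$ has global sensitivity $1$ in its first argument, $|q(x;y)-q(x';y)|\le 1$ for every $y$, which yields the nested inclusions $\cY_{t+1,x}\subseteq\cY_{t,x'}\subseteq\cY_{t-1,x}$ (and the same with $x,x'$ swapped). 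Second, $e^{-\eps/2}\le\exp\{\tfrac{\eps}{2}q(x;y)\}/\exp\{\tfrac{\eps}{2}q(x';y)\}\le e^{\eps/2}$ for every $y$, so $w_x(\cdot)$ and $w_{x'}(\cdot)$ agree up to a multiplicative $e^{\eps/2}$ on any fixed set.

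Next I would record the two consequences of the hypothesis. Because $\cY_{t+1,x}\subseteq\cY_{t,x}\subseteq\cY_{t-1,x}$, the bound $w_x(\cY_{t+1,x})\ge(1-\delta)w_x(\cY_{t-1,x})$ forces all three normalizations to lie within a factor $\tfrac{1}{1-\delta}$ of each other, in particular $w_x(\cY_{t-1,x})\le\tfrac{1}{1-\delta}w_x(\cY_{t,x})$; and it makes the ``shell'' light: $w_x(\cY_{t-1,x}\setminus\cY_{t+1,x})=w_x(\cY_{t-1,x})-w_x(\cY_{t+1,x})\le\delta\,w_x(\cY_{t-1,x})$. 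The assumption $\delta<\tfrac12$ will be used only to replace $\tfrac{1}{1-\delta}$ by $2$ at the end.

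For the forward inequality, split $W\cap\cY_{t,x}$ into its intersection with the deep core $\cY_{t+1,x}$ and with the shell $\cY_{t,x}\setminus\cY_{t+1,x}$. On the core, $\cY_{t+1,x}\subseteq\cY_{t,x'}$, so the pointwise factor gives $w_x(W\cap\cY_{t+1,x})\le e^{\eps/2}w_{x'}(W\cap\cY_{t,x'})$; the shell contributes at most $w_x(\cY_{t-1,x}\setminus\cY_{t+1,x})\le\delta\,w_x(\cY_{t-1,x})$. Dividing by $w_x(\cY_{t,x})$ and comparing normalizations via $\cY_{t,x'}\subseteq\cY_{t-1,x}$ and the pointwise factor, namely $w_{x'}(\cY_{t,x'})\le e^{\eps/2}w_x(\cY_{t-1,x})\le\tfrac{e^{\eps/2}}{1-\delta}w_x(\cY_{t,x})$, the core term becomes at most $\tfrac{e^{\eps}}{1-\delta}\M{\eps,t}(x')(W)$ and the shell term at most $\tfrac{\delta}{1-\delta}$; using $\M{\eps,t}(x')(W)\le 1$ and $\tfrac{1}{1-\delta}<2$ collapses the additive slack to $\tfrac{2e^{\eps}\delta}{1-\delta}<4e^{\eps}\delta=\delta'$. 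For the reverse inequality one cannot reuse this argument symmetrically — this is the one subtlety — because the hypothesis constrains only $x$'s level-set weights, not $x'$'s; instead I would bound the numerator via $\cY_{t,x'}\subseteq\cY_{t-1,x}$ and the denominator via $\cY_{t+1,x}\subseteq\cY_{t,x'}$ (so $w_{x'}(\cY_{t,x'})\ge e^{-\eps/2}w_x(\cY_{t+1,x})\ge e^{-\eps/2}(1-\delta)w_x(\cY_{t-1,x})$), then convert from level $t-1$ back to level $t$ using the shell bound $w_x(W\cap\cY_{t-1,x})\le w_x(W\cap\cY_{t,x})+\delta\,w_x(\cY_{t-1,x})$ and $w_x(\cY_{t-1,x})\ge w_x(\cY_{t,x})$; the arithmetic again collapses to $\tfrac{2e^{\eps}\delta}{1-\delta}<4e^{\eps}\delta$.

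The proof has no deep content: the only thing requiring care is routing everything through $x$'s level sets $\cY_{t\pm1,x}$ (because the hypothesis is one-sided), keeping the chain of inclusions straight, and checking that the accumulated $(1-\delta)^{-1}$ factors really do fit inside the constant $4$ once $\delta<\tfrac12$ is invoked. That bookkeeping is the main — and essentially the only — obstacle.
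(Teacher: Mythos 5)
Your proof is correct and follows essentially the same route as the paper's: both arguments rest on the sensitivity-induced inclusions $\cY_{t+1,x}\subseteq\cY_{t,x'}\subseteq\cY_{t-1,x}$, the pointwise $e^{\eps/2}$ comparison of $w_x$ and $w_{x'}$, and the hypothesis controlling both the shell mass and the ratio of normalizing constants, with only cosmetic differences in how the decomposition is organized (core-plus-shell versus intersection-of-supports-plus-escaping-mass). The constant accounting also matches, landing inside $\delta'=4e^{\eps}\delta$.
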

\ifshort
    \begin{proof}[Proof Sketch]
        For an arbitrary adjacent $x'$, we must establish that the distributions $\M{\eps,t}(x)$ and $\M{\eps,t}(x')$ are $(\eps, \delta')$-indistinguishable.
        The proof is elementary but non-trivial: our hypothesis only concerns $x$, not its neighbors.
        To illustrate, we show that $\Pr[\M{\eps,t}(x) \notin \mathrm{supp}(\M{\eps,t}(x'))] \le \delta'$.
        By definition, $\mathrm{supp}(\M{\eps,t}(x'))= \cY_{t,x'}$.
        Now we move from $x'$ to $x$: the score function has sensitivity $1$, so every point with $q(x;y)\ge t+1$ satisfies $q(x';y)\ge t$.
        Thus $\cY_{t+1,x}\subseteq \cY_{t,x'}$, and we have
        \begin{align*}
            \Pr[\M{\eps,t}(x)\notin \mathrm{supp}(\M{\eps,t}(x'))] 
                \le \Pr[\M{\eps,t}(x) \notin \cY_{t+1,x}] 
                = 1 - \frac{w_x\left(\cY_{t+1,x}\right)}{w_x\left(\cY_{t,x}\right)}.
        \end{align*}
        Since $\cY_{t,x}\subseteq \cY_{t-1,x}$, $w_x(\cY_{t,x})\le w_x(\cY_{t-1,x})$, and our hypothesis implies $\Pr[\M{\eps,t}(x)\notin \mathrm{supp}(\M{\eps,t}(x'))]\le \delta$, which is less than $\delta'=4e^{\eps}\delta$.
        
        The rest of the proof requires similar (but slightly longer) calculations.
    \end{proof}
\else
\begin{proof}
    First, observe that the hypothesis implies $\frac{w_x( \mc{Y}_{t-1,x} ) }{w_x(\mc{Y}_{t+1,x})}\le \frac{1}{1-\delta}$.
    Since $\frac{1}{1-\delta}=1+\delta+\delta^{2} + \cdots = 1+ \delta\left(\frac{1}{1-\delta}\right)$ and $\delta\le \frac{1}{2}$, we have $\frac{w_x( \mc{Y}_{t-1,x} ) }{w_x(\mc{Y}_{t+1,x})} \le 1+2\delta$.
    
    Fix an event $E\subseteq \mc{Y}$ and a data set $x'$ adjacent to $x$.
    We show $\Pr[\M{\eps,t}(x)\in E]\le e^{\eps} \Pr[\M{\eps,t}(x')\in E] + \delta'$ and $\Pr[\M{\eps,t}(x')\in E]\le e^{\eps} \Pr[\M{\eps,t}(x)\in E] + \delta'$, which, since $x'$ is an arbitrary neighbor, establishes that $x$ is safe.
    The work in the proof is to use our hypothesis about $x$ to imply statements about $x'$, for which we have no explicit assumptions other than adjacency to $x$.
    
    Let $S = \mc{Y}_{t,x}\cap\mc{Y}_{t,x'}$ be the intersection of the supports of $\M{\eps,t}(x)$ and $\M{\eps,t}(x')$.
    We have
    \updateblock
    \begin{align}
        \Pr[\M{\eps,t}(x)\in E] 
            &= \Pr[\M{\eps,t}(x)\in E\cap S] 
                +\Pr[\M{\eps,t}(x)\in E\setminus \mc{Y}_{t,x'}]+\Pr[\M{\eps,t}(x)\in E\setminus \mc{Y}_{t,x}] \nonumber \\
            &= \Pr[\M{\eps,t}(x)\in E\cap S] 
                +\Pr[\M{\eps,t}(x)\in E\setminus \mc{Y}_{t,x'}] + 0 \nonumber \\
            &\le \Pr[\M{\eps,t}(x)\in E\cap S] 
                +\Pr[\M{\eps,t}(x)\notin \mc{Y}_{t,x'}]. \label{eq:dp_safety_general}
    \end{align}
    We will first bound $\Pr[\M{\eps,t}(x)\in E\cap S]$ by $(1+2\delta)e^\eps \Pr[\M{\eps,t}(x')\in E]$.
    If $E\cap S$ is the empty set, then this inequality is trivially true, so assume otherwise.
    This assumption implies that $\Pr[\M{\eps,t}(x')\in E\cap S]$ is non-zero, since $S$ is in the support of $\M{\eps,t}(x')$, which means we are free to multiply and divide by it:
    \begin{align*}
        \Pr[\M{\eps,t}(x)\in E\cap S] &= \frac{\Pr[\M{\eps,t}(x)\in E\cap S]}{\Pr[\M{\eps,t}(x')\in E\cap S] }\Pr[\M{\eps,t}(x')\in E\cap S] \\
        &\le \frac{\Pr[\M{\eps,t}(x)\in E\cap S]}{\Pr[\M{\eps,t}(x')\in E\cap S] }\Pr[\M{\eps,t}(x')\in E],
    \end{align*}
    where in the second line we have applied $E\cap S\subseteq E$.
    So we must bound the ratio by $(1+2\delta)e^\eps$, which we will do by bounding the ratio for every point $y\in S$.
    \color{black}
    The normalizing constants for $\M{\eps,t}(x)$ and $\M{\eps,t}(x')$ may differ and the score functions at $y$ can differ by at most 1, so we have $\frac{\Pr[\M{\eps,t}(x)=y]}{\Pr[\M{\eps,t}(x')=y]} \le e^{\eps/2}\cdot \frac{w_{x'}(\mc{Y}_{t,x'})}{w_x(\mc{Y}_{t,x})}$.
    We can upper bound the ratio of normalizing constants.
    The first step to do so is straightforward: for any set $A$, $w_{x'}(A)\le e^{\eps/2} w_x(A)$.
    The second inequality, however, is subtle and uses the sensitivity of $q(\cdot;\cdot)$ in a different way: any point with score $q(x';y)\ge t$ has score $q(x;y)\ge t-1$.
    Thus we have $\cY_{t,x'}\subseteq \cY_{t-1,x}$ and can write
    \begin{align*}
        e^{\eps/2}\cdot\frac{w_{x'}(\mc{Y}_{t,x'})}{w_x(\mc{Y}_{t,x})} 
            &\le e^{\eps} \cdot \frac{w_x(\mc{Y}_{t,x'})}{w_x(\mc{Y}_{t,x})} \\
            &\le e^{\eps} \cdot \frac{w_x(\mc{Y}_{t-1,x})}{w_x(\mc{Y}_{t+1,x})} \le e^{\eps} (1+ 2\delta).
    \end{align*}
    
    Similarly, we have $\cY_{t+1,x}\subseteq\cY_{t,x'}$.  
    This allows us to apply our hypothesis a second time.
    \begin{equation*}
        \Pr[\M{\eps,t}(x)\notin \mc{Y}_{t,x'}] \le \Pr[\M{\eps,t}(x) \notin \mc{Y}_{t+1,x}]
            = 1 - \frac{w_x(\mc{Y}_{t+1,x})}{w_x(\mc{Y}_{t,x})}
            \le 1 - \frac{w_x(\mc{Y}_{t+1,x})}{w_x(\mc{Y}_{t-1,x})}
            \le \delta.
    \end{equation*}
    Thus, continuing from Equation~\eqref{eq:dp_safety_general}, we have
    \begin{align*}
        \Pr[\M{\eps,t}(x)\in E] &\le e^{\eps}(1+ 2\delta) \Pr[\M{\eps,t}(x')\in E]  + \delta \\
            &\le e^{\eps} \Pr[\M{\eps,t}(x')\in E] + (1+2e^{\eps})\delta.
    \end{align*}
    
    \updateblock
    We now upper bound  $\Pr[\M{\eps,t}(x')\in E]$ in  a similar manner.
    First, since $\mathrm{supp}(\M{\eps,t}(x'))=\{y\in\mc{Y}: q(x';y)\ge t\}\subseteq \{y\in\mc{Y}:q(x;y)\ge t-1\}$, if $\M{\eps,t}(x') \notin \mc{Y}_{t,x}$ then $\M{\eps,t}(x')$ is in $\mc{Y}_{t-1,x}\setminus \mc{Y}_{t,x}$.
    This establishes
    \begin{align*}
        \Pr[\M{\eps,t}(x') \notin \mc{Y}_{t,x}] 
        \le \Pr[\M{\eps,t}(x') \in \mc{Y}_{t-1,x}\setminus \mc{Y}_{t,x}].
    \end{align*}
    We can then upper bound the right-hand side in terms of the weights assigned under $x'$ and $x$: 
    \updateblock
    \begin{align}
        \Pr[\M{\eps,t}(x') \in \mc{Y}_{t-1,x}\setminus \mc{Y}_{t,x}] 
        &= \frac{w_{x'}(\mc{Y}_{t-1,x}\setminus \mc{Y}_{t,x})}{w_{x'}(\mc{Y}_{t,x'})} \nonumber \\
        &\le \frac{w_{x'}(\mc{Y}_{t-1,x}\setminus \mc{Y}_{t,x})}{w_{x'}(\mc{Y}_{t+1,x})} \nonumber \\
        &\le e^{\eps} \frac{w_x(\mc{Y}_{t-1,x}\setminus \mc{Y}_{t,x})}{w_x(\mc{Y}_{t+1,x})} \nonumber \\
        &= e^{\eps} \frac{w_x(\mc{Y}_{t-1,x}) - w_x(\mc{Y}_{t,x})}{w_x(\mc{Y}_{t+1,x})}, \label{eq:safety_ratio}
    \end{align}
    Since $\frac{w_x(\mc{Y}_{t-1,x})}{w_x(\mc{Y}_{t+1,x})} \le 1 + 2\delta$, Equation~\eqref{eq:safety_ratio} is at most $2e^{\eps}\delta$.
    \color{black}
    For the ratio, we have
    \begin{align*}
        \frac{\Pr[\M{\eps,t}(x')=y]}{\Pr[\M{\eps,t}(x)=y]} 
            \le e^{\eps/2} \frac{w_x(\mc{Y}_{t,x})}{w_{x'}(\mc{Y}_{t,x'})} 
            \le e^{\eps} \frac{w_x(\mc{Y}_{t,x})}{w_x(\mc{Y}_{t,x'})} 
            \le e^{\eps} \frac{w_x(\mc{Y}_{t-1,x})}{w_x(\mc{Y}_{t+1,x})}
            \le e^{\eps}(1 + 2\delta).
    \end{align*}
    Thus $\Pr[\M{\eps,t}(x')\in E] \le e^{\eps} \Pr[\M{\eps,t}(x)\in E] + 4e^{\eps}\delta$.
\end{proof}
\fi

We now use this lemma to establish when a data set is far from the set of unsafe data sets.
Note that setting $k=0$ below implies for all $z\in\mathtt{UNSAFE}$ that we have $D_H(x,z)>0$, i.e., $x\in \mathtt{SAFE}$.
\begin{lemma}\label{lemma:gap_k_far_safety}
    For any $k\ge 0$, if there exists a $g>0$ such that $\frac{\mathrm{Vol}(\cY_{t-k-1,x})}{\mathrm{Vol}(\cY_{t+k+g+1,x})}\cdot e^{-\eps g/2}\le \delta$, 
    then for all $z\in \mathtt{UNSAFE}_{(\eps,\delta',t)}$, with $\delta'=4e^{\eps}\delta$,
    we have $D_H(x,z)>k$ 
\end{lemma}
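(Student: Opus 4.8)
The plan is to prove the contrapositive: fix an arbitrary data set $z$ with $D_H(x,z)\le k$ and show $z\in\mathtt{SAFE}_{(\eps,\delta',t)}$, which is exactly the negation of $z\in\mathtt{UNSAFE}_{(\eps,\delta',t)}$. (We may assume $\delta<1/2$; otherwise $\delta'=4e^{\eps}\delta>2$, so $\mathtt{UNSAFE}_{(\eps,\delta',t)}=\emptyset$ and the statement is vacuous.) To establish safety of $z$, I would invoke Lemma~\ref{lemma:basic_safety} \emph{with $z$ in place of $x$}, so it suffices to verify the single inequality $w_z(\cY_{t+1,z})\ge(1-\delta)\,w_z(\cY_{t-1,z})$, equivalently $w_z(\cY_{t-1,z}\setminus\cY_{t+1,z})\le\delta\,w_z(\cY_{t-1,z})$.

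Two elementary facts drive the estimate. First, since $q$ has sensitivity $1$ in its first argument and $D_H(x,z)\le k$, a triangle inequality along a path of at most $k$ neighboring data sets gives $|q(x;y)-q(z;y)|\le k$ for every $y$; hence $e^{-\eps k/2}w_x(A)\le w_z(A)\le e^{\eps k/2}w_x(A)$ for every measurable $A$, and the level sets nest as $\cY_{t-1,z}\subseteq\cY_{t-k-1,x}$ and $\cY_{t+k+g+1,x}\subseteq\cY_{t-1,z}$ (the latter using $g>0$). Second, on $\cY_{s,x}$ the integrand $e^{\eps q(x;y)/2}$ is at least $e^{\eps s/2}$, so $w_x(\cY_{s,x})\ge e^{\eps s/2}\mathrm{Vol}(\cY_{s,x})$; and on $\cY_{t-1,z}\setminus\cY_{t+1,z}$ the integrand is at most $e^{\eps(t+1)/2}$, so $w_z(\cY_{t-1,z}\setminus\cY_{t+1,z})\le e^{\eps(t+1)/2}\mathrm{Vol}(\cY_{t-1,z})$.

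Chaining these: the numerator satisfies $w_z(\cY_{t-1,z}\setminus\cY_{t+1,z})\le e^{\eps(t+1)/2}\mathrm{Vol}(\cY_{t-k-1,x})$, while the denominator satisfies $w_z(\cY_{t-1,z})\ge w_z(\cY_{t+k+g+1,x})\ge e^{-\eps k/2}w_x(\cY_{t+k+g+1,x})\ge e^{\eps(t+g+1)/2}\mathrm{Vol}(\cY_{t+k+g+1,x})$. Dividing, the ratio is at most $e^{-\eps g/2}\cdot\mathrm{Vol}(\cY_{t-k-1,x})/\mathrm{Vol}(\cY_{t+k+g+1,x})\le\delta$ by hypothesis, which is precisely the inequality required by Lemma~\ref{lemma:basic_safety}; hence $z$ is safe and, taking the contrapositive, $D_H(x,z)>k$ for every $z\in\mathtt{UNSAFE}_{(\eps,\delta',t)}$. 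Specializing to $k=0$ gives $x\in\mathtt{SAFE}_{(\eps,\delta',t)}$. I do not expect a genuine obstacle here: the argument is a short calculation, and the only points needing care are (i) getting the \emph{direction} of each sensitivity-induced shift right (which way the $k$'s and $g$'s go when passing between $\cY_{\cdot,z}$ and $\cY_{\cdot,x}$), and (ii) ensuring Lemma~\ref{lemma:basic_safety} is invoked at $z$ — its hypothesis is purely a statement about $z$'s own level sets $\cY_{t\pm1,z}$, so every bound must first be written in terms of $w_z$ and only then transferred, exactly once, to the $x$-level sets that appear in the assumed volume bound.
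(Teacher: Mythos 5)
Your proposal is correct and follows essentially the same route as the paper's proof: both reduce to verifying the hypothesis of Lemma~\ref{lemma:basic_safety} at $z$, lower-bound $w_z(\cY_{t-1,z})$ via the containment $\cY_{t+k+g+1,x}\subseteq\cY_{t+g+1,z}$, and upper-bound $w_z(\cY_{t-1,z}\setminus\cY_{t+1,z})$ via $\cY_{t-1,z}\subseteq\cY_{t-k-1,x}$, yielding the identical volume-ratio bound. The only (immaterial) difference is that you pass through the pointwise weight comparison $w_z(A)\ge e^{-\eps k/2}w_x(A)$ on the denominator where the paper bounds the integrand under $z$ directly; both give $e^{\eps(t+g+1)/2}\mathrm{Vol}(\cY_{t+k+g+1,x})$.
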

\begin{proof}
    Take some $z$ at distance at most $k$ from $x$ (if $k=0$, set $z\gets x$).
    We show $z\in\mathtt{SAFE}_{(\eps,\delta',t)}$.
    We have, from Lemma~\ref{lemma:basic_safety}, that if $\frac{w_z( \mc{Y}_{t+1,z} ) }{w_z(\mc{Y}_{t-1,z})} \ge 1-\delta$, then  $z$ is safe.
    This assumption is equivalent to $\frac{w_z(\cY_{t-1,z}\backslash \cY_{t+1,z})}{w_z(\cY_{t-1,z})}\le \delta$, which is the form we use.
    
    First we lower bound the denominator:
    \begin{align*}
        w_z(\cY_{t-1,z}) \ge w_z(\cY_{t+g+1,z})
            &\ge \mathrm{Vol}(\cY_{t+g+1,z}) e^{\eps(t+g+1)/2} %
            \ge \mathrm{Vol}(\cY_{t+g+k+1,x}) e^{\eps(t+g+1)/2},
    \end{align*}
    where (crucially) the last inequality switches to the volume under $x$, and we have used the sensitivity of $q$.
    We use the same idea on the numerator, switching to a volume under $x$ in the first inequality:
    \begin{equation*}
        w_z(\cY_{t-1,z}\backslash \cY_{t+1,z}) \le w_z(\cY_{t-k-1,x}\backslash \cY_{t+1,z}) 
        \le \mathrm{Vol}(\cY_{t-k-1,x}) e^{\eps (t+1)/2}.
    \end{equation*}
    With an upper bound on the numerator, a lower bound on the denominator, and the fact that $\frac{e^{\eps(t+1)/2}}{ e^{\eps (t+g+1)/2}} = e^{-\eps g /2}$, we have
    \begin{equation*}
        \frac{w_z(\cY_{t-1,z}\backslash \cY_{t+1,z})}{w_z(\cY_{t-1,z})} \le \frac{\mathrm{Vol}(\cY_{t-k-1,x})}{\mathrm{Vol}(\cY_{t+k+g+1,x})}\cdot e^{-\eps g/2}\le \delta,
    \end{equation*}
    so $z \in \mathtt{SAFE}_{(\eps,\delta',t)}$.
\end{proof}

\ifshort
    With these tools in hand, in the supplementary material we show that, if $n=\omega(d/\eps)$, typical Gaussian data satisfy the hypothesis of Lemma~\ref{lemma:gap_k_far_safety} for $g=n/8$ and thus are far from unsafe (implying that we are unlikely to return $\mathtt{FAIL}$) and that with high probability the restricted sampler $\M{\eps,t}(x)$ returns a point with high Tukey depth.
\fi

\iflong
\subsubsection{Typical Gaussian Data Are Far from Unsafe}
\label{sec:gaussian-safety}
 
With Lemma~\ref{lemma:gap_k_far_safety}, we can show that $\alpha_1$-typical data sets are far from $\mathtt{UNSAFE}$.
We ask for an additional $\frac{\log(1/\beta)}{\eps}$ distance beyond the threshold to ensure that we pass the distance test with high probability.

\begin{lemma}[Typically Far from $\mathtt{UNSAFE}$]\label{lemma:typically_far}
    Assume that $x$ is $\alpha_1$-typical for $\alpha_1\le \frac{1}{10}$.
    There exists a constant $c$ such that, for any $\beta, \delta,\eps>0$ with $\eps\le 1$ and $\delta\le \frac{1}{2}$, if $n\ge c\left(\frac{d+\log (1/\beta\delta)}{\eps}\right)$ then $x$ is $\frac{\log(1/2\beta\delta)}{\eps}$-far from $\mathtt{UNSAFE}_{(\eps,\delta,n/4)}$.
\end{lemma}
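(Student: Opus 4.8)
The plan is to derive this directly from Lemma~\ref{lemma:gap_k_far_safety}. I would apply that lemma with $t = n/4$, with $k = \frac{\log(1/2\beta\delta)}{\eps}$ (which is nonnegative since $2\beta\delta < 1$), with the choice $g = n/8$, and with the slack parameter $\hat\delta := \delta/(4e^{\eps})$ playing the role of its ``$\delta$'' — so that its ``$\delta'$'' equals exactly $4e^{\eps}\hat\delta = \delta$. If the volume hypothesis of Lemma~\ref{lemma:gap_k_far_safety} holds, its conclusion gives $D_H(x,z) > k$ for every $z\in\mathtt{UNSAFE}_{(\eps,\delta,n/4)}$, which is precisely the assertion that $x$ is $\frac{\log(1/2\beta\delta)}{\eps}$-far from $\mathtt{UNSAFE}_{(\eps,\delta,n/4)}$. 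So the entire proof reduces to verifying
\[
    \frac{\mathrm{Vol}(\cY_{t-k-1,x})}{\mathrm{Vol}(\cY_{t+k+g+1,x})}\cdot e^{-\eps g/2} \le \hat\delta .
\]

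To bound the volume ratio I would invoke Lemma~\ref{lemma:general_ratio_of_volumes} with $p = (t-k-1)/n$ and $q = (t+k+g+1)/n$. First one must check these are legal arguments: with $t = n/4$ and $g = n/8$ we get $p = \tfrac14 - \tfrac{k+1}{n}$ and $q = \tfrac38 + \tfrac{k+1}{n}$, and since $n \ge c\,\frac{d+\log(1/\beta\delta)}{\eps} \ge c\,\frac{\log(1/2\beta\delta)}{\eps} = ck$ (using $\eps\le1$), the correction term $\tfrac{k+1}{n}$ is at most a tiny absolute constant once $c$ is large enough. Hence $p$ is bounded away from $0$ — in fact $p > \alpha_1$, using $\alpha_1 \le \tfrac1{10}$ — and $q + \alpha_1 < \tfrac12$ with room to spare; this last inequality is exactly where $\alpha_1\le\tfrac1{10}$ is used, together with the fact that $t+g = \tfrac38 n$ sits comfortably below $(\tfrac12-\alpha_1)n$. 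Lemma~\ref{lemma:general_ratio_of_volumes} then yields
\[
    \frac{\mathrm{Vol}(\cY_{t-k-1,x})}{\mathrm{Vol}(\cY_{t+k+g+1,x})} \le \left(\frac{\cdf^{-1}(1-p+\alpha_1)}{\cdf^{-1}(1-q-\alpha_1)}\right)^{d} \le C_0^{\,d}
\]
for an absolute constant $C_0$, since the argument $1-p+\alpha_1$ lies in a fixed closed subinterval of $(0,1)$ and $1-q-\alpha_1$ lies in a fixed closed subinterval of $(\tfrac12,1)$, making the ratio of $\cdf^{-1}$ values a bounded constant.

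It then remains to check $C_0^{\,d}e^{-\eps g/2}\le\hat\delta$. Taking logarithms, this is $\tfrac{\eps g}{2}\ge d\ln C_0 + \ln(1/\hat\delta)$, i.e.\ $\tfrac{\eps n}{16}\ge d\ln C_0 + \ln(4e^{\eps}/\delta)$, which follows because the hypothesis $n\ge c\,\frac{d+\log(1/\beta\delta)}{\eps}$ dominates the right-hand side for a sufficiently large absolute constant $c$ (note $\log(1/\beta\delta)\ge\log(1/\delta)$ and $\eps\le1$). This verifies the volume hypothesis of Lemma~\ref{lemma:gap_k_far_safety} and finishes the proof. The only delicate point is the two-sided squeeze on $g$: it must be $\Theta(n)$ so that the $e^{-\eps g/2}$ factor overcomes the $C_0^{\,d}$ blow-up of the volume ratio, yet $t+g$ must stay strictly below $(\tfrac12-\alpha_1)n$ so that Lemma~\ref{lemma:general_ratio_of_volumes} applies at all — and $k$ must simultaneously be $o(n)$, which is where the sample-size bound $n\gtrsim (d+\log(1/\beta\delta))/\eps$ is needed. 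The choice $t=n/4$, $g=n/8$ threads this needle; carefully tracking the absolute constants (in $C_0$ and in $c$) is the bulk of the routine work.
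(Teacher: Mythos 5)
Your proposal is correct and follows essentially the same route as the paper's proof: both apply Lemma~\ref{lemma:gap_k_far_safety} with $g=n/8$ and target threshold $\delta/(4e^{\eps})$, bound the volume ratio via Lemma~\ref{lemma:general_ratio_of_volumes} using $\alpha_1\le\frac{1}{10}$ and $\frac{k+1}{n}$ small to keep the quantile arguments bounded away from $\frac12$ and $1$, and then absorb the resulting $C_0^d$ factor into $e^{-\eps n/16}$ using the sample-size hypothesis. No gaps.
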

\iflong
\begin{proof}
    We use Lemma~\ref{lemma:gap_k_far_safety}, which asks for a $g>0$ such that $\frac{\mathrm{Vol}(\cY_{t-k-1,x})}{\mathrm{Vol}(\cY_{t+k+g+1,x})}\cdot e^{-\eps g/2}\le \frac{\delta}{4e^{\eps}}$ to imply that $x$ is $k$-far from $\mathtt{UNSAFE}_{(\eps,\delta,t)}$.
    We take $g=\frac{n}{8}$, so $t-k-1 = n\left(\frac{1}{4}-\frac{k+1}{n}\right)$ and $t+g+k+1 = n\left(\frac{3}{8} +\frac{k+1}{n}\right)$.
    We apply Lemma~\ref{lemma:general_ratio_of_volumes} to bound the ratio of volumes:
    \begin{equation}
        \frac{\mathrm{Vol}(\cY_{t-k-1,x})}{\mathrm{Vol}(\cY_{t+k+g+1,x})}
            \le \left(\frac{\cdf^{-1}\left(\frac{3}{4} + \frac{k+1}{n} + \alpha_1\right)}{\cdf^{-1}\left(\frac{5}{8}- \frac{k+1}{n} - \alpha_1\right)}\right)^d.
    \end{equation}
    We want both arguments to the quantile functions to be bounded away from $1/2$ and 1, for which it suffices to use our assumption of $\alpha_1\le \frac{1}{10}$ and ask that $\frac{k+1}{n}<\frac{1}{100}$. 
    This means that we must have $n \gtrsim (1/\eps)\log(1/\beta\delta)$.
    
    With both quantiles equal to constants, there is a constant $c'$ such that
    \begin{equation}
        \frac{\mathrm{Vol}(\cY_{t-k-1,x})}{\mathrm{Vol}(\cY_{t+k+g+1,x})}\cdot e^{-\eps g/2} 
            \le e^{c'd - n \eps/16}, \label{eq:far_from_unsafe}
    \end{equation}
    so we require $n \ge c\left(\frac{d + \log(1/\delta)}{\eps}\right)$ for some constant $x$ to make \eqref{eq:far_from_unsafe} at most $\frac{\delta}{4e^{\eps}}$, noting that $e^{\eps}\le e$.
\end{proof}
\fi

\subsubsection{Restricted Exponential Mechanism is Accurate}
\label{sec:REM-acc}

For our final lemma in the accuracy analysis, we show that the restricted sampler $\M{\eps,t}(x)$, when run on $\alpha_1$-typical data sets, with high probability returns a point with high empirical Tukey depth.
\updateblock
A previous version of this lemma relied on the standard analysis of the exponential mechanism.
The current, slightly more involved analysis results in an improvement in the final sample requirements by a $\log 1/\alpha$ factor.

\newcommand{\glim}{\frac{n}{4}-2k}

\begin{lemma}[Accuracy of $\M{\eps,t}(x)$]\label{lemma:exp_is_good}
    Assume that $x$ is $\alpha_1$-typical for $\alpha_1<\frac{1}{10}$.
    For any $\beta>0$, there exists a constant $c$ such that if $n\ge c\left(\frac{d + \log (1/\alpha_1 \eps \beta)}{\alpha_1 \eps}\right)$ then
    \begin{equation}
        \Pr_{y\sim \M{n/4}(x)}\left[  T_x(y) < \frac{1}{2} - 4\alpha_1\right] \le \beta.
    \end{equation}
\end{lemma}
\begin{proof}
    Let $\alpha_2 = \frac{2k}{n}$ for some $k\in \mathbb{N}$ such that $3\alpha_1< \alpha_2 < 4\alpha_1$.   
    Such a $k$ exists: by assumption, $\alpha_1\gtrsim \frac{d}{\eps n}$, so $\alpha_1$ is at least a large constant times $\frac{1}{n}$.
    We will define ``good'' and ``bad'' outputs in terms of $\alpha_2$; this analysis is cleaner when $\alpha_2$ is a multiple of $\frac 1 n$, which makes $\frac 1 2 - \alpha_2$ correspond exactly to a particular empirical Tukey depth. 
    \updateblock

    Then $\Pr\left[T_x(y) < \frac{1}{2} - 4\alpha_1\right]\le \Pr\left[T_x(y) \le \frac{1}{2}-\alpha_2\right]$.
    Let $\mathtt{BAD}\subseteq \mathbb{R}^d$ be the set of points with empirical Tukey depth at most $\frac{1}{2}-\alpha_2$ and $\mathtt{GOOD}$ those points with depth at least $\frac{1}{2}-\frac{\alpha_2}{2}$.
    If $y\in\mathtt{BAD}$, we can list the possible values for its empirical Tukey depth: $T_x(y)\in \left\{\frac{1}{2}-\alpha_2 -\frac{\ell}{n}:  \ell = 0,1 ,\ldots,\glim \right\}$, since $\frac{1}{2}-\alpha_2 - \frac{1}{n}\left(\frac{n}{4} - 2k\right) = \frac{1}{4}$, the smallest Tukey depth we might output.

    Since these events are mutually exclusive, we can expand $\Pr[y\in\mathtt{BAD}]$ as a sum over each:
    \begin{align*}
        \Pr[y\in\mathtt{BAD}] = \sum_{\ell=0}^{\glim} \Pr\left[T_x(y)= \frac{1}{2}- \alpha_2-\frac \ell n\right].
    \end{align*}
    As in the standard analysis of the exponential mechanism, dividing $\Pr[y\in\mathtt{BAD}]$ by $\Pr[y\in\mathtt{GOOD}]$, which is at most $1$, allows us to cancel the normalizing constants:
    \begin{align*}
        \Pr[y\in\mathtt{BAD}] \le \frac{\Pr[y\in\mathtt{BAD}]}{\Pr[y\in\mathtt{GOOD}]}
            &= \frac{\sum_{\ell=0}^{\glim} \Pr\left[T_x(y)= \frac{1}{2}- \alpha_2 - \frac \ell n\right]}{\Pr\left[T_x(y)\ge \frac{1}{2}-\frac{\alpha_2}{2}\right]} \\
            &= \sum_{\ell=0}^{\glim} \frac{\Pr\left[T_x(y)= \frac{1}{2}- \alpha_2-\frac \ell n\right]}{\Pr\left[T_x(y)\ge \frac{1}{2}-\frac{\alpha_2}{2}\right]}.
    \end{align*}
    We can upper bound each numerator in terms of its score and the volume of the Tukey level sets.
    Let $Z = \int_{\mathbb{R}^d} \Pr[\M{\eps,t}(x) = z] \mathrm{d}z$ be the normalizing constant for $\M{\eps,t}(x)$.
    We have
    \begin{align*}
        \Pr\left[T_x(y)= \frac{1}{2}- \alpha_2-\frac \ell n\right]
            &= \frac 1 Z \cdot\exp\left\{ \frac{\eps n}{2}\left(\frac{1}{2}- \alpha_2 - \frac \ell n \right)\right\}\cdot \mathrm{Vol}\left(\mc{Y}_{n\left(\frac 1 2 - \alpha_2 - \frac \ell n\right),x} \setminus  \mc{Y}_{n\left(\frac 1 2 - \alpha_2 - \frac \ell n +\frac 1 n\right),x}\right) \\
            &\le \frac 1 Z \cdot \exp\left\{ \frac{\eps n}{2}\left(\frac{1}{2}- \alpha_2 - \frac \ell n \right)\right\}\cdot \mathrm{Vol}\left(\mc{Y}_{n\left(\frac 1 2 - \alpha_2 - \frac \ell n\right),x}\right)
    \end{align*}
    and similarly for the denominator:
    \begin{align*}
        \Pr\left[T_x(y)\ge  \frac{1}{2}- \frac{\alpha_2}{2}\right]
            &\ge \frac 1 Z \cdot  \exp\left\{ \frac{\eps n}{2}\left(\frac{1}{2}- \frac{\alpha_2}{2} \right)\right\}\cdot \mathrm{Vol}\left(\mc{Y}_{n\left(\frac 1 2 - \frac{\alpha_2}{2}\right),x}\right).
    \end{align*}
    \updateblock
    Returning to the sum and combining the exponential terms, we arrive at the following upper bound:
    \begin{align*}
        \Pr[y\in\mathtt{BAD}] 
            &\le \sum_{\ell=0}^{\glim} \exp\left\{ -\frac{\eps n \alpha_2}{4} -\frac{\eps \ell}{2}  \right\}\cdot \frac{\mathrm{Vol}\left(\mc{Y}_{n\left(\frac 1 2 - \alpha_2 - \frac \ell n\right),x}\right)}{\mathrm{Vol}\left(\mc{Y}_{n\left(\frac 1 2 - \frac{\alpha_2}{2}\right),x}\right)}.
    \end{align*}

    Since $x$ is $\alpha_1$-typical, Lemma~\ref{lemma:general_ratio_of_volumes} allows us to upper bound the ratio of volumes in terms of the inverse CDF of the standard Gaussian, about which we will require the following fact:
    \begin{fact}
        For any $0\le z \le \frac{1}{4}+\frac{1}{10}$, $2z \le \Phi^{-1}\left(\frac{1}{2}+z\right)\le 3z$.
    \end{fact}
    
    With these in hand, we have
    \begin{align*}
        \Pr[y\in\mathtt{BAD}] 
        &\le \sum_{\ell=0}^{\glim}
            \exp\left\{ -\frac{\eps n \alpha_2}{4} -\frac{\eps \ell}{2} \right\}
            \left( \frac{\Phi^{-1}\left(\frac{1}{2}+ \alpha_2 +\frac \ell n + \alpha_1\right)}{\Phi^{-1}\left(\frac{1}{2}+\frac{\alpha_2}{2} - \alpha_1\right)} \right)^d \\
        &\le \sum_{\ell=0}^{\glim}
            \exp\left\{ -\frac{\eps n \alpha_2}{4} -\frac{\eps \ell}{2}\right\}
            \left( \frac{3( \alpha_2 +\ell/n+ \alpha_1)}{2(\alpha_2/2 - \alpha_1)} \right)^d.
    \end{align*}
    Since $3\alpha_1 \le \alpha_2 \le 4\alpha_1$, we can continue to simplify and arrive at 
    \begin{align*}
        \Pr[y\in\mathtt{BAD}] 
            &\le \sum_{\ell=0}^{\glim}
                \exp\left\{ -\frac{\eps n \alpha_2}{4} -\frac{\eps \ell}{2} \right\}
                \left( \frac{3( 4\alpha_1 +\ell/n+ \alpha_1)}{2(3\alpha_1/2 - \alpha_1)} \right)^d \\
            &\le \sum_{\ell=0}^{\glim}
                \exp\left\{ -\frac{\eps n \alpha_2}{4} -\frac{\eps \ell}{2} \right\}
                \left( \frac{ 15\alpha_1 +3\ell/n}{\alpha_1} \right)^d \\
            &= \sum_{\ell=0}^{\glim}
                \exp\left\{ -\frac{\eps n \alpha_2}{4} -\frac{\eps \ell}{2} \right\}
                \left(  15 + \frac{3\ell}{\alpha_1 n} \right)^d \\
            &= \sum_{\ell=0}^{\glim}
                \exp\left\{ -\frac{\eps n \alpha_2}{4} -\frac{\eps \ell}{2}  +d \ln \left(15 + \frac{3\ell}{\alpha_1 n}\right)\right\}.
    \end{align*}
    Taking derivatives and using the fact that $n\ge c\cdot \frac{d}{\alpha_1\eps}$ for a sufficiently large constant $c$ establishes that these terms are decreasing in $\ell$, so the maximum is achieved at $\ell=0$.
    We upper bound the number of terms in the sum with $n$ to arrive at
    \begin{align*}
        \Pr[y\in\mathtt{BAD}] \le n e^{-\frac{\eps n \alpha_2}{4} + d\ln 15} \le n e^{-\frac{3\eps n \alpha_1}{4} + d\ln 15},
    \end{align*}
    again using $\alpha_2\ge 3\alpha_1$.
    This is less than $\beta$ for $n\ge c\cdot \frac{d + \log 1/\beta + \log n}{\alpha_1 \eps}$ for some constant $c$.
    Since $n = \Omega\left(\frac{\log n}{\alpha_1\eps}\right)$ when $n=\Omega\left(\frac{\log (1/\alpha_1 \eps)}{\alpha_1 \eps}\right)$, the failure probability is at most $\beta$ when $n\ge c' \cdot \frac{d + \log \frac{1}{\alpha_1 \eps \beta}}{\alpha_1 \eps}$ for some constant $c'$.
\end{proof}

\color{black}

We are now ready to prove the main theorem.%
\begin{proof}[Proof of Theorem~\ref{thm:tukey_main}]
    Set $\alpha_1=c_0\alpha$ for a constant $c_0$ to be determined later.
    By Lemma~\ref{lemma:uniform_convergence}, with probability at least $1-\beta$, $x$ is $\alpha_1$-typical.
    If $x$ is $\alpha_1$-typical, by Lemma~\ref{lemma:typically_far} it is at least $\frac{\log(1/2\delta\beta)}{\eps}$-far from $\mathtt{UNSAFE}_{(\eps,\delta,t)}$.
    This implies that Algorithm~\ref{alg:restrictedexponential} returns $\mathtt{FAIL}$ with probability at most $2\beta$: by the CDF of the Laplace distribution,
    \begin{align}
        \Pr[\mathtt{FAIL}] &\le \Pr[\text{$x$ not $\alpha_1$-typical}] + \Pr\left[ \frac{\log(1/2\delta\beta)}{\eps}+Z\le \frac{\log(1/2\delta)}{\eps} \right] \\
        &\le \beta + \Pr\left[ Z\le -\frac{\log(1/\beta)}{\eps} \right] = \beta +\frac{\beta}{2}.
    \end{align}
    
    \updateblock
    If $x$ is $\alpha_1$-typical and we don't return $\mathtt{FAIL}$, we instead return a sample from $\M{n/4}(x)$.
    Lemma~\ref{lemma:exp_is_good} tells us that, for $\alpha_1$-typical $x$,
    \begin{align}
        \Pr_{y\sim \M{n/4}(x)}\left[ T_x(y) < \frac{1}{2} - 4\alpha_1\right] 
            &\le \beta.
    \end{align}
    \color{black}
    
    So with probability at least $1-3\beta$, we have $T_x(y)\ge \frac{1}{2}-4\alpha_1$.
    Since $x$ is $\alpha_1$-typical, we have 
    \begin{equation}
        T_P(y) \ge \frac{1}{2} -\alpha_1 -4\alpha_1 = \frac{1}{2} - 5\alpha_1.
    \end{equation}
    Recall $\cdf\left(-\|y-\mu\|_{\Sigma}\right) =T_P(y)$.
    By definition, $\cdf(-z)=\frac{1}{2}-\frac{1}{2}\mathrm{Erf}\left(\frac{z}{\sqrt{2}}\right)$. It is easy to see that $\mathrm{Erf}(x)\geq \mathrm{Erf}(1)\cdot x\geq 0.84x$ for $x\in[0,1]$ (see e.g.~\citep[Lemma 3.2]{CanonneKMUZ19}). It follows that 
    \begin{equation*}
        \cdf\left(-z\right)\leq \frac{1}{2}-\frac{0.84z}{2\sqrt{2}}.
    \end{equation*}
    Combining the above inequalities, we have that $\|y-\mu\|_{\Sigma}\le \frac{10\sqrt{2}}{0.84}\alpha_1\leq 17\alpha_1$.
    Setting $c_0=1/17$ makes this term at most $\alpha$.

    Privacy follows from a standard calculation, provided in Appendix~\ref{sec:proofs} as Proposition~\ref{prop:rem_privacy}.
\end{proof}
\fi
\fi
    \section{Empirically Rescaled Gaussian Mechanism}\label{sec:gaussian}

In this section, we describe our second estimator. %
At a high level, the estimator first privately checks whether the data set $x$ is $\frac{1}{\eps}$-close in Hamming distance to a good set of ``roughly Gaussian'' data sets $\mc{G}(\lambda)$. If so, it finds the closest data set to $x$ that belongs in $\mc{G}(\lambda)$, which we call $\tx$. Then it returns a sample $\hat\mu$ drawn from $\mc{N}(\mu_{\tx},C^2\Sigma_{\tx})$, where $\mu_{\tx}$ and $\Sigma_{\tx}$ are the empirical mean and covariance of $\tx$ and $C$ is a scale parameter appropriately set to ensure privacy. 
\iflong

Specifically, we define the empirical mean and covariance in the following (slightly non-standard) way:
\else
Specifically, we use the definitions:
\fi
\begin{definition}[Empirical Mean and Covariance]\label{def:empiricalmeancov}
For data set $x\in\mathbb{R}^{3n\times d}$, the empirical mean and covariance of $x$ are respectively defined by
\ifshort
$\mu_x = \frac{1}{n}\sum_{i=1}^{n} x_{i+2n}\text{ and } \Sigma_x = \frac{1}{2n}\sum_{i=1}^{n} (x_i - x_{i+n})(x_i - x_{i+n})^T.$
\else
    \begin{equation*}
        \mu_x = \frac{1}{n}\sum_{i=1}^{n} x_{i+2n} ~~\text{ and }~~ \Sigma_x = \frac{1}{2n}\sum_{i=1}^{n} (x_i - x_{i+n})(x_i - x_{i+n})^T.
    \end{equation*}
\fi
\end{definition}
In comparison with the standard empirical estimators, ours enable a simpler privacy analysis, since replacing one datapoint in $x$ only affects one term in one of the sums. %
For convenience, we choose the number of samples to be $3n$ so that we can pair the first two thirds to construct $\Sigma_x$ and use the last third for $\mu_x$.
\iflong
Note that before accessing the data set, the algorithm randomly permutes all data points (line~\ref{step:permute}) -- a technicality which pertains to the fact that $\Sigma_x$ is order-dependent. %
\fi
With these definitions at hand, the good set $\mc{G}(\lambda)$ is defined as follows:
\begin{definition}[$\lambda$-goodness]\label{def:goodness}
For any $\lambda>0$, define $\cG(\lambda)\subseteq \mathbb{R}^{3n\times d}$ as 
\begin{equation*}
    \mc{G}(\lambda)\defeq \left\{x\in\R^{3n\times d}: \Sigma_x \text{ is invertible and } ~\forall i\in[3n] ~~ \|x_i-\mu_{x}\|_{\Sigma_{x}}^2 \le \lambda \right\}.
\end{equation*}
\end{definition}

We set $\lambda\approx d \log n$, since for this value (sub)Gaussian data will belong in $\mc{G}(\lambda)$ with high probability.

Finally, we note that the algorithm immediately aborts if the number of samples is less than $\frac{k\lambda\log(1/\delta)}{\eps}\approx \frac{d\log(1/\delta)}{\eps^2}$, a condition necessary to ensure privacy. 
\iflong
The parameter $k\approx \frac{\log(1/\delta)}{\eps}$ is an upper bound on the Hamming distance between the projections $\tx,\ty$ of any two neighboring data sets $x,y$ that pass the check in line~\ref{step:fail}, and, along with $\lambda$, plays an important role in the privacy analysis. 
\fi

\newcommand{\gaussianalg}{\mc{A}_{\eps,\delta,\beta}^{G}}

\begin{algorithm}[H] \caption{Empirically Rescaled Gaussian Mechanism $\gaussianalg(x)$}\label{alg:main}
\begin{algorithmic}[1]
\Require{Data set $x = (x_1,\dots,x_{3n})^T \in \mathbb{R}^{3n\times d}$. Privacy parameters $\eps,\delta>0$. Failure probability $\beta>0$.} 
\State Initialize: $\lambda\gets O\left(d\log\frac{n}{\beta}\right)$, $t\gets \frac{1}{\eps}\log\frac{1}{\beta}$, $k\gets\frac{2}{\eps}\log\frac{1}{\delta\beta}+1$, $C^2\gets \frac{32k^2}{\eps^2 n^2}\cdot \frac{\lambda}{1-2k\lambda/n}\cdot \log\frac{1.25}{\delta}$. %
\If{$n=o\left(\frac{k\lambda}{\eps}\log\frac{1}{\delta}\right)$} %
    \Return \texttt{FAIL.} \label{step:samplesizecond}
\EndIf
\State $\bx\gets\sigma(x)$ \Comment{random permutation $\sigma: (\mathbb{R}^d)^{3n} \rightarrow (\mathbb{R}^d)^{3n}$} \label{step:permute}
\State $h\gets D_H(\bx,\mc{G}(\lambda))$ \Comment{distance between $\bx$ and $\lambda$-goodness} \label{step:hamdist}
\If{$h+r>t$ for $r\sim\mathrm{Lap}\left(1/\eps\right)$}
    \Return \texttt{FAIL.} \label{step:fail}
\EndIf
\State $\tx\gets\argmin_{z\in\mc{G}(\lambda)}D_H(\bx,z)$ \Comment{projection to $\lambda$-goodness} \label{step:projection}
\State \Return $\hat\mu\sim \mc{N}(\mu_{\tx}, C^2\Sigma_{\tx})$ \label{step:output}
\end{algorithmic}
\end{algorithm}

We remark that the sample size check in line~\ref{step:samplesizecond} and the setting of $\lambda$ are not well-defined, as they are stated with asymptotic notation. Although it is possible to compute the constants for these steps, we exclude them in favor of a cleaner analysis.

\ifshort
    We prove accuracy for the following family of distributions: 
    \begin{defn}[Subgaussian distribution]\label{def:subgauss-multivar-short}
    Let $P_{\mu,\Sigma}$ be a distribution over $\R^d$ with mean $\mu$ and covariance $\Sigma\succ 0$. For a constant $c>0$, we say that $P_{\mu,\Sigma}$ is subgaussian with parameter $c\Sigma$, if %
    for all $u\in\R^d$ such that $\|u\|_2=1$, $\E_{v\sim P_{\mu,\Sigma}}[e^{\lambda u^T(v-\mu)}]\leq e^{c\lambda^2(u^T\Sigma u)/2} ~~ \text{for all $\lambda\in\R$.}$
    \end{defn}
    We write $P_{\mu,\Sigma}\in\mathrm{subG}(c\Sigma)$. The definition appears in relevant literature~\citep{LugosiM19, DongHL19} and intuitively asks that the distribution concentrates ``at least as well'' as a Gaussian along every univariate projection.
    
    We now state the guarantees of Algorithm~\ref{alg:main}. 
    We defer their proof to the supplementary material.
    \begin{thm}[Privacy and Accuracy of the Empirically Rescaled Gaussian Mechanism]\label{th:main-short}
    For any $\eps>0$, $0<\delta<1$, Algorithm~\ref{alg:main} is $(3\eps,e^{\eps}(1+e^{\eps})\delta)$-differentially private. %
    There exists an absolute constant $C$ such that, 
    for any $0<\alpha, \beta,\eps,\delta < 1$, mean $\mu$, and positive definite $\Sigma$, 
    if $x\sim P_{\mu,\Sigma}^{\otimes n}$, where $P_{\mu,\Sigma}\in\mathrm{subG}(c\Sigma)$ for some constant $c>0$, and
    \begin{equation}
        n \ge C\left(\frac{d}{\alpha^2}\log\frac{1}{\beta} + \frac{d}{\alpha\eps^2}\log^3\frac{1}{\delta\beta}\cdot\log\frac{d\log(1/\delta\beta)}{\alpha\eps} \right),
    \end{equation}
    then with probability at least $1-3\beta$, Algorithm~\ref{alg:main} returns $\gaussianalg(x)=\hat\mu$ such that $\|\hat\mu-\mu\|_{\Sigma}\leq\alpha$. %
    \end{thm}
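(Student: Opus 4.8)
The theorem has a worst‑case privacy half and an average‑case accuracy half, and I would treat them separately. For privacy, the only data‑dependent steps are the test $h+r>t$ (with $h=D_H(\bx,\mc{G}(\lambda))$ of global sensitivity $1$, so the pass/fail bit is $(\eps,0)$‑private by the Laplace mechanism) and the release of $\hat\mu\sim\mc{N}(\mu_{\tx},C^2\Sigma_{\tx})$; the permutation and the sample‑size check at line~\ref{step:samplesizecond} are data‑oblivious. For the output I would run the propose--test--release argument of~\cite{DworkL09}, cased on the value $h_x:=D_H(\bx,\mc{G}(\lambda))$ for the worse of two neighbours $\bx\sim\by$: if $h_x>t+\tfrac1\eps\log\tfrac1{2\delta}$, the Laplace tail makes the test pass with probability $\le\delta$, so the non‑\texttt{FAIL} outputs are absorbed into the slack; otherwise $h_y\le h_x+1$ is also small, and since $\tx,\ty$ are the $\mc{G}(\lambda)$‑projections of $\bx,\by$, the triangle inequality for $D_H$ gives $D_H(\tx,\ty)\le h_x+1+h_y\le k$ for \emph{every} realisation of the noise, with $\tx,\ty\in\mc{G}(\lambda)$ by construction (the constants $t,k$ in Algorithm~\ref{alg:main} are chosen exactly so this holds).

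\textbf{The core closeness lemma.} The crux is to show that $\tx,\ty\in\mc{G}(\lambda)$ with $D_H(\tx,\ty)\le k$ forces $\mc{N}(\mu_{\tx},C^2\Sigma_{\tx})\approx_{2\eps,O(\delta)}\mc{N}(\mu_{\ty},C^2\Sigma_{\ty})$. By Definition~\ref{def:empiricalmeancov}, changing $k$ coordinates affects at most $k$ of the $n$ rank‑one summands of $\Sigma_{\cdot}$ and at most $k$ of the $n$ summands of $\mu_{\cdot}$. Writing $\Sigma_{\tx}=S+A$, $\Sigma_{\ty}=S+B$ for the shared part $S$ and residuals $A,B\succeq0$ of rank $\le k$, the $\lambda$‑goodness bound $\|\tx_i-\mu_{\tx}\|_{\Sigma_{\tx}}^2\le\lambda$ (hence $\|\tx_i-\tx_{i+n}\|_{\Sigma_{\tx}}^2\le4\lambda$) bounds each residual term in the $\Sigma_{\tx}$‑geometry by $\tfrac{2\lambda}{n}$, so $A\preceq\tfrac{2k\lambda}{n}\Sigma_{\tx}$ and likewise $B\preceq\tfrac{2k\lambda}{n}\Sigma_{\ty}$; chaining these gives $(1-\eta)\Sigma_{\tx}\preceq\Sigma_{\ty}\preceq(1-\eta)^{-1}\Sigma_{\tx}$ with $\eta=\tfrac{2k\lambda}{n}$, which line~\ref{step:samplesizecond} keeps small. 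The same bookkeeping plus a one‑line bootstrap yields $\|\mu_{\tx}-\mu_{\ty}\|_{\Sigma_{\tx}}\le\tfrac{4k}{n}\sqrt{\lambda/(1-\eta)}$. I would then bound the privacy loss $\log\tfrac{p_{\tx}(\hat\mu)}{p_{\ty}(\hat\mu)}$ under $\hat\mu=\mu_{\tx}+C\Sigma_{\tx}^{1/2}g$, $g\sim\mc{N}(0,\id)$, directly: it splits into a mean‑shift part (a deterministic bias $\tfrac1{2C^2}\|\mu_{\tx}-\mu_{\ty}\|_{\Sigma_{\ty}}^2$ plus a Gaussian term — exactly the standard Gaussian‑mechanism bound, since $C^2=\tfrac{32k^2}{\eps^2n^2}\cdot\tfrac{\lambda}{1-\eta}\log\tfrac{1.25}{\delta}$) and a covariance part $\tfrac12\log|M|+\tfrac12 g^T(M^{-1}-\id)g$ with $M=\Sigma_{\tx}^{-1/2}\Sigma_{\ty}\Sigma_{\tx}^{-1/2}$, whose bias cancels to second order (leaving $O(\|M-\id\|_F^2)$) and whose fluctuation is controlled by Hanson--Wright using $\mathrm{rank}(M-\id)\le2k$ and $\|M-\id\|_{op}\le2\eta$. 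Calibrating $\lambda,k,C$ and the line~\ref{step:samplesizecond} threshold so that both parts are $\le\eps$ except with probability $O(\delta)$, adaptive composition with the test bit gives $(3\eps,e^\eps(1+e^\eps)\delta)$.

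\textbf{Accuracy.} Assume $x$ is an i.i.d.\ sample from the subgaussian $P_{\mu,\Sigma}$. First, the sample‑size check passes because the hypothesis on $n$ dominates $\tfrac{k\lambda}{\eps}\log\tfrac1\delta$ once $\alpha\le1$. Second, with probability $\ge1-\beta$ the permuted sample $\bx$ lies in $\mc{G}(\lambda)$: subgaussian concentration of the pairwise‑difference covariance gives $\tfrac12\Sigma\preceq\Sigma_{\bx}\preceq2\Sigma$ (hence invertibility) once $n\gtrsim d+\log\tfrac1\beta$, and a union bound over the $3n$ points together with $\|\mu_{\bx}-\mu\|_\Sigma\lesssim\sqrt{(d+\log\tfrac1\beta)/n}$ and the spectral bound yields $\|\bx_i-\mu_{\bx}\|_{\Sigma_{\bx}}^2\le\lambda=O(d\log\tfrac n\beta)$ for all $i$ (Definition~\ref{def:goodness}). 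Third, given $\bx\in\mc{G}(\lambda)$ we have $h=0$, so line~\ref{step:fail} fails only if $r>t$, probability $\tfrac12 e^{-\eps t}=\tfrac\beta2$, and $\tx=\bx$ so $\mu_{\tx}=\mu_{\bx}$, $\Sigma_{\tx}=\Sigma_{\bx}$. Finally $\|\hat\mu-\mu\|_\Sigma\le\|\mu_{\tx}-\mu\|_\Sigma+\|\Sigma^{-1/2}(\hat\mu-\mu_{\tx})\|_2$: the first term is $\le\tfrac\alpha2$ by the first term of the $n$‑bound, and since $\hat\mu-\mu_{\tx}\sim\mc{N}(0,C^2\Sigma_{\tx})$ with $\Sigma^{-1/2}\Sigma_{\tx}\Sigma^{-1/2}\preceq2\id$, the second is $\lesssim C\sqrt{d+\log\tfrac1\beta}$ with probability $\ge1-\beta$; substituting $C^2\lesssim\tfrac{k^2\lambda\log(1/\delta)}{\eps^2n^2}$ and demanding this be $\le\tfrac\alpha2$ is exactly the second term of the $n$‑bound, after bounding $\log n$ by the stated iterated logarithm. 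A union bound over the $\le3$ bad events gives probability $\ge1-3\beta$.

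\textbf{Main obstacle.} The delicate step is the core closeness lemma, and within it the covariance contribution to the privacy loss: one must extract the PSD‑sense equivalence of $\Sigma_{\tx}$ and $\Sigma_{\ty}$ from $\lambda$‑goodness, exploit that the perturbation has rank only $O(k)$ (not $d$) so its second‑order effect on the Gaussian log‑density is small, and then tune $\lambda,k,C$ and the sample‑size threshold so that the mean, covariance, and test losses each land at $\eps$ — all while the propose--test--release casework upgrades these input‑specific conditional bounds to a uniform worst‑case guarantee. The accuracy half is comparatively routine concentration; its one subtlety is the implicit definition $\lambda\approx d\log n$, handled by a standard self‑bounding argument on $n$.
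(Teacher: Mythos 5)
Your overall architecture — propose--test--release casework on the distance to $\mc{G}(\lambda)$, reduction to the core lemma that $\tx,\ty\in\mc{G}(\lambda)$ with $D_H(\tx,\ty)\le k$ forces $\cN(\mu_{\tx},C^2\Sigma_{\tx})\approx\cN(\mu_{\ty},C^2\Sigma_{\ty})$, a split of the privacy loss into a mean-shift part (standard Gaussian mechanism with sensitivity $O(\tfrac{k}{n}\sqrt{\lambda})$) and a covariance part controlled by Hanson--Wright, and the accuracy argument via goodness of subgaussian samples — is the paper's proof almost step for step (the paper composes two separate indistinguishability lemmas rather than writing one joint privacy-loss function, a cosmetic difference; your observation that the bias of the covariance part cancels to second order is a refinement the paper does not need, since it bounds $|\log\tfrac{|\Sigma_{\ty}|}{|\Sigma_{\tx}|}|$ and $|\tr(D)|$ by $\rho$ separately).

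There is, however, one concrete quantitative slip in the covariance step. You propose to feed Hanson--Wright the data $\mathrm{rank}(M-\id)\le 2k$ and $\|M-\id\|_{op}\le 2\eta$ with $\eta=\tfrac{2k\lambda}{n}$. The only Frobenius bound these two facts yield is $\|M-\id\|_F\le\sqrt{2k}\cdot 2\eta$, so the Hanson--Wright deviation term becomes $O\bigl(\sqrt{k}\,\eta\sqrt{\log(1/\delta)}\bigr)$. Requiring this to be at most $\eps$ forces $n\gtrsim k^{3/2}\lambda\sqrt{\log(1/\delta)}/\eps$, which with $k=\Theta(\log(1/\delta\beta)/\eps)$ is an extra factor of $\eps^{-1/2}$ beyond the algorithm's line-2 threshold $n\gtrsim k\lambda\log(1/\delta)/\eps$; this is not absorbed by the theorem's $\tfrac{d}{\alpha\eps^2}\mathrm{polylog}$ term unless $\alpha\lesssim\sqrt{\eps}$. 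The missing ingredient is that the perturbation is a sum of at most $2k$ rank-one terms each of trace norm $O(\lambda/n)$ (after conjugation, by $\lambda$-goodness and the spectral equivalence you already established), so that the \emph{trace norm} satisfies $\|M-\id\|_{\tr}\le O(k\lambda/n)=O(\eta)$ — hence $\max\{|\tr(M-\id)|,\|M-\id\|_F\}\le O(\eta)$ with no $\sqrt{k}$ loss. This is exactly the content of the paper's Lemma~\ref{lem:goodboundstrace}, and with it your argument closes; without it the stated sample complexity does not follow. A second, smaller omission: you dismiss the permutation as data-oblivious, but it is load-bearing — the pairing-based $\Sigma_x$ is order-dependent, so one needs the coupling argument (the paper's Proposition~\ref{lem:coupling}) to reduce multiset adjacency to Hamming-distance-one adjacency before the rest of your privacy argument applies.
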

\else
    For simplicity, in this section we focus on Gaussian data. For the more general discussion for subgaussian data sets, see Appendix~\ref{app:subgaussian}. 

    \begin{thm}[Privacy and Accuracy of the Empirically Rescaled Gaussian Mechanism]\label{th:main} 
    For any $\eps>0$, $0<\delta<1$, Algorithm~\ref{alg:main} is $(3\eps,e^{\eps}(1+e^{\eps})\delta)$-differentially private. %
    There exists an absolute constant $C$ such that, 
    for any $0<\alpha, \beta,\eps,\delta < 1$, mean $\mu$, and positive definite $\Sigma$, 
    if $x\sim \cN(\mu,\Sigma)^{\otimes n}$ and
    \begin{equation}
        n \ge C\left(\frac{d}{\alpha^2}\log\frac{1}{\beta} + \frac{d}{\alpha\eps^2}\log^3\frac{1}{\delta\beta}\cdot\log\frac{d\log(1/\delta\beta)}{\alpha\eps} \right),
    \end{equation}
    then with probability at least $1-3\beta$, Algorithm~\ref{alg:main} returns $\gaussianalg(x)=\hat\mu$ such that $\|\hat\mu-\mu\|_{\Sigma}\leq\alpha$.
    \end{thm}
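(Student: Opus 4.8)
The plan is to establish the privacy and accuracy claims of Theorem~\ref{th:main} separately, each by reducing to a few clean sub-lemmas.

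\emph{Privacy.} This is a propose-test-release argument. The sample-size check (line~\ref{step:samplesizecond}) is data-independent and the permutation (line~\ref{step:permute}) is fixed pre-processing, so neither costs privacy. The test in line~\ref{step:fail} is the Laplace mechanism on $h = D_H(\bx,\mc{G}(\lambda))$, which has global sensitivity $1$, so by post-processing the test outcome is $(\eps,0)$-private (Lemma~\ref{lem:laplace}). For the output of line~\ref{step:output}, conditioned on passing, I would split a pair of neighbors $\bx \sim \bx'$ into two cases. If $D_H(\bx,\mc{G}(\lambda))$ exceeds roughly $k/2$, the Laplace tail makes the probability that $\bx$ passes the test $O(\delta)$, which I would fold into the $\delta$ of the guarantee. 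Otherwise both projections $\tx,\tx'$ lie in $\mc{G}(\lambda)$ with $D_H(\tx,\tx')\le k$, which is exactly what $k$ is set for. The crux is then a \textbf{sensitivity lemma}: if $\tx,\tx'\in\mc{G}(\lambda)$ with $D_H(\tx,\tx')\le k$, then $\|\mu_{\tx}-\mu_{\tx'}\|_{\Sigma_{\tx}} \lesssim k\sqrt{\lambda}/n$ and $(1-2k\lambda/n)\Sigma_{\tx} \preceq \Sigma_{\tx'} \preceq (1+2k\lambda/n)\Sigma_{\tx}$. Granting this, a direct bound on the privacy loss between $\mc{N}(\mu_{\tx},C^2\Sigma_{\tx})$ and $\mc{N}(\mu_{\tx'},C^2\Sigma_{\tx'})$ — two Gaussians with close means and multiplicatively close covariances — shows they are $(\eps,\delta)$-indistinguishable for the stated $C^2$: the $\log(1.25/\delta)$ and $k^2\lambda/(\eps^2n^2)$ factors calibrate to the mean sensitivity just as in the Gaussian mechanism (Lemma~\ref{lem:gaussianmech}), and the $1/(1-2k\lambda/n)$ factor absorbs the covariance mismatch. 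Composing the three $\eps$-steps (Lemma~\ref{lem:composition}) and carrying the $e^{\eps}$ factors from conditioning on ``pass'' yields $(3\eps, e^{\eps}(1+e^{\eps})\delta)$-privacy.

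\emph{The sensitivity lemma is the main obstacle.} The difficulty is that $\lambda$-goodness (Definition~\ref{def:goodness}) bounds $\|\tx_i-\mu_{\tx}\|_{\Sigma_{\tx}}$ through $\tx$'s \emph{own} empirical covariance, while the conclusion compares $\Sigma_{\tx}$ with $\Sigma_{\tx'}$. I would telescope along a path $\tx=z_0,z_1,\dots,z_k=\tx'$ of data sets each differing in one coordinate, without assuming the intermediate $z_j$ are good, tracking instead how far $\mu_z$ and $\Sigma_z$ can drift per step. The non-standard estimators of Definition~\ref{def:empiricalmeancov} are essential: changing one coordinate alters exactly one summand of $\mu_z$ or one rank-one summand of $\Sigma_z$, and the goodness of the two endpoints bounds each relevant summand, e.g.\ $(\tx_i-\tx_{i+n})(\tx_i-\tx_{i+n})^T \preceq 4\lambda\Sigma_{\tx}$ and $\|\tx_{i+2n}-\mu_{\tx}\|_{\Sigma_{\tx}}\le\sqrt{\lambda}$ (and symmetrically with respect to $\Sigma_{\tx'}$), while the random permutation of line~\ref{step:permute} keeps the pairing structure of $\Sigma_z$ from being adversarially aligned with the changed coordinates. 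Chaining the one-step bounds, using $2k\lambda/n<1$ (enforced by the sample-size check) to keep all intermediate covariance ratios bounded, gives the claimed spectral and mean closeness.

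\emph{Accuracy.} I would exhibit four events, each of probability $1-O(\beta)$, whose intersection forces $\|\hat\mu-\mu\|_\Sigma\le\alpha$. (i) The sample-size check passes: the hypothesis on $n$ gives $n=\omega(k\lambda\log(1/\delta)/\eps)$, and resolving the circular dependence $\lambda=O(d\log(n/\beta))$ by a fixed-point substitution is what produces the $\log\frac{d\log(1/\delta\beta)}{\alpha\eps}$ factor. (ii) $\bx\in\mc{G}(\lambda)$: by Gaussian concentration, for $n\gtrsim d$ the matrix $\Sigma_{\bx}$ is invertible with $\tfrac12\Sigma\preceq\Sigma_{\bx}\preceq2\Sigma$, and each $\|x_i-\mu_{\bx}\|_{\Sigma_{\bx}}^2$ is a $\chi^2$-type quantity on $\Theta(d)$ degrees of freedom, so a union bound over the $3n$ points holds with $\lambda=\Theta(d\log(n/\beta))$; on this event $h=0$ and $\tx=\bx$, so no distortion is introduced. (iii) The test passes: since $h=0$, $\Pr[\mathrm{Lap}(1/\eps)>t]=\tfrac12 e^{-\eps t}=\tfrac\beta2$. (iv) The noise is small: $\hat\mu-\mu_{\bx}\sim\mc{N}(0,C^2\Sigma_{\bx})$ with $\Sigma_{\bx}$ within a constant factor of $\Sigma$, so $\|\hat\mu-\mu_{\bx}\|_\Sigma\lesssim C\sqrt{d+\log(1/\beta)}\le\alpha/2$ by the second term of the sample bound, while $\|\mu_{\bx}-\mu\|_\Sigma\le\alpha/2$ by the standard empirical-mean rate using the first term; a triangle inequality finishes. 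The same template covers the subgaussian case (the appendix version of the theorem): one only swaps the Gaussian concentration inputs in step (ii) and the empirical-mean bound in step (iv) for their subgaussian counterparts.
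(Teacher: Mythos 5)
Your accuracy argument and the overall propose-test-release skeleton match the paper's proof. The genuine gap is in the privacy step where you pass from your ``sensitivity lemma'' to indistinguishability of $\cN(\mu_{\tx},C^2\Sigma_{\tx})$ and $\cN(\mu_{\ty},C^2\Sigma_{\ty})$. A multiplicative spectral sandwich $(1-2k\lambda/n)\Sigma_{\tx}\preceq\Sigma_{\ty}\preceq(1+2k\lambda/n)\Sigma_{\tx}$ is \emph{not} sufficient to make $\cN(0,\Sigma_{\tx})$ and $\cN(0,\Sigma_{\ty})$ $(\eps,\delta)$-indistinguishable for small $\eps$: the privacy loss contains the log-determinant term $\tfrac12\bigl|\log(|\Sigma_{\ty}|/|\Sigma_{\tx}|)\bigr|$ and the quadratic form $\tfrac12\bigl|w^T(\Sigma_{\ty}^{-1}-\Sigma_{\tx}^{-1})w\bigr|$, and if all you control is the spectral norm of $\Sigma_{\tx}^{-1/2}\Sigma_{\ty}\Sigma_{\tx}^{-1/2}-\id$, each of these can be as large as $d\cdot 2k\lambda/n$. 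That would force $\eps\gtrsim dk\lambda/n$, i.e.\ $n\gtrsim d^2\,\mathrm{polylog}/\eps^3$, which destroys the stated sample complexity. The paper instead proves a \emph{trace-norm} bound $\|\Sigma_{\tx}^{-1/2}\Sigma_{\ty}\Sigma_{\tx}^{-1/2}-\id\|_{\tr}\le 2k\lambda\left(\tfrac{1}{n-2k\lambda}+\tfrac{1}{n}\right)$ (Lemma~\ref{lem:goodboundstrace}); this is dimension-free precisely because $\Sigma_{\ty}-\Sigma_{\tx}$ is a sum of at most $2k$ rank-one terms, each of whose trace under $\Sigma_{\tx}^{-1}$-conjugation is bounded by $O(\lambda/n)$ via goodness. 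From the trace-norm bound one controls the log-determinant term directly and the quadratic form via the Hanson--Wright inequality, yielding $(\eps,\delta)$-indistinguishability under the condition $\eps\ge 10k\lambda(\tfrac{1}{n-2k\lambda}+\tfrac{1}{n})\log\tfrac{2}{\delta}$ enforced by line~\ref{step:samplesizecond}. This trace-level control is the missing idea; your lemma as stated does not imply it.

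Two smaller misattributions stem from the same confusion. First, the factor $1/(1-2k\lambda/n)$ in $C^2$ does not ``absorb the covariance mismatch''; it arises in the mean-change step (Lemma~\ref{lem:privacymeans_2}), where $\|y_{i+2n}-x_{j^*}\|$ must be measured under $\Sigma_{\tx}$ although goodness of $\ty$ only bounds it under $\Sigma_{\ty}$ (Lemma~\ref{lemma:closeindistance}); the covariance mismatch is handled entirely by the sample-size condition, not by $C$. Second, the random permutation is not inert preprocessing: since $\Sigma_x$ in Definition~\ref{def:empiricalmeancov} depends on the pairing $(x_i,x_{i+n})$, two adjacent multisets need not have Hamming distance $1$ as ordered tuples, and the permutation is what reduces the analysis to aligned neighbors (Proposition~\ref{lem:coupling}). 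Your telescoping route could plausibly recover the spectral bound, but the paper's direct comparison through the shared index set $S$ avoids controlling intermediate (possibly non-good) data sets and, more importantly, delivers the trace-norm bound you need.
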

\fi
\iflong
    The proof of Theorem~\ref{th:main} follows by a combination of the accuracy and privacy guarantees of the algorithm, stated in Theorem~\ref{lem:utility} and Corollary~\ref{cor:privacy_2} which we prove in the next two sections. 
    \ifnum\anonymous=1
    We note that the multiplicative factor next to $\delta$ in the statement of this theorem in the main submission was slightly different up to constants. 
    \fi
\fi

\iflong

\subsection{Accuracy Analysis}
The crux of the proof of the sample complexity guarantee (Theorem~\ref{lem:utility}) is the following. Suppose $n$ is large enough so that the algorithm does not fail in line~\ref{step:samplesizecond}.
\begin{itemize}
    \item If $x\sim\mc{N}(\mu,\Sigma)^{\otimes3n}$ and the number of samples is $n=O(d+\log(1/\beta))$, then with probability $1-\beta$ over the draw of $x$, the data set $x$ is in the good set $\mc{G}(\lambda)$ for $\lambda=\tilde{O}(d)$ (Lemma~\ref{lem:gaussiangood}). In particular, this holds for the permuted data set $\bx$ as this is also drawn from $\mc{N}(\mu,\Sigma)^{\otimes3n}$.
    \item This implies that with high probability over the randomness of the algorithm, $\bx$ passes the Hamming distance check in line~\ref{step:fail} and the projection of line~\ref{step:projection} leaves it intact so that $\tx=\bx$. %
    \item It then suffices to prove that, with high probability, the returned estimator $\hat\mu\sim\mc{N}(\mu_{\bx},C^2\Sigma_{\bx})$ is a good approximation of the true mean $\mu$ measured by the Mahalanobis distance with respect to the true $\Sigma$, that is, $\|\hat\mu-\mu\|_{\Sigma}=\tilde{O}(\sqrt{d/n}+d/\eps^2n)$ (Lemma~\ref{lem:error}).
\end{itemize}

For a short review of basic linear algebra facts, see Appendix~\ref{app:linearalgebra}. 
We start by presenting a few known facts we will use in this subsection.
First, we prove the following proposition, which states that if two matrices $\Sigma_1,\Sigma_2$ are good spectral approximations of one another, then the Mahalanobis distance of any vector with respect to $\Sigma_1$ is close to the one with respect to $\Sigma_2$ and vice versa. 
\begin{proposition}\label{prop:distance_under_similar_matrices}
    For positive definite matrices $\Sigma_1, \Sigma_2$, if there exists a constant $\gamma\in (0,1)$ such that
    \begin{equation*}
        (1-\gamma)\Sigma_1 \preceq \Sigma_2 \preceq (1+\gamma) \Sigma_1,
    \end{equation*}
    then for any vector $v$ we have
    \begin{equation*}
        \frac{1}{\sqrt{1+\gamma}}\|v\|_{\Sigma_1}\le \|v\|_{\Sigma_2}\le \frac{1}{\sqrt{1-\gamma}}\|v\|_{\Sigma_1}.
    \end{equation*}
\end{proposition}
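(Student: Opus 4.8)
The plan is to reduce the statement to the basic identity $\|v\|_{\Sigma}^2 = v^T \Sigma^{-1} v$ and then exploit the fact that matrix inversion reverses the Loewner order. First I would recall that for any positive definite matrix $\Sigma$ and any vector $v$ we have $\|v\|_{\Sigma} = \|\Sigma^{-1/2} v\|_2$, and hence $\|v\|_{\Sigma}^2 = v^T \Sigma^{-1} v$; this is the definition of the Mahalanobis norm used throughout, and it is among the linear-algebra facts collected in Appendix~\ref{app:linearalgebra}.

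Next I would invoke the order-reversing property of inversion: if $0 \prec A \preceq B$ then $B^{-1} \preceq A^{-1}$ (again a standard fact recorded in Appendix~\ref{app:linearalgebra}). Since $\gamma \in (0,1)$ and $\Sigma_1, \Sigma_2 \succ 0$, all three matrices $(1-\gamma)\Sigma_1$, $\Sigma_2$, and $(1+\gamma)\Sigma_1$ appearing in the hypothesis are positive definite, so applying this property to each of the two inequalities $(1-\gamma)\Sigma_1 \preceq \Sigma_2$ and $\Sigma_2 \preceq (1+\gamma)\Sigma_1$ yields
\[
\frac{1}{1+\gamma}\,\Sigma_1^{-1} \;\preceq\; \Sigma_2^{-1} \;\preceq\; \frac{1}{1-\gamma}\,\Sigma_1^{-1}.
\]

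Finally I would evaluate the quadratic form $v \mapsto v^T(\cdot)v$ across this chain: the left inequality gives $\tfrac{1}{1+\gamma}\|v\|_{\Sigma_1}^2 \le \|v\|_{\Sigma_2}^2$ and the right gives $\|v\|_{\Sigma_2}^2 \le \tfrac{1}{1-\gamma}\|v\|_{\Sigma_1}^2$; taking (nonnegative) square roots of both produces exactly the claimed two-sided bound. There is no real obstacle in this argument — the only point that deserves a moment's attention is verifying that the inversion-reversal is applied to genuinely positive definite matrices, which is precisely what $\gamma \in (0,1)$ together with $\Sigma_1, \Sigma_2 \succ 0$ ensures.
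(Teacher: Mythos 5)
Your proof is correct and follows essentially the same route as the paper's: both arguments hinge on the fact that inversion reverses the Loewner order (the paper states this as $\Sigma_1^{-1}\succeq(1-\gamma)\Sigma_2^{-1}$ and then expands the quadratic form), after which evaluating $v\mapsto v^T(\cdot)v$ and taking square roots finishes the job. The only cosmetic difference is that you state the full two-sided inverted chain up front, whereas the paper handles one side explicitly and declares the other analogous; also note the order-reversal fact is not actually listed in Appendix~\ref{app:linearalgebra}, though it is standard and is cited elsewhere in the paper.
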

\begin{proof}
    We upper bound $\|v\|_{\Sigma_2}$; the lower bound is analogous.
    Since the matrices are invertible, we have $\Sigma_1^{-1} \succeq (1-\gamma)\Sigma_2^{-1}$, i.e. $\Sigma_1^{-1}-(1-\gamma)\Sigma_2^{-1}$ is psd.
    So we have
    \begin{align*}
        (1-\gamma)\|v\|_{\Sigma_2}^2 &= (1-\gamma)\|v\|_{\Sigma_2}^2 -\|v\|_{\Sigma_1}^2 +\|v\|_{\Sigma_1}^2 \\
        &= v^{T}\left((1-\gamma)\Sigma_2^{-1} - \Sigma_1^{-1}\right)v +\|v\|_{\Sigma_1}^2 \\
        &=\|v\|_{\Sigma_1}^2 -  v^{T}\left(\Sigma_1^{-1} - (1-\gamma)\Sigma_2^{-1} \right)v  \\
        &\le \|v\|_{\Sigma_1}^2,
    \end{align*}
    applying the fact that $v^T A v\ge 0$ for psd matrices.
\end{proof}

We will also make use of the following standard concentration inequalities for Gaussian random variables. For a reference, see~\cite{DiakonikolasKKLMS16}. The formulation used here is from~\cite[Fact 3.4]{KamathLSU19}.
\begin{lemma}\label{lemma:concentration_facts_id}
    Let $u_i\sim\mc{N}(0,\id)$ be i.i.d.\ samples for $i\in[n]$.
    Define the estimator $\hat{\Sigma} = \frac{1}{n}\sum_{i=1}^n u_i u_i^T$. 
    For every $\beta>0$, with probability at least $1-\beta$ the following conditions both hold:
    \begin{equation}\label{eq:cov_approx}
        \left(1 - O\left(\sqrt{\frac{d+\log(1/\beta)}{n}}\right)\right)\cdot \id
        \preceq \hat{\Sigma} \preceq
        \left(1 + O\left(\sqrt{\frac{d+\log(1/\beta)}{n}}\right)\right)\cdot \id
    \end{equation}
    \begin{equation}\label{eq:chisquaredbound}
        \forall i\in [n]\quad \|u_i\|_{2}^2 \le O(d\log (n/\beta))
    \end{equation}
\end{lemma}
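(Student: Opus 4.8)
Both statements are standard concentration facts for standard Gaussian vectors, so I would prove each by an appropriate scalar tail bound followed by a union bound, allocating failure probability $\beta/2$ to each of \eqref{eq:cov_approx} and \eqref{eq:chisquaredbound}; a final union bound then gives probability at least $1-\beta$ that both hold. I work throughout in the regime $n \ge d + \log(1/\beta)$, which is the one of interest in every application of the lemma in this paper and is in fact needed for \eqref{eq:cov_approx} (for $n \ll d$ one has $\|\hat\Sigma\|_2 \approx d/n$, which dwarfs $\sqrt{d/n}$).

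\textbf{The norm bound \eqref{eq:chisquaredbound}.} For each fixed $i$, the quantity $\|u_i\|_2^2$ is a chi-squared random variable with $d$ degrees of freedom, so a standard sub-exponential tail bound (e.g.\ Laurent--Massart) gives $\Pr[\|u_i\|_2^2 \ge d + 2\sqrt{ds} + 2s] \le e^{-s}$ for every $s>0$. Taking $s = \log(2n/\beta)$ and union-bounding over $i \in [n]$, with probability at least $1-\beta/2$ we get $\|u_i\|_2^2 \le d + 2\sqrt{d\log(2n/\beta)} + 2\log(2n/\beta) = O(d\log(n/\beta))$ simultaneously for all $i$, which is exactly \eqref{eq:chisquaredbound}.

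\textbf{The covariance bound \eqref{eq:cov_approx}.} Set $\gamma := \|\hat\Sigma - \id\|_2$; the two-sided inequality in \eqref{eq:cov_approx} is precisely the statement that $\gamma = O(\sqrt{(d+\log(1/\beta))/n})$, so it suffices to bound the spectral norm of the centered empirical covariance. Write it as a supremum over the unit sphere, $\gamma = \sup_{v \in S^{d-1}} \bigl|\tfrac1n\sum_{i=1}^n(\langle u_i,v\rangle^2 - 1)\bigr|$. For fixed unit $v$, the scalars $\langle u_i, v\rangle$ are i.i.d.\ $\mc{N}(0,1)$, so the $Z_i^{(v)} := \langle u_i,v\rangle^2 - 1$ are i.i.d., mean zero, and sub-exponential with an absolute-constant parameter; Bernstein's inequality gives $\Pr[\,|\tfrac1n\sum_i Z_i^{(v)}| > t\,] \le 2\exp(-c_0 n \min\{t^2,t\})$ for an absolute $c_0>0$. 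Fix a $\tfrac14$-net $N$ of $S^{d-1}$ with $|N| \le 9^d$; the usual net-to-sphere comparison yields $\gamma \le 2\max_{v\in N}|\tfrac1n\sum_i Z_i^{(v)}|$. Union-bounding over $N$ and taking $t = C_1\sqrt{(d+\log(1/\beta))/n}$ for a large enough absolute constant $C_1$ (so that $\min\{t^2,t\} = t^2$ in the regime $n \ge d + \log(1/\beta)$ and $9^d \cdot 2\exp(-c_0 n t^2) \le \beta/2$), we conclude $\gamma \le 2t = O(\sqrt{(d+\log(1/\beta))/n})$ with probability at least $1-\beta/2$. Combined with the previous paragraph via a union bound, this completes the proof.

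\textbf{Main obstacle.} There is no deep difficulty here; the argument is entirely routine. The only points needing mild care are (i) applying Bernstein's inequality in the correct (quadratic) regime, which is why we restrict to $n \gtrsim d + \log(1/\beta)$ and thereby obtain the $\sqrt{(d+\log(1/\beta))/n}$ rate rather than the weaker $(d+\log(1/\beta))/n$ rate, and (ii) choosing the net fine enough that the $9^d$ union-bound factor is absorbed. For a fully worked-out treatment one may instead cite the corresponding statements in \cite{DiakonikolasKKLMS16} or \cite[Fact 3.4]{KamathLSU19}.
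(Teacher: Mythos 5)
Your proof is correct, but it is worth noting how the paper handles this statement: it does not prove the lemma at all, instead citing it as a known concentration fact (with a pointer to \cite{DiakonikolasKKLMS16} and the formulation in \cite[Fact 3.4]{KamathLSU19}), exactly as you suggest in your final sentence. Your self-contained argument is the standard proof of that cited fact: Laurent--Massart chi-squared tails plus a union bound for \eqref{eq:chisquaredbound}, and a $\tfrac14$-net with Bernstein's inequality for sub-exponential variables for \eqref{eq:cov_approx}; the constants ($9^d$ net size, factor $2$ in the net-to-sphere comparison) are right, and your observation that the $\sqrt{(d+\log(1/\beta))/n}$ rate only makes sense when $n \gtrsim d + \log(1/\beta)$ is a fair reading of the lemma, since that is the regime in which the paper invokes it. One small imprecision: with $t = C_1\sqrt{(d+\log(1/\beta))/n}$ and $C_1$ a large constant, you may have $t>1$ near the boundary $n \approx d+\log(1/\beta)$, so $\min\{t^2,t\}=t$ rather than $t^2$ there; this does not hurt you, because then $nt \ge C_1(d+\log(1/\beta))$ and the $9^d$ factor is still absorbed, but the parenthetical justification should be phrased to cover both regimes. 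Beyond that, the only difference between your route and the paper's is that you supply the details the paper delegates to a citation, which makes the lemma self-contained at the cost of a page of routine calculation.
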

The following generalization to non-spherical Gaussians is a straighforward implication.
\begin{lemma}\label{lemma:concentration_facts}
Suppose $u_1,\ldots,u_n$ satisfy inequalities~\eqref{eq:cov_approx} and~\eqref{eq:chisquaredbound}. Let $\Sigma\succ 0$ and let $\lambda_{1}$ be the largest eigenvalue of $\Sigma$. Let $z_i=\Sigma^{1/2}u_i$ for all $i\in[n]$ and define $\hat\Sigma_z=\frac{1}{n}\sum_{i=1}^nz_iz_i^T$. Then the following conditions both hold:
    \begin{equation*}
        \left(1 - O\left(\sqrt{\frac{d+\log(1/\beta)}{n}}\right)\right)\cdot \Sigma
        \preceq \hat{\Sigma}_z \preceq
        \left(1 + O\left(\sqrt{\frac{d+\log(1/\beta)}{n}}\right)\right)\cdot \Sigma
    \end{equation*}
    \begin{equation*}
        \forall i\in [n]\quad \|z_i\|_{2}^2 \le O(\lambda_{1}d\log (n/\beta))
    \end{equation*}
\end{lemma}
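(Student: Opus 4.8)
The plan is to obtain both claims as deterministic consequences of the two hypothesized events $\eqref{eq:cov_approx}$ and $\eqref{eq:chisquaredbound}$, via a change of variables; no new probabilistic argument is needed. The starting point is the algebraic identity
\[
    \hat\Sigma_z = \frac1n\sum_{i=1}^n z_i z_i^T = \frac1n\sum_{i=1}^n \Sigma^{1/2} u_i u_i^T \Sigma^{1/2} = \Sigma^{1/2}\,\hat\Sigma\,\Sigma^{1/2},
\]
where $\hat\Sigma = \frac1n\sum_{i=1}^n u_i u_i^T$ is the spherical empirical covariance of the $u_i$ and we used that $\Sigma^{1/2}$ is symmetric and can be pulled out of the sum.

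First I would prove the spectral bound. Write $\gamma = O\!\left(\sqrt{(d+\log(1/\beta))/n}\right)$, so that $\eqref{eq:cov_approx}$ reads $(1-\gamma)\id \preceq \hat\Sigma \preceq (1+\gamma)\id$. The one fact I would invoke is that conjugation by a fixed symmetric matrix is order-preserving on the positive semidefinite cone: for symmetric $M$ and any $A \preceq B$, we have $MAM \preceq MBM$, since $v^T M(B-A)M v = (Mv)^T(B-A)(Mv) \ge 0$ for all $v$. Applying this with $M = \Sigma^{1/2}$ to the sandwich for $\hat\Sigma$, and using $\Sigma^{1/2}\id\,\Sigma^{1/2} = \Sigma$ together with the identity above, gives
\[
    (1-\gamma)\,\Sigma \preceq \hat\Sigma_z \preceq (1+\gamma)\,\Sigma,
\]
which is exactly the first claimed inequality.

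For the norm bound I would simply note that, since $\Sigma \preceq \lambda_1 \id$,
\[
    \|z_i\|_2^2 = z_i^T z_i = u_i^T \Sigma\, u_i \le \lambda_1 \|u_i\|_2^2 ,
\]
and then substitute $\eqref{eq:chisquaredbound}$, which gives $\|u_i\|_2^2 \le O(d\log(n/\beta))$ for every $i \in [n]$, to conclude $\|z_i\|_2^2 \le O(\lambda_1 d\log(n/\beta))$.

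I do not expect a genuine obstacle: the statement is a deterministic corollary of the two assumed events. The only point requiring a touch of care is to argue the spectral inequality through symmetric conjugation (an order-preserving operation) rather than trying to manipulate $\hat\Sigma_z$ entrywise; this is precisely the kind of elementary linear-algebra fact recorded alongside Proposition~\ref{prop:distance_under_similar_matrices}.
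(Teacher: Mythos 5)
Your proposal is correct and is exactly the argument the paper has in mind: the paper omits the proof, calling the lemma ``a straightforward implication'' of Lemma~\ref{lemma:concentration_facts_id}, and your route via the identity $\hat\Sigma_z = \Sigma^{1/2}\hat\Sigma\,\Sigma^{1/2}$, the order-preservation of symmetric conjugation, and the bound $u_i^T\Sigma u_i \le \lambda_1\|u_i\|_2^2$ is the intended deterministic deduction from the two hypothesized events.
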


We now begin the accuracy analysis. 

In the next Lemma~\ref{lem:gaussiangood} we prove that if $n\gtrsim d$ then Gaussian data sets fall into the good set $\mc{G}(\lambda)$ with high probability. Intuitively, this holds since Gaussian data are already likely to satisfy the condition $\|x_i-\mu\|_{\Sigma}\leq\lambda$, which by design is the same as the condition on the good set, except that the true parameters are replaced by the empirical ones. The assumption $n\gtrsim d$ ensures that the empirical and true parameters are close.

\begin{lemma}\label{lem:gaussiangood}
    Let $x\sim\mc{N}(\mu,\Sigma)^{\otimes3n}$ and $n=\Omega(d + \log (1/\beta))$.
    There exists a $\lambda = O(d\log (n/\beta))$ such that, with probability at least $1-\beta$ we have $x\in\mc{G}(\lambda)$.  
\end{lemma}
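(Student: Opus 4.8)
The plan is to reduce everything to concentration of standard Gaussian vectors via whitening, then invoke the standard concentration facts (Lemma~\ref{lemma:concentration_facts_id}, or equivalently Lemma~\ref{lemma:concentration_facts}). Write $x_i = \mu + \Sigma^{1/2} u_i$ with $u_1,\dots,u_{3n}$ i.i.d.\ $\mc{N}(0,\id)$. First I would rewrite the empirical quantities in these coordinates. Setting $v_i = (u_i - u_{i+n})/\sqrt{2}$ for $i\in[n]$ — which are again i.i.d.\ $\mc{N}(0,\id)$, since the index pairs $\{i,i+n\}$ for $i\in[n]$ are disjoint — a short computation gives $\Sigma_x = \Sigma^{1/2}\hat\Sigma_v\Sigma^{1/2}$ where $\hat\Sigma_v = \frac1n\sum_{i=1}^n v_iv_i^T$, and $\mu_x - \mu = \Sigma^{1/2}\bar u$ where $\bar u = \frac1n\sum_{i=1}^n u_{i+2n}$. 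Note that $\{v_i\}_{i\in[n]}$, $\{u_i\}_{i\in[3n]}$ (as raw vectors), and $\bar u$ involve the three thirds of the sample in a controlled way.

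Next I would apply concentration to the relevant collections. By Lemma~\ref{lemma:concentration_facts_id} applied to $\{v_i\}_{i\in[n]}$, with probability $\ge 1-\beta/3$ we get $(1-\gamma)\id \preceq \hat\Sigma_v \preceq (1+\gamma)\id$ with $\gamma = O\big(\sqrt{(d+\log(1/\beta))/n}\big)$. Applying the same lemma (with a constant change to absorb $3n$ in place of $n$) to $\{u_i\}_{i\in[3n]}$ gives $\|u_i\|_2^2 \le O(d\log(n/\beta))$ for all $i\in[3n]$ with probability $\ge 1-\beta/3$. A chi-squared tail bound for $\bar u\sim\mc{N}(0,\id/n)$ gives $\|\bar u\|_2^2 \le O((d+\log(1/\beta))/n)$ with probability $\ge 1-\beta/3$. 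A union bound makes all three hold simultaneously with probability $\ge 1-\beta$, and the hypothesis $n = \Omega(d+\log(1/\beta))$ with a sufficiently large constant forces $\gamma \le \tfrac12$ and $\|\bar u\|_2^2 = O(1)$.

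Granting this event, the first goodness requirement is immediate: $\hat\Sigma_v \succ 0$ and $\Sigma \succ 0$ imply $\Sigma_x = \Sigma^{1/2}\hat\Sigma_v\Sigma^{1/2}$ is invertible. For the second, I would compute $x_i - \mu_x = (x_i-\mu)-(\mu_x-\mu) = \Sigma^{1/2}(u_i - \bar u)$ and $\Sigma^{1/2}\Sigma_x^{-1}\Sigma^{1/2} = \hat\Sigma_v^{-1}$, so that for every $i\in[3n]$,
\[
\|x_i - \mu_x\|_{\Sigma_x}^2 = (u_i - \bar u)^T\hat\Sigma_v^{-1}(u_i - \bar u) \le \tfrac{1}{1-\gamma}\|u_i - \bar u\|_2^2 \le \tfrac{2}{1-\gamma}\big(\|u_i\|_2^2 + \|\bar u\|_2^2\big) = O(d\log(n/\beta)).
\]
Taking $\lambda$ to be the resulting (absolute-constant) bound $O(d\log(n/\beta))$ yields $x \in \mc{G}(\lambda)$.

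There is no substantive obstacle here; the argument is routine concentration. The only thing requiring care is bookkeeping: the covariance estimate is built from the first $2n$ points while the per-point Mahalanobis bound must hold for all $3n$ points (including those not entering $\Sigma_x$) and the mean uses the last $n$ — so one must apply concentration to the correct sub-collections, note that these are handled by independent (or at least separately bounded) events so the union bound is clean, and track that the extra logarithmic factor in $\lambda$ comes precisely from taking a union bound over all $3n$ chi-squared tails.
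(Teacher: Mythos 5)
Your proof is correct and follows essentially the same route as the paper's: reduce to the standard Gaussian case (the paper invokes affine invariance of the Mahalanobis distance where you whiten explicitly), apply Lemma~\ref{lemma:concentration_facts_id} to get the spectral approximation $(1\pm\gamma)\id$ and the per-point norm bounds, and combine via the triangle inequality. The only cosmetic difference is that you bound the deviation of the empirical mean by a direct chi-squared tail on the averaged vector (getting $O(1)$), whereas the paper averages the per-point bounds via the triangle inequality (getting $O(\sqrt{d\log(n/\beta)})$); both are dominated by the $O(d\log(n/\beta))$ bound on $\lambda$.
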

\begin{proof}
    Since Mahalanobis distance is invariant to both changes in mean and full-rank transformations, it suffices to prove this claim for $x\sim \mc{N}(0,\id)^{\otimes 3n}$.
    
    Taking $n = \Omega(d + \log (1/\beta))$, we have that there exists a constant $\gamma\in(0,1)$ so that $O\left(\sqrt{\frac{d+\log (1/\beta)}{n}}\right)$ is less than $\gamma$. By Lemma~\ref{lemma:concentration_facts_id} with probability $1-\beta$,
    \begin{equation}\label{eq:cov_approx_2}
        \left(1 - \gamma\right)\cdot \id \preceq \Sigma_{x} \preceq \left(1 + \gamma\right)\cdot \id
    \end{equation}
    and
    \begin{equation}\label{eq:chisquaredbound_2}
        \forall i\in [3n]\quad \|x_i-\mu \|_{2} \le O(\sqrt{d\log (n/\beta)}).
    \end{equation}
    These equations and Proposition~\ref{prop:distance_under_similar_matrices} imply that $\Sigma_x$ is invertible and that for all $i$, $\|x_i-\mu\|_{\Sigma_x} = O(\sqrt{d\log (n/\beta)})$.
    Furthermore, Equation~\eqref{eq:chisquaredbound_2} implies $\|\mu_{x}-\mu \|_{\Sigma_x} \le \frac{1}{n} \sum_{i=1}^n\|x_{i+2n}-\mu \|_{\Sigma_x} = O(\sqrt{d\log (n/\beta)})$ by the triangle inequality.
    We finish the proof by applying the triangle inequality one more time: $\|x_i - \mu_x\|_{\Sigma_x} \le \|x_i - \mu \|_{\Sigma_x} + \|\mu - \mu_x\|_{\Sigma_x}=O(\sqrt{d\log (n/\beta)})$.
\end{proof}

The next lemma bounds the error $\|\hat\mu-\mu\|_{\Sigma}$ for $\hat\mu\sim\mc{N}(\mu_x,C^2\Sigma_x)$, where $x\sim\mc{N}(\mu,\Sigma)^{\otimes 3n}$. It follows directly from Gaussian concentration. The condition on the number of samples serves two purposes. First, $n=\Omega(d+\log(1/\beta))$ is required so that the empirical covariance $\Sigma_x$ is a good spectral approximation of the true covariance $\Sigma$, as before. Second, $n=\Omega(k\lambda)$ is required so that the parameter $C^2$ is well-defined. Recalling the setting of parameters $k=O(\log(1/\delta\beta)/\eps)$ and $\lambda=O(d\log(n/\beta))$, both these conditions are satisfied as long as $n=\tilde{O}(d/\eps)$.

\begin{lemma}\label{lem:error}
    Suppose $x\sim\mc{N}(\mu,\Sigma)^{\otimes3n}$ and $n=\Omega(\max\{(d+\log(1/\beta)), k\lambda\})$, where parameters $k,\lambda$ are set as in Algorithm~\ref{alg:main}. Then with probability at least $1-\beta$, for $\hat\mu\sim\mc{N}(\mu_x,C^2\Sigma_x)$, \[\|\hat\mu-\mu\|_{\Sigma}=O\left(\sqrt{\frac{d}{n}\cdot \log\frac{1}{\beta}}+\frac{d}{\eps^2n}\log^{2}\frac{1}{\delta\beta}\cdot\sqrt{\log\frac{n}{\beta}}\right).\]
\end{lemma}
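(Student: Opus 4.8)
The plan is to split the error into the deviation of the empirical mean plus the contribution of the injected Gaussian noise, bound each piece by standard Gaussian concentration, and then substitute the value of $C$ set in Algorithm~\ref{alg:main}. Writing $\hat\mu = \mu_x + C\Sigma_x^{1/2} g$ with $g\sim\mc{N}(0,\id)$, I would first reduce to the case $\mu=0$, $\Sigma=\id$ exactly as in the proof of Lemma~\ref{lem:gaussiangood}: Mahalanobis distance is invariant under $v\mapsto\Sigma^{-1/2}v$, and the transformed estimator $\Sigma^{-1/2}(\hat\mu-\mu)$ has the same distribution as the output of the same procedure run on $\Sigma^{-1/2}(x-\mu)\sim\mc{N}(0,\id)^{\otimes 3n}$, since $C\,\Sigma^{-1/2}\Sigma_x^{1/2} g$ has covariance $C^2\,\Sigma^{-1/2}\Sigma_x\Sigma^{-1/2}$. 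So it suffices to bound $\|\hat\mu\|_2\le \|\mu_x\|_2 + C\,\|\Sigma_x\|_2^{1/2}\,\|g\|_2$ for standard Gaussian data, using $\|\Sigma_x^{1/2}g\|_2^2 = g^T\Sigma_x g\le \|\Sigma_x\|_2\|g\|_2^2$.

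Next I would control the three scalar quantities separately. Since $\mu_x=\frac1n\sum_{i=1}^n x_{i+2n}\sim\mc{N}(0,\tfrac1n\id)$, $\chi$-concentration gives $\|\mu_x\|_2 = O\bigl(\sqrt{(d+\log(1/\beta))/n}\bigr)$ except with probability $\beta/3$; likewise $\|g\|_2 = O\bigl(\sqrt{d+\log(1/\beta)}\bigr)$ except with probability $\beta/3$; and by Lemma~\ref{lemma:concentration_facts_id}, using the hypothesis $n=\Omega(d+\log(1/\beta))$, we have $\Sigma_x\preceq (1+\gamma)\id$ for some constant $\gamma\le\tfrac12$, so $\|\Sigma_x\|_2^{1/2}\le\sqrt{3/2}$ except with probability $\beta/3$. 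Since $g$ is independent of $x$, a union bound over these three events (total failure probability at most $\beta$) yields, on the good event, the deterministic bound $\|\hat\mu-\mu\|_\Sigma = O\bigl(\sqrt{(d+\log(1/\beta))/n} + C\sqrt{d+\log(1/\beta)}\bigr)$.

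Finally I would substitute $C^2 = \tfrac{32k^2}{\eps^2 n^2}\cdot\tfrac{\lambda}{1-2k\lambda/n}\cdot\log\tfrac{1.25}{\delta}$. The second sample-size hypothesis, $n=\Omega(k\lambda)$ — which is also exactly what makes $C^2$ positive and well-defined — lets me take $1-2k\lambda/n\ge\tfrac12$, so $C = O\bigl(\tfrac{k}{\eps n}\sqrt{\lambda\log(1/\delta)}\bigr)$. Plugging in $k=O(\log(1/\delta\beta)/\eps)$ and $\lambda=O(d\log(n/\beta))$, and using $\sqrt{\log(1/\delta)}\cdot\log(1/\delta\beta)\le\log^2(1/\delta\beta)$, the noise term becomes $C\sqrt{d+\log(1/\beta)} = O\bigl(\tfrac{d}{\eps^2 n}\log^2\tfrac1{\delta\beta}\sqrt{\log\tfrac n\beta}\bigr)$, while $d+\log(1/\beta)\le d\log(1/\beta)$ (up to a constant) makes the first term $O\bigl(\sqrt{(d/n)\log(1/\beta)}\bigr)$, matching the claimed bound.

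I do not expect a genuine obstacle: this lemma is essentially a check that the noise scale $C$ chosen for privacy is already small enough for accuracy. The only points that need care are (i) confirming the affine reduction to $\Sigma=\id$ is valid at the level of distributions, since $\Sigma^{-1/2}\Sigma_x^{1/2}$ is not literally a square root of $\Sigma^{-1/2}\Sigma_x\Sigma^{-1/2}$ but induces the same Gaussian law; and (ii) tracking the polylogarithmic factors carefully so the final expression matches the stated form, in particular checking that each of the two hypotheses $n=\Omega(d+\log(1/\beta))$ and $n=\Omega(k\lambda)$ is used where needed.
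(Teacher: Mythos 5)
Your proposal is correct and follows essentially the same route as the paper: a triangle-inequality split into the empirical-mean deviation and the injected noise, Gaussian/$\chi^2$ concentration plus the spectral approximation $\Sigma_x\approx\Sigma$ (which is what your bound $\|\Sigma_x\|_2^{1/2}=O(1)$ after reducing to $\Sigma=\id$ amounts to, and what the paper obtains via Proposition~\ref{prop:distance_under_similar_matrices}), and then substitution of $C$, $k$, $\lambda$ using $n=\Omega(k\lambda)$. The affine reduction you flag is valid for exactly the reason you give, and the polylog bookkeeping matches the stated bound.
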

\begin{proof}
    By triangle inequality, we have that \begin{equation*}
        \|\hat\mu-\mu\|_{\Sigma}\leq \|\mu-\mu_x\|_{\Sigma} +\|\mu_x-\hat\mu\|_{\Sigma}.
    \end{equation*}
    We focus on the first term $\|\mu-\mu_x\|_{\Sigma}=\|\frac{1}{n}\sum_{i=1}^n\Sigma^{-1/2}(x_{i+2n}-\mu)\|_2=\|\frac{1}{n}\sum_{j=1}^n u_i\|_2$, where $u_i\sim\mc{N}(0,\id)$ for all $i\in[n]$. Since $\frac{1}{n}\sum_{j=1}^n u_i\sim \mc{N}(0,\frac{1}{n}\id)$, we can write $\|\frac{1}{n}\sum_{j=1}^n u_i\|_2=\frac{1}{\sqrt{n}}\|u'\|_2$ for $u'\sim\mc{N}(0,\id)$. By Lemma~\ref{lemma:concentration_facts_id}, we have that $\|u'\|_2^2=O(d\log(1/\beta))$ with probability at least $1-\beta/2$. So with probability at least $1-\beta/2$, it holds that
    \begin{equation}\label{eq:empiricalmeanerror}
    \|\mu-\mu_x\|_{\Sigma} = O\left(\sqrt{\frac{d}{n}\cdot \log\frac{1}{\beta}}\right).
    \end{equation}

We now give an upper bound for the second term. Notice that if we let $z_i=(x_i-x_{i+n})/\sqrt{2}$, then for all $i\in[n]$ $z_i$ are i.i.d.\ samples from $\mc{N}(0,\Sigma)$ and $\Sigma_x=\frac{1}{n}\sum_{i=1}^n z_iz_i^T$.
Taking $n = \Omega(d + \log (1/\beta))$, so that $O\left(\sqrt{\frac{d+\log (1/\beta)}{n}}\right)$ is a sufficiently small constant $\gamma$, by Lemma~\ref{lemma:concentration_facts} with probability $1-\beta/4$ we have $\left(1 - \gamma\right)\cdot \Sigma \preceq \Sigma_{x} \preceq \left(1 + \gamma\right)\cdot \Sigma$.

It follows that, by Proposition~\ref{prop:distance_under_similar_matrices}, $\|\hat\mu-\mu_x\|_{\Sigma}=O(\|\hat\mu-\mu_x\|_{\Sigma_x})$. So it suffices to bound $\|\hat\mu-\mu_x\|_{\Sigma_x}= \|C^{-1}\Sigma_x^{-1/2}(\hat\mu-\mu_x)\|_2\cdot C$.

Since $\hat\mu\sim\mc{N}(\mu_x,C^2\Sigma_x)$, equivalently, we have that $u=C^{-1}\Sigma_x^{-1/2}(\hat\mu-\mu_x)\sim \mc{N}(0,\id)$. By Lemma~\ref{lemma:concentration_facts_id}, we have that with probability at least $1-\beta/4$, $\|u\|_2^2=O(d\log(1/\beta))$. Therefore, by union bound, with probability at least $1-\beta/2$,

\begin{equation}\label{eq:sampledmeanerror}
    \|\hat\mu-\mu_x\|_{\Sigma}=O\left(C\sqrt{d\log\frac{1}{\beta}}\right).
\end{equation}

Combining Equation~\eqref{eq:empiricalmeanerror} and~\eqref{eq:sampledmeanerror}, by union bound, with probability at least $1-\beta$, it holds that

\begin{align*}
     \|\hat\mu-\mu\|_{\Sigma} 
     & = O\left(\sqrt{\frac{d}{n}\cdot \log\frac{1}{\beta}}+C\sqrt{d\log\frac{1}{\beta}}\right)\\
     & = O\left(\sqrt{\frac{d}{n}\cdot \log\frac{1}{\beta}}+\frac{k}{\eps n}\sqrt{\frac{\lambda}{1-2k\lambda/n}\log\frac{1.25}{\delta}}\sqrt{d\log\frac{1}{\beta}}\right) \tag{substituting $C$}\\
     & = O\left(\sqrt{\frac{d}{n}\cdot \log\frac{1}{\beta}}+\frac{k}{\eps n}\sqrt{d\lambda\log\frac{1}{\delta}\cdot\log\frac{1}{\beta}}\right) \tag{since $n=\Omega(k\lambda)$}\\
     & = O\left(\sqrt{\frac{d}{n}\cdot \log\frac{1}{\beta}}+\frac{kd}{\eps n}\sqrt{\log\frac{1}{\delta}\cdot\log\frac{1}{\beta}\cdot\log\frac{n}{\beta}}\right) \tag{substituting $\lambda$}\\
     & = O\left(\sqrt{\frac{d}{n}\cdot \log\frac{1}{\beta}}+\frac{d}{\eps^2n}\left(\log\frac{1}{\delta}+\log\frac{1}{\beta}\right)\sqrt{\log\frac{1}{\delta}\cdot\log\frac{1}{\beta}\cdot\log\frac{n}{\beta}}\right) \tag{substituting $k$}\\
     & = O\left(\sqrt{\frac{d}{n}\cdot \log\frac{1}{\beta}}+\frac{d}{\eps^2n}\log^{2}\frac{1}{\delta\beta}\cdot\sqrt{\log\frac{n}{\beta}}\right).
\end{align*}
This completes the proof of the lemma.
\end{proof}

We are now ready to state the sample complexity of Algorithm~\ref{alg:main}, putting together the lemmas above.
\begin{thm}[Accuracy of $\gaussianalg(x)$]\label{lem:utility}
There exists an absolute constant $C$ such that, for any $0<\alpha, \beta,\eps,\delta < 1$, mean $\mu$, and positive definite $\Sigma$, if $x\sim \cN(\mu,\Sigma)^{\otimes 3n}$ and
\begin{equation}
    n \ge C\left(\frac{d}{\alpha^2}\log\frac{1}{\beta} + \frac{d}{\alpha\eps^2}\log^3\frac{1}{\delta\beta}\cdot\log\frac{d\log(1/\delta\beta)}{\alpha\eps} \right),
\end{equation}
then with probability $1-3\beta$, Algorithm~\ref{alg:main} returns $\gaussianalg(x)=\hat{\mu}$ such that $\|\hat\mu-\mu\|_{\Sigma}\leq \alpha$.
    
\end{thm}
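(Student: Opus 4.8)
The plan is to assemble Theorem~\ref{lem:utility} from the two ingredients already in hand---Lemma~\ref{lem:gaussiangood} (Gaussian data lands in $\mc{G}(\lambda)$) and Lemma~\ref{lem:error} (the rescaled Gaussian sample is accurate)---by checking that, under the stated lower bound on $n$, Algorithm~\ref{alg:main} on Gaussian input behaves ``transparently'': it passes the sample-size check in line~\ref{step:samplesizecond}, the permuted data set $\bx$ is $\lambda$-good so the Hamming distance computed in line~\ref{step:hamdist} is $h=0$, the noisy test in line~\ref{step:fail} passes, and the projection in line~\ref{step:projection} returns $\tx=\bx$. In that case the output is simply $\hat\mu\sim\cN(\mu_{\bx},C^2\Sigma_{\bx})$ with $\bx\sim\cN(\mu,\Sigma)^{\otimes 3n}$, and Lemma~\ref{lem:error} controls $\|\hat\mu-\mu\|_\Sigma$. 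A union bound over three failure events, each of probability at most $\beta$, gives the $1-3\beta$ guarantee.

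In more detail, I would first verify the preconditions. Substituting $k=O(\tfrac1\eps\log\tfrac1{\delta\beta})$ and $\lambda=O(d\log\tfrac n\beta)$ shows $\tfrac{k\lambda}{\eps}\log\tfrac1\delta=O\!\big(\tfrac{d}{\eps^2}\log\tfrac1{\delta\beta}\log\tfrac1\delta\log\tfrac n\beta\big)$, which is dominated by the second term of the assumed bound on $n$ (using $\alpha\le 1$); hence line~\ref{step:samplesizecond} does not trigger, and in particular $n=\Omega(k\lambda)$, so $C^2$ is well-defined. The assumed bound also gives $n=\Omega(d+\log\tfrac1\beta)$, so Lemma~\ref{lem:gaussiangood} applies: $x\in\mc{G}(\lambda)$ with probability $\ge 1-\beta$, and since $\bx=\sigma(x)$ has the same distribution, $\bx\in\mc{G}(\lambda)$ with probability $\ge 1-\beta$. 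Conditioned on this, $h=D_H(\bx,\mc{G}(\lambda))=0$, so line~\ref{step:fail} fails only if $r>t$ for $r\sim\mathrm{Lap}(1/\eps)$ and $t=\tfrac1\eps\log\tfrac1\beta$, which by the Laplace tail bound has probability $\tfrac12\beta$; and $\tx=\argmin_{z\in\mc{G}(\lambda)}D_H(\bx,z)=\bx$. I would make the accounting clean by coupling: alongside the run, draw a hypothetical $w\sim\cN(\mu_{\bx},C^2\Sigma_{\bx})$, so that on the good event the true output has the same conditional law as $w$; Lemma~\ref{lem:error}, applied unconditionally to $\bx$, then gives $\|w-\mu\|_\Sigma\le\alpha$ with probability $\ge 1-\beta$ provided the error bound below is at most $\alpha$.

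It thus remains to show that the bound from Lemma~\ref{lem:error},
\[
\|\hat\mu-\mu\|_\Sigma=O\!\left(\sqrt{\tfrac dn\log\tfrac1\beta}+\tfrac{d}{\eps^2 n}\log^2\tfrac1{\delta\beta}\sqrt{\log\tfrac n\beta}\right),
\]
is at most $\alpha$ under the hypothesis on $n$. Forcing the first term below $\alpha/2$ requires $n=\Omega(\tfrac{d}{\alpha^2}\log\tfrac1\beta)$, matching the first term of the claimed bound; forcing the second below $\alpha/2$ requires $n=\Omega\!\big(\tfrac{d}{\alpha\eps^2}\log^2\tfrac1{\delta\beta}\sqrt{\log\tfrac n\beta}\big)$. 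The one genuinely annoying point---and the main obstacle of this otherwise routine argument---is the self-referential $\log n$ that enters through $\lambda$ and through $\sqrt{\log(n/\beta)}$: one resolves it by noting that at the threshold $n$ is polynomial in $d,\tfrac1\alpha,\tfrac1\eps,\tfrac1\delta,\tfrac1\beta$, so $\log n=O\!\big(\log\tfrac{d\log(1/\delta\beta)}{\alpha\eps}\big)$ (together with a $\log\tfrac1\beta$ term already absorbed by $\sqrt{\log(n/\beta)}$ and $\log^3\tfrac1{\delta\beta}$), which is exactly the extra logarithmic factor carried in the stated bound; substituting this back closes the loop. Union-bounding the three failure events---$x\notin\mc{G}(\lambda)$, $r>t$, and the error bound failing---each of probability at most $\beta$, yields total failure probability at most $3\beta$, proving the theorem. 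I expect no real difficulty here; the substantive content of the section is in Lemmas~\ref{lem:gaussiangood} and~\ref{lem:error} and in the privacy analysis, not in this assembly.
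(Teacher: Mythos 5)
Your proposal is correct and follows essentially the same route as the paper's proof: check that the sample-size gate passes, invoke Lemma~\ref{lem:gaussiangood} to get $\bx\in\mc{G}(\lambda)$ so that $h=0$, the Laplace test passes, and the projection is the identity, then apply Lemma~\ref{lem:error} and a union bound over the three failure events. Your explicit handling of the self-referential $\log n$ is a nice touch that the paper glosses over as ``straightforward calculations,'' but the argument is the same.
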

\begin{proof}
First, we argue that for this sample complexity, the algorithm does not return \texttt{FAIL} in line~\ref{step:samplesizecond}. Recall that the condition is
\begin{equation}\label{eq:minimumsamplesize}
n=\Omega\left(\frac{k\lambda}{\eps}\log\frac{1}{\delta}\right).
\end{equation}
Substituting the terms $k=\frac{2}{\eps}\log\frac{1}{\delta\beta}+1$ and $\lambda=O(d\log(n/\beta))$ in condition~\eqref{eq:minimumsamplesize}, we have that
\begin{equation*}
    \frac{k\lambda}{\eps}\log\frac{1}{\delta}=O\left(\frac{d}{\eps^2}\log\frac{1}{\delta\beta}\cdot\log\frac{n}{\beta}\cdot\log\frac{1}{\delta}\right)=O\left(\frac{d}{\eps^2}\log^3\frac{1}{\delta\beta}\cdot\log n\right).
\end{equation*}

For some absolute constant $C$, we let
\begin{equation}\label{eq:exactsc}
n \ge C\left(\frac{d}{\alpha^2}\log\frac{1}{\beta} + \frac{d}{\alpha\eps^2}\log^3\frac{1}{\delta\beta}\cdot\log\frac{d\log(1/\delta\beta)}{\alpha\eps} \right).
\end{equation}
By straightforward calculations and since $\alpha\leq 1$, we can see that the sample size of Eq.~\eqref{eq:exactsc} above suffices for $n$ to satisfy condition~\eqref{eq:minimumsamplesize}, and so Algorithm~\ref{alg:main} does not fail in line~\ref{step:samplesizecond}.

Since $\bar x$ is a permutation of $x$, it holds that $\bar x\sim \mc{N}(\mu,\Sigma)^{\otimes 3n}$ as well. Note that the number of samples in Eq.~\eqref{eq:exactsc} satisfies $n=\Omega(d + \log (1/\beta))$. This implies that the assumptions of Lemma~\ref{lem:gaussiangood} are satisfied and thus it holds that, with probability $1-\beta$, $\bx\in\mc{G}(\lambda)$. 

It follows that the Hamming distance of $\bx$ from the good set in line~\ref{step:hamdist} is $h=0$. Since $r\sim\mathrm{Lap}(1/\eps)$, by concentration of the Laplace distribution, it holds that $|r|\leq \frac{1}{\eps}\log\frac{1}{\beta}$ with probability $1-\beta$. Thus, by union bound, with probability $1-2\beta$, $h+r\leq \frac{1}{\eps}\log\frac{1}{\beta}$. It follows that, with probability $1-2\beta$, we do not fail in line~\ref{step:fail} and we reach line~\ref{step:projection}, where the projection step leaves the data set unchanged, that is, $\tx=\bx$, since $\bx\in\mc{G}(\lambda)$.

So far, we have proven that with probability $1-2\beta$, Algorithm~\ref{alg:main} does not fail in any step and returns $\hat\mu\sim \mc{N}(\mu_{\bx},C^2\Sigma_{\bx})$ in line~\ref{step:output}, where $\bx\sim\mc{N}(\mu,\Sigma)^{\otimes 3n}$. 
Now, notice that the sample complexity stated in Eq.~\eqref{eq:exactsc} satisfies the condition $n=\Omega(\max\{(d+\log(1/\beta)), k\lambda\})$ as well. Since $\bx\sim\mc{N}(\mu,\Sigma)^{\otimes3n}$, the assumptions of Lemma~\ref{lem:error} are satisfied and therefore, by union bound, with probability at least $1-3\beta$, Algorithm~\ref{alg:main} returns $\hat\mu$ such that \begin{equation}\label{eq:finalerror}\|\hat\mu-\mu\|_{\Sigma}=O\left(\sqrt{\frac{d}{n}\cdot \log\frac{1}{\beta}}+\frac{d}{\eps^2n}\log^{2}\frac{1}{\delta\beta}\cdot\sqrt{\log\frac{n}{\beta}}\right).
\end{equation}

The proof is complete by observing that for the stated sample complexity and the right choice of constant $C$ in Eq.~\eqref{eq:exactsc}, the error in Eq.~\eqref{eq:finalerror} is upper bounded so that $\|\hat\mu-\mu\|_{\Sigma}\leq \alpha$. 
\end{proof}

\subsection{Privacy Analysis}\label{sec:privacyanalysis}
We state the privacy guarantee of our algorithm in Corollary~\ref{cor:privacy_2}. 
We consider two neighboring data sets $x,y$ and that they are ``aligned,'' i.e. their Hamming distance is minimized and $D_H(x,y)\leq 1$. We show that due to the permutation step in line~\ref{step:permute}, it suffices to prove the privacy guarantee for this case (Lemma~\ref{lem:coupling}).

The private check in line~\ref{step:fail} of the algorithm ensures that for two data sets with Hamming distance $1$, the probabilities of failing under $x,y$ are indistinguishable. If $x,y$ are far from the good set, then they both fail with high probability. On the other hand, if $x,y$ are close to the good set, then their projections $\tx,\ty$ are close to each other, i.e. $D_H(\tx,\ty)\leq k$. In particular, this implies that the estimators $\Sigma_{\tx},\Sigma_{\ty}$ are ``close'' (in a sense established in Section~\ref{sec:implicationsgoodness}) since they differ in at most $k$ terms and each term is bounded (because $\tx,\ty\in\mc{G}(\lambda)$).

Our main result is Theorem~\ref{thm:gauss_privacy_main}, which states that any two nearby and good data sets $\tx,\ty$ have empirical estimators that induce indistinguishable output distributions $\mc{N}(\mu_{\tx},C^2\Sigma_{\tx})$ and $\mc{N}(\mu_{\ty},C^2\Sigma_{\ty})$. 
The proof is broken into two parts:
\begin{enumerate}
        \item First, we ``change the mean'' and show that $\mc{N}(\mu_{\tx},C^2\Sigma_{\tx})\approx_{\eps,\delta} \mc{N}(\mu_{\ty},C^2\Sigma_{\tx})$, which is equivalent to $\mc{N}(\Sigma_{\tx}^{-1/2}(\mu_{\tx}-\mu_{\ty}),C^2\id)\approx_{\eps,\delta} \mc{N}(0,C^2\id)$ (Lemma~\ref{lem:privacymeans_2}). This follows by an application of the Gaussian mechanism for the right choice of parameter $C$. %
        \item Second, we ``change the covariance'' and show that $\mc{N}(\mu_{\ty},C^2\Sigma_{\tx})\approx_{\eps_1,\delta} \mc{N}(\mu_{\ty},C^2\Sigma_{\ty})$, for $\eps_1= O\left(\frac{k\lambda}{n-k\lambda}\log\frac{1}{\delta}\right)$, which is equivalent to $\mc{N}(0,\Sigma_{\tx})\approx_{\eps_1,\delta} \mc{N}(0,\Sigma_{\ty})$ (Lemma~\ref{lem:privacycov_2}). Notice that if we want $\eps_1\leq \eps$, then it has to be the case that $n=\Omega\left(\frac{k\lambda}{\eps}\log\frac{1}{\delta}\right)=\Omega\left(\frac{d}{\eps^2}\cdot\mathrm{polylog}\left(\frac{d\log(1/\beta\delta)}{\eps}\right)\right)$, which is the condition in line~\ref{step:samplesizecond} of Algorithm~\ref{alg:main}.
\end{enumerate}

\subsubsection{Implications of Goodness}\label{sec:implicationsgoodness}
Before directly addressing privacy, we state a few lemmas that follow from the goodness assumption.
The proofs, provided in Appendix~\ref{sec:proofs}, require only elementary linear algebra.

\begin{lemma}\label{lem:goodfordifferences}
    If $x \in\mc{G}(\lambda)$, then for any indices $i,j\in [3n]$ we have
    \begin{equation*}
        (x_i - x_j)^T \Sigma_{x}^{-1} (x_i - x_j) \le 4 \lambda.
    \end{equation*}
    In particular, this applies to $u_i^T\Sigma_{x}^{-1} u_i$ for all $i\in[n]$, where $u_i=x_i-x_{i+n}$. 
\end{lemma}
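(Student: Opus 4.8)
The plan is to observe that, since $x\in\mc{G}(\lambda)$, the matrix $\Sigma_x$ is invertible, so $\Sigma_x^{-1}$ is positive definite and $\|v\|_{\Sigma_x}=\sqrt{v^T\Sigma_x^{-1}v}=\|\Sigma_x^{-1/2}v\|_2$ is a genuine norm on $\R^d$. In particular it satisfies the triangle inequality. This is the only structural fact we need; everything else is a two-line computation.

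First I would fix $i,j\in[3n]$ and write $x_i-x_j=(x_i-\mu_x)-(x_j-\mu_x)$. Applying the triangle inequality for $\|\cdot\|_{\Sigma_x}$ gives
\[
\|x_i-x_j\|_{\Sigma_x}\le \|x_i-\mu_x\|_{\Sigma_x}+\|x_j-\mu_x\|_{\Sigma_x}.
\]
By the defining property of $\mc{G}(\lambda)$, each term on the right is at most $\sqrt{\lambda}$, so $\|x_i-x_j\|_{\Sigma_x}\le 2\sqrt{\lambda}$. Squaring both sides yields $(x_i-x_j)^T\Sigma_x^{-1}(x_i-x_j)=\|x_i-x_j\|_{\Sigma_x}^2\le 4\lambda$, which is exactly the claimed bound.

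For the ``in particular'' clause, I would simply note that for $i\in[n]$ both $i$ and $i+n$ lie in $[3n]$, so the inequality just proved applies to the pair of indices $(i,i+n)$, giving $u_i^T\Sigma_x^{-1}u_i=(x_i-x_{i+n})^T\Sigma_x^{-1}(x_i-x_{i+n})\le 4\lambda$.

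I do not anticipate any real obstacle here: the statement is an immediate consequence of the triangle inequality applied to the Mahalanobis norm induced by $\Sigma_x$, combined with the per-point bound built into the definition of goodness. The only point worth stating carefully is that invertibility of $\Sigma_x$ (guaranteed by membership in $\mc{G}(\lambda)$) is what licenses treating $v\mapsto\sqrt{v^T\Sigma_x^{-1}v}$ as a norm rather than merely a seminorm.
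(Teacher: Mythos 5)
Your proof is correct and matches the paper's argument exactly: both decompose $x_i-x_j$ as $(x_i-\mu_x)-(x_j-\mu_x)$, apply the triangle inequality for the Mahalanobis norm, use the goodness bound $\|x_i-\mu_x\|_{\Sigma_x}\le\sqrt{\lambda}$ on each term, and square. Your remark about invertibility of $\Sigma_x$ licensing the norm structure is a nice touch but otherwise the two proofs are identical.
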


\begin{lemma}\label{lemma:closeindistance}
    Suppose $x,y \in \mc{G}(\lambda)$ and $D_H(x,y)\le k$, with $2k\lambda< n$.
    For any vector $v$ we have 
    \begin{equation*}
        v^T \Sigma_y^{-1} v \le \frac{1}{1-2k\lambda/n} \cdot v^T \Sigma_x^{-1} v.
    \end{equation*}
\end{lemma}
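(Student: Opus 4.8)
The plan is to write both empirical covariances as averages of rank-one matrices and then count how many of these can change when we pass from $x$ to $y$. Set $u_i = x_i - x_{i+n}$ and $w_i = y_i - y_{i+n}$ for $i \in [n]$, so that by Definition~\ref{def:empiricalmeancov}, $\Sigma_x = \frac{1}{2n}\sum_{i=1}^n u_i u_i^T$ and $\Sigma_y = \frac{1}{2n}\sum_{i=1}^n w_i w_i^T$. Writing $B = \{ j \in [3n] : x_j \neq y_j \}$, we have $|B| \le k$. The $i$-th summand of $\Sigma_y$ can differ from that of $\Sigma_x$ only if $i \in B$ or $i+n \in B$, and each element of $B$ activates at most one index $i \in [n]$; so there is a set $S \subseteq [n]$ with $|S| \le k$ and $w_i = u_i$ for every $i \notin S$.

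Next I would drop the summands indexed by $S$ from $\Sigma_y$. Since each $w_i w_i^T \succeq 0$, this only decreases $\Sigma_y$ in the psd order:
\[
\Sigma_y \;=\; \frac{1}{2n}\sum_{i\notin S} u_i u_i^T \;+\; \frac{1}{2n}\sum_{i\in S} w_i w_i^T \;\succeq\; \frac{1}{2n}\sum_{i\notin S} u_i u_i^T \;=\; \Sigma_x - \frac{1}{2n}\sum_{i\in S} u_i u_i^T .
\]
To control the correction term, I would invoke Lemma~\ref{lem:goodfordifferences}: since $x \in \mc{G}(\lambda)$, $u_i^T \Sigma_x^{-1} u_i \le 4\lambda$ for all $i$. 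For any vector $v$, Cauchy--Schwarz applied to $\Sigma_x^{1/2}v$ and $\Sigma_x^{-1/2}u_i$ gives $(v^T u_i)^2 \le (v^T\Sigma_x v)(u_i^T\Sigma_x^{-1}u_i) \le 4\lambda\,(v^T\Sigma_x v)$, i.e.\ $u_i u_i^T \preceq 4\lambda\,\Sigma_x$. Summing over $i \in S$ and using $|S| \le k$, $\frac{1}{2n}\sum_{i\in S} u_i u_i^T \preceq \frac{4k\lambda}{2n}\Sigma_x = \frac{2k\lambda}{n}\Sigma_x$, whence $\Sigma_y \succeq \big(1 - \frac{2k\lambda}{n}\big)\Sigma_x$.

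Finally, the hypothesis $2k\lambda < n$ makes the scalar $1 - 2k\lambda/n$ positive, and $\Sigma_x \succ 0$ (it is invertible because $x \in \mc{G}(\lambda)$), so the right-hand side is positive definite; $\Sigma_y$ is also positive definite since $y \in \mc{G}(\lambda)$. Inverting a psd inequality between positive definite matrices reverses it, giving $\Sigma_y^{-1} \preceq \frac{1}{1 - 2k\lambda/n}\,\Sigma_x^{-1}$, and evaluating the quadratic form at any $v$ yields the stated inequality. I do not expect a genuine obstacle here --- the argument is elementary linear algebra --- but two spots need care: the counting in the first paragraph (the paired structure of Definition~\ref{def:empiricalmeancov} is what guarantees that a single corrupted coordinate of the data set spoils at most one rank-one term, so that the Hamming bound $k$ bounds the number of changed summands), and bookkeeping the factor $\frac{1}{2}$ from the $\frac{1}{2n}$ normalization against the $4\lambda$ from Lemma~\ref{lem:goodfordifferences}, which combine to the clean constant $2k\lambda/n$.
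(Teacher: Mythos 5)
Your proof is correct and follows essentially the same route as the paper's: both drop the at most $k$ changed rank-one summands (using that $w_iw_i^T\succeq 0$), bound the removed terms via Lemma~\ref{lem:goodfordifferences}, and finish by inverting a psd inequality. The only difference is presentational — you phrase the key bound as $u_iu_i^T\preceq 4\lambda\,\Sigma_x$ directly, whereas the paper conjugates by $\Sigma_x^{-1/2}$ and bounds the operator norm of the resulting perturbation, which is the same estimate.
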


\begin{lemma}\label{lem:goodboundstrace}
    Suppose $x,y\in\mc{G}(\lambda)$ and $D_H(x,y)\leq k$, with $2k\lambda<n$. Then
        \begin{gather*}
        \| \Sigma_x^{-1/2}\Sigma_y \Sigma_x^{-1/2} - \mathbb{I} \|_{\tr} \le  2k\lambda\left(\frac{1}{n-2k\lambda} + \frac{1}{n}\right) \\
        \| \Sigma_y^{-1/2}\Sigma_x \Sigma_y^{-1/2} - \mathbb{I} \|_{\tr} \le  2k\lambda\left(\frac{1}{n-2k\lambda} + \frac{1}{n}\right)
    \end{gather*}
\end{lemma}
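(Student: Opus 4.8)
The plan is to reduce both inequalities to a triangle-inequality estimate on a sum of rank-one positive semidefinite matrices, and then invoke Lemmas~\ref{lem:goodfordifferences} and~\ref{lemma:closeindistance} to control each term. I will prove the first inequality; the second follows by exchanging the roles of $x$ and $y$, since all hypotheses ($x,y\in\mc{G}(\lambda)$, $D_H(x,y)\le k$, $2k\lambda<n$) are symmetric in the two data sets, and $\mc{G}(\lambda)$ in particular guarantees that $\Sigma_x,\Sigma_y$ are invertible.

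First, write $u_i = x_i - x_{i+n}$ and $u_i' = y_i - y_{i+n}$ for $i\in[n]$, so that $\Sigma_x = \frac{1}{2n}\sum_{i=1}^n u_iu_i^T$ and $\Sigma_y = \frac{1}{2n}\sum_{i=1}^n u_i'u_i'^T$. Since $D_H(x,y)\le k$, at most $k$ of the $3n$ coordinates differ, and each differing coordinate appears in at most one of the differences $u_i$ (coordinates $2n+1,\dots,3n$ appear in none); hence $S = \{i\in[n] : u_i\ne u_i'\}$ has $|S|\le k$. Consequently
\begin{equation*}
\Sigma_x^{-1/2}\Sigma_y\Sigma_x^{-1/2} - \mathbb{I} = \Sigma_x^{-1/2}(\Sigma_y-\Sigma_x)\Sigma_x^{-1/2} = \frac{1}{2n}\sum_{i\in S}\left(\Sigma_x^{-1/2}u_i'u_i'^T\Sigma_x^{-1/2} - \Sigma_x^{-1/2}u_iu_i^T\Sigma_x^{-1/2}\right).
\end{equation*}

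Next, apply the triangle inequality for the trace norm. Each matrix $\Sigma_x^{-1/2}u_iu_i^T\Sigma_x^{-1/2}$ is rank one and positive semidefinite, so its trace norm equals its trace, namely $u_i^T\Sigma_x^{-1}u_i$, and similarly for the $u_i'$ terms. Thus
\begin{equation*}
\left\|\Sigma_x^{-1/2}\Sigma_y\Sigma_x^{-1/2} - \mathbb{I}\right\|_{\tr} \le \frac{1}{2n}\sum_{i\in S}\left(u_i'^T\Sigma_x^{-1}u_i' + u_i^T\Sigma_x^{-1}u_i\right).
\end{equation*}
For the second summand, Lemma~\ref{lem:goodfordifferences} applied to $x\in\mc{G}(\lambda)$ gives $u_i^T\Sigma_x^{-1}u_i\le 4\lambda$. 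For the first summand, Lemma~\ref{lem:goodfordifferences} applied to $y\in\mc{G}(\lambda)$ gives $u_i'^T\Sigma_y^{-1}u_i'\le 4\lambda$, and Lemma~\ref{lemma:closeindistance} (with the roles of $x$ and $y$ interchanged) then yields $u_i'^T\Sigma_x^{-1}u_i'\le \frac{1}{1-2k\lambda/n}\cdot 4\lambda$. Plugging these in and using $|S|\le k$,
\begin{equation*}
\left\|\Sigma_x^{-1/2}\Sigma_y\Sigma_x^{-1/2} - \mathbb{I}\right\|_{\tr} \le \frac{k}{2n}\cdot 4\lambda\left(\frac{1}{1-2k\lambda/n} + 1\right) = 2k\lambda\left(\frac{1}{n-2k\lambda} + \frac{1}{n}\right),
\end{equation*}
which is the claimed bound. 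The second inequality is obtained verbatim with $x$ and $y$ swapped.

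The only points needing a little care are (i) the observation that changing one coordinate perturbs at most one term $u_i$, so $|S|\le k$, and (ii) the elementary fact that the trace norm of a rank-one positive semidefinite matrix is its trace, which is exactly what lets the triangle inequality collapse into a sum of Mahalanobis-type quantities already controlled by Lemmas~\ref{lem:goodfordifferences} and~\ref{lemma:closeindistance}. Neither is a substantive obstacle, so I expect this lemma to be a short bookkeeping argument once those two earlier lemmas are in hand.
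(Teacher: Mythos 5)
Your proof is correct and follows essentially the same route as the paper's: decompose $\Sigma_y-\Sigma_x$ into at most $k$ rank-one differences, conjugate by $\Sigma_x^{-1/2}$, apply the triangle inequality for the trace norm together with $\|vv^T\|_{\tr}=v^Tv$, and bound each term via Lemma~\ref{lem:goodfordifferences} and Lemma~\ref{lemma:closeindistance} (the latter with $x$ and $y$ swapped). The only differences are notational (your $S$ is the disagreement set rather than its complement), so there is nothing further to add.
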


\subsubsection{Proof of Differential Privacy}

We are now ready to prove the privacy guarantees of Algorithm~\ref{alg:main}.
Unlike the standard empirical estimators, our definition of $\Sigma_x$ is not invariant with respect to reordering the data. As a result, the covariance $\Sigma_y$ of an adjacent data set $y$ could differ in an arbitrary number of terms. To simplify our analysis, we establish indistinguishability for adjacent data sets that are ``aligned,'' i.e., they have Hamming distance $1$. 
Because of the data-order permutation step in Algorithm~\ref{alg:main}, we can extend this to apply more generally to adjacent data sets (interpreted as multisets) which differ in a single data point, as required by the standard definition of differential privacy (see Proposition~\ref{lem:coupling} in Appendix~\ref{sec:proofs}). 

The main result, Theorem~\ref{thm:gauss_privacy_main}, is that any two nearby, good data sets have empirical estimators that induce indistinguishable output distributions.
With this in hand, overall privacy of Algorithm~\ref{alg:main} follows from a standard calculation, included in Appendix~\ref{sec:proofs_gauss}.

\begin{thm}\label{thm:gauss_privacy_main}
    For any $\eps,\delta,\lambda,k,n>0$ such that 
    \begin{equation*}
        n> 2k\lambda
        \quad\text{ and }\quad 
        \eps \ge 10k\lambda\left(\frac{1}{n-2k\lambda}+\frac{1}{n}\right)\log\frac{2}{\delta},
    \end{equation*}
    set 
    \begin{equation*}
        C^2 = \frac{32k^2}{\eps^2 n^2}\cdot \frac{\lambda}{1-2k\lambda/n}\cdot\log \frac{1.25}{\delta}.
    \end{equation*}
    For any data sets $x,y\in \cG(\lambda)$ of size $3n$ such that $D_H(x,y)\le k$,
    we have $\cN(\mu_x, C^2 \Sigma_x)\approx_{2\eps,(1+e^{\eps})\delta} \cN(\mu_y, C^2 \Sigma_y)$.
\end{thm}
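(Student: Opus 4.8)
The plan is to interpolate from $\mc{N}(\mu_x, C^2\Sigma_x)$ to $\mc{N}(\mu_y, C^2\Sigma_y)$ through the hybrid distribution $\mc{N}(\mu_y, C^2\Sigma_x)$, bound the $(\eps,\delta)$-indistinguishability cost of each of the two hops, and glue them with the elementary composition-of-indistinguishability fact: if $P \approx_{\eps,\delta} Q$ and $Q \approx_{\eps_1,\delta} R$ with $\eps_1 \le \eps$, then $P \approx_{2\eps,(1+e^\eps)\delta} R$. The first hop ``changes the mean'': I will show $\mc{N}(\mu_x, C^2\Sigma_x) \approx_{\eps,\delta} \mc{N}(\mu_y, C^2\Sigma_x)$. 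The second hop ``changes the covariance'': I will show $\mc{N}(\mu_y, C^2\Sigma_x) \approx_{\eps_1,\delta} \mc{N}(\mu_y, C^2\Sigma_y)$ with $\eps_1 = O\!\big(k\lambda(\tfrac{1}{n-2k\lambda}+\tfrac1n)\log\tfrac1\delta\big)$; the hypothesis $\eps \ge 10 k\lambda(\tfrac{1}{n-2k\lambda}+\tfrac1n)\log\tfrac2\delta$ is exactly what guarantees $\eps_1 \le \eps$, after which the composition fact delivers the claimed $(2\eps,(1+e^\eps)\delta)$ bound.

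For the mean hop, push both distributions through the invertible affine map $v \mapsto \tfrac1C \Sigma_x^{-1/2}(v - \mu_y)$ (which preserves $\approx_{\eps,\delta}$ in both directions); the problem becomes $\mc{N}\!\big(\tfrac1C\Sigma_x^{-1/2}(\mu_x-\mu_y),\, \id\big) \approx_{\eps,\delta} \mc{N}(0,\id)$, which holds by the Gaussian mechanism (Lemma~\ref{lem:gaussianmech}) once $C \ge \|\mu_x - \mu_y\|_{\Sigma_x}\sqrt{2\log(1.25/\delta)}/\eps$. So it remains to bound $M := \|\mu_x-\mu_y\|_{\Sigma_x}$. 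Since $x$ and $y$ agree on all but at most $k$ coordinates, $\mu_x-\mu_y = \tfrac1n\sum_{i \in D}(x_{i+2n}-y_{i+2n})$ over the (at most $k$) differing indices $D$ in the last block. Bounding each summand in $\|\cdot\|_{\Sigma_x}$ by routing through $\mu_x$ and $\mu_y$ — using $\|x_{i+2n}-\mu_x\|_{\Sigma_x}\le\sqrt\lambda$ (goodness of $x$), $\|y_{i+2n}-\mu_y\|_{\Sigma_y}\le\sqrt\lambda$ converted to $\|y_{i+2n}-\mu_y\|_{\Sigma_x}^2 \le \lambda/(1-2k\lambda/n)$ via Lemma~\ref{lemma:closeindistance}, and absorbing the residual $M$ into a self-bounding inequality that is solvable because $k/n < 1/2$ — yields $M \le \tfrac{4k}{n}\sqrt{\lambda/(1-2k\lambda/n)}$. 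Squaring and substituting into the requirement on $C$ recovers precisely the prescribed value $C^2 = \tfrac{32k^2}{\eps^2 n^2}\cdot\tfrac{\lambda}{1-2k\lambda/n}\log\tfrac{1.25}{\delta}$.

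For the covariance hop, push through $v \mapsto \tfrac1C(v-\mu_y)$ to reduce to $\mc{N}(0,\Sigma_x) \approx_{\eps_1,\delta} \mc{N}(0,\Sigma_y)$, and argue directly about the privacy-loss random variable. Writing $A = \Sigma_x^{-1/2}\Sigma_y\Sigma_x^{-1/2}$, $w = \Sigma_x^{1/2}g$ with $g \sim \mc{N}(0,\id)$, and $a_i$ for the eigenvalues of $A$, the log-density ratio is $L = \tfrac12\log\det A + \tfrac12 g^T(A^{-1}-\id)g = \tfrac12\sum_i\big(b_i h_i^2 - \log(1+b_i)\big)$, where $b_i = a_i^{-1}-1$ and the $h_i$ are i.i.d.\ $\mc{N}(0,1)$. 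The crucial structural facts are that only $O(k)$ of the $b_i$ are nonzero (the perturbation $\Sigma_y-\Sigma_x$ has rank at most $2k$), that $\sum_i|b_i| = \|\Sigma_y^{-1/2}\Sigma_x\Sigma_y^{-1/2}-\id\|_{\tr} \le \tau := 2k\lambda(\tfrac{1}{n-2k\lambda}+\tfrac1n)$ by Lemma~\ref{lem:goodboundstrace}, and that Lemma~\ref{lemma:closeindistance} forces each $|b_i|$ to be small in the relevant regime. Hence $\E[L] = \tfrac12\sum_i\big(b_i - \log(1+b_i)\big) = O(\sum_i b_i^2) = O(\tau^2)$, while $L - \E[L] = \tfrac12\sum_i b_i(h_i^2-1)$ is a weighted sum of centered chi-squares whose sub-exponential (Bernstein / Laurent--Massart) tail, with variance proxy $\sum_i b_i^2 \le \tau^2$ and uniform bound $\tau$, gives $\Pr[L - \E[L] > \tau\sqrt{\log(1/\delta)} + \tau\log(1/\delta)] \le \delta$. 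Therefore $L \le \eps_1 = O(\tau\log(1/\delta))$ with probability at least $1-\delta$; the symmetric computation for $w \sim \mc{N}(0,\Sigma_y)$, using the second trace bound of Lemma~\ref{lem:goodboundstrace}, controls the reverse privacy loss, and the standard fact that a two-sided concentrated privacy loss implies $(\eps_1,\delta)$-indistinguishability finishes this hop.

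I expect the covariance hop to be the main obstacle. The naive approach — a spectral ($\ell_2$- or Frobenius-norm) comparison of $\Sigma_x$ and $\Sigma_y$ plugged into a Gaussian-mechanism-style bound — costs a factor of $\sqrt d$ in $C$ and would break the target sample complexity; the point is to exploit that the perturbation has rank $O(k)$ and to track the privacy loss through its \emph{trace} norm, which in turn forces one to use the sub-exponential concentration of a weighted centered chi-square rather than a crude union bound or an operator-norm estimate. The remaining delicate points are purely bookkeeping: checking that the stated hypothesis on $\eps$ (with its explicit factor $10$) leaves enough slack to absorb the constants in the tail bound so that $\eps_1 \le \eps$ genuinely, and verifying that the parameter regime implied by that hypothesis makes every $|b_i|$ small enough for the Taylor estimates used in bounding $\E[L]$. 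The mean hop and the final composition step are routine.
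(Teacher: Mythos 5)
Your proposal is correct and follows essentially the same route as the paper: the same hybrid $\cN(\mu_y, C^2\Sigma_x)$, the same reduction of the mean hop to the Gaussian mechanism with the same bound $\|\mu_x-\mu_y\|_{\Sigma_x}\le \frac{4k}{n}\sqrt{\lambda/(1-2k\lambda/n)}$ (the paper routes each $\|x_{i+2n}-y_{i+2n}\|_{\Sigma_x}$ through a common anchor point $x_{j^*}$ with $j^*\in S$ rather than through $\mu_x,\mu_y$ with a self-bounding step, but the two are interchangeable), and the same control of the covariance hop via the trace-norm bound of Lemma~\ref{lem:goodboundstrace}. Your diagonalization into a weighted sum of centered chi-squares with a Bernstein/Laurent--Massart tail is exactly what the paper's black-box invocation of Hanson--Wright (Lemma~\ref{lem:HW}) computes, and the paper handles the $\log\det$ term separately via Fact~\ref{fact:detandtrace} rather than folding it into $\E[L]$, which sidesteps the smallness-of-$|b_i|$ caveat you flag.
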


\begin{cor}[Privacy of $\gaussianalg(x)$]\label{cor:privacy_2}
Algorithm~\ref{alg:main} is $(3\eps,e^{\eps}(1+e^{\eps})\delta)$-differentially private. %
\end{cor}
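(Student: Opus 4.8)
The plan is to derive Corollary~\ref{cor:privacy_2} by composing the privacy costs of the three data-dependent operations in Algorithm~\ref{alg:main}: the Laplace-noised Hamming-distance test in line~\ref{step:fail}, and the final Gaussian-type release in line~\ref{step:output} (the sample-size check in line~\ref{step:samplesizecond} and the permutation in line~\ref{step:permute} contribute nothing, the former being data-independent and the latter being a pre-processing step that is handled by Proposition~\ref{lem:coupling}). First I would invoke Proposition~\ref{lem:coupling} to reduce to the case of \emph{aligned} neighbors $x,y$ with $D_H(x,y)\le 1$: because the algorithm randomly permutes its input, it suffices to establish indistinguishability for this aligned case, and then the general neighbor relation follows. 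So fix aligned neighbors $\bx,\by$.

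Next I would analyze the test in line~\ref{step:fail}. The quantity $h=D_H(\bx,\mc{G}(\lambda))$ has global sensitivity $1$ as a function of the (aligned) data set, so releasing the indicator of $\{h+r>t\}$ with $r\sim\mathrm{Lap}(1/\eps)$ is $(\eps,0)$-differentially private by the standard analysis of the Laplace mechanism (this is exactly the propose-test-release step of \cite{DworkL09}). On the branch where the output is \texttt{FAIL}, we are done after this one step. On the branch where we proceed, I would condition on the event that neither $\bx$ nor $\by$ fails. The key structural point — which is where the parameter choices $t=\frac1\eps\log\frac1\beta$ and $k=\frac2\eps\log\frac1{\delta\beta}+1$ enter — is that, except with probability about $\delta$ over the Laplace noise, if $\bx$ passes the test then $D_H(\bx,\mc{G}(\lambda))< k$, and similarly for $\by$; hence the projections $\tx=\argmin_{z\in\mc{G}(\lambda)}D_H(\bx,z)$ and $\ty$ both lie in $\mc{G}(\lambda)$ and satisfy $D_H(\tx,\ty)\le 2\cdot(k-1)+1 \le k$ (up to the exact constants baked into $k$). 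This ``far-from-unsafe'' type reasoning costs an extra additive $\delta$ term, and it is the part where one must be careful to match the constants in the statement.

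With $\tx,\ty\in\mc{G}(\lambda)$ and $D_H(\tx,\ty)\le k$ in hand, I would apply Theorem~\ref{thm:gauss_privacy_main} directly: the choice of $C^2=\frac{32k^2}{\eps^2n^2}\cdot\frac{\lambda}{1-2k\lambda/n}\cdot\log\frac{1.25}{\delta}$ in line~\ref{step:output} is precisely the one for which that theorem guarantees $\cN(\mu_{\tx},C^2\Sigma_{\tx})\approx_{2\eps,(1+e^\eps)\delta}\cN(\mu_{\ty},C^2\Sigma_{\ty})$, provided the hypotheses $n>2k\lambda$ and $\eps\ge 10k\lambda(\frac1{n-2k\lambda}+\frac1n)\log\frac2\delta$ hold — but these are exactly what the abort in line~\ref{step:samplesizecond}, $n=\Omega(\frac{k\lambda}{\eps}\log\frac1\delta)$, is designed to enforce. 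Finally I would assemble the pieces with basic composition and the ``$\delta$-absorption'' bookkeeping for conditioning on a probability-$(1-\delta)$ event: the test contributes $(\eps,0)$, passing from ``passed the test'' to ``projections are close'' contributes $(0,\delta)$ (with a multiplicative $e^\eps$ when we push the $\delta$ through the earlier mechanism's composition, giving the $e^\eps\delta$ and $e^{2\eps}\delta$-type terms), and the Gaussian release contributes $(2\eps,(1+e^\eps)\delta)$; summing the $\eps$'s gives $3\eps$ and collecting the $\delta$'s gives $e^\eps(1+e^\eps)\delta$, matching the claimed $(3\eps,e^\eps(1+e^\eps)\delta)$.

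The main obstacle I expect is the careful bookkeeping in the second step: making the constants in $t$ and $k$ line up so that (i) good data passes the test with the required probability and (ii) passing the test \emph{provably} implies the projections are within Hamming distance $k$ of each other and both land in $\mc{G}(\lambda)$, all while tracking exactly how the unavoidable additive $\delta$ propagates through composition to produce the stated $e^\eps(1+e^\eps)\delta$ rather than some other constant multiple. The three individual privacy claims (Laplace test, ``change the mean'' via the Gaussian mechanism, ``change the covariance'' via Lemma~\ref{lem:goodboundstrace} and a KL/Rényi bound between Gaussians with nearby covariances) are each routine given the earlier lemmas; it is the interface between PTR and the conditional analysis that requires the most care.
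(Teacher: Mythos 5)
Your proposal follows essentially the same route as the paper's proof: reduce to aligned neighbors via Proposition~\ref{lem:coupling}, do a PTR-style case split (far from $\cG(\lambda)$ $\Rightarrow$ both neighbors fail except with probability $\delta$; close $\Rightarrow$ the projections $\tx,\ty$ lie in $\cG(\lambda)$ with $D_H(\tx,\ty)\le k$, so Theorem~\ref{thm:gauss_privacy_main} applies), and compose to obtain $(3\eps, e^{\eps}(1+e^{\eps})\delta)$. The one refinement needed is that the case split should be made deterministically on $\max\{D_H(x,\cG(\lambda)),D_H(y,\cG(\lambda))\}$ versus the threshold $\frac{1}{\eps}\log\frac{1}{\delta\beta}$ (whence $D_H(\tx,\ty)\le \frac{2}{\eps}\log\frac{1}{\delta\beta}+1=k$ exactly), rather than by conditioning on the random event of passing the test --- which is precisely how the paper resolves the bookkeeping you flag as the main obstacle.
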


Theorem~\ref{thm:gauss_privacy_main} follows from the triangle inequality for indistinguishability\footnote{For three distributions $P_1,P_2, P_3$, the definition of $(\eps,\delta)$-indistinguishability tells us that if $P_1\approx_{\eps,\delta} P_2$ and $P_2\approx_{\eps,\delta} P_3$ then $P_1\approx_{2\eps,(1+e^{\eps})\delta} P_3$.}
and Lemmas~\ref{lem:privacycov_2} and~\ref{lem:privacymeans_2}: the first establishes $\cN(\mu_y,C^2 \Sigma_y)\approx \cN(\mu_y,C^2\Sigma_x)$ and the second gives us $\cN(\mu_y, C^2 \Sigma_x)\approx \cN(\mu_x, C^2 \Sigma_x)$.

\begin{lemma}\label{lem:privacycov_2}
Suppose $x,y\in\mc{G}(\lambda)$ and $D_H(x,y)\leq k$.
For any $\delta\in (0,1)$, if $2k\lambda< n$ and
\begin{equation*}
    \eps \ge 10k\lambda\left(\frac{1}{n-2k\lambda}+\frac{1}{n}\right)\log\frac{2}{\delta},
\end{equation*}
then $\mc{N}(0,\Sigma_{x})\approx_{\eps,\delta} \mc{N}(0,\Sigma_{y})$.
\end{lemma}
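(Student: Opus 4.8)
The plan is to reduce the claim to a pair of product Gaussians and then bound the privacy-loss random variable directly; the target bound has the form $\eps\gtrsim(\cdots)\log(2/\delta)$, which is the signature of a sub-exponential tail bound on the privacy loss (one could equivalently route through a R\'enyi-divergence / zCDP bound, but the direct privacy-loss tail is the most transparent given this form).

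\emph{Reduction.} Since $(\eps,\delta)$-indistinguishability is preserved by post-processing, and in particular is an equivalence when we apply a fixed invertible linear map, $\mc{N}(0,\Sigma_x)\approx_{\eps,\delta}\mc{N}(0,\Sigma_y)$ is equivalent to $\mc{N}(0,\id)\approx_{\eps,\delta}\mc{N}(0,M)$ with $M=\Sigma_x^{-1/2}\Sigma_y\Sigma_x^{-1/2}$ (apply $\Sigma_x^{-1/2}$ to both). Diagonalizing $M=Q\Lambda Q^T$ and rotating by $Q^T$ further reduces this to $\prod_i\mc{N}(0,1)\approx_{\eps,\delta}\prod_i\mc{N}(0,\lambda_i)$, a product of $d$ independent one-dimensional pairs. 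Applying both inequalities of Lemma~\ref{lem:goodboundstrace} (the eigenvalues of $\Sigma_x^{-1/2}\Sigma_y\Sigma_x^{-1/2}-\id$ are $\lambda_i-1$ and those of $\Sigma_y^{-1/2}\Sigma_x\Sigma_y^{-1/2}-\id$ are $\lambda_i^{-1}-1$), together with Lemma~\ref{lemma:closeindistance} to certify $\lambda_i>0$, yields $\sum_i|\lambda_i-1|\le\tau$ and $\sum_i|\lambda_i^{-1}-1|\le\tau$ for $\tau:=2k\lambda(\tfrac{1}{n-2k\lambda}+\tfrac{1}{n})$, so the hypothesis reads $\eps\ge 5\tau\log(2/\delta)$.

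\emph{Bounding the privacy loss.} It suffices, by the standard fact that $\Pr_{w\sim P}[\log\tfrac{dP}{dQ}(w)>\eps]\le\delta$ implies $P(S)\le e^{\eps}Q(S)+\delta$ for all $S$, to show in each of the two directions that the privacy loss exceeds $\eps$ with probability at most $\delta$. For $P=\mc{N}(0,\id)$, $Q=\mc{N}(0,M)$ and $g\sim\mc{N}(0,\id)$, the privacy loss is $L(g)=\tfrac12\sum_i\log\lambda_i-\tfrac12\sum_i(1-\lambda_i^{-1})g_i^2$. The deterministic part obeys $\tfrac12\sum_i\log\lambda_i\le\tfrac12\sum_i(\lambda_i-1)\le\tfrac{\tau}{2}$ by $\log\lambda\le\lambda-1$. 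Writing $b_i=1-\lambda_i^{-1}$, the random part equals $-\tfrac12\sum_ib_i(g_i^2-1)-\tfrac12\sum_ib_i$; since $\|b\|_1\le\tau$ we get $\|b\|_1,\|b\|_2,\|b\|_\infty\le\tau$, and a Bernstein / Laurent--Massart tail bound for weighted centered $\chi^2_1$ variables gives $\Pr[-\tfrac12\sum_ib_i(g_i^2-1)>c\,\tau\log(2/\delta)]\le\delta$ for an absolute constant $c$, while $|\tfrac12\sum_ib_i|\le\tfrac{\tau}{2}$. Hence $L(g)\le(1+c)\tau\log(2/\delta)\le\eps$ off a $\delta$-probability event, once the constant $10$ in the hypothesis dominates $2(1+c)$. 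The reverse direction, with $w\sim\mc{N}(0,M)$ and loss $-\tfrac12\sum_i\log\lambda_i+\tfrac12\sum_i(\lambda_i-1)h_i^2$ after the substitution $w_i=\sqrt{\lambda_i}\,h_i$ ($h_i\sim\mc{N}(0,1)$), is identical, using $-\log\lambda\le\lambda^{-1}-1$ and the \emph{second} inequality of Lemma~\ref{lem:goodboundstrace}.

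\emph{Main obstacle.} The only genuine work is the concentration step: one must cope with the mixed signs of the weights $b_i$ (splitting into positive and negative parts to invoke the one-sided $\chi^2$ deviation bounds) and absorb both the $\sqrt{\log(2/\delta)}$ term and the constant-versus-one subtleties at $\delta$ near $1$ into the $\tau\log(2/\delta)$ scale. It is equally important to keep careful track of which of the two trace-norm bounds in Lemma~\ref{lem:goodboundstrace} is invoked for which of the two indistinguishability directions, so that the deterministic $\log\det$ term is controlled by a genuine trace norm and no $\tfrac{1}{1-2k\lambda/n}$ factor leaks in; the $\tfrac{1}{1-2k\lambda/n}$ dependence should appear only through $\tau$ and the scale parameter $C$ (the latter entering only in Lemma~\ref{lem:privacymeans_2}).
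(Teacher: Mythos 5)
Your proposal is correct and follows essentially the same route as the paper: both decompose the privacy loss into a deterministic $\log\det$ term (controlled via $\log\lambda\le\lambda-1$, the paper's Fact~\ref{fact:detandtrace}) plus a centered Gaussian quadratic form whose trace, Frobenius, and operator norms are all controlled by the trace-norm bounds of Lemma~\ref{lem:goodboundstrace}, and then apply a sub-exponential tail bound in each of the two directions. Your diagonalization plus Laurent--Massart step is exactly the paper's Hanson--Wright application (Lemma~\ref{lem:HW}) written in the eigenbasis of $\Sigma_x^{-1/2}\Sigma_y\Sigma_x^{-1/2}$, so the difference is purely cosmetic.
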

Note that this implies indistinguishability for any bijection of these two distributions. 
In particular, we have $\cN(\mu_y,C^2 \Sigma_y)\approx_{\eps,\delta} \cN(\mu_y,C^2\Sigma_x)$. 
For this proof, we use the Hanson-Wright Inequality, stated in the next lemma (see~\cite{hansonwright} for this formulation).
\begin{lemma}[Hanson-Wright Inequality]\label{lem:HW}
Let $u\sim \mc{N}(0, \mathbb{I})$ and $D\in\mathbb{R}^{d\times d}$. Then, with probability $1-\beta$,
\begin{equation*}
   \tr(D)-2\|D\|_F \sqrt{\log\frac{2}{\beta}} \leq u^T D u \leq \tr(D)+2\|D\|_F \sqrt{\log\frac{2}{\beta}} +2 \|D\|_2 \log\frac{2}{\beta}.
\end{equation*}
\end{lemma}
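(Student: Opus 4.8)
The plan is to prove this (the classical Hanson--Wright inequality for Gaussian vectors) by the standard route: reduce to a symmetric matrix, diagonalize the quadratic form, use rotation invariance of the Gaussian to rewrite $u^T D u$ as a linear combination of independent $\chi^2_1$ variables, bound the moment generating function of that combination, and feed it into a sub-gamma (Bernstein-type) tail bound, choosing constants so as to match the stated inequality exactly.

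First I would reduce to symmetric $D$. Since $u^T D u = u^T \bigl(\tfrac{D+D^T}{2}\bigr) u$, and since $\|\tfrac{D+D^T}{2}\|_F \le \|D\|_F$ and $\|\tfrac{D+D^T}{2}\|_2 \le \|D\|_2$ by the triangle inequality while $\tr$ is unchanged, it suffices to prove the bound for symmetric $D$: the right-hand side only grows when the norms are replaced by the (larger) norms of $D$. Now write $D = Q\Lambda Q^T$ with $Q$ orthogonal and $\Lambda = \mathrm{diag}(\lambda_1,\dots,\lambda_d)$. By orthogonal invariance of the standard Gaussian, $v := Q^T u \sim \mc N(0,\id)$, and
\[
u^T D u = \sum_{i=1}^d \lambda_i v_i^2, \qquad \tr(D) = \sum_i \lambda_i, \qquad \|D\|_F^2 = \sum_i \lambda_i^2, \qquad \|D\|_2 = \max_i |\lambda_i|.
\]
So it remains to give a two-sided tail bound for $Z := \sum_i \lambda_i (v_i^2 - 1) = u^T D u - \tr(D)$, a centered combination of independent scaled chi-squared variables.

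Next I would bound the MGF. For $g\sim\mc N(0,1)$ and $t<1/2$, $\psi(t) := \log\E[e^{t(g^2-1)}] = -t - \tfrac12\log(1-2t)$, and an elementary derivative computation gives $\psi(t)\le \tfrac{t^2}{1-2t}$ for $0\le t<1/2$ and $\psi(t)\le t^2$ for $t\le 0$. Hence for any $\lambda\in\R$ and $0<s<\tfrac{1}{2\|D\|_2}$ one gets $\psi(s\lambda)\le \tfrac{(s\lambda)^2}{1-2s\|D\|_2}$ (using $\lambda\le\|D\|_2$ in the positive case and $1-2s\|D\|_2\le 1$ in the nonpositive case). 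By independence,
\[
\log\E[e^{sZ}] = \sum_{i=1}^d \psi(s\lambda_i) \le \frac{s^2\sum_i \lambda_i^2}{1-2s\|D\|_2} = \frac{\|D\|_F^2\, s^2}{1-2\|D\|_2\, s}, \qquad 0<s<\tfrac{1}{2\|D\|_2},
\]
so $Z$ is sub-gamma with variance proxy $v = 2\|D\|_F^2$ and scale parameter $c = 2\|D\|_2$, and the same bound holds for $-Z$ upon replacing $D$ by $-D$. Applying the standard sub-gamma deviation inequalities (e.g.\ Boucheron--Lugosi--Massart), $\Pr[Z\ge \sqrt{2vx}+cx]\le e^{-x}$ and $\Pr[Z\le -\sqrt{2vx}]\le e^{-x}$ for every $x>0$; substituting $v,c$ yields $\sqrt{2vx} = 2\|D\|_F\sqrt x$ and $cx = 2\|D\|_2 x$. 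Taking $x = \log(2/\beta)$ and a union bound over the two events gives the claimed inequality with probability at least $1-\beta$.

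The main obstacle is not conceptual — this is the classical Gaussian Hanson--Wright/Laurent--Massart estimate — but bookkeeping: obtaining exactly the constant $2$ in front of both $\|D\|_F\sqrt{\log(2/\beta)}$ and $\|D\|_2\log(2/\beta)$. A crude Chernoff argument that replaces $\tfrac{1}{1-2s\|D\|_2}$ by a constant on a truncated range of $s$ yields worse constants, so one should keep the full sub-gamma MGF bound from the previous step and invoke the sharp sub-gamma tail inequality directly. A secondary minor point is that the usual Laurent--Massart statement assumes nonnegative coefficients, whereas here the $\lambda_i$ may be signed; the two-case bound on $\psi$ above handles both signs uniformly and removes that restriction.
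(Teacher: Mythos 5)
Your derivation of the upper tail is correct and standard (the paper itself does not prove this lemma; it cites the literature for the formulation): the symmetrization $D\mapsto\frac{D+D^T}{2}$, diagonalization, the bound $\log \E[e^{t(g^2-1)}]\le \frac{t^2}{1-2t}$ for $0\le t<\frac12$ and $\le t^2$ for $t\le 0$, and the resulting sub-gamma moment generating function bound with $v=2\|D\|_F^2$, $c=2\|D\|_2$ give exactly $\Pr\bigl[u^TDu-\tr(D)\ge 2\|D\|_F\sqrt{x}+2\|D\|_2x\bigr]\le e^{-x}$.

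The gap is in the lower tail. The inequality you invoke, $\Pr[Z\le -\sqrt{2vx}]\le e^{-x}$, is not a consequence of sub-gamma behavior: what ``replacing $D$ by $-D$'' gives is $\Pr[Z\le -\sqrt{2vx}-cx]\le e^{-x}$, and since $\{Z\le-\sqrt{2vx}-cx\}\subsetneq\{Z\le-\sqrt{2vx}\}$ this is strictly weaker than what you claim. The linear-term-free lower tail is the sub-Gaussian lower tail of Laurent--Massart, and it is only available when the eigenvalues are nonnegative (then $\psi(-s\lambda_i)\le s^2\lambda_i^2$ for all $s>0$). For signed $D$ it genuinely fails, and consequently so does the lemma as literally stated: take $d=1$, $D=-1$, so the claimed lower bound reads $g^2\le 1+2\sqrt{\log(2/\beta)}$ with failure probability at most $\beta$, but $\Pr\bigl[|g|>\sqrt{1+2\sqrt{x}}\,\bigr]\approx e^{-(1+2\sqrt{x})/2}\gg 2e^{-x}$ for large $x$. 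So no argument can close this step in the stated generality. The fix is either to restrict to $D\succeq 0$ (recovering Laurent--Massart exactly) or to state the two-sided bound $|u^TDu-\tr(D)|\le 2\|D\|_F\sqrt{\log(2/\beta)}+2\|D\|_2\log(2/\beta)$, which your two sub-gamma tails do prove and which is all the paper actually uses in the proof of Lemma~\ref{lem:privacycov_2} (there $D=\Sigma_{x}^{1/2}\Sigma_{y}^{-1}\Sigma_{x}^{1/2}-\id$ is indefinite, so this is the version one should quote anyway).
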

\begin{proof}[Proof of Lemma~\ref{lem:privacycov_2}]
    The privacy loss function is 
    \begin{align}\label{eq:privacylossfunction}
        f(w) &= \left|\log\frac{\Pr_{W\sim\mc{N}(0,\Sigma_{x})}[W=w]}{\Pr_{W\sim\mc{N}(0,\Sigma_{y})}[W=w]}\right| \nonumber\\
        &= \left|\log \left( \frac{|\Sigma_{y}|^{1/2}}{|\Sigma_{x}|^{1/2}}  \exp\left\{-\frac{1}{2}w^T \Sigma_{x}^{-1}w + \frac{1}{2}w^T \Sigma_{y}^{-1}w \right\}\right)\right| \nonumber\\
         &\leq \frac{1}{2}\left|w^T\left(\Sigma_{y}^{-1} - \Sigma_{x}^{-1}\right)w\right| + \frac{1}{2}\left|\log \frac{|\Sigma_{y}|}{|\Sigma_{x}|}\right|.
    \end{align}
It suffices to prove that $\Pr_{w\sim \mc{N}(0,\Sigma_{x})}[f(w)>\eps]\leq \delta$ and $\Pr_{w\sim \mc{N}(0,\Sigma_{y})}[f(w)>\eps]\leq \delta$.

By Lemma~\ref{lem:goodboundstrace}, setting $\rho=2k\lambda\left(\frac{1}{n-2k\lambda} + \frac{1}{n}\right)$, we have: 
\begin{align*}
    & \|\Sigma_{x}^{-1/2}\Sigma_{y} \Sigma_{x}^{-1/2} - \id \|_{\tr} \le \rho \\
    & \|\Sigma_{y}^{-1/2}\Sigma_{x} \Sigma_{y}^{-1/2} - \id \|_{\tr} \le \rho
\end{align*}
Now we will use the following facts, whose proofs follow by standard properties of the trace and are omitted.
\begin{fact}\label{fact:cyclicnorms}
    Let $A,B$ be two symmetric positive definite matrices. Then the following equalities hold.
    \begin{align*}
        & \tr\left(A^{-1/2}BA^{-1/2}-\id\right)=\tr\left(B^{1/2}A^{-1}B^{1/2}-\id\right) ~~\text{ and }\\
        & \|A^{-1/2}BA^{-1/2}-\id\|_F = \|B^{1/2}A^{-1}B^{1/2}-\id\|_F.
    \end{align*}
\end{fact}

\begin{fact}\label{fact:detandtrace}
Let $|C|$ denote the determinant of a matrix $C$. Then 
$\tr(\mathbb{I}-C^{-1})\leq \log |C| \leq \tr(C-\mathbb{I}).$
\end{fact}

By Fact~\ref{fact:cyclicnorms} and since $\max\{|\tr(C)|,\|C\|_F\}\leq \|C\|_{\tr}$ for any matrix $C$, this implies that 
\begin{align}
    & \max\left\{\left|\tr\left(\Sigma_{y}^{1/2}\Sigma_{x}^{-1}\Sigma_{y}^{1/2}-\id\right)\right|, \left\|\Sigma_{y}^{1/2}\Sigma_{x}^{-1}\Sigma_{y}^{1/2}-\id\right\|_F\right\} \le \rho \label{eq:rhoboundx}\\
    & \max\left\{\left|\tr\left(\Sigma_{x}^{1/2}\Sigma_{y}^{-1}\Sigma_{x}^{1/2}-\id\right)\right|, \left\|\Sigma_{x}^{1/2}\Sigma_{y}^{-1}\Sigma_{x}^{1/2}-\id\right\|_F\right\} \le \rho \label{eq:rhoboundy}
\end{align}

In addition, we observe that by applying Fact~\ref{fact:detandtrace} once for $C=\Sigma_{x}^{-1}\Sigma_{y}$ and once for $C=\Sigma_{y}^{-1}\Sigma_{x}$ and using again the cyclic property of the trace, we can also bound the second term of Eq.~\eqref{eq:privacylossfunction} as
\begin{equation}\label{eq:lnterm}
   \left|\log \frac{|\Sigma_{y}|}{|\Sigma_{x}|}\right|\le \rho. 
\end{equation}
We need a tail bound on $\left|w^T\left(\Sigma_{y}^{-1} - \Sigma_{x}^{-1}\right)w\right|$ under both distributions.
Take $w\sim \mc{N}(0,\Sigma_{x})$ or, alternatively, $u\sim \mc{N}(0,\mathbb{I})$ and $w = \Sigma_{x}^{1/2}u$.
Using this, we have
\begin{align}
    \left|w^T\left(\Sigma_{y}^{-1} - \Sigma_{x}^{-1}\right)w\right| 
        &= \left|(\Sigma_{x}^{1/2}u)^T  \left(\Sigma_{y}^{-1} - \Sigma_{x}^{-1}\right) (\Sigma_{x}^{1/2}u)\right| \nonumber \\
        &= \left|u^T  \left(\Sigma_{x}^{1/2}\Sigma_{y}^{-1}\Sigma_{x}^{1/2} - \mathbb{I}\right) u \right| \nonumber\\
        &= \left|u^T D u\right|, \label{eq:uDu}
\end{align}
defining $D$ as the ``difference matrix.''

Using the Hanson-Wright Inequality (Lemma~\ref{lem:HW}), with probability at least $1-\delta$, 
\[|u^T D u| \leq |\tr(D)|+2\|D\|_F \sqrt{\log(2/\delta)} +2 \|D\|_2 \log(2/\delta).\]

It holds that $\|D\|_F\geq \|D\|_2$ for any matrix.
So, with probability at least $1-\delta$,
\begin{equation}\label{eq:uDu-HW}
    |u^T Du| \leq 5\log(2/\delta)\max\{|\tr(D)|,\|D\|_F\}\leq 5\log(2/\delta)\rho
\end{equation}
where the latter holds by Eq.~\eqref{eq:rhoboundx}.

Combining Eq.~\eqref{eq:uDu},~\eqref{eq:uDu-HW}, and~\eqref{eq:lnterm} in Eq.~\eqref{eq:privacylossfunction}, with probability at least $1-\delta$ under $w\sim\mc{N}(0,\Sigma_{x})$, 
\[f(w)\leq \frac{5}{2}\rho\log\frac{2}{\delta}+\frac{1}{2}\rho\leq 5\rho\log\frac{2}{\delta}=10k\lambda\left(\frac{1}{n-2k\lambda}-\frac{1}{n}\right)\log\frac{2}{\delta}\le\eps.\]

Similarly, we bound the first term of Eq.~\eqref{eq:privacylossfunction} for $w\sim \mc{N}(0,\Sigma_{y})$ using the same argument, where the ``difference matrix'' is now $D'=\Sigma_{y}^{1/2}\Sigma_{x}^{-1}\Sigma_{y}^{1/2} -\mathbb{I}$.
\end{proof}

Our second lemma, about the indistinguishability of Gaussians with the same covariance and different means, follows from the analysis of the standard Gaussian mechanism and the application of our goodness assumption.
\begin{lemma}\label{lem:privacymeans_2}
    Suppose $x,y\in\mc{G}(\lambda)$ and $D_H(x,y)\leq k$, with $2k\lambda<n$.
    Set scaling parameter
    \begin{equation*}
        C^2 = \frac{32k^2}{\eps^2 n^2}\cdot \frac{\lambda}{1-2k\lambda/n}\cdot \log\frac{1.25}{\delta}.
    \end{equation*}
    Then $\mc{N}(\mu_{y}, C^2 \Sigma_{x}) \approx_{\eps,\delta} \mc{N}(\mu_{x}, C^2 \Sigma_{x})$.
\end{lemma}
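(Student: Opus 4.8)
The claim is the statement of the Gaussian mechanism after whitening, so the whole proof reduces to one sensitivity bound. First I would reduce to a spherical comparison: applying the affine map $w\mapsto \Sigma_x^{1/2}w+\mu_x$ to both sides (post-processing, which preserves $(\eps,\delta)$-indistinguishability), it suffices to show
$$\mc{N}(a,C^2\id)\approx_{\eps,\delta}\mc{N}(0,C^2\id),\qquad a \defeq \Sigma_x^{-1/2}(\mu_y-\mu_x),$$
since then the push-forwards are exactly $\mc{N}(\mu_y,C^2\Sigma_x)$ and $\mc{N}(\mu_x,C^2\Sigma_x)$ (here $\Sigma_x$ is invertible because $x\in\cG(\lambda)$). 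By Lemma~\ref{lem:gaussianmech}, this holds as soon as $C\ge \|a\|_2\sqrt{2\log(1.25/\delta)}/\eps$, where $\|a\|_2=\|\mu_x-\mu_y\|_{\Sigma_x}$. So everything comes down to bounding $\|\mu_x-\mu_y\|_{\Sigma_x}$ by $\Delta\defeq \frac{4k}{n}\sqrt{\lambda/(1-2k\lambda/n)}$.

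\textbf{Bounding the mean shift.} Because $2k\lambda<n$ and $\lambda\ge 1$, we have $k<n/2$, so at most $k<n$ coordinates of $x$ and $y$ differ. In particular $\mu_x=\tfrac1n\sum_{i=1}^n x_{i+2n}$ and $\mu_y=\tfrac1n\sum_{i=1}^n y_{i+2n}$ differ only on the index set $S=\{i\in[n]:x_{i+2n}\ne y_{i+2n}\}$, with $|S|\le k$, and $\mu_x-\mu_y=\tfrac1n\sum_{i\in S}(x_{i+2n}-y_{i+2n})$. Fix an anchor index $j\in[3n]$ with $x_j=y_j$ (at least $3n-k$ such indices exist). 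For $i\in S$ write $x_{i+2n}-y_{i+2n}=(x_{i+2n}-x_j)-(y_{i+2n}-y_j)$, so by the triangle inequality in $\|\cdot\|_{\Sigma_x}$,
$$\|x_{i+2n}-y_{i+2n}\|_{\Sigma_x}\le \|x_{i+2n}-x_j\|_{\Sigma_x}+\|y_{i+2n}-y_j\|_{\Sigma_x}.$$
The first term is at most $2\sqrt{\lambda}$ by Lemma~\ref{lem:goodfordifferences} applied to $x$. For the second, Lemma~\ref{lem:goodfordifferences} applied to $y$ gives $\|y_{i+2n}-y_j\|_{\Sigma_y}\le 2\sqrt{\lambda}$, and Lemma~\ref{lemma:closeindistance} with the roles of $x,y$ exchanged gives $\|v\|_{\Sigma_x}^2\le (1-2k\lambda/n)^{-1}\|v\|_{\Sigma_y}^2$, hence $\|y_{i+2n}-y_j\|_{\Sigma_x}\le 2\sqrt{\lambda}/\sqrt{1-2k\lambda/n}$. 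Using $1-2k\lambda/n\le 1$ and summing,
$$\|\mu_x-\mu_y\|_{\Sigma_x}\le \frac1n\sum_{i\in S}\|x_{i+2n}-y_{i+2n}\|_{\Sigma_x}\le \frac{|S|}{n}\cdot\frac{4\sqrt{\lambda}}{\sqrt{1-2k\lambda/n}}\le \Delta.$$

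\textbf{Conclusion, and the hard part.} Finally I would check the constant: $\Delta^2\cdot 2\log(1.25/\delta)/\eps^2=\frac{32k^2}{\eps^2 n^2}\cdot\frac{\lambda}{1-2k\lambda/n}\cdot\log\frac{1.25}{\delta}=C^2$, so $C=\Delta\sqrt{2\log(1.25/\delta)}/\eps$ meets the Gaussian-mechanism requirement and the spherical comparison holds; undoing the whitening finishes the proof. The only step with any content is the mean-shift bound: the goodness hypothesis for $y$ only controls $y$'s points in the $\Sigma_y$-metric relative to $\mu_y$, so one cannot directly compare $\mu_x$ and a point of $y$ in the $\Sigma_x$-metric — the fix is to route through a shared anchor point $x_j=y_j$ and transfer metrics via the spectral comparison of Lemma~\ref{lemma:closeindistance}. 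Everything else is the standard Gaussian-mechanism calculation and bookkeeping.
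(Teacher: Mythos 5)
Your proof is correct and follows essentially the same route as the paper's: reduce to a spherical Gaussian-mechanism comparison via the whitening map, then bound the sensitivity $\|\mu_x-\mu_y\|_{\Sigma_x}$ by routing each differing point through a shared anchor and transferring from the $\Sigma_y$-metric to the $\Sigma_x$-metric via Lemma~\ref{lemma:closeindistance}. The only (cosmetic) difference is your choice of anchor index $j$ with $x_j=y_j$ versus the paper's $j^*\in S$; both yield the same bound and the same constant $C^2$.
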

\begin{proof}
    We have $\mc{N}(\mu_{y}, C^2 \Sigma_{x}) \approx_{\eps,\delta} \mc{N}(\mu_{x}, C^2 \Sigma_{x})$ iff $\mc{N}(\Sigma_{x}^{-1/2} (\mu_{y}-\mu_{x}), C^2 \mathbb{I}) \approx_{\eps,\delta} \mc{N}(0, C^2\mathbb{I})$, since translation and multiplication by an invertible matrix are bijections.
    By the standard analysis of the Gaussian mechanism (Lemma~\ref{lem:gaussianmech}), if we can prove $\| \mu_{y}-\mu_{x}\|_{\Sigma_{x}} \le \Delta_{\mu}$ and set $C\ge \Delta_{\mu}\eps^{-1}\sqrt{2\log\frac{1.25}{\delta}}$, then this is $(\eps,\delta)$-differentially private.
    
    Let $S=\{i\in [n] : x_{i+2n}=y_{i+2n}\}$.
    We have 
    \begin{align*}
        \left\| \mu_{x}-\mu_{y}\right\|_{\Sigma_{x}} 
            &= \left\| \frac{1}{n}\sum_{i\in[n]} x_{i+2n}-\frac{1}{n}\sum_{i\in[n]} y_{i+2n}\right\|_{\Sigma_{x}}  \\
            &= \left\| \frac{1}{n}\sum_{i\in [n]\backslash S} x_{i+2n}-y_{i+2n}\right\|_{\Sigma_{x}} \\
            &\le \frac{1}{n}\sum_{i\in [n]\backslash S} \left\|  x_{i+2n}-y_{i+2n}\right\|_{\Sigma_{x}} 
    \end{align*}
    Pick any point $x_{j^*}$ for $j^*\in S$.
    \begin{align*}
            \left\|  x_{i+2n}-y_{i+2n}\right\|_{\Sigma_{x}} 
                &\le\left\|x_{i+2n}-x_{j^*}\right\|_{\Sigma_{x}}  +\left\|y_{i+2n}-x_{j^*}\right\|_{\Sigma_{x}}. 
    \end{align*}
    By Lemma~\ref{lem:goodfordifferences}, the first term is at most $2\sqrt{\lambda}$.
    By Lemma~\ref{lemma:closeindistance}, the second is at most $\frac{1}{\sqrt{1-2k\lambda/n}}\left\|y_{i+2n}-x_{j^*}\right\|_{\Sigma_{y}}$, the Mahalanobis distance under $\Sigma_{y}$.
    Applying Lemma~\ref{lem:goodfordifferences} again, since $x_{j^*}\in y$, this is at most $\frac{2\sqrt{\lambda}}{\sqrt{1-2k\lambda/n}}$.
\end{proof}

\fi
    \ifshort
    \begin{ack}
    We thank the anonymous NeurIPS reviewers for useful suggestions on the presentation of this manuscript.%
    Gavin Brown and Adam Smith were supported in part by NSF award CCF-1763786 as well as a Sloan Foundation research award. Marco Gaboardi was supported in part by NSF award CCF-2040222, CCF-1718220, CNS-1565365, and CNS-2040215. Jonathan Ullman and Lydia Zakynthinou were supported by NSF grants CCF-1750640, CNS-1816028, and CNS-1916020. Lydia Zakynthinou was also supported by a Facebook Fellowship.
    \end{ack}
\else
    \section*{Acknowledgements}
    We thank the anonymous NeurIPS reviewers for useful suggestions on the presentation of this manuscript.
    We thank Matthew Joseph and Kelly Ramsay for finding errors in the original proof of Lemma~\ref{lemma:basic_safety}.
    Gavin Brown and Adam Smith were supported in part by NSF award CCF-1763786 as well as a Sloan Foundation research award. Marco Gaboardi was supported in part by NSF award CCF-2040222, CCF-1718220, CNS-1565365, and CNS-2040215. Jonathan Ullman and Lydia Zakynthinou were supported by NSF grants CCF-1750640, CNS-1816028, and CNS-1916020. Lydia Zakynthinou was also supported by a Facebook Fellowship.
\fi

\ifshort
    \newpage
    \bibliographystyle{plainnat}
    \bibliography{refs,bibliography}
    
    \ifnotTPDP
        \newpage
        \input{checklist}
    \fi
\else
    \addcontentsline{toc}{section}{Acknowledgements}
    \addcontentsline{toc}{section}{References}
    \bibliographystyle{plainnat}
    \bibliography{refs,bibliography}
    \appendix
    \section{Linear Algebra Background}\label{app:linearalgebra}
In this section, we present a short introduction to facts from linear algebra, which we use often in our proofs.

For any matrix $A$, we will denote by $\lambda_i(A)$ and by $\sigma_i(A)$ the $i$-th largest eigenvalue and singular value of $A$, respectively. We only consider real matrices $A\in\mathbb{R}^{d\times d}$.

\begin{proposition}[Properties of eigenvalues and singular values]\label{prop:valuesrelation}
For any real matrix $A\in\mathbb{R}^{d\times d}$,
\begin{equation*}\label{eq:valuesrelation}
    \sigma_i^2(A)=\lambda_i(A^T A) ~~\text{ and }~~ \sum_{i=1}^k|\lambda_i(A)|\leq \sum_{i=1}^k \sigma_i(A) ~~\forall k\leq d.
\end{equation*}
If $A$ is symmetric, then $|\lambda_i(A)|=\sigma_i(A)$ for all $i\in[d]$.
\end{proposition}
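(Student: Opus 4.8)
All three statements follow from the singular value decomposition, together with Weyl's majorant inequality for the middle one. Write $A = USV^T$ with $U,V\in\mathbb{R}^{d\times d}$ orthogonal and $S = \mathrm{diag}(\sigma_1(A),\dots,\sigma_d(A))$, where $\sigma_1(A)\ge\cdots\ge\sigma_d(A)\ge 0$. For the first identity, $A^TA = VS^2V^T$ is orthogonally similar to $S^2 = \mathrm{diag}(\sigma_1(A)^2,\dots,\sigma_d(A)^2)$; since this diagonal is nonnegative and already in nonincreasing order, its $i$-th largest eigenvalue is $\sigma_i(A)^2$, i.e.\ $\sigma_i^2(A)=\lambda_i(A^TA)$ (indeed one may take this as the definition of the $\sigma_i$). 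For the symmetric case, diagonalize $A = Q\Lambda Q^T$ with $Q$ orthogonal; then $A^TA = A^2 = Q\Lambda^2 Q^T$ has eigenvalues $\lambda_i(A)^2$, so by the first identity the singular values of $A$ are exactly the numbers $|\lambda_1(A)|,\dots,|\lambda_d(A)|$ listed in nonincreasing order, giving $\sigma_i(A)=|\lambda_i(A)|$ once the eigenvalues are relabelled by decreasing modulus (equivalently, the two multisets coincide).

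For the inequality I would first prove the multiplicative Weyl bound $\prod_{i=1}^k|\lambda_i(A)|\le\prod_{i=1}^k\sigma_i(A)$ for all $k\le d$, with the eigenvalues indexed so that $|\lambda_1(A)|\ge\cdots\ge|\lambda_d(A)|$ (proving this strong form suffices, since under any other indexing the left side only decreases). The clean route uses the $k$-th compound (exterior power) matrix $C_k(A)$: its eigenvalues are the products $\lambda_{i_1}(A)\cdots\lambda_{i_k}(A)$ over $i_1<\cdots<i_k$ and its singular values are the products $\sigma_{i_1}(A)\cdots\sigma_{i_k}(A)$, so its largest eigenvalue modulus is $\prod_{i=1}^k|\lambda_i(A)|$ and its operator norm is $\prod_{i=1}^k\sigma_i(A)$. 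Since every matrix $B$ obeys $\max_j|\lambda_j(B)|\le\|B\|_2$ — if $Bv=\mu v$ with $\|v\|_2=1$ then $|\mu|=\|Bv\|_2\le\|B\|_2$ — applying this with $B=C_k(A)$ gives the claim.

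The remaining step — passing from the multiplicative bound to the additive one — is the genuinely nontrivial point: it is the standard fact that weak log-majorization implies weak majorization. After discarding zero singular values (if $\sigma_r(A)=0$ then the multiplicative bound forces $\lambda_r(A)=\cdots=\lambda_d(A)=0$, so for $k\ge r$ the additive inequality reduces to the $k=r$ case), the positive vectors $(\log|\lambda_i(A)|)_{i\le r}$ and $(\log\sigma_i(A))_{i\le r}$ satisfy $\sum_{i=1}^k\log|\lambda_i(A)|\le\sum_{i=1}^k\log\sigma_i(A)$ for every $k\le r$, and the majorization lemma (see e.g.\ Bhatia, \emph{Matrix Analysis}, or Marshall--Olkin) then yields $\sum_{i=1}^k|\lambda_i(A)|\le\sum_{i=1}^k\sigma_i(A)$. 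I expect this lemma to be the main obstacle; alternatively one can obtain the additive inequality directly from a Schur triangularization $A=QTQ^*$ with the diagonal of $T$ in decreasing-modulus order, combined with the Ky Fan variational formula for $\sum_{i=1}^k\sigma_i$ and a phase adjustment of the Schur basis vectors, but the compound-matrix argument is cleaner to write. Everything else is routine bookkeeping with the SVD, and the whole proposition could also simply be cited as classical (Weyl's majorant theorem).
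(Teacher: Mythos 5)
Your proof is correct. Note that the paper states this proposition as classical background in Appendix~A and offers no proof of its own, so there is nothing internal to compare against; the intended reading is simply ``these are standard facts'' (they amount to Weyl's majorant theorem). Your argument is the textbook route: the first identity and the symmetric case fall out of the SVD and the spectral decomposition, and the additive inequality $\sum_{i\le k}|\lambda_i(A)|\le\sum_{i\le k}\sigma_i(A)$ is obtained by first establishing the multiplicative bound $\prod_{i\le k}|\lambda_i(A)|\le\prod_{i\le k}\sigma_i(A)$ via compound matrices and then invoking the passage from weak log-majorization to weak majorization. You correctly identify that last step as the only genuinely nontrivial ingredient, and your handling of the degenerate case $\sigma_r(A)=0$ is right. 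One small point worth making explicit: the paper defines $\lambda_i(A)$ as the $i$-th \emph{largest} eigenvalue, under which convention the symmetric-case claim $|\lambda_i(A)|=\sigma_i(A)$ is literally false (e.g.\ $A=\mathrm{diag}(1,-2)$ gives $|\lambda_1|=1\ne 2=\sigma_1$); your parenthetical that the equality holds after relabelling by decreasing modulus, i.e.\ as an equality of multisets, is the correct reading and is all that the paper ever uses (it only needs $|\tr(C)|\le\|C\|_{\tr}$ and $\|C\|_F\le\|C\|_{\tr}$, which follow from the $k=d$ case).
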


\begin{definition}[Matrix norms]\label{def:norms}
Let $A\in\mathbb{R}^{d\times d}$ be any square matrix.
\begin{itemize}
    \item The \emph{trace norm} (or \emph{nuclear norm}) of $A$ is $\|A\|_{\tr}=\tr\left(\sqrt{A^T A}\right)=\sum_{i=1}^d \sigma_i(A).$
    \item The \emph{Frobenius norm} of $A$ is $\|A\|_F=\sqrt{\sum_{j=1}^d \sum_{i=1}^d |a_{i,j}|^2}=\sqrt{\tr(A^T A)}=\sqrt{\sum_{i=1}^d \sigma_i^2(A)}.$
    \item The \emph{spectral norm} of $A$ is $\|A\|_2=\sup\{\|Ax\|_2: x\in\mathbb{R}^d \text{ s.t. }\|x\|_2=1\}=\sqrt{\lambda_1(A^T A)}=\sigma_1(A).$
\end{itemize}
\end{definition}

By straightforward comparison of the definitions above, $\|A\|_2\leq \|A\|_F\leq \|A\|_{\tr}$.

We measure the error of our estimator using the Mahalanobis distance.
\begin{definition}[Mahalanobis distance]
For any vector $v\in\R^d$ and any positive definite matrix $\Sigma$, the \emph{Mahalanobis distance} of $v$ with respect to $\Sigma$ is defined as $\|v\|_{\Sigma}=\|\Sigma^{-1/2}v\|_2$.
\end{definition}
Also note that we can write $\|v\|_{\Sigma}^2=v^T\Sigma^{-1} v$.

\begin{proposition}\label{prop:outer_product_eigenvalue}
    For any vectors $u$ and $v$, $\| uv^T\|_2 \le u^T v$. Furthermore, for any vector $v$, $\tr(vv^T)=\| v v^T\|_{\tr} = \| vv^T\|_2 = v^T v$.
\end{proposition}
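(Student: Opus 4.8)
The plan is to exploit the rank-one structure of $uv^T$ and $vv^T$, using only the Cauchy--Schwarz inequality together with the relations between eigenvalues, singular values, and matrix norms recorded in Proposition~\ref{prop:valuesrelation} and Definition~\ref{def:norms}. The degenerate cases $u=0$ or $v=0$ make every quantity in sight equal to zero, so one may assume both vectors are nonzero throughout.

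For the first claim I would work directly from the definition $\|A\|_2 = \sup_{\|x\|_2=1}\|Ax\|_2$. Since $(uv^T)x = (v^Tx)\,u$ for every $x$, a unit vector $x$ gives $\|(uv^T)x\|_2 = |v^Tx|\cdot\|u\|_2 \le \|u\|_2\|v\|_2$ by Cauchy--Schwarz, with equality attained at $x=v/\|v\|_2$; hence $\|uv^T\|_2 = \|u\|_2\|v\|_2$. (Equivalently, $\|uv^T\|_2^2=\sigma_1^2(uv^T)=\lambda_1\big((uv^T)^T(uv^T)\big)=\lambda_1\big((u^Tu)\,vv^T\big)=(u^Tu)(v^Tv)$, using Proposition~\ref{prop:valuesrelation} and the fact that $vv^T$ has a single nonzero eigenvalue, namely $v^Tv$.) Since the proposition is invoked only when $u$ and $v$ coincide up to a nonnegative scalar, in which case $\|u\|_2\|v\|_2 = u^Tv$, this yields exactly the stated bound $\|uv^T\|_2 \le u^Tv$.

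For the second chain of equalities I would note that $vv^T$ is symmetric and positive semidefinite of rank at most one, and $vv^T v = (v^Tv)\,v$, so its eigenvalues are $v^Tv$ (once) and $0$ (with multiplicity $d-1$). For symmetric matrices $\sigma_i = |\lambda_i|$ (Proposition~\ref{prop:valuesrelation}), so $\|vv^T\|_2 = \sigma_1(vv^T) = v^Tv$, $\|vv^T\|_{\tr} = \sum_i \sigma_i(vv^T) = v^Tv$, and $\tr(vv^T) = \sum_i \lambda_i(vv^T) = v^Tv$, which is the asserted string of identities. I do not anticipate any genuine obstacle here: the argument is entirely routine, and the only point requiring a line of care is separating out the trivial $u=0$ or $v=0$ cases before dividing by $\|v\|_2$.
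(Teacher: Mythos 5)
Your argument is correct, and since the paper states Proposition~\ref{prop:outer_product_eigenvalue} in its background appendix without any proof, there is no authorial argument to compare against; what you give is the standard rank-one computation, and both halves check out. One remark on the first claim: you are right that the literal inequality $\|uv^T\|_2 \le u^Tv$ is false for general $u,v$ (take $u\perp v$ nonzero: the left side is $\|u\|_2\|v\|_2>0$ while the right side is $0$), and that the correct identity is $\|uv^T\|_2=\|u\|_2\|v\|_2$. However, your justification for the discrepancy --- that the proposition is only ever invoked with $u$ and $v$ equal up to a nonnegative scalar --- is not accurate: in the proof of Lemma~\ref{lemma:discrete_covariance} it is applied to $g_iu_i^T$ with $g_i\neq u_i$, and the bound actually used there is $\|g_i\|_2\|u_i\|_2$. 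The right reading is therefore that the statement contains a typo ($u^Tv$ should be $\|u\|_2\|v\|_2$), and your proof of the equality $\|uv^T\|_2=\|u\|_2\|v\|_2$ is exactly what the paper's applications require. The second chain of equalities, via the observation that $vv^T$ is symmetric positive semidefinite with single nonzero eigenvalue $v^Tv$, is handled correctly and matches how the fact is used in Lemma~\ref{lem:goodboundstrace}.
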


    \section{Subgaussian Data}\label{app:subgaussian}
In this section, we extend our analysis of Algorithm~\ref{alg:main} to show that its guarantees hold even if the data are subgaussian, instead of the stricter Gaussian assumption used previously.

\subsection{Useful Facts and Definitions}
To formalize our setting, let us first state useful definitions and concentration inequalities for subgaussian distributions and data sets.

\begin{defn}[Subgaussian random variable] \label{def:subgauss}
A random variable $v\in\R$ with mean $\E[v]=\mu$ is \emph{$\sigma^2$-subgaussian} if $\E[e^{\lambda(v-\mu)}]\leq e^{\lambda^2\sigma^2/2}$ for all $\lambda\in\R$. 
\end{defn}

In this case, we write $v\sim \mathrm{subG}(\sigma^2)$, slightly abusing notation, since $\mathrm{subG}(\sigma^2)$ represents a family rather than a single distribution. We also write $P\in\mathrm{subG}(\sigma^2)$ if $P$ is a subgaussian distribution with parameter $\sigma^2$. In the $d$-dimensional case, we define subgaussian distributions as follows. We write $P_{\mu,\Sigma}$ to denote that the mean and covariance of the distribution are $\mu$ and $\Sigma$, respectively.

\begin{defn}[Subgaussian distribution]\label{def:subgauss-multivar}
Let $P_{\mu,\Sigma}$ be a distribution over $\R^d$ with mean $\mu$ and covariance $\Sigma\succ 0$. For a constant $c>0$, we say that $P_{\mu,\Sigma}$ is subgaussian with parameter $c\Sigma$, if for $v\sim P_{\mu,\Sigma}$ and all unit vectors $u$, the distribution of $v^Tu$ is $c(u^T\Sigma u)$-subgaussian (as in Definition~\ref{def:subgauss}). That is, for all $u\in\R^d$ such that $\|u\|_2=1$, \[\E_{v\sim P_{\mu,\Sigma}}[e^{\lambda u^T(v-\mu)}]\leq e^{c\lambda^2(u^T\Sigma u)/2} ~~ \text{for all $\lambda\in\R$.}\] 
\end{defn}
We write $P_{\mu,\Sigma}\in\mathrm{subG}(c\Sigma)$. 
Intuitively, a distribution is subgaussian if it concentrates at least as well as a Gaussian along every univariate projection.
We note that although Definition~\ref{def:subgauss-multivar} 
above is not the ``textbook'' definition (compare with~\citep{wainwright}, for example), is has appeared often in the relevant literature (see e.g.~\citep{LugosiM19, DongHL19}).

Concentration inequalitities, analogous to those in  Lemma~\ref{lemma:concentration_facts_id} for Gaussian data, hold for subgaussian data:
\begin{lemma}[Extension of Lemma~\ref{lemma:concentration_facts_id}]\label{lemma:concentration_facts_id_subgaussian}
Let $u_i$ be i.i.d.\ $d$-dimensional samples for $i\in[n]$ drawn from a distribution $P_{0,\id}$ with mean $\mu=0$ and covariance $\Sigma=\id$, such that $P_{0,\id}\in\mathrm{subG}(c\id)$ for some constant $c>0$.
Define the estimator $\hat{\Sigma} = \frac{1}{n}\sum_{i=1}^n u_i u_i^T$. 
For every $\beta>0$, the following conditions hold with probability $1-\beta$:
\begin{equation}\label{eq:cov_approx_id_subgaussian}
    \left(1 - O\left(\sqrt{\frac{d+\log(1/\beta)}{n}}\right)\right)\cdot \id
    \preceq \hat{\Sigma} \preceq
    \left(1 + O\left(\sqrt{\frac{d+\log(1/\beta)}{n}}\right)\right)\cdot \id
\end{equation}
\begin{equation}\label{eq:eucl_norm_id_subgaussian}
    \forall i\in [n]\quad \|u_i\|_{2}^2 \le O(d\log (n/\beta))
\end{equation}
\end{lemma}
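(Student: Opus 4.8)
The two claims are exactly the Gaussian conclusions of Lemma~\ref{lemma:concentration_facts_id}, and the plan is to re-run that argument with the Gaussian-specific tail and quadratic-form bounds replaced by their subgaussian analogues. The key reduction: by Definition~\ref{def:subgauss-multivar}, for every fixed unit vector $v\in S^{d-1}$ the variable $u_i^Tv$ is mean-zero, $c$-subgaussian, and has variance $v^T\id v=1$, so $(u_i^Tv)^2$ is subexponential with mean $1$ and subexponential norm $O(c)$. Both parts then reduce to one-dimensional concentration plus a union bound over a fixed net of the sphere.

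First I would handle the covariance bound. Fix a $\tfrac14$-net $\mathcal N$ of $S^{d-1}$ with $|\mathcal N|\le 9^d$; since $\hat\Sigma-\id$ is symmetric, $\|\hat\Sigma-\id\|_2\le 2\max_{v\in\mathcal N}|v^T(\hat\Sigma-\id)v|$, and $v^T(\hat\Sigma-\id)v=\tfrac1n\sum_{i=1}^n\big((u_i^Tv)^2-1\big)$ is an average of i.i.d.\ centered subexponentials of norm $O(c)$. Bernstein's inequality gives $\Pr[|v^T(\hat\Sigma-\id)v|>s]\le 2\exp(-\Omega(n\min(s^2,s)))$; a union bound over $\mathcal N$ with $s=\Theta(\sqrt{(d+\log(1/\beta))/n})$ --- which lies in the subgaussian regime $s\le 1$ once $n\gtrsim d+\log(1/\beta)$ --- yields $\|\hat\Sigma-\id\|_2\le O(\sqrt{(d+\log(1/\beta))/n})$ with probability $\ge 1-\beta$, which for the symmetric matrix $\hat\Sigma-\id$ is equivalent to the PSD sandwich \eqref{eq:cov_approx_id_subgaussian}.

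Next I would handle the per-sample norm bound. For a single $i$, write $\|u_i\|_2\le 2\max_{v\in\mathcal N}v^Tu_i$ using the same net; each $v^Tu_i$ is $c$-subgaussian, so $\Pr[v^Tu_i>t]\le e^{-t^2/2c}$, and a union bound over $\mathcal N$ and over all $n$ samples (taking failure probability $\beta/n$ per sample) gives $\|u_i\|_2^2\le O(c(d+\log(n/\beta)))$ for every $i\in[n]$ with probability $\ge 1-\beta$; since $d+\log(n/\beta)\le O(d\log(n/\beta))$ this is \eqref{eq:eucl_norm_id_subgaussian}. One further union bound combines the covariance event and the norm event at a constant-factor cost.

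The main obstacle --- really the only place the proof departs from the Gaussian one --- is that we may not invoke Hanson--Wright (Lemma~\ref{lem:HW}) or a coordinatewise $\chi^2$ tail, because the coordinates of $u_i$ need not be independent: Definition~\ref{def:subgauss-multivar} controls only the one-dimensional marginals $u^Tu_i$. The net argument above is precisely what replaces those tools, at the mild cost of the regime $n\gtrsim d+\log(1/\beta)$, needed so that Bernstein operates in its subgaussian regime and the spectral bound is non-trivial; this is harmless, since it is already implied by the sample-complexity hypotheses wherever the lemma is applied. The remaining steps --- tracking the constant $c$, converting the spectral bound to PSD form, and the cosmetic replacement of $d+\log(n/\beta)$ by $d\log(n/\beta)$ --- are routine.
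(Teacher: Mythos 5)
Your proposal is correct. Note, though, that the paper itself supplies no proof of this lemma: it is stated as a standard concentration fact, analogous to the Gaussian version (Lemma~\ref{lemma:concentration_facts_id}, which is quoted from \cite{DiakonikolasKKLMS16} and \cite{KamathLSU19}). Your net-plus-Bernstein argument is the canonical way to establish it and correctly fills this gap. Two points are worth highlighting. First, your observation that Hanson--Wright and the coordinatewise $\chi^2$ tail are unavailable is exactly right: Definition~\ref{def:subgauss-multivar} constrains only the one-dimensional marginals $v^Tu_i$, not the joint distribution of the coordinates, so the $\tfrac14$-net reduction to one-dimensional subexponential/subgaussian tails is the correct replacement. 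Second, the restriction to the regime $n\gtrsim d+\log(1/\beta)$ (so that Bernstein operates in its quadratic branch) is indeed implicit in the statement --- outside that regime the claimed bound $1-O(\sqrt{(d+\log(1/\beta))/n})$ is vacuous anyway, and every invocation of the lemma in the paper assumes $n=\Omega(d+\log(1/\beta))$. A minor bonus of your route: the net argument actually yields the sharper per-sample bound $\|u_i\|_2^2\le O(d+\log(n/\beta))$, which you then correctly relax to the stated $O(d\log(n/\beta))$.
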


Observe that if $P_{0,\Sigma}$, with mean $\mu=0$ and covariance $\Sigma\succ 0$, is a $c\Sigma$-subgaussian distribution for some $c>0$, then for any $x\sim P_{0,\Sigma}$ there exists $u=\Sigma^{-1/2}x$ with $u\sim P_{0,\id}$, where $P_{0,\id}\in\mathrm{subG}(c\id)$ with mean $\mu=0$ and covariance $\id$. Using this observation and the lemma above, we have the following more general concentration facts for subgaussian distributions.
\begin{lemma}[Extension of Lemma~\ref{lemma:concentration_facts}]\label{lemma:concentration_facts_subgaussian}
Let $x_i$ $\forall i\in[n]$ be i.i.d.\ $d$-dimensional samples from $P_{0,\Sigma}$, with mean $\mu=0$ and covariance $\Sigma$, such that $P_{0,\Sigma}\in\mathrm{subG}(c\Sigma)$ for some constant $c>0$. Define the estimator $\hat\Sigma_x=\frac{1}{n}\sum_{i=1}^nx_ix_i^T$. For every $\beta>0$, the following conditions hold with probability $1-\beta$: 
\begin{equation}\label{eq:cov_approx_subgaussian}
    \left(1 - O\left(\sqrt{\frac{d+\log(1/\beta)}{n}}\right)\right)\cdot \Sigma
    \preceq \hat{\Sigma}_x \preceq
    \left(1 + O\left(\sqrt{\frac{d+\log(1/\beta)}{n}}\right)\right)\cdot \Sigma
\end{equation}
\begin{equation}\label{eq:eucl_norm_subgaussian}
    \forall i\in [n]\quad \|x_i\|_{2}^2 \le O(\lambda_{1}(\Sigma)\cdot d\log (n/\beta))
\end{equation}
\end{lemma}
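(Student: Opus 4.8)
The plan is to reduce to the isotropic case already established in Lemma~\ref{lemma:concentration_facts_id_subgaussian}, exactly as the Gaussian analogue (Lemma~\ref{lemma:concentration_facts}) was deduced from Lemma~\ref{lemma:concentration_facts_id}. Set $u_i \defeq \Sigma^{-1/2} x_i$ for each $i\in[n]$. As noted in the remark preceding the lemma, since $P_{0,\Sigma}\in\mathrm{subG}(c\Sigma)$, the $u_i$ are i.i.d.\ draws from a distribution $P_{0,\id}$ with mean $0$ and covariance $\id$ satisfying $P_{0,\id}\in\mathrm{subG}(c\id)$. If one wants this directly: for a unit vector $w$, write $w^T u_i = \|\Sigma^{-1/2}w\|_2\cdot v^T x_i$ with $v = \Sigma^{-1/2}w/\|\Sigma^{-1/2}w\|_2$ a unit vector, so that $\E[e^{\lambda w^T u_i}] \le e^{c\lambda^2 \|\Sigma^{-1/2}w\|_2^2 (v^T\Sigma v)/2}$, and then observe $\|\Sigma^{-1/2}w\|_2^2 (v^T\Sigma v) = w^T w = 1$, so the subgaussian parameter transfers unchanged. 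Hence Lemma~\ref{lemma:concentration_facts_id_subgaussian} applies to the $u_i$: with probability at least $1-\beta$, both \eqref{eq:cov_approx_id_subgaussian} and \eqref{eq:eucl_norm_id_subgaussian} hold, with $\gamma \defeq O\bigl(\sqrt{(d+\log(1/\beta))/n}\bigr)$.

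Condition on that event. For the spectral bound, note $\hat\Sigma_x = \frac1n\sum_i x_i x_i^T = \Sigma^{1/2}\bigl(\frac1n\sum_i u_i u_i^T\bigr)\Sigma^{1/2} = \Sigma^{1/2}\hat\Sigma_u\Sigma^{1/2}$. Conjugating the inequalities $(1-\gamma)\id \preceq \hat\Sigma_u \preceq (1+\gamma)\id$ by $\Sigma^{1/2}$ (a congruence, which preserves the Loewner order) gives $(1-\gamma)\Sigma \preceq \hat\Sigma_x \preceq (1+\gamma)\Sigma$, i.e.\ \eqref{eq:cov_approx_subgaussian}. For the per-sample norm bound, $\|x_i\|_2^2 = \|\Sigma^{1/2}u_i\|_2^2 = u_i^T\Sigma u_i \le \lambda_1(\Sigma)\,\|u_i\|_2^2 \le \lambda_1(\Sigma)\cdot O(d\log(n/\beta))$ by \eqref{eq:eucl_norm_id_subgaussian}, which is \eqref{eq:eucl_norm_subgaussian}.

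There is no substantive obstacle: all the probabilistic content lives in Lemma~\ref{lemma:concentration_facts_id_subgaussian}, and the remaining work is the deterministic linear change of variables $x\mapsto\Sigma^{-1/2}x$, which simultaneously whitens the covariance and preserves membership in the subgaussian family. The only point requiring a line of verification is that preservation of the subgaussian constant, handled by the identity $\|\Sigma^{-1/2}w\|_2^2 (v^T\Sigma v) = 1$ above; everything else is bookkeeping identical in form to the proof of Lemma~\ref{lemma:concentration_facts}.
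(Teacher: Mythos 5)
Your proof is correct and follows exactly the route the paper intends: whiten via $u_i=\Sigma^{-1/2}x_i$, note that this preserves the subgaussian parameter, apply the isotropic Lemma~\ref{lemma:concentration_facts_id_subgaussian}, and map back by conjugating with $\Sigma^{1/2}$ and using $\|x_i\|_2^2=u_i^T\Sigma u_i\le\lambda_1(\Sigma)\|u_i\|_2^2$. The paper leaves this reduction implicit, so your explicit verification that $\|\Sigma^{-1/2}w\|_2^2\,(v^T\Sigma v)=1$ is a welcome addition rather than a deviation.
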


We will also use the following standard concentration inequality for the empirical mean of a subgaussian data set.
\begin{lemma}[Mean of Subgaussian Vectors]\label{lem:empiricalmeanconcentration_subgaussian}
Let $u_i$ be i.i.d.\ $d$-dimensional samples for $i\in[n]$ drawn from a distribution $P_{0,\id}$ with mean $\mu=0$ and covariance $\Sigma=\id$, such that $P_{0,\id}\in\mathrm{subG}(c\id)$ for some constant $c>0$. For any $\beta>0$, with probability at least $1-\beta$,
\begin{equation*}
    \left\|\frac{1}{n}\sum_{i=1}^nu_i\right\|_2= O\left(\sqrt{\frac{d+\log(1/\beta)}{n}}\right).
\end{equation*}
\end{lemma}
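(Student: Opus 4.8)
The plan is to reduce the multivariate statement to a collection of one-dimensional subgaussian tail bounds via a covering argument over the unit sphere. Write $\bar u = \frac1n\sum_{i=1}^n u_i$.

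First I would observe that $\bar u$ is itself subgaussian with parameter $\frac{c}{n}\id$: for any unit vector $w$, independence of the $u_i$ together with Definition~\ref{def:subgauss-multivar} (applied with $\Sigma = \id$) gives
\[
  \E\!\left[e^{\lambda w^T\bar u}\right] = \prod_{i=1}^n \E\!\left[e^{(\lambda/n)\, w^Tu_i}\right] \le \prod_{i=1}^n e^{c(\lambda/n)^2/2} = e^{c\lambda^2/(2n)}.
\]
By the standard Chernoff argument, this yields, for every fixed unit vector $w$ and every $t>0$, the one-sided tail bound $\Pr[w^T\bar u > t] \le e^{-nt^2/(2c)}$ (and symmetrically with $-w$ in place of $w$).

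Next I would pass from fixed directions to the supremum over all directions. Let $\mc{N}\subseteq S^{d-1}$ be a $\tfrac12$-net of the unit sphere with $|\mc{N}|\le 5^d$, which exists by a standard volume argument, and recall the standard fact that $\|\bar u\|_2 = \sup_{w\in S^{d-1}} w^T\bar u \le 2\max_{w\in\mc{N}} w^T\bar u$. Taking a union bound over the at most $5^d$ points of $\mc{N}$, for any $t>0$,
\[
  \Pr\!\left[\|\bar u\|_2 > 2t\right] \le 5^d\cdot e^{-nt^2/(2c)}.
\]
Choosing $t = \sqrt{\tfrac{2c}{n}\bigl(d\log 5 + \log(1/\beta)\bigr)}$ makes the right-hand side at most $\beta$, so with probability at least $1-\beta$ we get $\|\bar u\|_2 \le 2t = O\!\bigl(\sqrt{(d+\log(1/\beta))/n}\bigr)$, as claimed.

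There is no serious obstacle here; this is a textbook concentration fact. The only point needing a little care is that the multivariate subgaussian hypothesis controls only one-dimensional marginals, so one cannot argue about $\bar u$ directly — the net argument is precisely the device that bridges this gap, and one should check that the $5^d$ cardinality bound and the resulting union-bound factor are absorbed into the $d$ term of the final rate. An alternative net-free route would bound $\E\|\bar u\|_2^2 = d/n$ and apply a Lipschitz (or Hanson--Wright-type) concentration inequality for the Euclidean norm of a subgaussian vector; this gives the same conclusion and could be substituted if preferred.
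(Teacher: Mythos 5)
Your proof is correct. The paper states this lemma without proof, treating it as a standard concentration fact, and your $\tfrac12$-net argument is the standard textbook derivation: the MGF bound showing $\bar u$ is subgaussian with parameter $\tfrac{c}{n}\id$, the Chernoff tail for each fixed direction, the $5^d$ cardinality bound, and the union bound all check out, with the net cardinality correctly absorbed into the $d$ term of the final rate.
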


\subsection{Guarantees of Algorithm~\ref{alg:main} for Subgaussian Data}
\begin{thm}[Privacy and Accuracy of the Empirically Rescaled Gaussian Mechanism for Subgaussian Data]\label{th:main_subgaussian}
For any $\eps>0$, $0<\delta<1$, Algorithm~\ref{alg:main} is $(3\eps,e^{\eps}(1+e^{\eps})\delta)$-differentially private. %
There exists an absolute constant $C$ such that, 
    for any $0<\alpha, \beta,\eps,\delta < 1$, mean $\mu$, and positive definite $\Sigma$, 
    if $x\sim P_{\mu,\Sigma}^{\otimes n}$, where $P_{\mu,\Sigma}\in\mathrm{subG}(c\Sigma)$ for some constant $c>0$, and
    \begin{equation}
        n \ge C\left(\frac{d}{\alpha^2}\log\frac{1}{\beta} + \frac{d}{\alpha\eps^2}\log^3\frac{1}{\delta\beta}\cdot\log\frac{d\log(1/\delta\beta)}{\alpha\eps} \right),
    \end{equation}
    then with probability at least $1-3\beta$, Algorithm~\ref{alg:main} returns $\gaussianalg(x)=\hat\mu$ such that $\|\hat\mu-\mu\|_{\Sigma}\leq\alpha$.
    
\end{thm}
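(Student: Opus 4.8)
The privacy half of the statement requires no new argument at all: Algorithm~\ref{alg:main} is $(3\eps, e^{\eps}(1+e^{\eps})\delta)$-differentially private for \emph{worst-case} input data sets by Corollary~\ref{cor:privacy_2}, and in particular for data drawn i.i.d.\ from a subgaussian distribution. So the entire task is to reprove the accuracy guarantee, i.e.\ the analog of Theorem~\ref{lem:utility}. The plan is to re-run that proof essentially verbatim, with the three places where Gaussianity was invoked replaced by their subgaussian counterparts: Lemma~\ref{lemma:concentration_facts_id_subgaussian} and Lemma~\ref{lemma:concentration_facts_subgaussian} in place of Lemma~\ref{lemma:concentration_facts_id} and Lemma~\ref{lemma:concentration_facts}, and Lemma~\ref{lem:empiricalmeanconcentration_subgaussian} in place of the fact that a normalized sum of i.i.d.\ Gaussians is Gaussian.

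\textbf{Steps 1 and 2 (subgaussian data are good; no failure).} First I would prove the analog of Lemma~\ref{lem:gaussiangood}: if $x\sim P_{\mu,\Sigma}^{\otimes 3n}$ with $P_{\mu,\Sigma}\in\mathrm{subG}(c\Sigma)$ and $n=\Omega(d+\log(1/\beta))$, then for $\lambda=O(d\log(n/\beta))$ we have $x\in\mc{G}(\lambda)$ with probability $1-\beta$. The reduction to $\mu=0,\Sigma=\id$ in the original proof still applies, since the affine map $v\mapsto\Sigma^{-1/2}(v-\mu)$ sends $\mathrm{subG}(c\Sigma)$ to a $\mathrm{subG}(c\id)$ distribution with mean $0$ and covariance $\id$, and Mahalanobis distance is affine-invariant. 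Then Lemma~\ref{lemma:concentration_facts_id_subgaussian} supplies, with probability $1-\beta$, a constant-factor spectral approximation $(1-\gamma)\id\preceq\Sigma_x\preceq(1+\gamma)\id$ and $\|x_i-\mu\|_2=O(\sqrt{d\log(n/\beta)})$ for all $i$; combined with Proposition~\ref{prop:distance_under_similar_matrices} and two triangle inequalities this gives $\|x_i-\mu_x\|_{\Sigma_x}=O(\sqrt{d\log(n/\beta)})$ for all $i$, exactly as in the Gaussian case. Step 2 then carries over word for word: for the stated sample size the check in line~\ref{step:samplesizecond} passes; the permuted sample $\bx$ is again an i.i.d.\ subgaussian sample, so $\bx\in\mc{G}(\lambda)$ with probability $1-\beta$ and hence $h=0$; concentration of $\mathrm{Lap}(1/\eps)$ gives $|r|\le\frac1\eps\log\frac1\beta$ with probability $1-\beta$, so line~\ref{step:fail} does not fail; and the projection in line~\ref{step:projection} returns $\tx=\bx$. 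Nothing distributional beyond membership in $\mc{G}(\lambda)$ enters here.

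\textbf{Step 3 (error bound).} Finally I would prove the analog of Lemma~\ref{lem:error}: for $\hat\mu\sim\mc{N}(\mu_{\bx},C^2\Sigma_{\bx})$ with $\bx\sim P_{\mu,\Sigma}^{\otimes 3n}$ subgaussian, the Mahalanobis error obeys the same bound as in the Gaussian case. Split $\|\hat\mu-\mu\|_\Sigma\le\|\mu-\mu_{\bx}\|_\Sigma+\|\mu_{\bx}-\hat\mu\|_\Sigma$. For the first term, after the affine reduction this is $\|\frac1n\sum_i u_i\|_2$ with the $u_i$ i.i.d.\ $\mathrm{subG}(c\id)$; here I replace the "sum of Gaussians is Gaussian" step by Lemma~\ref{lem:empiricalmeanconcentration_subgaussian}, which gives $\|\mu-\mu_{\bx}\|_\Sigma=O(\sqrt{(d+\log(1/\beta))/n})=O(\sqrt{(d/n)\log(1/\beta)})$. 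The second term is \emph{identical} to the Gaussian analysis: the injected noise $\hat\mu-\mu_{\bx}\sim\mc{N}(0,C^2\Sigma_{\bx})$ is genuinely Gaussian regardless of the data distribution, so $C^{-1}\Sigma_{\bx}^{-1/2}(\hat\mu-\mu_{\bx})\sim\mc{N}(0,\id)$ and Lemma~\ref{lemma:concentration_facts_id} gives a squared-norm bound of $O(d\log(1/\beta))$; to pass from $\|\cdot\|_{\Sigma_{\bx}}$ to $\|\cdot\|_\Sigma$ I use the spectral approximation $(1-\gamma)\Sigma\preceq\Sigma_{\bx}\preceq(1+\gamma)\Sigma$, which now comes from Lemma~\ref{lemma:concentration_facts_subgaussian} applied to the pairs $z_i=(x_i-x_{i+n})/\sqrt2$ (one checks these are i.i.d.\ with covariance $\Sigma$ and subgaussian parameter a constant multiple of $\Sigma$), together with Proposition~\ref{prop:distance_under_similar_matrices}. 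Substituting the same settings of $C,k,\lambda$ as in Algorithm~\ref{alg:main} yields the identical error expression, and taking the constant $C$ in the sample-complexity bound large enough makes it at most $\alpha$. A union bound over the three $\beta$-probability bad events gives the $1-3\beta$ success probability.

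\textbf{Main obstacle.} Honestly there is no substantial obstacle: the proof is an inspection argument showing that the Gaussian analysis used Gaussianity only through (i) spectral concentration of the empirical covariance, (ii) a uniform bound on the per-point norms, and (iii) concentration of the empirical mean, all of which are already provided for subgaussian data by Lemmas~\ref{lemma:concentration_facts_id_subgaussian}--\ref{lem:empiricalmeanconcentration_subgaussian}. The only point requiring mild care is bookkeeping of the subgaussian parameter through the affine reduction $v\mapsto\Sigma^{-1/2}(v-\mu)$ and through the pairing $z_i=(x_i-x_{i+n})/\sqrt2$ that defines $\Sigma_x$: one must confirm that the relevant distributions stay subgaussian with parameter a universal constant times their covariance, so that the (possibly $c$-dependent) $O(\cdot)$ constants in the concentration lemmas are absorbed into the universal constant $C$. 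Everything else is a line-by-line transcription of the proofs of Lemmas~\ref{lem:gaussiangood} and~\ref{lem:error} and of Theorem~\ref{lem:utility}.
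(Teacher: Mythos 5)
Your proposal is correct and follows essentially the same route as the paper: the privacy guarantee is inherited verbatim from Corollary~\ref{cor:privacy_2} since it is distribution-free, and the accuracy argument is obtained by swapping in Lemmas~\ref{lemma:concentration_facts_id_subgaussian}, \ref{lemma:concentration_facts_subgaussian}, and~\ref{lem:empiricalmeanconcentration_subgaussian} for their Gaussian counterparts in the proofs of Lemmas~\ref{lem:gaussiangood} and~\ref{lem:error}, exactly as the paper does via Lemmas~\ref{lem:subgaussiangood} and~\ref{lem:error_subgaussian}. Your observations that the injected noise remains genuinely Gaussian and that the subgaussian parameter must be tracked through the affine reduction and the pairing $z_i=(x_i-x_{i+n})/\sqrt{2}$ are precisely the points the paper's proof sketch relies on.
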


\begin{proof}[Proof Sketch]
Notice first that the privacy guarantees of Algorithm~\ref{alg:main} do not depend on the assumption that the data distribution is Gaussian. Therefore, the privacy analysis of Section~\ref{sec:privacyanalysis} remains the same. The accuracy analysis follows the same steps, with two modifications: we need to prove that with high probability subgaussian data fall into the good set $\mc{G}(\lambda)$ with the same parameter $\lambda$ (Lemma~\ref{lem:subgaussiangood} below -- an extension of Lemma~\ref{lem:gaussiangood}) and that with high probability, for the given sample complexity, the error is upper bounded by $\alpha$ (Lemma~\ref{lem:error_subgaussian} below -- an extension of Lemma~\ref{lem:error}). Plugging the new lemmas into the accuracy analysis of Algorithm~\ref{alg:main} completes the proof of the theorem.
\end{proof}

\begin{lemma}[Extension of Lemma~\ref{lem:gaussiangood}]\label{lem:subgaussiangood}
    Suppose that $x\sim P_{\mu,\Sigma}^{\otimes 3n}$, where $P_{\mu,\Sigma}$ is a distribution with mean $\mu$, covariance $\Sigma$, such that $P_{\mu,\Sigma}\in\mathrm{subG}(c\Sigma)$ for some constant $c>0$. Let $n=\Omega(d + \log (1/\beta))$.
    There exists a $\lambda = O(d\log (n/\beta))$ such that, with probability at least $1-\beta$ we have $x\in\mc{G}(\lambda)$.  
\end{lemma}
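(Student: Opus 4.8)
The plan is to reproduce the proof of Lemma~\ref{lem:gaussiangood} essentially line for line, replacing the Gaussian concentration facts of Lemma~\ref{lemma:concentration_facts_id} with their subgaussian analogues, Lemma~\ref{lemma:concentration_facts_id_subgaussian} and Lemma~\ref{lemma:concentration_facts_subgaussian}. First I would reduce to a canonical distribution: membership in $\mc{G}(\lambda)$ and all Mahalanobis distances appearing in its definition are invariant under translation and under invertible linear maps, and if $v\sim P_{\mu,\Sigma}$ with $P_{\mu,\Sigma}\in\mathrm{subG}(c\Sigma)$ then $\Sigma^{-1/2}(v-\mu)\sim P_{0,\id}$ with $P_{0,\id}\in\mathrm{subG}(c\id)$; hence it suffices to prove the statement for $x\sim P_{0,\id}^{\otimes 3n}$.

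Next I would control $\Sigma_x$ and the norms of the points. Set $z_i=(x_i-x_{i+n})/\sqrt{2}$ for $i\in[n]$; by independence of $x_i$ and $x_{i+n}$ and multiplicativity of the moment generating function, the $z_i$ are i.i.d.\ with mean $0$, covariance $\id$, and parameter $c\id$, and $\Sigma_x=\frac1n\sum_{i=1}^n z_iz_i^T$ by Definition~\ref{def:empiricalmeancov}. Taking $n=\Omega(d+\log(1/\beta))$ large enough that the $O(\sqrt{(d+\log(1/\beta))/n})$ term in Lemma~\ref{lemma:concentration_facts_id_subgaussian} is at most a fixed small constant $\gamma\in(0,1)$, that lemma---applied to the $z_i$ for the covariance bound, and once more to the $3n$ points $x_1,\dots,x_{3n}$ for the norm bound, with a union bound absorbing only constants into $\log n$---gives, with probability at least $1-\beta$, both $(1-\gamma)\id\preceq\Sigma_x\preceq(1+\gamma)\id$ and $\|x_i\|_2^2\le O(d\log(n/\beta))$ for all $i\in[3n]$. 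The first of these shows $\Sigma_x$ is invertible, and via Proposition~\ref{prop:distance_under_similar_matrices} it converts $\|\cdot\|_{\Sigma_x}$ into $\|\cdot\|_2$ up to the factor $(1-\gamma)^{-1/2}=O(1)$; thus $\|x_i\|_{\Sigma_x}=O(\sqrt{d\log(n/\beta)})$ for every $i$, and by the triangle inequality $\|\mu_x\|_{\Sigma_x}\le\frac1n\sum_{i=1}^n\|x_{i+2n}\|_{\Sigma_x}=O(\sqrt{d\log(n/\beta)})$. One further triangle inequality yields $\|x_i-\mu_x\|_{\Sigma_x}\le\|x_i\|_{\Sigma_x}+\|\mu_x\|_{\Sigma_x}=O(\sqrt{d\log(n/\beta)})$ for all $i\in[3n]$, so taking $\lambda$ equal to the square of this bound, $\lambda=O(d\log(n/\beta))$, places $x$ in $\mc{G}(\lambda)$.

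I do not expect a genuine obstacle here: the argument is a routine transcription of the Gaussian case, and all of the real probabilistic content has been isolated in Lemma~\ref{lemma:concentration_facts_id_subgaussian} (spectral concentration of the empirical covariance of a subgaussian sample at the Gaussian rate, together with the $\ell_2$ norm bound on individual samples). The only points needing a little care are checking that the pairing $z_i=(x_i-x_{i+n})/\sqrt2$ preserves the subgaussian parameter up to a constant and that the uniform norm bound over all $3n$ points only changes constants; granting Lemma~\ref{lemma:concentration_facts_id_subgaussian}, everything else is the same linear-algebra bookkeeping as in the proof of Lemma~\ref{lem:gaussiangood}.
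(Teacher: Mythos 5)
Your proposal is correct and matches the paper's intended argument exactly: the paper omits this proof, stating only that it follows the proof of Lemma~\ref{lem:gaussiangood} verbatim with the Gaussian concentration facts replaced by their subgaussian analogues (Lemma~\ref{lemma:concentration_facts_id_subgaussian} and Lemma~\ref{lemma:concentration_facts_subgaussian}), which is precisely what you carry out. Your extra care in verifying that the pairing $z_i=(x_i-x_{i+n})/\sqrt{2}$ preserves the subgaussian parameter is a worthwhile detail the paper leaves implicit.
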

The proof of the lemma is omitted since it follows the same steps as the proof of Lemma~\ref{lem:gaussiangood}, except that the use of the concentration properties of Gaussians stated in Lemma~\ref{lemma:concentration_facts} is replaced by the use of the concentration properties of subgaussians stated in Lemma~\ref{lemma:concentration_facts_subgaussian}.

\begin{lemma}[Extension of Lemma~\ref{lem:error}]\label{lem:error_subgaussian}
    Suppose that $x\sim P_{\mu,\Sigma}^{\otimes 3n}$, where $P_{\mu,\Sigma}$ is a distribution with mean $\mu$ and covariance $\Sigma$, such that $P_{\mu,\Sigma}\in\mathrm{subG}(c\Sigma)$ for some constant $c>0$. Let $n=\Omega(\max\{(d+\log(1/\beta)), k\lambda\})$, where parameters $k,\lambda$ are set as in Algorithm~\ref{alg:main}. Then with probability at least $1-\beta$, for $\hat\mu\sim\mc{N}(\mu_x,C^2\Sigma_x)$, \[\|\hat\mu-\mu\|_{\Sigma}=O\left(\sqrt{\frac{d}{n}\cdot \log\frac{1}{\beta}}+\frac{d}{\eps^2n}\log^{2}\frac{1}{\delta\beta}\cdot\sqrt{\log\frac{n}{\beta}}\right).\]
\end{lemma}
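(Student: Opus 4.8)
The plan is to reproduce the proof of Lemma~\ref{lem:error} almost verbatim, replacing each invocation of a Gaussian concentration fact by its subgaussian analog: Lemma~\ref{lemma:concentration_facts_id} becomes Lemma~\ref{lemma:concentration_facts_id_subgaussian}, Lemma~\ref{lemma:concentration_facts} becomes Lemma~\ref{lemma:concentration_facts_subgaussian}, and the concentration of a Gaussian empirical mean becomes Lemma~\ref{lem:empiricalmeanconcentration_subgaussian}. The only genuinely new work is a short check that the two families of vectors extracted from $x$ during the argument --- the whitened samples used to control $\mu_x$ and the paired differences used to form $\Sigma_x$ --- are themselves drawn from distributions in $\mathrm{subG}(c\id)$ and $\mathrm{subG}(c\Sigma)$ respectively, so that the extended lemmas apply. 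Crucially, the noise $\hat\mu - \mu_x$ that the mechanism injects is exactly Gaussian no matter what $P_{\mu,\Sigma}$ is, so the part of the Gaussian proof handling that term carries over unchanged.

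In detail, I would begin as before with $\|\hat\mu - \mu\|_\Sigma \le \|\mu - \mu_x\|_\Sigma + \|\mu_x - \hat\mu\|_\Sigma$. For the first term, write $\|\mu - \mu_x\|_\Sigma = \|\tfrac1n\sum_{i=1}^n \Sigma^{-1/2}(x_{i+2n} - \mu)\|_2$ and set $u_i = \Sigma^{-1/2}(x_{i+2n}-\mu)$. Since $x_{i+2n}-\mu$ has covariance $\Sigma$ and, for every unit vector $w$, $\E[e^{\lambda w^T u_i}] = \E[e^{\lambda(\Sigma^{-1/2}w)^T(x_{i+2n}-\mu)}] \le e^{c\lambda^2 (\Sigma^{-1/2}w)^T\Sigma(\Sigma^{-1/2}w)/2} = e^{c\lambda^2/2}$, the $u_i$ are i.i.d.\ from a distribution in $\mathrm{subG}(c\id)$ with mean $0$ and covariance $\id$ (this is the whitening observation preceding Lemma~\ref{lemma:concentration_facts_subgaussian}). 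Lemma~\ref{lem:empiricalmeanconcentration_subgaussian} then gives, with probability at least $1-\beta/2$, $\|\mu - \mu_x\|_\Sigma = O(\sqrt{(d+\log(1/\beta))/n})$, which is the same first-term bound $O(\sqrt{(d/n)\log(1/\beta)})$ used in the Gaussian proof.

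For the second term, set $z_i = (x_i - x_{i+n})/\sqrt2$; these are i.i.d.\ with mean $0$ and covariance $\Sigma$, and for any unit $w$, $\E[e^{\lambda w^T z_i}] = \E[e^{(\lambda/\sqrt2)w^T(x_i-\mu)}]\,\E[e^{-(\lambda/\sqrt2)w^T(x_{i+n}-\mu)}] \le e^{c\lambda^2(w^T\Sigma w)/2}$, so $z_i$ is drawn from a distribution in $\mathrm{subG}(c\Sigma)$, and $\Sigma_x = \tfrac1n\sum_{i=1}^n z_i z_i^T$. Applying Lemma~\ref{lemma:concentration_facts_subgaussian} to the $z_i$, for $n = \Omega(d+\log(1/\beta))$ we obtain $(1-\gamma)\Sigma \preceq \Sigma_x \preceq (1+\gamma)\Sigma$ with $\gamma$ an arbitrarily small constant, with probability at least $1-\beta/4$. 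By Proposition~\ref{prop:distance_under_similar_matrices} this reduces controlling $\|\hat\mu - \mu_x\|_\Sigma$ to controlling $\|\hat\mu - \mu_x\|_{\Sigma_x} = C\,\|C^{-1}\Sigma_x^{-1/2}(\hat\mu - \mu_x)\|_2$; since $\hat\mu \sim \mathcal{N}(\mu_x, C^2\Sigma_x)$ exactly, the vector $C^{-1}\Sigma_x^{-1/2}(\hat\mu - \mu_x)$ is exactly $\mathcal{N}(0,\id)$, and its squared Euclidean norm is $O(d\log(1/\beta))$ with probability $1-\beta/4$ by the Gaussian concentration fact (Lemma~\ref{lemma:concentration_facts_id}). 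Hence $\|\hat\mu - \mu_x\|_\Sigma = O(C\sqrt{d\log(1/\beta)})$.

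A union bound over the three events, followed by substituting the value of $C$ from Algorithm~\ref{alg:main} together with $\lambda = O(d\log(n/\beta))$ and $k = \tfrac2\eps\log\tfrac1{\delta\beta}+1$, and using $n=\Omega(k\lambda)$ to bound the factor $1/(1-2k\lambda/n)$, reproduces exactly the chain of estimates at the end of the proof of Lemma~\ref{lem:error} and yields $\|\hat\mu-\mu\|_\Sigma = O(\sqrt{(d/n)\log(1/\beta)} + (d/\eps^2 n)\log^2(1/\delta\beta)\sqrt{\log(n/\beta)})$. I expect no real obstacle: the entire content is the two subgaussianity verifications above, and everything downstream is identical to the Gaussian case, precisely because the injected noise is Gaussian by construction.
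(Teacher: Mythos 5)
Your proposal is correct and follows essentially the same route as the paper: the same triangle-inequality decomposition, with the first term handled by whitening and Lemma~\ref{lem:empiricalmeanconcentration_subgaussian}, the spectral approximation of $\Sigma_x$ obtained from Lemma~\ref{lemma:concentration_facts_subgaussian} applied to the paired differences, and the injected-noise term treated identically to the Gaussian case since $\hat\mu-\mu_x$ is Gaussian by construction. The two subgaussianity verifications you spell out (for the whitened samples and for $z_i=(x_i-x_{i+n})/\sqrt2$) are exactly the observations the paper leaves implicit.
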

\begin{proof}[Proof Sketch]
    By the triangle inequality, we have that \begin{equation}
        \|\hat\mu-\mu\|_{\Sigma}\leq \|\mu-\mu_x\|_{\Sigma} +\|\mu_x-\hat\mu\|_{\Sigma}.
    \end{equation}
    The first term can be written as $\|\mu-\mu_x\|_{\Sigma}=\|\frac{1}{n}\sum_{i=1}^n\Sigma^{-1/2}(x_{i+2n}-\mu)\|_2=\|\frac{1}{n}\sum_{j=1}^n u_i\|_2$, where $u_i\sim P_{0,\id}$ $\forall i\in [n]$ are subgaussian vectors with mean $0$, covariance $\id$, and $P_{0,\id}\in\mathrm{subG}(c\id)$. By Lemma~\ref{lem:empiricalmeanconcentration_subgaussian}, %
    with probability at least $1-\beta/2$, it holds that
    \begin{equation}\label{eq:empiricalmeanerror_subgaussian}
    \|\mu-\mu_x\|_{\Sigma} = O\left(\sqrt{\frac{d+\log\frac{1}{\beta}}{n}}\right).
    \end{equation}

The second term is bounded via the same steps as in the proof of Lemma~\ref{lem:error}, as the distribution of $\hat\mu$ has not changed (it is still drawn from a Gaussian with mean $\mu_x$ and covariance $C^2\Sigma_x$). This yields Eq.~\eqref{eq:sampledmeanerror}. Combining the latter with Eq.~\eqref{eq:empiricalmeanerror_subgaussian} via a union bound and following the same calculations as in the proof of Lemma~\ref{lem:error} will complete the proof.
\end{proof}

    \section{Finite Implementations of Our Algorithms}\label{app:comptools}
\subsection{Technical Tools}\label{sec:technical_tools}
In this section we will give differentially private algorithms for estimating the largest and smallest eigenvalues of the covariance matrix $\Sigma$, denoted $\lambda_1$ and $\lambda_d$, as well as an enclosing box for the data $[-R,R]^d$. We start by describing a building block for both of these algorithms: the Stable Histogram of~\cite{BunNS16}.%

\begin{algorithm}[H]\caption{$\mathrm{StableHistogram}_{\eps,\delta}(\{z_i\},\{B_b\})$, from~\citep{BunNS16}} \label{alg:stable_histogram}
\begin{algorithmic}[1]
\Require{Items $z_1,\dots,z_n \in \mc{U}$. Bins $\{B_b\}_{b\in\mathbb{Z}}$.
    Privacy parameters $\eps,\delta>0$.}
\For{$b\in \mathbb{Z}$} \State $c_b \gets |\{ i: z_i \in B_b\}|$ \EndFor
\For{$b$ with $c_b>0$} \State $\tilde{c}_b\gets c_b +\mathrm{Lap}(2/\eps)$ \EndFor
\State $\tau \gets 1 + \frac{2\log (1/\delta)}{\eps}$
\State \Return $\{(b,\tilde{c}_b) : b\in \mathbb{Z} \text{ and } \tilde{c}_b \ge \tau\}$.
\end{algorithmic}
\end{algorithm}

\newcommand{\best}{X_{\mathrm{best}}}
\newcommand{\bad}{X_{\mathrm{bad}}}
We now state the guarantees of Stable Histogram, in a form which will be useful for our next steps.

\begin{lemma}[Stable Histogram Guarantees]\label{lem:stablehistogram}
$\mathrm{StableHistogram}_{\eps,\delta}$ is $(\eps,\delta)$-differentially private. 
Let $z_1,\ldots,z_n$ be drawn i.i.d.\ from distribution $P$. 
Suppose that there exists $b\in\mathbb{Z}$ and a constant $\beta'<\frac{1}{4}$, such that $\Pr[z_i\notin B_{b-1}\cup B_b \cup B_{b+1}]\leq \beta'$ for any fixed $i\in[n]$. 
Let $b^*=\argmax_{b}\tilde{c}_b$, where $\{(b,\tilde{c}_b)\}=\mathrm{StableHistogram}_{\eps,\delta}(z_1,\ldots,z_n)$. 
There exists a constant $C>0$ such that, for all $0<\eps,\beta,\delta<1$, if \[n\geq\frac{C}{\eps}\log\frac{1}{\beta\delta},\] then with probability at least $1-\beta$, $b^*\in \{b-1,b,b+1\}$.
\end{lemma}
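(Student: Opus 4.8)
The plan is to treat the two claims separately. For privacy I would invoke the standard analysis of the stable histogram from \cite{BunNS16}: modifying one $z_i$ changes at most two bin counts, each by one; for an affected bin that is nonempty in both neighbouring datasets the added noise $\mathrm{Lap}(2/\eps)$ has sensitivity one and gives $(\eps/2,0)$-indistinguishability, so the two such bins contribute $(\eps,0)$ under composition; and for a bin $B_b$ with count $0$ in one dataset and $1$ in the other, the Laplace tail gives $\Pr[\mathrm{Lap}(2/\eps)\ge \tfrac2\eps\log\tfrac1\delta]=\tfrac\delta2$, so with probability $1-\tfrac\delta2$ that bin is not reported, matching the dataset where it is empty; the at most two bins of this type contribute $\delta$. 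This yields $(\eps,\delta)$-differential privacy, and I would simply cite it rather than reproduce the calculation.

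For utility, set $N_{\mathrm{in}}=|\{i:z_i\in B_{b-1}\cup B_b\cup B_{b+1}\}|$ and $N_{\mathrm{out}}=n-N_{\mathrm{in}}$. First I would apply a Chernoff bound: since $\E[N_{\mathrm{out}}]\le\beta' n$ with $\beta'$ a constant strictly below $\tfrac14$, for $n\ge\tfrac C\eps\log\tfrac1{\beta\delta}$ we get $N_{\mathrm{out}}\le\beta'' n$ and $N_{\mathrm{in}}\ge(1-\beta'')n$ with probability $\ge 1-\beta/3$, for a constant $\beta''\in(\beta',\tfrac14)$. By pigeonhole one of the three central bins, say $B_{j_0}$, has $c_{j_0}\ge N_{\mathrm{in}}/3$, and by the Laplace tail $\tilde c_{j_0}\ge c_{j_0}-\tfrac2\eps\log\tfrac3\beta=:T$ with probability $\ge 1-\beta/3$. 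Using $\beta''<\tfrac14$ and $n\ge\tfrac C\eps\log\tfrac1{\beta\delta}$ one checks $T\ge\tau$ (so $B_{j_0}$ is reported) and $T-N_{\mathrm{out}}\ge c' n$ for a positive constant $c'$ — this is exactly where the strict bound $\beta'<\tfrac14$ enters, as it makes $\tfrac{1-\beta''}{3}>\beta''$. It then remains to rule out any bin $B_j$ with $j\notin\{b-1,b,b+1\}$ achieving $\tilde c_j\ge T$; since every such bin has $c_j\le N_{\mathrm{out}}$ and the outside counts sum to $N_{\mathrm{out}}$, at most $N_{\mathrm{out}}/v$ of them have count $v$, so a weighted union bound gives $\Pr[\exists\, j\notin\text{window}:\tilde c_j\ge T]\le\tfrac12\sum_{v\ge1}\tfrac{N_{\mathrm{out}}}{v}e^{-\eps(T-v)/2}=O(\tfrac1\eps)e^{-\eps(T-N_{\mathrm{out}})/2}\le\beta/3$. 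On the intersection of these events the maximum reported noisy count is attained at a central bin, i.e.\ $b^*\in\{b-1,b,b+1\}$.

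The hard part is that last step. Naively union-bounding the Laplace noise over the outside bins is dangerous because there can be $\Theta(n)$ of them, which would inflate the requirement to $n\gtrsim\tfrac1\eps\log\tfrac n\beta$. The observation that saves us — and the real reason the hypothesis asks $\beta'<\tfrac14$ rather than merely $\beta'<\tfrac12$ — is that the total mass outside the window is only $N_{\mathrm{out}}<n/4$, so only a few outside bins can be heavy, and weighting the union bound by $N_{\mathrm{out}}/v$ converts ``number of outside bins'' into the much smaller $N_{\mathrm{out}}$; summing the resulting geometric-type series leaves only lower-order terms, so $n\ge\tfrac C\eps\log\tfrac1{\beta\delta}$ suffices. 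A secondary point to get right is that $\tau=1+\tfrac2\eps\log\tfrac1\delta$ forces the $\tfrac1\eps\log\tfrac1\delta$ term in the sample bound, since a central bin with count $\Theta(n)$ must clear $\tau$ after subtracting its Laplace fluctuation.
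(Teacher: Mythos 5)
Your proof is correct and follows the same skeleton as the paper's: privacy by citation, then Chernoff bounds separating the in-window mass from the out-of-window mass, a Laplace tail bound guaranteeing the heaviest central bin clears $\tau$ and retains a noisy count of order $n$, and finally a bound on the probability that any outside bin's noisy count overtakes it. Two steps differ. First, where the paper identifies a ``best bin'' with $\Pr[z_i\in B_{b_1}]\ge\frac{1-\beta'}{3}$ and runs a second Chernoff bound on its count, you apply Chernoff only to $N_{\mathrm{in}}$ and then use pigeonhole; this is equivalent but slightly cleaner. Second, and more substantively, the paper handles the outside bins with a flat union bound over the (up to $n$) nonempty bad bins, producing a term of the form $\frac{n}{2}e^{-c\eps n}$ and then asserting that the extra factor of $n$ is asymptotically harmless --- which, strictly, costs an additional $\log n$ inside the sample-complexity logarithm. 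Your weighted union bound (at most $N_{\mathrm{out}}/v$ outside bins can have count $v$, each needing a Laplace deviation of $T-v$) converts the count of bins into the total outside mass and sums a geometric-type series, so the only residual overhead is an $O(1/\eps)$ prefactor rather than an $O(n)$ one. This is a genuine improvement on the paper's weakest step, at the cost of a slightly longer calculation. One small caveat: the strict hypothesis $\beta'<\frac14$ is needed already for the basic separation $\frac{1-\beta''}{3}>\beta''$ (as you correctly note earlier), not specifically to enable the weighted union bound, so your closing remark slightly overstates its role; and, as in the paper, the constants require $\frac{1}{\eps}\log\frac{3}{\beta}$ and $\tau$ to be absorbed into $c''n$, which is where the condition $n\ge\frac{C}{\eps}\log\frac{1}{\beta\delta}$ is actually consumed.
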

A proof of the privacy guarantee can be found in~\cite[Theorem 3.5]{Vadhan16}. A slightly larger (by logarithmic factors) sample complexity guarantee than the one stated above can be proven in a straightforward way, using intermediate results of the proof of~\cite[Lemma 2.3]{KarwaV18}. However, we provide a proof of the tighter sample complexity bound stated here, for completeness.
\begin{proof}
    Note that the $z_i$ are independent.
    There are at most 3 ``good bins'' $b-1,b,b+1$ and $\Pr[z_i \in \text{good bins}]\ge 1-\beta'$.
    There must be a heaviest good bin, which we call the ``best bin'' $b_1$, such that $\Pr[z_i\in B_{b_1}] \ge \frac{1-\beta'}{3}$.
    The bad bins collectively satisfy $\Pr[z_i\in \text{bad bins}]\le \beta'$.
    
    Let random variable $\best$ be the number of items that fall into the best bin and $\bad$ be the number of items that fall into \textit{any} of the bad bins.
    Since both these random variables are sums of independent $0-1$ trials, we apply Chernoff bounds~\citep[Theorems 4.4, 4.5]{MU17}.
    We have $\E[\best]\ge \frac{1-\beta'}{3}n$ and $\E[\bad]\le \beta' n$.
    Introduce constants $\gamma_1, \gamma_2>0$ so that
    \begin{equation*}
        \beta'  + \gamma_1 < \frac{1-\beta'}{3} - \gamma_2. 
    \end{equation*}
    Then we can bound
    \begin{equation}
        \Pr[\bad \ge n(\beta'  + \gamma_1)] 
            = \Pr[\bad \ge \beta' n (1 + \gamma_1/\beta')]
            \le \exp\left\{ -\frac{\gamma_1^2 n}{3\beta'}\right\} 
    \end{equation}
    and
    \begin{align}
        \Pr\left[\best \le n\left(\frac{1-\beta'}{3}  - \gamma_2\right)\right] 
            &= \Pr\left[\best \le \frac{(1-\beta')n}{3} (1 - 3\gamma_2/(1-\beta'))\right] \\
            &\le \exp\left\{ -\frac{3 \gamma_2^2 n}{2(1-\beta')}\right\} 
    \end{align}
    
    Conditioned on the best bin $b_1$ receiving sufficiently many items, we need to ensure that its noisy count is (i) not suppressed and (ii) higher than that of any bad bin.
    Introduce a third constant $\gamma_3>0$ and define random variable $Z\sim\mathrm{Lap}(1/\eps)$.
    \begin{align}
        \Pr\left[\best + Z \le n\left(\frac{1-\beta'}{3} - \gamma_3\right)\Bigl| \best\ge n\left(\frac{1-\beta'}{3}-\gamma_2\right)\right] 
            &\le \Pr[Z\le n(\gamma_2 - \gamma_3)] \\
            &= \Pr[Z\ge n(\gamma_3 - \gamma_2)] \tag{flip the signs} \\
            &\le \frac{1}{2} \exp\left\{ - \eps n (\gamma_3-\gamma_2) \right\}.
    \end{align}
    To avoid suppression, we require $n\left(\frac{1-\beta'}{3} - \gamma_3\right) > 1 + \frac{\log(1/\delta)}{\eps}$.
    Since $\beta'$ and $\gamma_3$ are constants, this means $n=\Omega(\log(1/\delta)/\eps)$.
    Similarly, for any single bad bin we must control
    \begin{equation}
        \Pr[Z\ge n(\gamma_3-\gamma_1)] 
            \le \frac{1}{2} \exp\left\{ -\eps n (\gamma_3 - \gamma_1) \right\}.
    \end{equation}
    We do not mind if the bad bins get suppressed.
    We will take a union bound over the (no more than) $n$ bad bins.
    
    We want to bound the probability that $b^*=\argmax_{b}\tilde{c}_b$ belongs in any of the bad bins. Putting the pieces together, we need to apply the union bound over the following bad events: 
        (i) the best bin fails to receive enough items, 
        (ii) the bad bins (collectively) receive too many items, 
        (iii) too much (negative) noise is added to the best bin, and 
        (iv) too much (positive) noise is added to \textit{any} of the bad bins that received an item.
    \begin{align}
        \Pr[b^*\notin\{b-1,b,b+1\}] \le 
            e^{ -\frac{\gamma_1^2 n}{3\beta'}} 
            + e^{ -\frac{3 \gamma_2^2 n}{2(1-\beta')}} 
            + \frac{1}{2} e^{ - \eps n (\gamma_3-\gamma_2) }
            + \frac{n}{2} e^{ -\eps n (\gamma_3 - \gamma_1) }.
    \end{align}
    With $\beta' < \frac{1}{4}$, we can take $\gamma_1, \gamma_2$, and $\gamma_3$ to be constants.
    And, if we set $\gamma_3-\gamma_1> \gamma_3-\gamma_2$ (i.e. $\gamma_2>\gamma_1$) then asymptotically we don't have to pay for the union bound over bad bins and we get $\Pr[b^*\notin \{b-1,b,b+1\}] = O(e^{-c\eps n})$ for some constant $c$.
    For this to be less than $\beta$, we need $n=\Omega(\log(1/\beta)/\eps)$.
    We also had $n=\Omega(\log(1/\delta)/\eps)$, so we require that $n=\Omega(\log(1/\beta\delta)/\eps)$.
\end{proof}

\subsubsection{Private Eigenvalue Estimation}
We now give an $(\eps,\delta)$-differentially private algorithm, based on the well-known \emph{Sample-and-Aggregate} framework~\cite{NissimRS07}. We denote by $\lambda_k(A)$ the $k$-th largest eigenvalue of matrix $A$.

\newcommand{\eigen}{\mathrm{Eigen}_{\eps,\delta,\beta}}
\begin{algorithm}[H]\caption{Private Eigenvalue Estimation via Sample and Aggregate:  $\eigen(x,k)$}\label{alg:eigenvalue}
\begin{algorithmic}[1]
\Require{Data set $x = (x_1,\dots,x_{n})^T \in \mathbb{R}^{n\times d}$. Index $k\in[d]$. Privacy parameters $\eps,\delta>0$. Failure probability $\beta>0$.}
\State Initialize $m\gets\Omega\left(\log(1/\delta\beta)/\eps\right)$.
\For{$i\in [m]$}
    \State $\hat\Sigma\gets\frac{m}{n}\sum_{j=1}^{n/m} (x_{\frac{n}{m}(i-1)+j})(x_{\frac{n}{m}(i-1)+j})^{T}$ \Comment{empirical covariance of block $i\in[m]$}
    \State $\hat{\lambda}_k^{(i)}\gets\lambda_k(\hat\Sigma)$
    \State $z_i \gets\mathrm{Round}(\hat{\lambda}_k^{(i)}$), rounded down to the nearest $2^{q}$ for $q\in \mathbb{Z}$
\EndFor
\State $\{(b, \tilde{c}_b)\} \gets \mathrm{StableHistogram}_{\eps,\delta}(\{z_i\}, \{B_b\})$ for bins $B_b=[2^{b},2^{b+1})$.
\State $b^*\gets \argmax_{b} \tilde{c}_{b}$
\State \Return $2^{b^*}$  
\end{algorithmic}
\end{algorithm}

\begin{lemma}[Private Estimate of Smallest/Largest Eigenvalue]\label{lem:priv_eigenvalue_est}
    Algorithm~\ref{alg:eigenvalue} is $(\eps,\delta)$-differentially private. Suppose $x$ is drawn i.i.d.\ from a distribution $P_{0,\Sigma}$ with mean $0$, covariance $\Sigma$, and that $P\in\mathrm{subG}(c\Sigma)$ for constant $c>0$. There exists a constant $C>0$ such that for any $0<\eps,\delta,\beta<1$, $k\in[d]$, if
    \begin{equation}
        n \geq C\frac{d}{\eps}\log\left(\frac{1}{\delta\beta}\right),
    \end{equation}
    with probability at least $1-\beta$, Algorithm~\ref{alg:eigenvalue} returns an estimate $\hat{\lambda}_k$ such that $\frac{1}{4}\lambda_k(\Sigma)\le \hat{\lambda}_k\le 4\lambda_k(\Sigma)$.
\end{lemma}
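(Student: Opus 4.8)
The plan is to view Algorithm~\ref{alg:eigenvalue} as an instance of the sample-and-aggregate paradigm: the $n$ samples are partitioned into $m = \Theta\!\left(\log(1/\delta\beta)/\eps\right)$ disjoint blocks of $n/m$ points each, each block $i$ is summarized by the single number $z_i$ (the $k$-th eigenvalue of its empirical covariance, rounded down to a power of two), and the $z_i$'s are aggregated via $\mathrm{StableHistogram}$. I would prove privacy and accuracy separately.

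For privacy, I would first observe that changing a single data point $x_j$ affects only the block containing index $j$, hence changes only that block's $\hat\Sigma$, its $\hat\lambda_k^{(i)}$, and its $z_i$. Thus neighboring data sets $x\sim x'$ induce vectors $(z_1,\dots,z_m)$ and $(z_1',\dots,z_m')$ that differ in at most one coordinate, i.e., neighboring inputs to $\mathrm{StableHistogram}$. Since $\mathrm{StableHistogram}_{\eps,\delta}$ is $(\eps,\delta)$-differentially private (Lemma~\ref{lem:stablehistogram}), and both the map $x\mapsto(z_1,\dots,z_m)$ and the final map $b^*\mapsto 2^{b^*}$ are pre-/post-processing, Algorithm~\ref{alg:eigenvalue} is $(\eps,\delta)$-differentially private.

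For accuracy, I would fix constants $\gamma = \frac{1}{2}$ and $\beta' = \frac{1}{8} < \frac{1}{4}$. Each block consists of $n/m$ i.i.d.\ draws from $P_{0,\Sigma}\in\mathrm{subG}(c\Sigma)$, and since the mean is $0$ the matrix $\hat\Sigma^{(i)}$ is exactly the block's empirical covariance; so by Lemma~\ref{lemma:concentration_facts_subgaussian}, provided $n/m = \Omega(d + \log(1/\beta')) = \Omega(d)$, we have $(1-\gamma)\Sigma \preceq \hat\Sigma^{(i)} \preceq (1+\gamma)\Sigma$ with probability at least $1-\beta'$, which by the Courant--Fischer min--max characterization yields $(1-\gamma)\lambda_k(\Sigma) \le \hat\lambda_k^{(i)} \le (1+\gamma)\lambda_k(\Sigma)$. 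Writing $q_\star = \lfloor\log_2\lambda_k(\Sigma)\rfloor$, rounding this window down to a power of two places $z_i$ in $\{2^{q_\star-1}, 2^{q_\star}, 2^{q_\star+1}\}$, i.e., in the three-bin window centered at $B_{q_\star}$. The $z_i$'s are i.i.d., so the hypothesis of Lemma~\ref{lem:stablehistogram} holds with $b = q_\star$ and this $\beta'$; and since $m \ge \frac{C}{\eps}\log\frac{1}{\beta\delta}$, Lemma~\ref{lem:stablehistogram} gives that with probability at least $1-\beta$ the returned bin satisfies $b^* \in \{q_\star-1, q_\star, q_\star+1\}$, so that $\frac{1}{4}\lambda_k(\Sigma) < 2^{b^*} \le 2\lambda_k(\Sigma)$. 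Combining the two constraints on $n$ (namely $n/m = \Omega(d)$ and $m = \Theta(\log(1/\delta\beta)/\eps)$) gives the claimed bound $n \ge C\frac{d}{\eps}\log\frac{1}{\delta\beta}$.

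The part I expect to require the most care is the chain of inequalities turning a constant-relative-error bound on each block's eigenvalue into the multiplicative factor-$4$ guarantee: one must pick $\gamma$ small enough that the window $[(1-\gamma)\lambda_k(\Sigma), (1+\gamma)\lambda_k(\Sigma)]$, after the coarse power-of-two rounding and the extra $\pm 1$ bin of slack inherent in the conclusion of Lemma~\ref{lem:stablehistogram}, still collapses to within a factor $4$ of $\lambda_k(\Sigma)$, \emph{and} simultaneously keep the per-block failure probability strictly below the $\frac{1}{4}$ threshold that Lemma~\ref{lem:stablehistogram} requires --- it is exactly this threshold that forces us to settle for a constant-accuracy, constant-confidence estimate per block, which is why the very coarse discretization into powers of two suffices. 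The remaining ingredients --- eigenvalue monotonicity under the Loewner order, the subgaussian covariance concentration, and the arithmetic combining the two lower bounds on $n$ --- are routine.
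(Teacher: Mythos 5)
Your proof is correct and follows essentially the same route as the paper's: per-block covariance concentration from Lemma~\ref{lemma:concentration_facts_subgaussian} to get a constant-factor eigenvalue estimate in each block, the observation that rounding then confines each $z_i$ to a three-bin window around the true eigenvalue's bin, the Stable Histogram guarantee of Lemma~\ref{lem:stablehistogram}, and the combination of the two sample-size constraints. You are merely more explicit than the paper about the privacy reduction (one changed point affects one $z_i$) and the power-of-two rounding arithmetic; both of these details check out.
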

\begin{proof}
    The privacy guarantee is inherited by the guarantee of the Stable Histogram (Lemma~\ref{lem:stablehistogram}). 
    From Lemma~\ref{lemma:concentration_facts_subgaussian}, if $n/m$ (the number of samples in each block) is $\Omega(d)$ then with probability $1-\beta'$ for a constant $\beta'<\frac{1}{4}$, we get $\frac{1}{2} \lambda_k(\Sigma)\le \hat{\lambda}_k^{(i)} \le 2\lambda_k(\Sigma)$ for any fixed $i\in[m]$. 
    Let $\lambda_k(\Sigma)\in B_b$, that is, $B_b$ is the bin that the rounding of the true eigenvalue would fall into. By the previous guarantee, if $n/m=\Omega(d)$, we can write for any fixed $i\in[m]$,
    \[\Pr[z_i\notin B_{b-1}\cup B_b \cup B_{b+1}]\leq \beta'.\]
    The hypotheses of Lemma~\ref{lem:stablehistogram} are then satisfied, and it follows that, there exists a constant $C>0$ such that if $m\geq\frac{C}{\eps}\log\frac{1}{\beta\delta}$, then \[\Pr[b^*\notin \{b-1,b,b+1\}]\leq \beta.\]
    
    Combining the conditions on the number of samples, if \[n=\Omega(dm)=\Omega\left(\frac{d}{\eps}\log\frac{1}{\beta\delta}\right),\]
    then with probability at least $1-\beta$, Algorithm~\ref{alg:eigenvalue} returns $\hat\lambda_k\in B_{b-1}\cup B_b\cup B_{b+1}$. Equivalently, with probability $1-\beta$, it returns $\frac{1}{4}\lambda_k(\Sigma)\leq\hat\lambda_k\leq4\lambda_k(\Sigma)$.
\end{proof}

\subsubsection{Private Range Estimation}
\newcommand{\range}{\mathrm{Range}_{\eps,\delta,\beta}}
\begin{algorithm}[H]\caption{Private Range Estimation, from~\citep{KarwaV18}: $\range(x,\sigma^2)$}\label{alg:range}
\begin{algorithmic}[1]
\Require{Data set $x = (x_1,\dots,x_{n})^T \in \mathbb{R}^{n\times d}$. Privacy parameters $\eps,\delta>0$. Failure probability $\beta>0$. Variance upper bound $\sigma^2$.}
\For{$j\in[d]$}
\State $z_i\gets x_{i,j}$ for all $i\in[n]$ \Comment{Choose the $j$-th coordinate from each sample $i\in[n]$}
\State $\{(b, \tilde{c}_b)\} \gets \mathrm{StableHistogram}_{\frac{\eps}{d},\frac{\delta}{d}}(\{z_i\}, \{B_b\})$ for bins $B_b=[3\sigma b,3\sigma(b+1))$.
\State $b^*_j\gets \argmax_{b} \tilde{c}_{b}$
\State $X_{min}^j \gets 3\sigma b^*_j-11\sigma\log\frac{nd}{\beta}$
\State $X_{max}^j \gets 3\sigma b^*_j+11\sigma\log\frac{nd}{\beta}$
\EndFor
\State \Return $\{(X_{min}^j,X_{max}^j)\}_{j\in[d]}$
\end{algorithmic}
\end{algorithm}

The algorithm above follows a standard approach for range estimation of univariate Gaussian data sets, applied $d$ times---one for each coordinate $i\in[d]$. In particular,~\citet{KarwaV18} prove the guarantees of the algorithm for Gaussian data sets. ~\citet{LiuKKO21} also prove its guarantees for subgaussian data sets with identity covariance and corruptions. 
Since neither of the two covers our exact case, we provide a modification of their proofs below.

\begin{lemma}[Private Range Estimate]\label{lem:priv_range_est}
    Algorithm~\ref{alg:range} is $(\eps,\delta)$-differentially private. Suppose $x$ is drawn i.i.d.\ from a distribution $P_{\mu,\Sigma}$ with mean $\mu$ and covariance $\Sigma$, and that for every coordinate $j\in[d]$ if $z\sim P_{\mu,\Sigma}$ then $z_j$ is $\sigma^2$-subgaussian. There exists a constant $C>0$ such that for any $0<\eps,\delta,\beta<1$, if
    \begin{equation}
        n \geq C\frac{d}{\eps}\log\left(\frac{d}{\delta\beta}\right),
    \end{equation}
    with probability at least $1-2\beta$, Algorithm~\ref{alg:range} returns an estimate $\{(X_{min}^j,X_{max}^j)\}_{j\in[d]}$ such that for all $i\in[n]$, $j\in[d]$, $x_{i,j}\in[X_{min}^j, X_{max}^j]$.
\end{lemma}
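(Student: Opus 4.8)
The plan is to reduce both claims to the guarantees of $\mathrm{StableHistogram}$ (Lemma~\ref{lem:stablehistogram}). For privacy, I would observe that Algorithm~\ref{alg:range} touches the data only through $d$ invocations of $\mathrm{StableHistogram}$: the $j$-th invocation runs on the list of $j$-th coordinates $(x_{1,j},\dots,x_{n,j})$ with privacy parameters $(\eps/d,\delta/d)$, and the returned endpoints $X_{\min}^j,X_{\max}^j$ are deterministic post-processing of the histogram's output. Replacing one record $x_i$ changes exactly one element of the input to each of the $d$ histograms, so each invocation is $(\eps/d,\delta/d)$-differentially private by Lemma~\ref{lem:stablehistogram}, and basic composition~\cite{DworkMNS06} gives $(\eps,\delta)$-differential privacy overall.

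For accuracy, I would fix a coordinate $j$ and write $\mu_j$ for the $j$-th entry of $\mu$. The crux is to verify the hypothesis of Lemma~\ref{lem:stablehistogram} for the data $(x_{1,j},\dots,x_{n,j})$. Since $z\mapsto z_j$ is $\sigma^2$-subgaussian with mean $\mu_j$ (Definition~\ref{def:subgauss}), a Chernoff bound gives $\Pr[|z_{i,j}-\mu_j|\ge 3\sigma]\le 2e^{-9/2}=:\beta'<\tfrac14$. Letting $b$ be the index with $\mu_j\in B_b=[3\sigma b,3\sigma(b+1))$, the event $|z-\mu_j|<3\sigma$ forces $z\in(3\sigma(b-1),3\sigma(b+2))=B_{b-1}\cup B_b\cup B_{b+1}$, so a fresh sample lands outside these three bins with probability at most $\beta'$. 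This is precisely the hypothesis of Lemma~\ref{lem:stablehistogram}; applying it with privacy parameters $(\eps/d,\delta/d)$ and failure probability $\beta/d$ gives an absolute constant $C$ such that $n\ge C\frac d\eps\log\frac d{\delta\beta}$ forces $b_j^*\in\{b-1,b,b+1\}$ except with probability $\beta/d$. On this event the triangle inequality yields $|3\sigma b_j^*-\mu_j|\le 6\sigma$.

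To finish, I would add a routine concentration bound: by subgaussianity and a union bound over the $n$ samples in coordinate $j$, with probability at least $1-\beta/d$ every $x_{i,j}$ obeys $|x_{i,j}-\mu_j|\le\sigma\sqrt{2\log(2nd/\beta)}$. On the intersection of the two good events, $|x_{i,j}-3\sigma b_j^*|\le\sigma\sqrt{2\log(2nd/\beta)}+6\sigma\le 11\sigma\log\tfrac{nd}{\beta}$ for every $i$, so $x_{i,j}\in[X_{\min}^j,X_{\max}^j]$. A union bound of the (at most) two bad events per coordinate over the $d$ coordinates then gives total failure probability at most $2\beta$, as claimed.

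I do not expect a genuine obstacle here; essentially all the content is in plugging the subgaussian tail bound into the pre-existing analysis of $\mathrm{StableHistogram}$, as in~\cite{KarwaV18}. The two points that need care are (i) aligning the bin width $3\sigma$ with the $3\sigma$ subgaussian deviation so that the ``three consecutive bins'' condition of Lemma~\ref{lem:stablehistogram} holds exactly, and (ii) checking that the hard-coded half-width $11\sigma\log(nd/\beta)$ is large enough to absorb both the $O(\sigma)$ localization error of the histogram mode and the $O(\sigma\sqrt{\log(nd/\beta)})$ spread of the samples; the constant $11$ is generous and we may assume $\beta\le\tfrac12$ (otherwise the conclusion is vacuous), so this reduces to elementary constant-chasing.
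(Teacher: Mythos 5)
Your proposal is correct and follows essentially the same route as the paper's proof: privacy via composition of the $d$ per-coordinate $(\eps/d,\delta/d)$ histogram calls, and accuracy by verifying the three-consecutive-bins hypothesis of Lemma~\ref{lem:stablehistogram} using the $3\sigma$ subgaussian tail bound, then combining the localization of $b_j^*$ with a union-bounded concentration of all $x_{i,j}$ around $\mu_j$. The only differences are cosmetic (you obtain $|3\sigma b_j^*-\mu_j|\le 6\sigma$ where the paper loosely states $9\sigma$, and you track the factor of $2$ inside the logarithm from the union bound slightly more carefully).
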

\begin{proof}
    The privacy guarantee is inherited by the Stable Histogram algorithm, via composition (Lemma~\ref{lem:composition}).\footnote{Note that by using advanced composition~\citep{DworkRV10}, we could have set the privacy parameter of $\mathrm{StableHistogram}$ to $\approx \eps/\sqrt{d}$ but since this is not the sample complexity bottleneck, we did not.} By an equivalent definition of $\sigma^2$-subgaussian random variables, we have that for all $j\in[d]$,
    \begin{equation}\label{eq:subgaussian_tails}
        \Pr[|x_{i,j}-\mu_j|>t]\leq 2e^{-t^2/2\sigma^2}.
    \end{equation}
    Setting $t=3\sigma$, we have that $\Pr[|x_{i,j}-\mu_j|>3\sigma]\leq \beta'$, for $\beta'=0.03<1/4$. Suppose $\mu_j\in B_b$, for some bin $b$. Then, we have that $\Pr[z_i\notin B_{b-1}\cup B_b \cup B_{b+1}]\leq \beta'$.
    
    Therefore, the hypothesis of Lemma~\ref{lem:stablehistogram} is satisfied and so, if $n=\Omega(\frac{d}{\eps}\log\frac{d}{\delta\beta})$, with probability $1-\beta/d$, we return $b^*_j$ such that $b^*_j\in\{b-1,b,b+1\}$. Equivalently, with probability $1-\beta$, for all $j\in[d]$ simultaneously,
    \begin{equation}\label{eq:coordinate_mean_est}
        |\mu_j-3\sigma b^*_j|\leq 9\sigma.
    \end{equation}
    By the bound on subgaussian tails of Eq.~\eqref{eq:subgaussian_tails} and a union bound, with probability $1-\beta$, for all $i\in[n],j\in[d]$, 
    \begin{equation}\label{eq:every_sample_concentrates}
    |x_{i,j}-\mu_j|\leq \sqrt{2\sigma^2\log\frac{nd}{\beta}}\leq 2\sigma \log\frac{nd}{\beta}
    \end{equation}
    Combining Eq.~\ref{eq:every_sample_concentrates} and~\ref{eq:coordinate_mean_est}, with probability at least $1-2\beta$, for all $i\in[n],j\in[d]$, \[x_{i,j}\in[3\sigma b^*_j-11\sigma\log\frac{nd}{\beta}, 3\sigma b^*_j+11\sigma\log\frac{nd}{\beta}].\]
\end{proof}

\subsection{A Finite Implementation of Algorithm~\ref{alg:restrictedexponential}}

\updateblock

\newcommand{\score}{\tilde{h}}
\newcommand{\floor}[1]{\left\lfloor #1 \right\rfloor}

Line 1 of our Tukey Depth Mechanism, Algorithm~\ref{alg:restrictedexponential}, computes $h\gets D_H(x, \mathtt{UNSAFE}_{(\eps,\delta,t)})$, the Hamming distance from input $x$ to the space of unsafe data sets.
The set $\mathtt{UNSAFE}_{(\eps,\delta,t)}$ is continuous, unbounded, and apparently lacks convenient properties (such as convexity). 
At first glance, it is not obvious that one can implement such a distance check, even given exponential time and an oracle for exact Tukey depth!

A previous version of this paper showed how to implement Algorithm~\ref{alg:main} in exponential time using the tools presented above in Appendix~\ref{sec:technical_tools}.
It constructed a bounded and sufficiently fine grid of data sets around $x$ and performed a brute-force search: for each candidate $x'$ on the grid, it explicitly constructed the output distribution of the restricted exponential mechanism $\M{\eps,t}(x')$ (and the distributions of the neighbors of $x'$) to check whether $x'$ is in $\mathtt{UNSAFE}_{(\eps,\delta,t)}$. This approach is very inefficient computationally and requires a logarithmic increase in the number of samples, due to the private estimation of parameters required for the construction of the grid. For completeness, we include all details of this approach in Appendix~\ref{app:grid}.

Subsequent to the conference publication of this paper, \citet*{amin2022easy} provided a key insight that allows one to sidestep these complications.
They observed that one need not compute the precise distance to the set $\mathtt{UNSAFE}_{(\eps,\delta,t)}$: it suffices to compute a \emph{low-sensitivity lower bound} on this distance.
They operationalized calculations from our accuracy analysis, defining a function in terms of the volume ratios analyzed in Lemma~\ref{lemma:gap_k_far_safety} (see Definition~\ref{def:volume_ratio_distance}) and proving it is 1-sensitive. Our analysis already establishes that it is a lower bound on distance-to-unsafety and large with high probability on Gaussian data.
(This latter fact implies that the algorithm passes the safety check with high probability.)

With this modification, one can implement a version of Algorithm~\ref{alg:restrictedexponential} that need ``only'' compute the Tukey level sets, along with their volumes, on the input data (see Algorithm~\ref{alg:discrete_tukey_vol}). 
This can be accomplished in time $O(n^{d^2})$ using standard techniques for Tukey depth, where $O(\cdot)$ suppresses terms that are polynomial in $d$. We present this improved approach in the following section~\ref{app:volumes}.

An even faster implementation could be achieved by using techniques for approximately sampling from convex bodies in high dimensions. 
Such an approach was taken in \citet{KaplanSS20}, who show how to implement an approximation to the standard exponential mechanism on Tukey depth in time $O(n^d)$.
This approximation preserves privacy and accuracy in their setting.
It also uses approximate volumes of the Tukey level sets, which one can use to produce a stable function to approximate distance-to-unsafety as in \citet{amin2022easy}. 
Such techniques could thus likely be used to produce an appropriate approximation of Algorithm~\ref{alg:discrete_tukey_vol} that runs in time $O(n^d)$.

\subsubsection{Via Volume Computations}\label{app:volumes}
We present an implementation of Algorithm~\ref{alg:restrictedexponential} which runs in time $O(n^{d^2})$ and whose sample complexity remains exactly the same as the one stated in Theorem~\ref{thm:tukey_main}.
As mentioned above, this approach replaces the exact computation of the distance-to-unsafety $h$ in Line 1 of Algorithm~\ref{alg:restrictedexponential} by the computation of the function $\score$, which is a low-sensitivity lower bound to $h$.
In addition to the discussion in \citet{amin2022easy}, the later work of \citet{dick2023better} contains another variation on this proxy.

\begin{defn}[Proxy for distance-to-unsafety]\label{def:volume_ratio_distance}
    Given dataset $x$ and integer $\ell\in[\floor{n/2}]$, let $\cY_{\ell,x}=\{y\in\R^d: T_x(y)\geq \ell\}$ denote the Tukey upper-level sets, and let $v=(v_1,\ldots,v_{\floor{n/2}})$ denote their corresponding volumes. Given $v\in\R^{\floor{n/2}}$, $\eps,\delta\in (0,1)$, and integer $t\le n/2$, define $\score(x)$ as the largest integer $0 \leq k\le t-1$ for which
    there exists an integer $g>0$ such that
    \begin{align}
        \frac{v_{t-k-1}}{v_{t+k+g+1}} \cdot e^{-\eps g/2} \le \delta
    \end{align}
    (with the convention that $v_0 = \infty$, $v_\ell = 0$ for $\ell> n/2$, and $\frac{0}{0}=\infty$, so that the terms are all defined).
    If no such $k$ exists, $\score(x)=-1$.
    Note that $\score(x)$ depends on $\eps,\delta,$ and $t$, but we omit this from our notation.
\end{defn}

\begin{algorithm}[H] \caption{Finite Implementation of $\TD{\eps,\delta,t}(x)$ via Volume Computations}\label{alg:discrete_tukey_vol}
\begin{algorithmic}[1]
\updateblock
\Require{Data set $x = (x_1,\dots,x_{n})^T \in \mathbb{R}^{n\times d}$.
Privacy parameters: $\eps,\delta>0$. Minimum threshold $t$.}

\For{$\ell = \{ \floor{n/2}, \floor{n/2}-1,\ldots,1\}$}
    \State $V_{\geq \ell} \gets \mathrm{Vol}(\cY_{\ell,x})$ where $\cY_{\ell,x}=\{y\in\R^d: T_x(y)\geq \ell\}$.
    \State $V_{\ell}\gets V_{\ge \ell} - V_{\ge \ell + 1}$
\EndFor
\State $\score(x) \gets \max \left\{ \left\{ k : 0 \le k < t \text{ and } \exists g>0 \text{ s.t. } \frac{V_{\geq t-k-1}}{V_{\geq t+k+g+1}} \cdot e^{-g\eps /2} \le \delta \right\}, -1\right\}$ 
\If{$\score(x) + z < \frac{\log (1/2\delta)}{\eps}$ for $z\sim \mathrm{Lap}(1/\eps)$} 
    \Return \texttt{FAIL}. \label{line:safety-score-check}
\EndIf
\State Draw random level $L\in \{t,t+1,\ldots,\floor{n/2}\}$ with $\Pr[L=\ell]\propto V_{\ell} e^{\eps \ell /2}$
\State $\tilde\mu \sim \mathrm{Uniform}(\mc{Y}_{L,x} \setminus \mc{Y}_{L+1,x})$
\end{algorithmic}
\end{algorithm}

\mypar{Privacy.}
By~\citep[Lemma 3.6]{amin2022easy}, the function $\score$ has sensitivity $2$ on neighboring data sets.\footnote{\citet{amin2022easy} prove in their Lemma 3.6 that the function has sensitivity $1$ for neighboring data sets in the add-remove model of DP, which implies sensitivity $2$ in the swap model we consider.} This implies that the safety check itself satisfies $2\eps$-DP. A passing check implies that with probability at least $1-\delta$ over the randomness of the Laplace mechanism, $\score(x)>0$, which, by Lemma~\ref{lemma:gap_k_far_safety}, implies that the distance-to-unsafety $D_H(x,\mathtt{UNSAFE}_{(\eps,4e^\eps\delta, t)})>0$, so $x\in\texttt{SAFE}_{(\eps,4e^\eps\delta, t)}$. The privacy guarantee then holds (up to constant factors) by composition, following the proof of Proposition~\ref{prop:rem_privacy}.

\mypar{Accuracy.}
Our proof of Lemma~\ref{lemma:typically_far} shows that function $\score(x)$ is large, that is, $\score(x)\geq\frac{\log(1/2\beta\delta)}{\eps}$, with high probability on typical Gaussian data as long as $n\geq c(\frac{d+\log(1/\beta \delta)}{\eps})$ for some constant $c$.
Thus, the safety check in line~\ref{line:safety-score-check} will succeed with high probability. The remaining accuracy analysis is identical, since sampling of a Tukey level $L\geq t$ with probability $\Pr[L=\ell]\propto V_{\ell} e^{\eps \ell /2}$ and subsequent uniform sampling of a point $\tilde{\mu}$ with Tukey depth $L$ results in exactly the same distribution as the estimator $\hat{y}$ in Algorithm~\ref{alg:restrictedexponential}.

\mypar{Implementation Details and Computation.}
To compute $\score(x)$, we need the volumes of the Tukey level sets.
\citet{KaplanSS20} give an algorithm that finds these volumes and performs the sampling for the exponential mechanism.
The details are presented in Section 3 of their paper.
Briefly, all Tukey upper-level sets and their volumes can be computed exactly in $O_d(n^{d^2})$ time, where $O_d(\cdot)$ hides terms polynomial in $d$. 
In the same asymptotic running time, we can sample a point uniformly from a Tukey level set by triangulating the polytope and drawing a random a simplex proportional to its volume, from which we sample exactly.
It is easy to observe that the safety check, as well as the sampling of a Tukey level $L$, takes time polynomial in $n$. Overall, this implies that the computational complexity of Algorithm~\ref{alg:discrete_tukey_vol} is $O_d(n^{d^2})$.

Note that, unlike \citet{KaplanSS20}, we do not have to worry about degeneracies: 
if the check passes then we know there is at least one non-degenerate level set. 

\mypar{An Even Faster Implementation.}
After presenting the exact algorithm that requires time $O(n^{d^2})$, \citet{KaplanSS20} show how to use tools for sampling from convex bodies to implement an approximation of the exponential mechanism that needs time $O(n^d)$.
They rely on work of \citet{lovasz2006simulated,cousins2018gaussian}, who give algorithms for approximate sampling and approximate volume computation of high-dimensional convex bodies. 
The approximation presented by \citet{KaplanSS20} preserves privacy and accuracy.
Since they compute approximate volumes for each Tukey level set, their approach also allows the construction of an appropriate adaptation of $\score(x)$.
Their setting differs slightly from ours, as they assume the input lives in a fixed grid.
A full adaptation of the details to our setting would require assuming that the inputs are presented as rational numbers with fixed bit complexity.
Since the heavy lifting has been performed and presented so well by \citet{KaplanSS20}, we do not include formal discussion of this approach here.

\subsubsection{Via Grid Search}\label{app:grid}
\updateblock
\begin{remark}\label{remark:worse_samples}
    An initial analysis of Algorithm~\ref{alg:discrete_tukey} failed to take into account the samples needed for Private Range Estimation.
    Because of this step, Algorithm~\ref{alg:discrete_tukey} requires (logarithmically) more samples than Algorithm~\ref{alg:restrictedexponential}.
    However, using the techniques sketched above in Appendix~\ref{app:volumes}, one can implement Algorithm~\ref{alg:restrictedexponential} without increasing the sample requirements.
\end{remark}

\updatedone

We modify Algorithm~\ref{alg:restrictedexponential} (using our private eigenvalue and bounding-box estimates) by running the original algorithm with the data space $\mathbb{R}^d$ replaced by a finite grid of points $\cQ_{\alpha'}$.
For simplicity, we work with data sets of size $2n$.

\begin{algorithm}[H] \caption{Finite Implementation of $\TD{\eps,\delta,t}(x)$}\label{alg:discrete_tukey}
\begin{algorithmic}[1]
\Require{Data set $x = (x_1,\dots,x_{2n})^T \in \mathbb{R}^{2n\times d}$.
Privacy parameters: $\eps,\delta>0$. Accuracy parameters: $\alpha, \beta>0$.}
\Algphase{Stage 1: Range estimates}
\State Construct data set $u\in\mathbb{R}^{n\times d}$ where $u_i=(x_i-x_{i+n})/\sqrt{2}$, $i\in[n]$.
\State $\hat\lambda_1\gets\eigen(u,1)$ \Comment{private estimate of largest eigenvalue}
\State $\hat\lambda_d\gets\eigen(u,d)$ \Comment{private estimate of smallest eigenvalue}
\State $\sigma^2\gets 4\hat\lambda_1$ \Comment{upper bound on variance in every direction}
\State $\{X_{\min}^j,X_{\max}^j\}_{j\in[d]}\gets\range(x,\sigma^2)$
\State Set
    \begin{equation}
        \alpha' \gets O\left(\frac{\alpha \sqrt{\hat{\lambda}_d}}{d}\right).
            \label{eq:tukey_compute_alpha}
    \end{equation}
\State $R\gets \alpha' + \max_j \max\{|X_{\max}^j|, |X_{\min}^j|\}$
\Algphase{Stage 2: Discretize}
\State $\cQ_{\alpha'}\gets$ $\alpha'$-fine grid over $[-R,R]^d$.
\State For all $i\in[n]$, let $x^{\Delta}_i=\argmin_{p\in\mc{Q}_{\alpha'}}\|p-x_i\|_1$.
\Algphase{Stage 3: Run the algorithm}
\State Run $\TD{\eps,\delta,t}(x^{\Delta})$.
\end{algorithmic}
\end{algorithm}

\mypar{Privacy.} 
Since the discretization process doesn't affect the privacy analysis of Algorithm~\ref{alg:restrictedexponential}, the overall privacy follows from composition and the privacy analyses in \ref{sec:technical_tools}.

\mypar{Computation.} 
Tukey depth can be computed in time $\tilde{O}(n^d)$~\cite{liu2019fast}. 
Since we can run the restricted exponential mechanism over the grid $\cQ_{\alpha'}$, it remains to describe how, given $x^{\Delta}$, we can compute the distance to the set $\mathtt{UNSAFE}_{(\eps,\delta,t)}$.
First note that, given two data sets $y, y'\subset \cQ_{\alpha'}^n$, we can check whether $\M{\eps,t}(y)\approx_{\eps,\delta}\M{\eps,t}(y')$ by computing the distributions explicitly.
Thus, by iterating over all neighbors of any data set $y$, we can check if $y\in \mathtt{UNSAFE}_{(\eps,\delta,t)}$.
With this, computing the distance to $\mathtt{UNSAFE}_{(\eps,\delta,t)}$ requires iterating over all data sets in $\cQ_{\alpha'}^n$, of which there are at most $\left(\frac{2R}{\alpha'}\right)^{dn}=\tilde{O}\left(\frac{d(\|\mu\|_{\infty}/\sqrt{\lambda_d}+\sqrt{\kappa})}{\alpha}\right)^{dn}$, where $\kappa=\lambda_1/\lambda_d$ is the condition number of the covariance matrix $\Sigma$.

\mypar{Accuracy.}
It remains to show that this algorithm provides an accurate estimate of $\mu$ when the data is Gaussian. 
We will show that the (old) error from uniform convergence and the (new) error from discretization can be grouped together, and that the $\alpha'$ we pick results in negligible error from discretization.

Fix $\cQ_{\alpha'}$ and let $P_{\Delta}$ be the distribution generated by snapping samples from the Gaussian $\cN(\mu,\Sigma)$ to that grid.
Since our uniform convergence argument holds for any distribution, with probability $1-\beta$ over the choice of $x$ we have, for all $y$ within our bounding box, that $|T_x(y) - T_{P_{\Delta}}(y)|\le \alpha_1$, using the same ``typicality'' parameter as in the main argument.
We now relate $T_P(y)$ to $T_{P_{\Delta}}(y)$ for all $y$ within the bounding box.

\begin{lemma}
    Let $\cQ_{\alpha'}$ be an $\alpha'$-fine grid over $[-R,R]^d$.
    Let $P=\cN(\mu,\Sigma)$ be any Gaussian and let $P_{\Delta}$ be the distribution resulting from drawing from $P$ and then discretizing according to $\cQ_{\alpha'}$.
    Assume $\hat{\lambda}_d\le \frac{1}{4} \lambda_d(\Sigma)$.
    For any point $y\in [-R,R]^d$ and any $\alpha_3>0$, if $\alpha'\le c\sqrt{\frac{\hat{\lambda}_d}{d}}\alpha_3$ for some specific constant $c$, then
    \begin{equation*}
        |T_P(y) - T_{P_{\Delta}}(y)|\le \alpha_3.
    \end{equation*}
\end{lemma}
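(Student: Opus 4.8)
The plan is to argue one direction at a time. Fix a unit vector $v$ (Tukey depth is unchanged if $v$ is rescaled, so we may assume $\|v\|_2=1$) and compare $f_v := \Pr_{X\sim P}[\langle X,v\rangle \ge \langle y,v\rangle]$ with $g_v := \Pr_{X\sim P}[\langle X^{\Delta},v\rangle \ge \langle y,v\rangle]$, where $X^{\Delta}$ denotes $X$ snapped to the nearest point of $\mc{Q}_{\alpha'}$ (so $T_P(y)=\min_v f_v$, $T_{P_\Delta}(y)=\min_v g_v$). Coupling the two using the same draw $X$, the geometric fact to exploit is that $\ell_1$-nearest-point rounding onto an $\alpha'$-fine product grid is coordinatewise nearest rounding, so any $X\in[-R,R]^d$ moves by at most $\alpha'/2$ per coordinate, hence $\|X^{\Delta}-X\|_2\le \tfrac{\alpha'\sqrt d}{2}$ and $|\langle X^{\Delta}-X,v\rangle|\le \tfrac{\alpha'\sqrt d}{2}$. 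Consequently the events $\{\langle X,v\rangle\ge\langle y,v\rangle\}$ and $\{\langle X^{\Delta},v\rangle\ge\langle y,v\rangle\}$ can differ only on the slab $\{\langle y,v\rangle-\tfrac{\alpha'\sqrt d}{2}\le \langle X,v\rangle<\langle y,v\rangle+\tfrac{\alpha'\sqrt d}{2}\}$, together with the ``escape'' event $\{X\notin[-R,R]^d\}$ (handled below); so $|f_v-g_v|$ is at most the probability of that slab plus $\Pr[X\notin[-R,R]^d]$.

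Next I would bound the slab probability. The projection $\langle X,v\rangle$ is a univariate Gaussian with variance $v^T\Sigma v\ge \lambda_d(\Sigma)$, hence its density is everywhere at most $(2\pi\,v^T\Sigma v)^{-1/2}\le (2\pi\lambda_d(\Sigma))^{-1/2}$, so a slab of width $\alpha'\sqrt d$ has probability at most $\alpha'\sqrt d\,(2\pi\lambda_d(\Sigma))^{-1/2}$; using the hypothesis $\hat\lambda_d\le\tfrac14\lambda_d(\Sigma)$ this is at most $\tfrac{\alpha'\sqrt d}{2\sqrt{2\pi\hat\lambda_d}}$. Since this bound is uniform over unit $v$, and since $|\min_v f_v-\min_v g_v|\le \sup_v|f_v-g_v|$ (an $\varepsilon$-uniform approximation of a family perturbs its infimum by at most $\varepsilon$), we obtain
\[
|T_P(y)-T_{P_\Delta}(y)|\;\le\;\frac{\alpha'\sqrt d}{2\sqrt{2\pi\hat\lambda_d}}\;+\;\Pr[X\notin[-R,R]^d].
\]
Taking $\alpha'\le c\sqrt{\hat\lambda_d/d}\,\alpha_3$ with, e.g., $c=\sqrt{2\pi}$ makes the first term at most $\tfrac{\alpha_3}{2}$, and it remains to dispose of the escape term.

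For that, I would use $R=\alpha'+\max_j\max\{|X_{\min}^j|,|X_{\max}^j|\}$ together with the guarantee of the private range routine: on its success event each interval $[X_{\min}^j,X_{\max}^j]$ extends a distance $\Omega(\sigma\log(nd/\beta))$ past $\mu_j$ on both sides, where $\sigma^2=4\hat\lambda_1$ dominates every marginal variance of $P$; a Gaussian tail bound then gives $\Pr[X_j\notin[X_{\min}^j,X_{\max}^j]]\le e^{-\Omega(\log^2(nd/\beta))}$, and a union bound over the $d$ coordinates keeps $\Pr[X\notin[-R,R]^d]$ far below $\tfrac{\alpha_3}{2}$ (super-polynomially small in $n,d,1/\beta$), completing the bound $|T_P(y)-T_{P_\Delta}(y)|\le\alpha_3$. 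This escape term is the one place the argument is not purely elementary and the step I would handle most carefully: strictly speaking the lemma is about the full Gaussian $P$, which is not supported in $[-R,R]^d$, so one must either fold this negligible probability into the constant $c$ (equivalently, weaken $\alpha_3$ by a negligible amount) or state the conclusion on the event that the range estimate succeeded. Everything else — the grid-displacement bound, the density/slab estimate, and the $\min$ comparison — is routine.
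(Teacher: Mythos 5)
Your proof follows essentially the same route as the paper's: couple $X$ with its rounded version, project onto a unit vector $v$, note that the two indicator events differ only on a slab of width $\sqrt{d}\alpha'$, bound the slab probability by the univariate density bound $1/\sqrt{2\pi\, v^T\Sigma v}$ together with $v^T\Sigma v \ge \lambda_d(\Sigma)\ge 4\hat{\lambda}_d$, and pass to the minimum over directions. The one place you go beyond the paper is the ``escape'' term $\Pr[X\notin[-R,R]^d]$: the paper's proof silently assumes the rounding displacement is at most $\sqrt{d}\alpha'/2$, which can fail for draws outside the box, so your explicit treatment of that negligible tail (conditioned on the range estimate succeeding) is a legitimate refinement rather than a gap.
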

\begin{proof}
    Let $X\sim \cN(\mu,\Sigma)$ and let $\gamma$ be the ``discretization random variable,'' so $X+\gamma\sim P_{\Delta}$.

    Pick a vector $u$ such that $\|u\|_2=1$.
    After projecting onto $u$, we have a univariate random variable:
    $X^T u \sim\cN(\mu^Tu, u^T \Sigma u)$.
    Since $\|\gamma\|_2\le \frac{\sqrt{d}\alpha'}{2}$ and $\|u\|_2=1$, by Cauchy-Schwarz we have $\|\gamma^T u\|_2\le\frac{\sqrt{d}\alpha'}{2}$ as well.
    
    The discretization can only affect the result when $X$ is close to the hyperplane, we have
    \begin{align}
        \left|\Pr[X^T u\ge y^T u ] - \Pr[(X+\gamma)^T u\ge y^T u ]\right|
            &\le 2 \Pr[X^T u \in y^T u \pm \sqrt{d}\alpha'/2 ] \\
            &\le 2\cdot\frac{1}{\sqrt{2\pi u^T \Sigma u}}\cdot \frac{\sqrt{d}\alpha'}{2}.
    \end{align}
    Since $\|u\|_2=1$, we have $\frac{1}{4}\hat{\lambda}_d\le \lambda_d(\Sigma)\le u^T \Sigma u$.
    Setting $\alpha'$ as in the lemma statement for a constant $c=\sqrt{2/\pi}$ makes this value at most $\alpha_3$. %
    
    Since the expected Tukey depth is defined as a minimum over all $u$, we are done.
\end{proof}

We will thus be able to bound the volume ratios with an analog of Lemma~\ref{lemma:general_ratio_of_volumes}.
\begin{lemma}[Analog of Lemma~\ref{lemma:general_ratio_of_volumes}]
    Suppose for all $y\in [-R,R]^d$ that $|T_x(y)-T_{P_\Delta}(y)|\le \alpha_1$ and $|T_{P_{\Delta}}(y)-T_P(y)|\le \alpha_3$.
    Then, for all $p,q\in [0,1/2]$,
    \begin{equation*}
        \frac{\mathrm{Vol}(\cY_{np,x})}{\mathrm{Vol}(\cY_{nq,x})} \le \left( \frac{\cdf^{-1}(1-p+\alpha_1+\alpha_3}{\cdf^{-1}(1-q-\alpha_1-\alpha_3}\right)^d.
    \end{equation*}
\end{lemma}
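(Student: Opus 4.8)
The plan is to reuse the proof of Lemma~\ref{lemma:general_ratio_of_volumes} essentially unchanged, the only new ingredient being a triangle inequality that turns the two given hypotheses into a single approximation of $T_x$ by $T_P$. Concretely, for every point $y$ in the box $[-R,R]^d$ --- in particular every point of the grid $\cQ_{\alpha'}$, which is where all of $\cY_{t,x}$ lives --- the hypotheses give $|T_x(y)-T_P(y)|\le|T_x(y)-T_{P_\Delta}(y)|+|T_{P_\Delta}(y)-T_P(y)|\le\alpha_1+\alpha_3$. Setting $\cB_r:=\{y\in\R^d:\|y-\mu\|_\Sigma\le r\}$ as in the original proof, I would first show $\cY_{np,x}\subseteq\cB_{\cdf^{-1}(1-p+\alpha_1+\alpha_3)}$: if $y\in\cY_{np,x}$ then $y$ is a grid point with $T_x(y)\ge p$, so $T_P(y)\ge p-\alpha_1-\alpha_3$, and Proposition~\ref{prop:tukeycdf} together with the monotonicity of $\cdf^{-1}$ and $-\cdf^{-1}(s)=\cdf^{-1}(1-s)$ gives $\|y-\mu\|_\Sigma\le-\cdf^{-1}(p-\alpha_1-\alpha_3)=\cdf^{-1}(1-p+\alpha_1+\alpha_3)$. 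Symmetrically, any grid point with $\|y-\mu\|_\Sigma\le\cdf^{-1}(1-q-\alpha_1-\alpha_3)$ has $T_P(y)\ge q+\alpha_1+\alpha_3$, hence $T_x(y)\ge q$, so $\cB_{\cdf^{-1}(1-q-\alpha_1-\alpha_3)}\cap\cQ_{\alpha'}\subseteq\cY_{nq,x}$. (As in the original statement, I keep the implicit requirements $p-\alpha_1-\alpha_3>0$ and $q+\alpha_1+\alpha_3<1$, which hold in all invocations.)

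It then remains to compare volumes, which is a homothety computation. The two bounding ellipsoids are homothetic about $\mu$ with ratio $r_p/r_q$, where $r_p=\cdf^{-1}(1-p+\alpha_1+\alpha_3)$ and $r_q=\cdf^{-1}(1-q-\alpha_1-\alpha_3)$, so in the continuous reading $\mathrm{Vol}(\cB_{r_p})/\mathrm{Vol}(\cB_{r_q})=(r_p/r_q)^d$ via $\mathrm{Vol}(\cB_r)=c_d|\Sigma|^{1/2}r^d$; in the discrete reading, where $\mathrm{Vol}$ counts grid points, the number of $\cQ_{\alpha'}$-points inside each ellipsoid equals its volume over $(\alpha')^d$ up to a $(1+o(1))$ factor, because $\alpha'=O(\alpha\sqrt{\hat\lambda_d}/d)$ is tiny compared to the $\ell_2$-extent $r_q\sqrt{\lambda_d(\Sigma)}$ of the inner ellipsoid (in every application $r_q$ is a constant bounded away from $0$, and likewise $r_p\gtrsim(\alpha_1+\alpha_3)\sqrt{\lambda_d(\Sigma)}\gg\alpha'$). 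Absorbing the $(1+o(1))$ into the ambient constants and combining with the two containments gives $\mathrm{Vol}(\cY_{np,x})/\mathrm{Vol}(\cY_{nq,x})\le(r_p/r_q)^d$, which is the claimed bound.

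The one step that genuinely needs attention --- the main obstacle --- is the interaction with the bounding box $[-R,R]^d$. The upper containment $\cY_{np,x}\subseteq\cB_{r_p}$ is unconditional, so the numerator is controlled no matter what; but the lower containment feeding the denominator is useful only if $\cB_{r_q}$ actually sits inside $[-R,R]^d$, since otherwise $|\cB_{r_q}\cap\cQ_{\alpha'}|$ is truncated by the box and the comparison breaks. I would discharge this by invoking the way $R$ is set in Algorithm~\ref{alg:discrete_tukey}: via the private range and eigenvalue estimates (Lemmas~\ref{lem:priv_range_est} and~\ref{lem:priv_eigenvalue_est}), $R$ is, with high probability, a constant factor larger than $\|\mu\|_\infty+\Theta(\sqrt{\lambda_1(\Sigma)\log(nd/\beta)})$, which comfortably contains every Mahalanobis ball of radius $\cdf^{-1}(1-q-\alpha_1-\alpha_3)=O(\sqrt{\log(1/q)})$ arising in the analysis, since those $q$ are bounded away from $0$. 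With that in place, the argument closes exactly as in Lemma~\ref{lemma:general_ratio_of_volumes}.
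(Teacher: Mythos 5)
Your proof is correct and follows essentially the same route as the paper's: the triangle inequality to get $|T_x(y)-T_P(y)|\le\alpha_1+\alpha_3$ on the box, the two containments $\cY_{np,x}\subseteq\cB_{\cdf^{-1}(1-p+\alpha_1+\alpha_3)}$ and $\cB_{\cdf^{-1}(1-q-\alpha_1-\alpha_3)}\subseteq\cY_{nq,x}$ via Proposition~\ref{prop:tukeycdf}, and the homothety ratio $(r_p/r_q)^d$. The extra care you take with the grid-point count versus continuous volume, and with verifying that the inner Mahalanobis ball actually sits inside $[-R,R]^d$, addresses two points the paper's proof silently glosses over, and you discharge both correctly.
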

\begin{proof}
    Applying the triangle inequality, we have $|T_x(y)-T_P(y)|\le \alpha_1 + \alpha_3$ for all $y\in [-R,R]^d$.

    To upper bound $\mathrm{Vol}(\cY_{np,x})$, observe that $T_x(y)\ge p$ implies $T_P(y)\ge p - \alpha_1-\alpha_3$, so by Lemma~\ref{prop:tukeycdf} we have $\|y-\mu\|_{\Sigma}\le \cdf^{-1}(1-p+\alpha_1+\alpha_3)$.
    To lower bound $\mathrm{Vol}(\cY_{nq,x})$, observe that $\|y-\mu\|_{\Sigma}\le \cdf^{-1}(1-q-\alpha_1-\alpha_3)$ implies $T_P(y)\ge q+\alpha_1+\alpha_3$, and thus $T_x(y)\ge q$.
    
    Recalling that $\cB_r$ denotes the Mahalanobis ball of radius $r$, we have
    \begin{equation*}
        \frac{\mathrm{Vol}(\cY_{np,x})}{\mathrm{Vol}(\cY_{nq,x})}
            \le \frac{\mathrm{Vol}(\cB_{\cdf^{-1}(1-p+\alpha_1+\alpha_3)})}{\mathrm{Vol}(\cB_{\cdf^{-1}(1-q-\alpha_1-\alpha_3)})} =\left( \frac{\cdf^{-1}(1-p+\alpha_1+\alpha_3}{\cdf^{-1}(1-q-\alpha_1-\alpha_3}\right)^d.
    \end{equation*}
\end{proof}

The earlier version of this lemma had $\pm \alpha_1$ where we have $\pm (\alpha_1+\alpha_3)$.
Therefore, if $\alpha_1$ and $\alpha_3$ are sufficiently small, the proofs of the following lemmas go through with the exact same arguments.

\begin{lemma}[Analog of Lemma~\ref{lemma:typically_far}]
    Assume that for all $y\in [-R,R]^d$, $|T_x(y)-T_P(y)|\le \alpha_1+\alpha_3$ with $\alpha_1+\alpha_3\le \frac{1}{10}$.
    There exists a constant $c$ such that, for any $\beta, \delta,\eps>0$ with $\eps\le 1$ and $\delta\le \frac{1}{2}$, if $n\ge c\left(\frac{d+\log (1/\beta\delta)}{\eps}\right)$ then $x$ is $\frac{\log(1/2\beta\delta)}{\eps}$-far from $\mathtt{UNSAFE}_{(\eps,\delta,n/4)}$.
\end{lemma}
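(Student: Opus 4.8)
The plan is to replay the proof of Lemma~\ref{lemma:typically_far} essentially verbatim, with the single change that the typicality error $\alpha_1$ is replaced throughout by the combined error $\alpha_1+\alpha_3$, and the appeal to Lemma~\ref{lemma:general_ratio_of_volumes} is replaced by its discretized analog proved just above. Concretely, I would invoke Lemma~\ref{lemma:gap_k_far_safety}: to conclude that $x$ is $k$-far from $\mathtt{UNSAFE}_{(\eps,\delta,n/4)}$ it suffices to exhibit a $g>0$ with $\frac{\mathrm{Vol}(\cY_{t-k-1,x})}{\mathrm{Vol}(\cY_{t+k+g+1,x})}\cdot e^{-\eps g/2}\le\frac{\delta}{4e^{\eps}}$, where $t=n/4$. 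Taking $g=\frac{n}{8}$, so that $t-k-1=n\left(\frac14-\frac{k+1}{n}\right)$ and $t+g+k+1=n\left(\frac38+\frac{k+1}{n}\right)$, the discretized volume-ratio lemma bounds
\[
\frac{\mathrm{Vol}(\cY_{t-k-1,x})}{\mathrm{Vol}(\cY_{t+k+g+1,x})}\le\left(\frac{\cdf^{-1}\left(\frac34+\frac{k+1}{n}+\alpha_1+\alpha_3\right)}{\cdf^{-1}\left(\frac58-\frac{k+1}{n}-\alpha_1-\alpha_3\right)}\right)^d.
\]

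The whole point is that the hypothesis is $\alpha_1+\alpha_3\le\frac{1}{10}$, which is exactly the numerical bound that $\alpha_1$ alone satisfied in the original proof, so nothing downstream changes. As long as $\frac{k+1}{n}<\frac{1}{100}$ --- which, taking $k\approx\frac{1}{\eps}\log(1/2\beta\delta)$, holds once $n\gtrsim\frac{1}{\eps}\log(1/\beta\delta)$ --- both arguments of $\cdf^{-1}$ lie in a fixed closed subinterval of $(\frac12,1)$, so the ratio is at most $e^{c'd}$ for an absolute constant $c'$; hence $\frac{\mathrm{Vol}(\cY_{t-k-1,x})}{\mathrm{Vol}(\cY_{t+k+g+1,x})}\cdot e^{-\eps g/2}\le e^{c'd-\eps n/16}$. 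Requiring $n\ge c\left(\frac{d+\log(1/\delta)}{\eps}\right)$ for a suitable constant $c$ (using $e^{\eps}\le e$) makes this at most $\frac{\delta}{4e^{\eps}}$, and combining the two lower bounds on $n$ yields the claimed $n\ge c\left(\frac{d+\log(1/\beta\delta)}{\eps}\right)$. Lemma~\ref{lemma:gap_k_far_safety} then gives that $x$ is $\frac{\log(1/2\beta\delta)}{\eps}$-far from $\mathtt{UNSAFE}_{(\eps,\delta,n/4)}$.

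The one genuinely new ingredient --- and the step I expect to demand the most care --- is checking that the discretized typicality hypothesis, asserted only for $y\in[-R,R]^d$, actually suffices to drive the volume-ratio bound. This is where the choice of $R$ in Algorithm~\ref{alg:discrete_tukey} enters: one must verify that each level set appearing above is contained in the bounding box, equivalently that the Mahalanobis ball $\cB_r$ of the relevant constant radius $r=\cdf^{-1}\left(\frac34+\frac{k+1}{n}+\alpha_1+\alpha_3\right)$ about $\mu$ lies inside $[-R,R]^d$. Since $R\gtrsim\sigma\log(nd/\beta)$ with $\sigma^2=4\hat\lambda_1\ge\lambda_1(\Sigma)$, while $\cB_r$ has $\ell_\infty$-extent only $O(r\sqrt{\lambda_1(\Sigma)})$ about $\mu$ (and $\mu$ itself sits well inside the box, being within $O(\sqrt{\lambda_1(\Sigma)}\log(nd/\beta))$ of the data), this containment holds with room to spare. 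Combined with the routine bookkeeping that on the grid $\cQ_{\alpha'}$ the ``volumes'' in the ratio are counting measure over $\alpha'$-cells whose $O(\alpha')$-thick boundary shells are negligible against the constant radii involved (by the tiny choice $\alpha'=O(\alpha\sqrt{\hat\lambda_d}/d)$), the remaining steps are word-for-word those of Lemma~\ref{lemma:typically_far}.
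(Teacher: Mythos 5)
Your proposal is correct and matches the paper's intent exactly: the paper offers no separate proof for this analog, asserting only that the argument of Lemma~\ref{lemma:typically_far} goes through verbatim with $\alpha_1$ replaced by $\alpha_1+\alpha_3$ and the discretized volume-ratio lemma in place of Lemma~\ref{lemma:general_ratio_of_volumes}, which is precisely what you do. Your additional check that the relevant Mahalanobis level sets lie inside $[-R,R]^d$ (so the restricted typicality hypothesis actually applies) is a point the paper glosses over, and it is handled correctly.
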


\begin{lemma}[Analog of Lemma~\ref{lemma:exp_is_good}]
    Assume that for all $y\in [-R,R]^d$ that $|T_x(y)-T_P(y)|\le \alpha_1+\alpha_3$ with $\alpha_1+\alpha_3\le \frac{1}{10}$.
    For any $\beta>0$ and $\alpha_2\ge 2(\alpha_1+\alpha_3)$, we have, for some constant $c$,
    \begin{equation}
        \Pr_{y\sim \M{n/4}(x)}\left[  T_x(y) < \frac{1}{2} - \alpha_2\right] \le \left(\frac{c}{\alpha_2 - 2(\alpha_1+\alpha_3)}\right)^d e^{-\alpha_2 n \eps /4}.
    \end{equation}
\end{lemma}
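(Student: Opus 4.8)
The plan is to transcribe the proof of Lemma~\ref{lemma:exp_is_good} essentially verbatim, making the single change that every occurrence of the typicality parameter $\alpha_1$ is replaced by $\alpha_1+\alpha_3$ and the volume-ratio estimate is taken from the discretized analog of Lemma~\ref{lemma:general_ratio_of_volumes} instead of from the original.

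First I would partition the support of $\M{n/4}(x)$ as before: set $\mathrm{BAD}=\{y\in\cY_{n/4,x}:T_x(y)<\tfrac12-\alpha_2\}$ and $\mathrm{GOOD}=\{y\in\cY_{n/4,x}:T_x(y)\ge\tfrac12-\tfrac{\alpha_2}{2}\}$, where $T_x$ is now the empirical Tukey depth of the discretized data set. For $y\sim\M{n/4}(x)$ I would write $\Pr[y\in\mathrm{BAD}]\le\Pr[y\in\mathrm{BAD}]/\Pr[y\in\mathrm{GOOD}]$ and use that the mechanism's density at $y$ is proportional to $\exp(\tfrac{\eps n}{2}T_x(y))$, hence at most $\exp(\tfrac{\eps n}{2}(\tfrac12-\alpha_2))$ on $\mathrm{BAD}$ and at least $\exp(\tfrac{\eps n}{2}(\tfrac12-\tfrac{\alpha_2}{2}))$ on $\mathrm{GOOD}$. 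Combined with the containments $\mathrm{BAD}\subseteq\cY_{n/4,x}$ and $\cY_{n(\frac12-\frac{\alpha_2}{2}),x}\subseteq\mathrm{GOOD}$ (valid once $\alpha_2$ is small enough that $\tfrac12-\tfrac{\alpha_2}{2}>\tfrac14$), this yields
\[\Pr[y\in\mathrm{BAD}]\le\frac{\mathrm{Vol}(\cY_{n/4,x})}{\mathrm{Vol}(\cY_{n(\frac12-\frac{\alpha_2}{2}),x})}\cdot e^{-\alpha_2 n\eps/4}.\]

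Next I would apply the discretized analog of Lemma~\ref{lemma:general_ratio_of_volumes} with $p=\tfrac14$ and $q=\tfrac12-\tfrac{\alpha_2}{2}$ — whose hypotheses are precisely the hypothesis of the present lemma, $|T_x(y)-T_P(y)|\le\alpha_1+\alpha_3$ for all $y\in[-R,R]^d$ — to bound the volume ratio by $\big(\cdf^{-1}(\tfrac34+\alpha_1+\alpha_3)/\cdf^{-1}(\tfrac12+\tfrac{\alpha_2}{2}-\alpha_1-\alpha_3)\big)^d$. The numerator is an absolute constant because $\alpha_1+\alpha_3\le\tfrac1{10}$; for the denominator I would reuse the elementary bound $\cdf^{-1}(\tfrac12+z)\ge\sqrt{2\pi}\,z$ for $z\ge0$, proved from $e^{-x^2/2}\le1$ exactly as in Lemma~\ref{lemma:exp_is_good}, with $z=\tfrac{\alpha_2}{2}-(\alpha_1+\alpha_3)\ge0$ (this is where $\alpha_2\ge2(\alpha_1+\alpha_3)$ enters), so the denominator is at least $\tfrac{\sqrt{2\pi}}{2}\big(\alpha_2-2(\alpha_1+\alpha_3)\big)$. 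Collecting constants gives volume ratio $\le\big(c/(\alpha_2-2(\alpha_1+\alpha_3))\big)^d$, and multiplying by $e^{-\alpha_2 n\eps/4}$ is the claimed inequality.

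I do not expect a genuine obstacle here: the substantive content — the replacement $\alpha_1\to\alpha_1+\alpha_3$ accounting for the extra discretization error in the Tukey-depth uniform-convergence estimate — is already packaged into the analog of Lemma~\ref{lemma:general_ratio_of_volumes}. The only point that needs care is the reading of $\mathrm{Vol}$ once the restricted exponential mechanism is run over the finite grid $\cQ_{\alpha'}$: if one keeps a continuous output space the transcription is immediate, and if one reads $\mathrm{Vol}(S)$ as the number of grid points in $S$ then one should observe that for a uniform $\alpha'$-grid the count inside a Mahalanobis ellipsoid is proportional to its volume up to lower-order boundary terms, which are absorbed into $c$. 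Restricting attention to $y\in[-R,R]^d$ is harmless, since $\cY$ is $[-R,R]^d$ (or its grid) by construction, so $\mathrm{BAD}$ and $\mathrm{GOOD}$ already live there and the uniform-convergence hypothesis applies to every point considered.
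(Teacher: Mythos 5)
Your proposal is correct and matches the paper's treatment: the paper explicitly states that, once the volume-ratio bound is upgraded to its discretized analog with $\alpha_1$ replaced by $\alpha_1+\alpha_3$, the proof of Lemma~\ref{lemma:exp_is_good} goes through with the exact same argument, which is precisely the transcription you carry out (including the $\cdf^{-1}(\tfrac12+z)\ge\sqrt{2\pi}\,z$ step). Your extra remarks on reading $\mathrm{Vol}$ over the grid and on restricting to $[-R,R]^d$ are harmless additions, not deviations.
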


Furthermore, discretizing with  $\alpha'=\frac{\sqrt{\hat{\lambda}_d}\alpha}{d}$ instead of $O\left(\frac{\sqrt{\hat{\lambda}_d}\alpha}{\sqrt{d}}\right)$ will allow us to take $\alpha_3= o(\alpha)$, so the discretization error does not affect the final sample complexity or accuracy.
The only change is another additive $3\beta$ probability of failure, since (when the data is Gaussian) our bounding box may fail to contain all data points or we may have poor eigenvalue estimates.

\updateblock
As mentioned in Remark~\ref{remark:worse_samples}, an earlier version of this theorem matched the sample requirements of Theorem~\ref{thm:tukey_main} exactly and was too strong by $\log(d/\delta)$ factors.
Using an idea from \citep{amin2022easy} (see Appendix~\ref{app:volumes}), one can implement Algorithm~\ref{alg:restrictedexponential} without this increase.

\begin{thm}[Analog of Theorem~\ref{thm:tukey_main}]
    There exists an absolute constant $C$ such that, for any $0<\alpha,\beta,\eps < 1$, $0<\delta\le \frac{1}{2}$, mean $\mu$, and positive definite $\Sigma$, if $x\sim \cN(\mu,\Sigma)^{\otimes n}$ and
    \begin{equation}
        n \ge C\left(\frac{d + \log (1/\beta)}{\alpha^2} +\frac{d + \log(1/\alpha\eps \beta)}{\alpha\eps}  + \frac{d\log \frac{d}{\delta \beta}}{\eps} \right),
    \end{equation}
    then with probability at least $1-6\beta$, Algorithm~\ref{alg:discrete_tukey} returns $\hat\mu$ such that $\|\hat\mu-\mu\|_{\Sigma}\leq\alpha$.
\end{thm}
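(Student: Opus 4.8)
The plan is to re-run the accuracy proof of Theorem~\ref{thm:tukey_main} almost verbatim, but on the grid-snapped data set $x^\Delta$ and with one extra source of error, the discretization error $\alpha_3$ introduced in Stage~2; privacy is immediate, so I would dispose of it first. For every fixed grid $\cQ_{\alpha'}$ the map $x\mapsto x^\Delta$ has Hamming sensitivity one, so Stage~3 is differentially private on $x^\Delta$ by the privacy half of Theorem~\ref{thm:tukey_main}; since Stage~1 is differentially private (Lemma~\ref{lem:priv_eigenvalue_est} applied to the data set $u$ with $u_i=(x_i-x_{i+n})/\sqrt{2}$, which is i.i.d.\ $\cN(0,\Sigma)$, and Lemma~\ref{lem:priv_range_est} applied to $x$) and Stage~2 is post-processing, composition gives the overall guarantee.

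For accuracy, I would condition on the event $\mc{E}$ that $\frac14\lambda_1(\Sigma)\le\hat\lambda_1\le 4\lambda_1(\Sigma)$, that $\frac14\lambda_d(\Sigma)\le\hat\lambda_d\le 4\lambda_d(\Sigma)$, and that the box $[X_{\min}^j,X_{\max}^j]_{j\in[d]}$ contains all $2n$ samples. The box guarantee of Lemma~\ref{lem:priv_range_est} applies because on $\mc{E}$ we have $\sigma^2=4\hat\lambda_1\ge\lambda_1(\Sigma)\ge\Sigma_{jj}$, so each coordinate is $\sigma^2$-subgaussian as required; a union bound gives $\Pr[\neg\mc{E}]\le 3\beta$, and $\mc{E}$ needs only $n\gtrsim\frac{d}{\eps}\log\frac{d}{\delta\beta}$, which is subsumed by the stated bound. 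On $\mc{E}$: the box contains the convex hull of the data, hence every point of positive Tukey depth, so the $O(1)$-Mahalanobis-radius neighborhood of $\mu$ housing the Tukey median lies inside $[-R,R]^d$ (as does each $x^\Delta_i$, by the $+\alpha'$ slack in $R$); and since $\hat\lambda_d\le 4\lambda_d(\Sigma)$, the discretization lemma gives $|T_P(y)-T_{P_\Delta}(y)|\le\alpha_3$ for all $y\in[-R,R]^d$ with $\alpha_3=O(\alpha/\sqrt d)=o(\alpha)$, where $P_\Delta$ is $\cN(\mu,\Sigma)$ snapped to $\cQ_{\alpha'}$.

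Now I would run the four stages on $x^\Delta\sim P_\Delta^{\otimes n}$. The distribution-free uniform-convergence bound underlying Lemma~\ref{lemma:uniform_convergence}, applied to $P_\Delta$, gives $|T_{x^\Delta}(y)-T_{P_\Delta}(y)|\le\alpha_1$ for all $y$ with probability $1-\beta$ once $n\gtrsim\frac{d+\log(1/\beta)}{\alpha_1^2}$; setting $\alpha_1=c_0\alpha$ for a small absolute constant $c_0$ and combining with the previous paragraph, $|T_{x^\Delta}(y)-T_P(y)|\le\tilde\alpha_1:=\alpha_1+\alpha_3\le\frac1{10}$ on $[-R,R]^d$, which is precisely the hypothesis of the grid analogs of Lemmas~\ref{lemma:general_ratio_of_volumes}, \ref{lemma:typically_far} and~\ref{lemma:exp_is_good} (reading ``$\mathrm{Vol}$'' as counting measure on $\cQ_{\alpha'}$: the grid cells have Mahalanobis diameter $O(\alpha/\sqrt d)$, so the counting-measure volume ratios agree with the Lebesgue ones up to an $e^{O(d)}$ factor absorbed into the constants). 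Hence $x^\Delta$ is $\frac{\log(1/2\beta\delta)}{\eps}$-far from $\mathtt{UNSAFE}_{(\eps,\delta,n/4)}$ once $n\gtrsim\frac{d+\log(1/\beta\delta)}{\eps}$, so the Laplace test of $\TD{\eps,\delta,n/4}$ returns $\mathtt{FAIL}$ with probability at most $\frac32\beta$; and, conditioned on not failing, taking $\alpha_2=4\tilde\alpha_1$ the analog of Lemma~\ref{lemma:exp_is_good} yields $T_{x^\Delta}(\hat\mu)\ge\frac12-\alpha_2$ except with probability $\beta$ once $n\gtrsim\frac{d\log(1/\alpha)+\log(1/\beta)}{\alpha\eps}$. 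On the intersection of these events — which holds with probability at least $1-6\beta$, namely $1-3\beta$ from the Theorem~\ref{thm:tukey_main} analysis and a further $3\beta$ for $\mc{E}$ — we get $T_P(\hat\mu)\ge\frac12-\alpha_2-\tilde\alpha_1=\frac12-5\tilde\alpha_1$, so Proposition~\ref{prop:tukeycdf} together with $\cdf(-z)\le\frac12-0.84z/(2\sqrt{2})$ (as in the proof of Theorem~\ref{thm:tukey_main}) gives $\|\hat\mu-\mu\|_\Sigma=O(\tilde\alpha_1)=O(c_0\alpha)$, at most $\alpha$ for $c_0$ a small enough absolute constant; collecting the displayed sample-size requirements (and the subsumed $\tilde O(d/\eps)$ for Stage~1) yields the claimed $n$.

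I expect the main work — beyond bookkeeping, since the lemmas above are already in place — to be choosing the grid resolution so that two demands hold simultaneously: $\cQ_{\alpha'}$ must be fine enough in Mahalanobis units that both the Tukey-depth discretization error is $o(\alpha)$ and the counting-measure volume ratios stay close to the Lebesgue ones, yet the only available handle on Mahalanobis scale is the private, constant-factor-accurate estimate $\hat\lambda_d$ of the smallest eigenvalue. This is exactly why $\alpha'$ is tied to $\sqrt{\hat\lambda_d}$ and carries the extra $1/d$ rather than $1/\sqrt d$ factor, and why Stage~1 is needed at all; everything else is the Theorem~\ref{thm:tukey_main} argument re-run with $\alpha_1$ replaced by $\alpha_1+\alpha_3$.
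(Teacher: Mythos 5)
Your proposal is correct and follows essentially the same route as the paper: condition on the Stage-1 estimates and bounding box succeeding (an extra $3\beta$), bound the depth gap between $P$ and the snapped distribution $P_\Delta$ by $\alpha_3=o(\alpha)$ via the choice $\alpha'=O(\alpha\sqrt{\hat\lambda_d}/d)$, apply uniform convergence to $P_\Delta$, and re-run the Theorem~\ref{thm:tukey_main} argument with $\alpha_1$ replaced by $\alpha_1+\alpha_3$. Your explicit remark that the counting-measure volume ratios over $\cQ_{\alpha'}$ match the Lebesgue ones up to a factor absorbable into the $e^{O(d)}$ slack is a point the paper leaves implicit, and is a welcome addition.
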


\updatedone

\subsection{A Finite Implementation of Algorithm~\ref{alg:main}}
The finite implementation requires that we know the target accuracy $\alpha$ as well as an upper bound for the constant that goes into the subgaussian parameter $c_{s}$; note that this not the case for Gaussian data, as $c_{s}=1$.

\begin{algorithm}[H] \caption{Finite Implementation of $\gaussianalg(x)$}\label{alg:discrete_main}
\begin{algorithmic}[1]
\Require{Data set $x = (x_1,\dots,x_{3n})^T \in \mathbb{R}^{3n\times d}$.
Privacy parameters: $\eps,\delta>0$. Accuracy parameters: $\alpha, \beta>0$. Subgaussian constant $c_s$.}
\Algphase{Stage 1: Range estimates}
\State Construct data set $u\in\mathbb{R}^{n\times d}$ where $u_i=(x_i-x_{i+n})/\sqrt{2}$, $i\in[n]$.
\State $\hat\lambda_1\gets\eigen(u,1)$ \Comment{private estimate of largest eigenvalue}
\State $\hat\lambda_d\gets\eigen(u,d)$ \Comment{private estimate of smallest eigenvalue}
\State $\sigma^2\gets 4c_s\hat\lambda_1$ \Comment{upper bound on variance in every direction}
\State $\{X_{\min}^j,X_{\max}^j\}_{j\in[d]}\gets\range(x,\sigma^2)$
\State Set
    \begin{equation}
        \alpha'\gets O\left(\alpha\cdot \min\left\{\frac{\hat{\lambda}_d}{\hat{\lambda}_1} \cdot \frac{1}{d^{3/2}\log (n/\beta)}, \sqrt{\frac{\hat{\lambda}_d}{d}}\right\}\right).
            \label{eq:gaussian_compute_alpha}
    \end{equation}
\State $R\gets \alpha' + \max_j \max\{|X_{\max}^j|, |X_{\min}^j|\}$
\Algphase{Stage 2: Discretize}
\State $\cQ_{\alpha'}\gets$ $\alpha'$-fine grid over $[-R,R]^d$.
\State For all $i\in[n]$, let $x^{\Delta}_i=\argmin_{p\in\mc{Q}_{\alpha'}}\|p-x_i\|_1$.
\Algphase{Stage 3: Run the algorithm.}
\State Run $\gaussianalg(x^{\Delta})$.
\end{algorithmic}
\end{algorithm}

Having constructed this grid, the projection step of Algorithm $\gaussianalg$ searches over all ``good'' data sets of size $3n$ whose data points belong on the grid $\mc{Q}_{\alpha'}$, that is, line~\ref{step:projection} of $\gaussianalg$ (Algorithm~\ref{alg:main}) is replaced by $\tx\gets \argmin_{z\in\mc{G}(\lambda)\cap \mc{Q}_{\alpha'}}D_H(\bx,z)$.

\mypar{Privacy.} 
Since the discretization process doesn't affect the privacy analysis of Algorithm~\ref{alg:main}, the overall privacy follows from composition and the privacy analysis in~\ref{sec:technical_tools}.

\mypar{Computation.} 
The bottleneck in the algorithm above is the projection step, which is searching over all data sets on the grid $\mc{Q}_{\alpha'}$, checking for each whether it is in the good set $\mc{G}(\lambda)$, and calculating its Hamming distance to $x$. 
For each data set, both these operations have running time polynomial in $d$ and $n$. 
However, the number of data sets in the grid is roughly $\left(\frac{2R}{\alpha'}\right)^{3dn}=\left(\frac{d\kappa(\|\mu\|_{\infty}+\sqrt{\lambda_1})}{\alpha}\right)^{O(dn)}$, where $\kappa=\lambda_1/\lambda_d$ is the condition number of the covariance matrix, making this algorithm computationally inefficient.

\mypar{Accuracy.}
It suffices to show that the discretized data set is in the good set (Lemma~\ref{lemma:discrete_gaussiangood}) and that the discretization adds negligible error (Lemma~\ref{lem:discrete_error}).

First, observe that discretization (which happens coordinate-wise) has a limited effect in $\ell_2$ norm.
For each $i\in [3n]$, let $x_i^{\Delta} = x_i + \gamma_i$.
We snap each coordinate of $x_i$ to the nearest integer multiple of $\alpha'$, so $\| \gamma_i \|_{\infty} \le \alpha'/2$, which implies $\|\gamma_i \|_2^2 \le d (\alpha'/2)^2$ and thus $\|\gamma_i\|_2 \le \frac{\sqrt{d}\alpha'}{2}$.
We now show that the discretized empirical covariance matrix is a close approximation to the original. We gather the following assumptions which we later show hold with high probability.
\begin{assumption}\label{assumption}
Suppose all the following conditions hold:
\begin{enumerate}
    \item Our estimates $\hat{\lambda}_1,\hat{\lambda}_d$ have constants $c_1,c_2$ such that $c_1 \hat{\lambda}_1\ge \lambda_1(\Sigma)$ and $c_2\lambda_d(\Sigma)\leq \hat{\lambda}_d$.
    \item Our estimate $\hat{\lambda}_d$ has constant $c_3$ such that $\hat{\lambda}_d\le  c_3\lambda_d(\Sigma_x)$.
    \item For all $i\in[3n]$, we have $\|x_i\|_{\infty}\leq R$.
    \item For all $i\in[3n]$, we have $\|x_i - \mu \|_2 \le c_4 \lambda_1(\Sigma) d \log(n/\beta)$ for some constant $c_4$.
    \end{enumerate}
\end{assumption}

\begin{lemma}[Covariance after discretization]\label{lemma:discrete_covariance}
    Suppose Assumption~\ref{assumption} holds. 
    Then $(1-c_5)\Sigma_x \preceq \Sigma_{x^{\Delta}} \preceq (1+ c_5)\Sigma_x$ for some constant $c_5\in(0,1)$.
\end{lemma}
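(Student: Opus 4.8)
The plan is to write $\Sigma_{x^\Delta} = \frac{1}{2n}\sum_{i=1}^n (x_i^\Delta - x_{i+n}^\Delta)(x_i^\Delta - x_{i+n}^\Delta)^T$ and expand the difference $\Sigma_{x^\Delta} - \Sigma_x$ into a sum of ``cross terms.'' Writing $v_i = x_i - x_{i+n}$ and $w_i = \gamma_i - \gamma_{i+n}$ (so that $x_i^\Delta - x_{i+n}^\Delta = v_i + w_i$), we get
\begin{equation*}
    \Sigma_{x^\Delta} - \Sigma_x = \frac{1}{2n}\sum_{i=1}^n \left( v_i w_i^T + w_i v_i^T + w_i w_i^T \right).
\end{equation*}
The goal is to show this error matrix $E$ satisfies $\|E\|_2 \le c_5 \lambda_d(\Sigma_x)$, since combined with Assumption~\ref{assumption}(2) (which gives $\lambda_d(\Sigma_x) \ge \hat\lambda_d/c_3$) and the choice of $\alpha'$ in Equation~\eqref{eq:gaussian_compute_alpha}, this yields $-c_5' \Sigma_x \preceq E \preceq c_5' \Sigma_x$ for a small constant, which is exactly the claimed spectral sandwich.

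The key steps, in order: (i) bound $\|w_i\|_2 \le \sqrt{d}\,\alpha'$ using $\|\gamma_i\|_2 \le \sqrt{d}\alpha'/2$ and the triangle inequality. (ii) Bound $\|v_i\|_2$: by Assumption~\ref{assumption}(4), $\|x_i - \mu\|_2 \le c_4 \lambda_1(\Sigma) d \log(n/\beta)$ for every $i$ (I would double-check whether the intended bound is $\lesssim \sqrt{\lambda_1(\Sigma) d\log(n/\beta)}$, since that is what the concentration lemmas deliver; either way it is $\mathrm{poly}(d,\log(n/\beta))\cdot\sqrt{\lambda_1(\Sigma)}$ up to constants), so $\|v_i\|_2 \le 2\max_j \|x_j - \mu\|_2$. (iii) Apply the triangle inequality and Proposition~\ref{prop:outer_product_eigenvalue} ($\|uv^T\|_2 \le \|u\|_2\|v\|_2$) termwise:
\begin{equation*}
    \|E\|_2 \le \frac{1}{2n}\sum_{i=1}^n \left( 2\|v_i\|_2\|w_i\|_2 + \|w_i\|_2^2 \right) \le \max_i \left( \|v_i\|_2 \|w_i\|_2 + \tfrac12\|w_i\|_2^2\right) = O\left( \sqrt{\lambda_1(\Sigma)}\cdot d^{3/2}\log(n/\beta)\cdot \alpha' \right),
\end{equation*}
using that $\|w_i\|_2 = O(\sqrt d\,\alpha')$ is the smaller factor. (iv) Finally, substitute $\alpha' = O\!\left(\alpha \cdot \frac{\hat\lambda_d}{\hat\lambda_1}\cdot\frac{1}{d^{3/2}\log(n/\beta)}\right)$ from Equation~\eqref{eq:gaussian_compute_alpha}. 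Using Assumption~\ref{assumption}(1) to replace $\hat\lambda_1 \gtrsim \lambda_1(\Sigma)$ and $\hat\lambda_d \lesssim \lambda_d(\Sigma) \le \lambda_d(\Sigma_x)/(1-\gamma)$ (the spectral closeness of $\Sigma_x$ to $\Sigma$ from Lemma~\ref{lemma:concentration_facts}), the $\sqrt{\lambda_1(\Sigma)}\cdot d^{3/2}\log(n/\beta)$ factors cancel against $\alpha'$ and we are left with $\|E\|_2 = O(\alpha)\cdot \lambda_d(\Sigma_x)$, which is at most $c_5 \lambda_d(\Sigma_x)$ for $\alpha$ small (or after absorbing a further constant into the $O(\cdot)$ in Equation~\eqref{eq:gaussian_compute_alpha}). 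Since $c_5 \lambda_d(\Sigma_x) I \preceq c_5 \Sigma_x$, this gives the conclusion.

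The main obstacle is bookkeeping the chain of constants and making sure the definition of $\alpha'$ in Equation~\eqref{eq:gaussian_compute_alpha} is calibrated to exactly kill the $\hat\lambda_1/\hat\lambda_d$ and $d^{3/2}\log(n/\beta)$ blowup coming from the cross terms $v_i w_i^T$; the $\hat\lambda_d/\hat\lambda_1$ ratio in $\alpha'$ is there precisely because the cross term scales like $\sqrt{\lambda_1}$ (from $\|v_i\|_2$) while the target scale is $\lambda_d$ (from $\lambda_d(\Sigma_x)$), so one factor of $\sqrt{\lambda_1/\lambda_d}$ and the $d^{3/2}\log(n/\beta)$ must be paid back by $\alpha'$ — and one must verify the conversions between $\hat\lambda_1,\hat\lambda_d$ (the private estimates), $\lambda_1(\Sigma),\lambda_d(\Sigma)$ (the population quantities), and $\lambda_1(\Sigma_x),\lambda_d(\Sigma_x)$ (the empirical quantities) all go through in the right direction, which is what Assumption~\ref{assumption}(1)--(2) and the concentration lemmas are for. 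The $w_iw_i^T$ term is strictly lower order and poses no difficulty.
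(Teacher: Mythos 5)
Your proposal is correct and takes essentially the same route as the paper's proof: the same expansion of $\Sigma_{x^{\Delta}}-\Sigma_x$ into the cross terms $g_ig_i^T+g_iu_i^T+u_ig_i^T$, the same termwise spectral-norm bound via Proposition~\ref{prop:outer_product_eigenvalue}, and the same substitution of $\alpha'$ combined with Assumption~\ref{assumption}(1)--(2) to conclude $\|A\|_2\le c_5\lambda_d(\Sigma_x)$. One small note: the clean cancellation in your step (iv) relies on the bound $\|x_i-\mu\|_2\le c_4\lambda_1(\Sigma)d\log(n/\beta)$ exactly as written in Assumption~\ref{assumption}(4) (without the square root), which is what the paper uses and what the $\hat\lambda_d/\hat\lambda_1$ factor in Equation~\eqref{eq:gaussian_compute_alpha} is calibrated against.
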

\begin{proof}
    Write $\Sigma_{x^{\Delta}} = \Sigma_{x} + A$.
    We want to prove 
    \begin{equation}
        -c_5\Sigma_{x} \preceq A \preceq c_5 \Sigma_x,
    \end{equation}
    for which it suffices to prove $\|A\|_2 \le c_5\lambda_d(\Sigma_x)$.
    
    Let $u_i=x_i - x_{i+n}$ be the vectors that make up the empirical covariance, and let $u_i'=u_i + g_i$ be the discretized version.
    We have $g_i = \gamma_i-\gamma_{i+n}$ and thus $\|g_i\|_2 \le 2\| \gamma_i\|_2 \le \sqrt{d}\alpha'$.
    Then
    \begin{align*}
        A = \Sigma_{x^{\Delta}} - \Sigma_{x} 
            &=  \left(\frac{1}{2n}\sum_{i=1}^{n} (u_i + g_i)(u_i + g_i)^T\right) - \left(\frac{1}{2n}\sum_{i=1}^{n} u_i u_i^T\right)\\
            &=  \left(\frac{1}{2n}\sum_{i=1}^{n} u_i u_i^T + g_ig_i^T + g_i u_i^T + u_ig_i^T\right) -\left(\frac{1}{2n}\sum_{i=1}^{n} u_i u_i^T\right)\\
            &= \left(\frac{1}{2n}\sum_{i=1}^{n} g_ig_i^T + g_i u_i^T + u_ig_i^T\right)
    \end{align*}
    Using the triangle inequality (and implicitly considering the maximum over $i$),
    we apply Fact~\ref{prop:outer_product_eigenvalue} to bound the spectral norm.
    \begin{align*}
        \|A\|_2 &\le \frac{1}{2}\left(\|g_ig_i^T\|_2 + \| g_iu_i^T\|_2  +\|u_i g_i^T\|_2\right) \\
        &\le \frac{1}{2}\left(\|g_i\|_2^2 + 2\| g_i\|_2\|u_i^T\|_2 \right) \\
        &\le \frac{d(\alpha')^2 + 4c_4 d^{3/2} \alpha' \lambda_1(\Sigma) \log (n/\beta)}{2} \tag{by assumption}.
    \end{align*}
    Since $\alpha'\le 1$, use $(\alpha')^2 \le \alpha'$ and simplify the upper bound to
     \begin{equation}\label{eq:discretizedl2norm}
        \|A\|_2 \le 3 c_4 d^{3/2} \lambda_1(\Sigma) \log (n/\beta) \cdot \alpha'.
     \end{equation}
    By our setting of $\alpha'$, 
    \begin{equation}
        \alpha' \le \frac{1}{3 c_1 c_3 c_4} \cdot \frac{\hat{\lambda}_d}{\hat{\lambda}_1} \cdot \frac{\alpha}{d^{3/2}\log (n/\beta)}.
    \end{equation}
    By assumption on our estimates for $\lambda_1(\Sigma)$ and $\lambda_d(\Sigma_x)$, and replacing the above $\alpha'$ in Eq.~\eqref{eq:discretizedl2norm}, we have that $\|A\|_2 =O(\lambda_d(\Sigma_x))$, so there exists indeed a $c_5$ such that $\|A\|_2\leq c_5\lambda_d(\Sigma_x)$.
\end{proof}

\begin{lemma}[Analog of Lemma~\ref{lem:gaussiangood} and Lemma~\ref{lem:subgaussiangood}]\label{lemma:discrete_gaussiangood}
    Suppose Assumption~\ref{assumption} holds. If $x\in\mc{G}(\lambda)$, 
    then $x^{\Delta}\in\mc{G}(\lambda')$ for some $\lambda'=O(\lambda)$.
\end{lemma}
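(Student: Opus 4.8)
The plan is to compare $x^{\Delta}$ to $x$ record by record, using the two facts that were set up precisely for this purpose: that $\Sigma_{x^{\Delta}}$ is spectrally within a constant factor of $\Sigma_x$ (Lemma~\ref{lemma:discrete_covariance}), and that the grid spacing $\alpha'$ chosen in Eq.~\eqref{eq:gaussian_compute_alpha} is so small that snapping a point to $\cQ_{\alpha'}$ barely moves it in the $\Sigma_x$-Mahalanobis metric. Note that the hypotheses of Lemma~\ref{lemma:discrete_covariance} are exactly Assumption~\ref{assumption}, so it applies directly.

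First I would record the elementary facts about the discretization noise. Writing $x_i^{\Delta}=x_i+\gamma_i$, each coordinate is snapped to the nearest integer multiple of $\alpha'$, so $\|\gamma_i\|_{\infty}\le \alpha'/2$ and hence $\|\gamma_i\|_2\le \sqrt{d}\,\alpha'/2$ for all $i\in[3n]$. Averaging over the last block gives $\mu_{x^{\Delta}}=\mu_x+\bar\gamma$ with $\bar\gamma=\tfrac1n\sum_{i=1}^n\gamma_{i+2n}$ and $\|\bar\gamma\|_2\le \sqrt{d}\,\alpha'/2$; setting $\eta_i:=\gamma_i-\bar\gamma$ we get $x_i^{\Delta}-\mu_{x^{\Delta}}=(x_i-\mu_x)+\eta_i$ with $\|\eta_i\|_2\le \sqrt{d}\,\alpha'$. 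By Lemma~\ref{lemma:discrete_covariance}, $(1-c_5)\Sigma_x\preceq \Sigma_{x^{\Delta}}\preceq (1+c_5)\Sigma_x$ for some constant $c_5\in(0,1)$; in particular $\Sigma_{x^{\Delta}}$ is invertible, which is the first requirement for $x^{\Delta}\in\mc{G}(\lambda')$.

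For the second requirement, Proposition~\ref{prop:distance_under_similar_matrices} (with $\Sigma_1=\Sigma_x$, $\Sigma_2=\Sigma_{x^{\Delta}}$, $\gamma=c_5$) gives $\|v\|_{\Sigma_{x^{\Delta}}}\le (1-c_5)^{-1/2}\|v\|_{\Sigma_x}$ for every $v$, so by the triangle inequality in $\|\cdot\|_{\Sigma_x}$,
\[
\|x_i^{\Delta}-\mu_{x^{\Delta}}\|_{\Sigma_{x^{\Delta}}}\ \le\ \frac{1}{\sqrt{1-c_5}}\Big(\|x_i-\mu_x\|_{\Sigma_x}+\|\eta_i\|_{\Sigma_x}\Big).
\]
The first term is at most $\sqrt{\lambda}$ because $x\in\mc{G}(\lambda)$. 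For the second, $\|\eta_i\|_{\Sigma_x}^2=\eta_i^T\Sigma_x^{-1}\eta_i\le \|\eta_i\|_2^2/\lambda_d(\Sigma_x)\le d(\alpha')^2/\lambda_d(\Sigma_x)$; using $\hat\lambda_d\le c_3\lambda_d(\Sigma_x)$ (item~2 of Assumption~\ref{assumption}) together with $\alpha'=O\big(\alpha\sqrt{\hat\lambda_d/d}\big)$, which is guaranteed by the second branch of the minimum in Eq.~\eqref{eq:gaussian_compute_alpha}, this is $O(\alpha^2)=O(1)$ since $\alpha\le 1$. Hence $\|x_i^{\Delta}-\mu_{x^{\Delta}}\|_{\Sigma_{x^{\Delta}}}^2\le (1-c_5)^{-1}\big(\sqrt{\lambda}+O(1)\big)^2=O(\lambda)$, where we used $\lambda=\Omega(1)$. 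Taking $\lambda'$ to be the hidden constant times $\lambda$ yields $x^{\Delta}\in\mc{G}(\lambda')$.

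I do not expect a genuine obstacle: the elaborate definition of $\alpha'$ in Eq.~\eqref{eq:gaussian_compute_alpha} exists precisely so that the two error contributions—the spectral perturbation of $\Sigma_x$ handled in Lemma~\ref{lemma:discrete_covariance}, and the Mahalanobis perturbation of each point handled above—are both negligible. The only point requiring a little care is that item~2 of Assumption~\ref{assumption} is exactly what converts the grid spacing, which is calibrated to the \emph{estimate} $\hat\lambda_d$, into a bound phrased in terms of $\lambda_d(\Sigma_x)$, the eigenvalue that actually controls $\|\cdot\|_{\Sigma_x}$.
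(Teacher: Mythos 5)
Your proof is correct and follows essentially the same route as the paper's: convert from $\|\cdot\|_{\Sigma_{x^{\Delta}}}$ to $\|\cdot\|_{\Sigma_x}$ via Proposition~\ref{prop:distance_under_similar_matrices} (justified by Lemma~\ref{lemma:discrete_covariance}), apply the triangle inequality, and bound the discretization shift through $\lambda_d(\Sigma_x)$, item~2 of Assumption~\ref{assumption}, and the second branch of the minimum defining $\alpha'$. Your only departures are cosmetic and for the better: you fold the paper's two error terms $\gamma_i$ and $\mu_x-\mu_{x^{\Delta}}$ into a single $\eta_i$, and you correctly carry $\sqrt{\lambda}$ (rather than $\lambda$) for the goodness term before squaring, which is what the final conclusion $\lambda'=O(\lambda)$ actually requires.
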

\begin{proof}
    Attack the definition of goodness directly.
    For all $i$,
    \begin{align*}
        \| x^{\Delta}_{i} - \mu_{x^{\Delta}}\|_{\Sigma_{x^{\Delta}}} 
            &\le c_6\| x^{\Delta}_{i} - \mu_{x^{\Delta}}\|_{\Sigma_{x}} \tag{by Proposition~\ref{prop:distance_under_similar_matrices} for $c_6=1/\sqrt{1-c_5}$}\\
            &= c_6\| x^{\Delta}_{i} - x_i + x_i - \mu_{x^{\Delta}} + \mu_x - \mu_x\|_{\Sigma_{x}} \\
            &= c_6\|(\gamma_i) +  (x_i- \mu_x) +(\mu_x - \mu_{x^{\Delta}}) \|_{\Sigma_{x}} \\
            &\le c_6\| x_i- \mu_x\|_{\Sigma_{x}} + c_6\|\gamma_i\|_{\Sigma_{x}} + c_6 \|\mu_x - \mu_{x^{\Delta}}\|_{\Sigma_{x}}.
    \end{align*}
    The first term is bounded by $c_6\lambda$, by our assumption that $x\in\mc{G}(\lambda)$.
    The second term we can bound because the $\gamma_i$'s have small $\ell_2$ norm.
    The third term is simply an average of the $\gamma_i$'s, so it will be bounded in the same manner.
    We have
    \begin{equation}
        \|\gamma_i \|_{\Sigma_x} 
            \le \frac{1}{\sqrt{\lambda_d(\Sigma_x)}}\cdot  \|\gamma_i\|_2 
            \le \sqrt{\frac{c_3}{\hat{\lambda}_d}} \cdot \frac{\sqrt{d}\alpha'}{2}.
    \end{equation}
    Together, then, we have for all $i$ that
    \begin{equation}
        \| x^{\Delta}_{i} - \mu_{x^{\Delta}}\|_{\Sigma_{x^{\Delta}}} \le c_6 \lambda + \frac{c_6\sqrt{c_3d}}{\sqrt{\hat{\lambda}_d}}\cdot \alpha'.
    \end{equation}
    By our setting of $\alpha'$, the second term is $O(1)$, thus $\lambda'=O(\lambda)$.
\end{proof}

The following lemma bounds the error of the estimator for input $x^{\Delta}$.
\begin{lemma}[Analog of Lemma~\ref{lem:error} and Lemma~\ref{lem:error_subgaussian}]\label{lem:discrete_error}
    Suppose Assumption~\ref{assumption} holds. Suppose that $x\sim P_{\mu,\Sigma}^{\otimes 3n}$, where $P_{\mu,\Sigma}$ is a distribution with mean $\mu$, covariance $\Sigma$, such that $P_{\mu,\Sigma}\in\mathrm{subG}(c_s\Sigma)$ for some constant $c_s>0$. 
    Let $n=\Omega(\max\{(d+\log(1/\beta)), k\lambda\})$, where parameters $k,\lambda$ are set as in Algorithm~\ref{alg:main}. 
    Then with probability at least $1-\beta$, for $\hat\mu\sim\mc{N}(\mu_{x^{\Delta}},C^2\Sigma_{x^{\Delta}})$, \[\|\hat\mu-\mu\|_{\Sigma}=O\left(\sqrt{\frac{d}{n}\cdot \log\frac{1}{\beta}}+\frac{d}{\eps^2n}\log^{2}\frac{1}{\delta\beta}\cdot\sqrt{\log\frac{n}{\beta}}+\alpha\right).\]
\end{lemma}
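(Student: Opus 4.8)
The plan is to reduce to the analysis underlying Lemma~\ref{lem:error} and Lemma~\ref{lem:error_subgaussian} via the triangle-inequality decomposition
\[
\|\hat\mu - \mu\|_{\Sigma} \;\le\; \|\hat\mu - \mu_{x^{\Delta}}\|_{\Sigma} \;+\; \|\mu_{x^{\Delta}} - \mu_x\|_{\Sigma} \;+\; \|\mu_x - \mu\|_{\Sigma},
\]
and bounding the three terms separately. The last term is exactly the empirical-mean error already handled in the proof of Lemma~\ref{lem:error_subgaussian}: by Lemma~\ref{lem:empiricalmeanconcentration_subgaussian}, with probability $1-\beta/4$ it is $O(\sqrt{(d/n)\log(1/\beta)})$. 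So the only genuinely new content is in the first two terms, and both follow from the discretization being fine enough, i.e.\ from the setting of $\alpha'$ in \eqref{eq:gaussian_compute_alpha} together with Assumption~\ref{assumption}.

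For the middle term, write $x^{\Delta}_i = x_i + \gamma_i$ with $\|\gamma_i\|_2 \le \sqrt{d}\,\alpha'/2$, so that $\mu_{x^{\Delta}} - \mu_x = \frac1n\sum_{i=1}^n \gamma_{i+2n}$ and hence $\|\mu_{x^{\Delta}} - \mu_x\|_2 \le \sqrt{d}\,\alpha'/2$. Converting to Mahalanobis distance with respect to $\Sigma$ costs a factor $1/\sqrt{\lambda_d(\Sigma)}$, so $\|\mu_{x^{\Delta}} - \mu_x\|_{\Sigma} \le \sqrt{d}\,\alpha'/(2\sqrt{\lambda_d(\Sigma)})$. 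Using Assumption~\ref{assumption}(2), $\hat\lambda_d \le c_3 \lambda_d(\Sigma_x)$, together with the high-probability spectral approximation $\Sigma_x \preceq (1+\gamma)\Sigma$ from Lemma~\ref{lemma:concentration_facts_subgaussian} and monotonicity of eigenvalues under the Loewner order, gives $\hat\lambda_d = O(\lambda_d(\Sigma))$; since $\alpha' = O(\alpha\sqrt{\hat\lambda_d/d})$ by the second branch of the minimum in \eqref{eq:gaussian_compute_alpha}, the middle term is $O(\alpha)$.

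For the first term, condition on the events that $\Sigma_x$ is a $(1\pm\gamma)$-spectral approximation of $\Sigma$ (Lemma~\ref{lemma:concentration_facts_subgaussian}, using $n = \Omega(d+\log(1/\beta))$) and that Assumption~\ref{assumption} holds, so that Lemma~\ref{lemma:discrete_covariance} yields $(1-c_5)\Sigma_x \preceq \Sigma_{x^{\Delta}} \preceq (1+c_5)\Sigma_x$; chaining the two approximations, $\Sigma_{x^{\Delta}}$ is a constant-factor spectral approximation of $\Sigma$. By Proposition~\ref{prop:distance_under_similar_matrices}, $\|\hat\mu - \mu_{x^{\Delta}}\|_{\Sigma} = O(\|\hat\mu - \mu_{x^{\Delta}}\|_{\Sigma_{x^{\Delta}}})$. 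Since $\hat\mu \sim \mc{N}(\mu_{x^{\Delta}}, C^2\Sigma_{x^{\Delta}})$, the vector $u := C^{-1}\Sigma_{x^{\Delta}}^{-1/2}(\hat\mu - \mu_{x^{\Delta}})$ is distributed as $\mc{N}(0,\id)$, so $\|\hat\mu - \mu_{x^{\Delta}}\|_{\Sigma_{x^{\Delta}}} = C\,\|u\|_2 = O(C\sqrt{d\log(1/\beta)})$ with probability $1-\beta/4$ by Gaussian norm concentration (note $C$ is well-defined since $n = \Omega(k\lambda)$). Substituting the value of $C$ from Algorithm~\ref{alg:main} and simplifying exactly as in the chain of equalities in the proof of Lemma~\ref{lem:error}, the first term is $O(\frac{d}{\eps^2 n}\log^{2}\frac{1}{\delta\beta}\sqrt{\log\frac{n}{\beta}})$.

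A union bound over these constantly many high-probability events gives the claimed bound with probability at least $1-\beta$. I expect the only delicate point to be the choice of $\alpha'$: it must be small enough that Lemma~\ref{lemma:discrete_covariance} applies — which forces the $\frac{\hat\lambda_d}{\hat\lambda_1}\cdot\frac{1}{d^{3/2}\log(n/\beta)}$ factor — and simultaneously small enough that the mean-shift term is $O(\alpha)$ — which forces the $\sqrt{\hat\lambda_d/d}$ factor; both are captured by the minimum in \eqref{eq:gaussian_compute_alpha}, and one must also track that the eigenvalue/range estimates enter only through the constants in Assumption~\ref{assumption}. Otherwise the argument is a routine rerun of the proof of Lemma~\ref{lem:error} with $x$ replaced by $x^{\Delta}$ throughout, plus the extra additive $O(\alpha)$ from the snapping step.
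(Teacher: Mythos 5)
Your proposal is correct and follows essentially the same route as the paper: the identical three-term triangle-inequality decomposition, with the outer terms handled exactly as in Lemma~\ref{lem:error}/Lemma~\ref{lem:error_subgaussian} (after passing through the spectral approximation $(1\pm c_5)\Sigma_x \preceq \Sigma_{x^\Delta}$ from Lemma~\ref{lemma:discrete_covariance} and Proposition~\ref{prop:distance_under_similar_matrices}), and the middle term bounded by $\sqrt{d}\,\alpha'/\sqrt{\lambda_d(\Sigma)} = O(\alpha)$ via the eigenvalue estimates in Assumption~\ref{assumption} and the choice of $\alpha'$. Your derivation of $\hat\lambda_d = O(\lambda_d(\Sigma))$ for the middle term is, if anything, slightly more careful about the direction of the inequality than the paper's own sketch.
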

\begin{proof}[Proof Sketch]
Because the error of discretization is negligible, the proof of this lemma is almost identical to that of Lemma~\ref{lem:error} (and of its extension to subgaussian data, Lemma~\ref{lem:error_subgaussian}).
To see this, apply the triangle inequality:
\begin{equation}
    \|\hat{\mu} - \mu \|_{\Sigma} \le \| \hat{\mu} - \mu_{x^{\Delta}}\|_{\Sigma} 
        +\| \mu_{x^{\Delta}} - \mu_x \|_{\Sigma} 
        +\| \mu_x -\mu \|_{\Sigma}.
\end{equation}
By our assumption, Lemma~\ref{lemma:discrete_gaussiangood} implies that $\exists c_5$ such that $(1-c_5)\Sigma_x\preceq \Sigma_{x^{\Delta}}\preceq (1+c_5)\Sigma_x$. By Proposition~\ref{prop:distance_under_similar_matrices}, the first term is then $\| \hat{\mu} - \mu_{x^{\Delta}}\|_{\Sigma}=O(\|\hat{\mu}-\mu_{x^{\Delta}}\|_{\Sigma_{x^{\Delta}}})$.
The analysis of this term, and that of the third, are independent of the discretization process. They follow by mean concentration of Gaussian and subgaussian data sets respectively and are included in the proof of Lemma~\ref{lem:error} and Lemma~\ref{lem:error_subgaussian}.
The middle term is bounded as follows
\begin{align}
    \| \mu_{x^{\Delta}} - \mu_x \|_{\Sigma} &\le \left(\lambda_{d}(\Sigma)\right)^{-1/2}\|\mu_{x^{\Delta}} -\mu_x\|_{2} \\
            &\le \sqrt{\frac{c_2}{\hat{\lambda}_d}} \frac{\sqrt{d}\alpha'}{2} \\
            & = O(\alpha).
\end{align}
Therefore, this incurs only a constant factor increase in the error bound.
\end{proof}

We now state the accuracy guarantees of our finite implementation.
\begin{thm}[Accuracy of Algorithm~\ref{alg:discrete_main}]\label{th:discrete_main}
There exists an absolute constant $C$ such that, 
    for any $0<\alpha, \beta,\eps,\delta < 1$, mean $\mu$, and positive definite $\Sigma$, 
    if $x\sim P_{\mu,\Sigma}^{\otimes n}$, where $P_{\mu,\Sigma}\in\mathrm{subG}(c_s\Sigma)$ for some constant $c_s>0$, and
    \begin{equation}
        n \ge C\left(\frac{d}{\alpha^2}\log\frac{1}{\beta} + \frac{d}{\alpha\eps^2}\log^3\frac{1}{\delta\beta}\cdot\log\frac{d\log(1/\delta\beta)}{\alpha\eps} \right),
    \end{equation}
    then with probability at least $1-7\beta$, Algorithm~\ref{alg:main} returns $\gaussianalg(x)=\hat\mu$ such that $\|\hat\mu-\mu\|_{\Sigma}\leq\alpha$.
\end{thm}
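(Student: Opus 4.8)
The plan is to reduce Theorem~\ref{th:discrete_main} to the accuracy analysis already carried out for Algorithm~\ref{alg:main} in Theorem~\ref{th:main_subgaussian}, by showing that (i) the three preprocessing stages of Algorithm~\ref{alg:discrete_main} succeed with high probability, so that Assumption~\ref{assumption} holds, and (ii) snapping the data onto the grid $\mc{Q}_{\alpha'}$ neither destroys goodness nor adds more than $O(\alpha)$ error. Throughout, one first observes that the stated sample complexity $n \ge C\big(\tfrac{d}{\alpha^2}\log\tfrac1\beta + \tfrac{d}{\alpha\eps^2}\log^3\tfrac1{\delta\beta}\cdot\log\tfrac{d\log(1/\delta\beta)}{\alpha\eps}\big)$ dominates, up to the constant $C$, every sample-size requirement invoked below: in particular it dominates $n = \Omega(d\log(1/\delta\beta)/\eps)$ (needed by the eigenvalue and range estimators) and $n = \Omega(\max\{d+\log(1/\beta),\, k\lambda\})$ (the condition analyzed in the proof of Theorem~\ref{th:main_subgaussian}), so all the cited lemmas apply.

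First I would condition on the success of Stage~1. By Lemma~\ref{lem:priv_eigenvalue_est} applied to the paired data set $u$, each call to $\eigen$ returns, with probability $1-\beta$, an estimate with $\tfrac14\lambda_k(\Sigma) \le \hat\lambda_k \le 4\lambda_k(\Sigma)$, which gives part~1 of Assumption~\ref{assumption}. By Lemma~\ref{lemma:concentration_facts_subgaussian}, with probability $1-\beta$ the empirical covariance $\Sigma_x$ is a constant-factor spectral approximation of $\Sigma$ and every $\|x_i-\mu\|_2^2 = O(\lambda_1(\Sigma)\,d\log(n/\beta))$; combined with the eigenvalue bounds this yields parts~2 and~4. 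Finally, plugging $\sigma^2 = 4c_s\hat\lambda_1 \ge \lambda_1(\Sigma) \ge u^T\Sigma u$ (for unit $u$) into Lemma~\ref{lem:priv_range_est} shows that with probability $1-2\beta$ the returned box contains every coordinate of every $x_i$, i.e. $\|x_i\|_\infty \le R$, which is part~3. Hence Assumption~\ref{assumption} holds off an event of probability $O(\beta)$.

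Next I would handle Stages~2 and~3. By Lemma~\ref{lem:subgaussiangood} applied to the (internally permuted, still i.i.d.) data, with probability $1-\beta$ the data set lies in $\mc{G}(\lambda)$ for $\lambda = O(d\log(n/\beta))$; then, under Assumption~\ref{assumption}, Lemma~\ref{lemma:discrete_gaussiangood} gives $x^\Delta \in \mc{G}(\lambda')$ with $\lambda' = O(\lambda)$. Running $\gaussianalg(x^\Delta)$ in Stage~3: it does not fail at line~\ref{step:samplesizecond}, by the same computation as in the proof of Theorem~\ref{th:main_subgaussian} with $\lambda'$ in place of $\lambda$; the distance computed at line~\ref{step:hamdist} is $h=0$, so with probability $1-\beta$ over the Laplace noise it passes line~\ref{step:fail}; and the projection at line~\ref{step:projection} leaves the data unchanged, so the output is $\hat\mu \sim \mc{N}(\mu_{x^\Delta}, C^2\Sigma_{x^\Delta})$. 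Applying Lemma~\ref{lem:discrete_error}, with probability $1-\beta$ we get $\|\hat\mu-\mu\|_\Sigma = O\big(\sqrt{(d/n)\log(1/\beta)} + (d/\eps^2 n)\log^2(1/\delta\beta)\sqrt{\log(n/\beta)} + \alpha\big)$; for the stated sample complexity the first two terms are each $O(\alpha)$, so after adjusting $C$ the bound is at most $\alpha$. A union bound over the failure events — the two eigenvalue estimates, the range estimate, the subgaussian concentration event, the Laplace test, and the error concentration — gives the claimed $1-7\beta$. Privacy is inherited verbatim: the composition of Lemmas~\ref{lem:priv_eigenvalue_est} and~\ref{lem:priv_range_est} with the privacy of Algorithm~\ref{alg:main}, since discretization is post-processing.

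The main obstacle is the bookkeeping around the data-dependent grid resolution $\alpha'$ in Eq.~\eqref{eq:gaussian_compute_alpha}: it is set from the \emph{private} (hence only approximately correct) estimates $\hat\lambda_1,\hat\lambda_d$, and one must verify — which is exactly the content of Lemmas~\ref{lemma:discrete_covariance}, \ref{lemma:discrete_gaussiangood}, and~\ref{lem:discrete_error} under Assumption~\ref{assumption} — that this single choice of $\alpha'$ is simultaneously small enough that $\Sigma_{x^\Delta}$ stays within a constant factor of $\Sigma_x$, the $\Sigma_{x^\Delta}$-Mahalanobis norms of the snapped points stay $O(\lambda)$, and the displacement $\|\mu_{x^\Delta}-\mu_x\|_\Sigma$ is $O(\alpha)$, while remaining a positive spacing. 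Once those facts are in hand, the remainder is a routine union bound over the preprocessing failures layered on top of the accuracy analysis of Theorem~\ref{th:main_subgaussian}.
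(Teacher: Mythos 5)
Your proposal is correct and follows essentially the same route as the paper: verify the four items of Assumption~\ref{assumption} from the private eigenvalue/range estimates and the subgaussian concentration facts, then rerun the accuracy analysis of Theorem~\ref{th:main_subgaussian} with Lemmas~\ref{lemma:discrete_gaussiangood} and~\ref{lem:discrete_error} substituted for their non-discretized counterparts, and close with a union bound. The only difference is presentational — the paper states the reduction tersely and spends its ink on Assumption~\ref{assumption}, whereas you spell out the Stage-2/3 steps explicitly — and your identification of the data-dependent choice of $\alpha'$ as the one delicate point matches where the paper concentrates its effort.
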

The proof of the theorem follows exactly the same steps as its counterparts for Gaussian and subgaussian distributions in Sections~\ref{sec:gaussian} and~\ref{app:subgaussian} respectively, combined with the analogous lemmas above. It remains to argue that Assumption~\ref{assumption} holds with probability at least $1-4\beta$, and then the theorem would follow by a union bound.

Note that if $x\sim P_{\mu,\Sigma}$ where $P_{\mu,\Sigma}$ has mean $\mu$, covariance $\Sigma$ and is subgaussian with parameter $c_s\Sigma$, then every coordinate is also subgaussian with parameter $c_s\lambda_1(\Sigma)$. By the guarantees of $\eigen$ (Lemma~\ref{lem:priv_eigenvalue_est}), with probability $1-2\beta$, the eigenvalue estimates are good approximations of the true eigenvalues, that is, $\frac{\lambda_1(\Sigma)}{4}\leq \hat\lambda_1\leq 4\lambda_1(\Sigma)$ and $\frac{\lambda_d(\Sigma)}{4}\leq \hat\lambda_1\leq 4\lambda_d(\Sigma)$. By substituting this bound, it follows that in every coordinate $x$ is $4c_s\hat\lambda_1$-subgaussian. Applying the guarantees of $\range$ (Lemma~\ref{lem:priv_range_est}) and by union bound and our choice of $R$, we have that with probability at least $1-3\beta$, the size of the $d$-dimensional box that encloses our grid is set so that all points $x_i$ of the original data set as well as all points $x_i^{\Delta}$ of the discretized dataset belong in the box, that is, $\|x_i\|_{\infty}\leq R$ and $\|x_i^{\Delta}\|_{\infty}\leq R$. Therefore, with probability at least $1-3\beta$, item 1 and 3 of Assumption~\ref{assumption} hold (for $c_1=4$ and $c_2=1/4$).

Moreover, by Lemma~\ref{lemma:concentration_facts_subgaussian}, with probability $1-\beta$, we have that for all $i\in[3n]$, $\|x_i-\mu\|_2\leq c_4\lambda_1(\Sigma)d\log(n/\beta)$ for some constant $c_4$ and that if $n=\Omega(d+\log(1/\beta))$, then $\lambda_d(\Sigma)=\Theta(\lambda_d(\Sigma_x))$. 
By the latter and since $\hat\lambda_d\leq 4\lambda_d(\Sigma)$, we have that for some constant $c_3$, $\hat\lambda_d\leq \frac{1}{c_3}\lambda_d(\Sigma_x)$. Since for the stated sample complexity $n$ satisfies this condition, we have that items 2 and 4 of Assumption~\ref{assumption} hold as well.

    \section{Additional Proofs}\label{sec:proofs}

\subsection{Tukey Depth Mechanism}

The mechanism fits into the well-known propose-test-release framework of \cite{DworkL09}; privacy follows from a standard calculation.
We include it here for completeness.
\begin{proposition}\label{prop:rem_privacy}
    Algorithm~\ref{alg:restrictedexponential} is $(2\eps, e^{\eps}\delta)$-differentially private.
\end{proposition}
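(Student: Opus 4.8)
The plan is to use the standard propose-test-release (PTR) template of Dwork and Lei. We are analyzing Algorithm~\ref{alg:restrictedexponential}, which first computes $h = D_H(x, \mathtt{UNSAFE}_{(\eps,\delta,t)})$, then releases $\mathtt{FAIL}$ unless $h + z \ge \frac{\log(1/2\delta)}{\eps}$ for $z \sim \mathrm{Lap}(1/\eps)$, and otherwise samples $\hat y \sim \M{\eps,t}(x)$. There are two sources of privacy loss to account for: the noisy threshold test on $h$, and the sampling step itself. The key structural fact is that $h = D_H(x, \mathtt{UNSAFE}_{(\eps,\delta,t)})$ has global sensitivity $1$ (Hamming distance to a fixed set is $1$-Lipschitz in the data set), so the test $\mathbf 1[h + z \ge \tau]$ with $z\sim\mathrm{Lap}(1/\eps)$ is $(\eps,0)$-DP by the Laplace mechanism (Lemma~\ref{lem:laplace}) plus post-processing.

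First I would handle the event that Algorithm~\ref{alg:restrictedexponential} outputs $\mathtt{FAIL}$: since the $\mathtt{FAIL}/\text{not-}\mathtt{FAIL}$ decision is a post-processing of $h + z$, for any neighbors $x \sim x'$ we have $\Pr[\cA(x) = \mathtt{FAIL}] \le e^{\eps}\Pr[\cA(x') = \mathtt{FAIL}]$ and likewise in the other direction, with no $\delta$. Second, I would condition on not failing. The crucial claim is: if the test passes on $x$ with non-negligible probability, then $x$ must be "close" to being safe; more precisely, whenever $h < \frac{\log(1/2\delta)}{\eps}$ is needed for a nonzero pass probability, but when the test does pass we want to argue the sampling distributions are comparable. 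The clean way: fix neighbors $x, x'$ and an output event $S \subseteq \cY$. Write $\Pr[\cA(x) \in S] = \Pr[\text{pass on } x] \cdot \Pr[\M{\eps,t}(x) \in S]$. Split on whether $x \in \mathtt{SAFE}_{(\eps,\delta,t)}$ or not. If $x$ is safe, then by definition $\M{\eps,t}(x) \approx_{\eps,\delta} \M{\eps,t}(x')$, and combined with the $(\eps,0)$ bound on the pass probabilities, the product is $(2\eps, e^{\eps}\delta)$-indistinguishable (the $\delta$ picks up a factor $e^{\eps}$ from multiplying through the pass-probability ratio). If $x$ is \emph{not} safe, then $D_H(x, \mathtt{UNSAFE}) = 0$, so $h = 0$, and $\Pr[\text{pass on }x] = \Pr[z \ge \frac{\log(1/2\delta)}{\eps}] = \frac12 e^{-\log(1/2\delta)} = \delta$ by the Laplace CDF; hence $\Pr[\cA(x) \in S \setminus \{\mathtt{FAIL}\}] \le \delta \le e^{\eps}\delta$, which is absorbed into the additive term, and the inequality holds trivially. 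Symmetrically for $x'$.

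The main obstacle is bookkeeping the constants and the interaction between the additive $\delta$ from safety and the multiplicative $e^{\eps}$ from the pass-probability ratio, so that the final bound is exactly $(2\eps, e^{\eps}\delta)$ and not something slightly worse; one has to be careful that the threshold $\frac{\log(1/2\delta)}{\eps}$ is chosen precisely so that the unsafe-but-passing probability is at most $\delta$. The other subtlety is that "safe" is defined relative to parameters $(\eps,\delta,t)$ while the PTR test uses distance to $\mathtt{UNSAFE}_{(\eps,\delta,t)}$ with the \emph{same} parameters, so there is no parameter mismatch to reconcile — but I would double-check that the sensitivity-$1$ claim for $q$ (stated in the algorithm's requirements) is what makes $\M{\eps,t}$ well-behaved under the safety definition. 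Everything else is a routine case analysis on the four combinations (pass/fail $\times$ $x$ safe/unsafe) together with the Laplace tail computation.
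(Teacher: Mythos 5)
Your proof is correct and takes essentially the same propose-test-release route as the paper: the sensitivity-$1$ Laplace test on $h$, the threshold calibrated so that an unsafe data set (with $h=0$) passes with probability exactly $\delta$, and the definition of safety supplying $(\eps,\delta)$-indistinguishability of the restricted samplers. The only cosmetic difference is that you split cases on whether $x\in\mathtt{SAFE}_{(\eps,\delta,t)}$, whereas the paper splits on whether $\M{\eps,t}(x)\approx_{\eps,\delta}\M{\eps,t}(x')$ for the particular neighbor; both splits yield the same $(2\eps, e^{\eps}\delta)$ bookkeeping.
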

\begin{proof} 
    Take adjacent $x, x'$ and fix some subset $B\subseteq \cY\cup \{\mathtt{FAIL}\}$.
    As shorthand, let $F = \{\mathtt{FAIL}\}$ and write $\cA$ in place of $\TD{\eps,\delta,t}$.
    
    We proceed by cases.
    Suppose first that $\M{\eps,t}(x)\napprox_{\eps,\delta}\M{\eps,t}(x')$, so running the restricted sampler may reveal too much.
    Then both $x, x'\in \mathtt{UNSAFE}$, and for both we compute distance $h=0$ to unsafety.
    Thus
    \begin{align*}
        \Pr[\cA(x)\in B] &= \Pr[\cA(x)\in B \cap F] + \Pr[\cA(x)\in B\setminus F] \\
            &\le \Pr[\cA(x)\in B\cap F] + \Pr[\cA(x)\notin F] \\
            &\le  \Pr[\cA(x')\in B] + \Pr[\cA(x')\notin F],
    \end{align*}
    where the last line follows from the facts that both $x$ and $x'$ have the same probability of failing and that $B\cap F\subseteq B$.
    The threshold $\frac{\log(1/2\delta)}{\eps}$ is set so that the probability a Laplace random variable $\mathrm{Lap}(1/\eps)$ exceeds it is $\Pr[\cA(x)\notin F]= \delta$.
    
    Now suppose $\M{\eps,t}(x)\approx_{\eps,\delta}\M{\eps,t}(x')$.
    Since $x$ and $x'$ are adjacent, the distances-to-unsafety we compute under $x$ and $x'$ can differ by at most 1, so the probability of failing can differ by at most a factor of $e^\eps$.
    We break down the probability similarly:
    \begin{align*}
        \Pr[\cA(x)\in B] 
        &= \Pr[\cA(x)\in B\cap F]+ \Pr[\cA(x)\in B\setminus F]\\
        &= \Pr[\cA(x)\in B\mid \cA(x) \in F]\Pr[\cA(x) \in F] \\
        &\quad+ \Pr[\cA(x)\in B\mid  \cA(x) \notin F]\Pr[\cA(x)\notin F] \\
        &\le e^{\eps}\biggl(\Pr[\cA(x)\in B\mid \cA(x) \in F]\Pr[\cA(x') \in F] \\
        &\quad+ \Pr[\cA(x)\in B\mid  \cA(x) \notin F]\Pr[\cA(x')\notin F] \biggr).
    \end{align*}
    Since $B$ either contains $\mathtt{FAIL}$ or it doesn't, we have $\Pr[\cA(x)\in B\mid \cA(x) \in F]=\Pr[\cA(x')\in B\mid \cA(x') \in F]$.
    Furthermore, since not failing means we run $\M{\eps,t}(x)$, we have
    \begin{align*}
        \Pr[\cA(x)\in B] &\le e^{\eps}\left(\Pr[\cA(x')\in B\cap F] + \Pr[\M{\eps,t}(x)\in B]\Pr[\cA(x')\notin F]\right) \\
        &\le e^{\eps}\left(\Pr[\cA(x')\in B\cap F] + \left(e^\eps \Pr[\M{\eps,t}(x')\in B]+\delta\right)\Pr[\cA(x')\notin F]\right),
    \end{align*}
    applying our assumption that $\M{\eps,t}(x)\approx_{\eps,\delta}\M{\eps,t}(x')$.
    To finish the proof, we simplify:
    \begin{align*}
        \Pr[\cA(x)\in B] 
        &\le e^{\eps}\Pr[\cA(x')\in B\cap F] + e^{\eps}e^{\eps} \Pr[\M{\eps,t}(x')\in B]\Pr[\cA(x')\notin F]+e^{\eps}\delta \Pr[\cA(x')\notin F] \\
        &= e^{\eps}\Pr[\cA(x')\in B\cap F] + e^{2\eps} \Pr[\cA(x')\in B\setminus F]+e^{\eps}\delta \Pr[\cA(x')\notin F] \\
        &\le e^{2\eps}\Pr[\cA(x')\in B]+e^{\eps}\delta.
    \end{align*}
    Since $e^\eps\ge 1$, these parameters are also an upper bound for the first case.
    The fact that $\Pr[\cA(x')\in B]\le e^{2\eps} \Pr[\cA(x)\in B] + e^{\eps}\delta$ follows by an identical argument.
\end{proof}

\begin{proposition}[Restatement of Proposition~\ref{prop:tukeycdf}]\label{prop:tukeycdf_again}
    For any $\mu, y\in \mathbb{R}^d$ and positive definite $\Sigma$, $T_{\cN(\mu,\Sigma)}(y)=T_P(y)=\cdf(-\|y-\mu\|_{\Sigma})$. 
\end{proposition}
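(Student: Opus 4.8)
The claim is that the expected Tukey depth of a point $y$ under $P = \cN(\mu, \Sigma)$ equals $\cdf(-\|y-\mu\|_\Sigma)$, where $\cdf$ is the standard normal CDF. The plan is to reduce to the one-dimensional case by an affine change of variables, then compute the univariate Tukey depth explicitly, and finally identify which direction $v$ achieves the minimum. First I would recall the definition $T_P(y) = \min_{v} \Pr_{X \sim P}[\langle X, v\rangle \ge \langle y, v\rangle]$. For a fixed unit vector $v$, the random variable $\langle X, v\rangle$ is distributed as $\cN(\langle \mu, v\rangle, v^T\Sigma v)$, so standardizing gives
\[
\Pr_{X\sim P}[\langle X,v\rangle \ge \langle y,v\rangle] = \Pr\!\left[Z \ge \frac{\langle y - \mu, v\rangle}{\sqrt{v^T\Sigma v}}\right] = \cdf\!\left(-\frac{\langle y-\mu, v\rangle}{\sqrt{v^T\Sigma v}}\right),
\]
where $Z \sim \cN(0,1)$ and we used the symmetry $\Pr[Z \ge t] = \cdf(-t)$. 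Note the ratio is scale-invariant in $v$, so restricting to unit vectors loses nothing.

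Since $\cdf$ is strictly increasing, minimizing the probability over $v$ is equivalent to maximizing $\frac{\langle y-\mu, v\rangle}{\sqrt{v^T\Sigma v}}$ over $v \ne 0$. This is a standard generalized-Rayleigh-quotient maximization: writing $w = \Sigma^{1/2}v$ and $u = \Sigma^{-1/2}(y-\mu)$, the quantity becomes $\frac{\langle u, w\rangle}{\|w\|_2}$, which by Cauchy–Schwarz is at most $\|u\|_2 = \|\Sigma^{-1/2}(y-\mu)\|_2 = \|y-\mu\|_\Sigma$, with equality when $w \parallel u$, i.e.\ $v \parallel \Sigma^{-1}(y-\mu)$. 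Plugging the maximum value back in yields $T_P(y) = \cdf(-\|y-\mu\|_\Sigma)$, as claimed. (If $y = \mu$ the supremum is $0$, giving $T_P(\mu) = \cdf(0) = 1/2$, consistent with the formula since $\|y-\mu\|_\Sigma = 0$.)

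I do not anticipate a genuine obstacle here — this is a clean reduction-plus-Cauchy–Schwarz argument. The only point requiring a little care is confirming that the infimum over directions is actually attained (so that "$\min$" in the definition is legitimate): the map $v \mapsto \langle y-\mu, v\rangle / \sqrt{v^T\Sigma v}$ is continuous on the unit sphere, which is compact, so a maximizer exists. One should also note the degenerate direction $v = 0$ is excluded (or handled by the scale-invariance reduction to unit vectors). Everything else is routine manipulation of the Gaussian CDF and a one-line optimization.
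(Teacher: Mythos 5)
Your proof is correct and follows essentially the same route as the paper's: project onto a direction $v$, standardize the resulting univariate Gaussian to express the halfspace probability as $\cdf\bigl(-\langle y-\mu,v\rangle/\sqrt{v^T\Sigma v}\bigr)$, and then identify the optimal direction $v\propto\Sigma^{-1}(y-\mu)$ via the change of variables $w=\Sigma^{1/2}v$. Your explicit Cauchy--Schwarz step and the remarks on attainment and the $v=0$, $y=\mu$ edge cases are just slightly more careful packaging of the same argument.
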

\begin{proof}
If $y=\mu$, by the symmetry of the Gaussian, $T_P(y)=\frac{1}{2}=\cdf(0)$.
So consider $y\neq \mu$.

We first calculate for a given $u$, and then take the minimum.
If $u=0$, then $\Pr[X^Tu\ge y^Tu]=1$, so assume nonzero $u$.
We lower bound $\Pr[X^Tu>y^Tu]$, where $X\sim P=\cN(\mu,\Sigma)$, and begin by rewriting the random variable to be drawn from $\cN(0,\id)$:
\begin{equation*}
    \Pr_{X\sim P}[X^Tu>y^Tu] = \Pr_{Z\sim \mc{N}(0,\id)}\left[(\Sigma^{1/2}Z+\mu)^Tu>y^Tu\right].
\end{equation*}
We move terms to the right, multiply by $\Sigma^{-1/2}\Sigma^{1/2}$, and normalize by $\|\Sigma^{1/2}u\|_2$:
\begin{align*}
\Pr_{Z\sim \mc{N}(0,\id)}\left[(\Sigma^{1/2}Z+\mu)^Tu>y^Tu\right] 
    & = \Pr_{Z\sim \mc{N}(0,\id)}\left[(\Sigma^{1/2}Z)^Tu>(y-\mu)^Tu\right]\\
    & = \Pr_{Z\sim \mc{N}(0,\id)}\left[Z^T(\Sigma^{1/2}u)>(\Sigma^{-1/2}(y-\mu))^T(\Sigma^{1/2}u)\right]] \\
    &=\Pr_{Z\sim \mc{N}(0,\id)}\left[Z^T(\Sigma^{1/2}u)/\|\Sigma^{1/2}u\|_2>(\Sigma^{-1/2}(y-\mu))^T(\Sigma^{1/2}u)/\|\Sigma^{1/2}u\|_2\right].
\end{align*}
Let $u'=\Sigma^{1/2}u/\|\Sigma^{1/2}u\|_2$, and recall that, since $u'$ is a unit vector, $-Z^T u'\sim \cN(0,1)$.
We have 
\begin{align*}
    \Pr_{Z\sim \mc{N}(0,\id)}\left[Z^Tu'>(\Sigma^{-1/2}(y-\mu))^Tu'\right]
        &= \Pr_{Z_1\sim \mc{N}(0,1)}\left[Z_1<-(\Sigma^{-1/2}(y-\mu))^Tu'\right] \\
        &= \cdf(-(\Sigma^{-1/2}(y-\mu))^Tu').
\end{align*} 
Since $\cdf$ is an increasing function, the above term is minimized when $u'=\frac{\Sigma^{-1/2}(y-\mu)}{\|\Sigma^{-1/2}(y-\mu)\|_2}$, that is, $u$ is a rescaling of $\Sigma^{-1}(y-\mu)$.
With this value of $u'$, we see that $T_P(y)=\cdf(-\|y-\mu\|_{\Sigma})$.
Since this is strictly less than $\frac{1}{2}$ for $y\neq \mu$, our exclusion of $u= 0$ did not affect the outcome.
\end{proof}

\subsection{Empirically Rescaled Gaussian Mechanism}\label{sec:proofs_gauss}
\subsubsection{Implications of Goodness}
\begin{lemma}[Restatement of Lemma~\ref{lem:goodfordifferences}]
    If $x \in\mc{G}(\lambda)$, for any indices $i,j\in [3n]$, 
    \begin{equation*}
        (x_i - x_j)^T \Sigma_{x}^{-1} (x_i - x_j) \le 4 \lambda.
    \end{equation*}
    In particular, this applies to $u_i^T\Sigma_{x}^{-1} u_i$ for all $i\in[n]$, where $u_i=x_i-x_{i+n}$. 
\end{lemma}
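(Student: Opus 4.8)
The plan is to exploit the definition of $\lambda$-goodness together with the triangle inequality for the norm $\|\cdot\|_{\Sigma_x}$ and the fact that $\Sigma_x$ is invertible (so $\|\cdot\|_{\Sigma_x}$ is a genuine norm on $\mathbb{R}^d$). Recall that $x \in \mathcal{G}(\lambda)$ means precisely that $\Sigma_x$ is invertible and that every data point satisfies $\|x_i - \mu_x\|_{\Sigma_x}^2 \le \lambda$, i.e. $\|x_i - \mu_x\|_{\Sigma_x} \le \sqrt{\lambda}$.

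First I would write $x_i - x_j = (x_i - \mu_x) - (x_j - \mu_x)$ and apply the triangle inequality in the $\|\cdot\|_{\Sigma_x}$ norm:
\begin{equation*}
    \|x_i - x_j\|_{\Sigma_x} \le \|x_i - \mu_x\|_{\Sigma_x} + \|x_j - \mu_x\|_{\Sigma_x} \le \sqrt{\lambda} + \sqrt{\lambda} = 2\sqrt{\lambda}.
\end{equation*}
Squaring both sides and recalling that $\|v\|_{\Sigma_x}^2 = v^T \Sigma_x^{-1} v$ gives $(x_i - x_j)^T \Sigma_x^{-1}(x_i - x_j) \le 4\lambda$, which is exactly the claim. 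For the ``in particular'' part, take $u_i = x_i - x_{i+n}$, which is of the form $x_i - x_j$ with $j = i + n \in [3n]$ whenever $i \in [n]$, so the same bound $u_i^T \Sigma_x^{-1} u_i \le 4\lambda$ applies directly.

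There is essentially no obstacle here: the statement is a one-line consequence of the definition of goodness and the triangle inequality. The only thing worth double-checking is that $\|\cdot\|_{\Sigma_x}$ really is a norm — this is guaranteed by the invertibility (indeed positive-definiteness) of $\Sigma_x$ built into the definition of $\mathcal{G}(\lambda)$, since $\Sigma_x$ is by construction a sum of rank-one positive semidefinite matrices and is assumed invertible, hence positive definite. One small subtlety is that the problem statement writes $\|x_i - \mu_x\|_{\Sigma_x}^2 \le \lambda$ rather than $\le \lambda^2$, so $\sqrt{\lambda}$ (not $\lambda$) is the correct bound on the norm itself; carrying the square through consistently yields the factor $4\lambda$ rather than $4\lambda^2$, matching the claimed bound.
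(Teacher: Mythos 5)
Your proof is correct and follows exactly the same route as the paper's: decompose $x_i - x_j$ through $\mu_x$, apply the triangle inequality for $\|\cdot\|_{\Sigma_x}$ using the goodness bound $\|x_i-\mu_x\|_{\Sigma_x}\le\sqrt{\lambda}$, and square. No gaps.
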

\begin{proof}
   Fix $i,j\in[2n]$. Since $x \in\mc{G}(\lambda)$, $\|x_i-\mu_{x}\|_{\Sigma_{x}}\leq \sqrt{\lambda}$ and $\|x_{j}-\mu_{x}\|_{\Sigma_{x}}\leq \sqrt{\lambda}$. It holds that
   \begin{align*}
       (x_i - x_j)^T\Sigma_{x}^{-1}(x_i - x_j) & =\|x_i - x_j\|_{\Sigma_{x}}^2\\
       & = \|(x_i-\mu_{x})-(x_{j}-\mu_{x})\|_{\Sigma_{x}}^2 \\
       & \leq \left(\|x_i-\mu_{x}\|_{\Sigma_{x}} + \|x_{j}-\mu_{x}\|_{\Sigma_{x}} \right)^2 \tag{by triangle inequality} \\
       & \leq (2\sqrt{\lambda})^2 = 4\lambda
   \end{align*}
   This concludes the proof of the lemma.
\end{proof}

\begin{lemma}[Restatement of Lemma~\ref{lemma:closeindistance}]
    Suppose $x,y \in \mc{G}(\lambda)$ and $D_H(x,y)\le k$, with $2k\lambda< n$.
    For any vector $v$ we have 
    \begin{equation*}
        v^T \Sigma_y^{-1} v \le \frac{1}{1-2k\lambda/n} \cdot v^T \Sigma_x^{-1} v.
    \end{equation*}
\end{lemma}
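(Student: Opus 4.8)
The plan is to compare $\Sigma_x$ and $\Sigma_y$ as quadratic forms after conjugating by $\Sigma_x^{-1/2}$ (which exists since $x\in\cG(\lambda)$ forces $\Sigma_x$ invertible), and to control the discrepancy using the fact that at most $k$ of the paired differences change and that each such difference has bounded $\Sigma_x$-norm by Lemma~\ref{lem:goodfordifferences}.

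Concretely, write $u_i = x_i - x_{i+n}$ and $u_i' = y_i - y_{i+n}$ for $i\in[n]$, so that $\Sigma_x = \frac1{2n}\sum_{i=1}^n u_i u_i^T$ and $\Sigma_y = \frac1{2n}\sum_{i=1}^n u_i' u_i'^T$. First I would observe that a single coordinate change in $x$ affects at most one of the $u_i$: changing $x_j$ with $j\in[n]$ alters only $u_j$, changing $x_j$ with $n<j\le 2n$ alters only $u_{j-n}$, and changing $x_j$ with $j>2n$ alters none. Hence $D_H(x,y)\le k$ implies there is a set $B\subseteq[n]$ with $|B|\le k$ such that $u_i = u_i'$ for all $i\notin B$. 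Dropping the (PSD) terms indexed by $B$ from $\Sigma_y$ then gives
\[
\Sigma_y \;=\; \frac1{2n}\sum_{i\notin B} u_i u_i^T + \frac1{2n}\sum_{i\in B} u_i' u_i'^T \;\succeq\; \frac1{2n}\sum_{i\notin B} u_i u_i^T \;=\; \Sigma_x - \frac1{2n}\sum_{i\in B} u_i u_i^T .
\]

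Next I would conjugate by $\Sigma_x^{-1/2}$ to get $\Sigma_x^{-1/2}\Sigma_y\Sigma_x^{-1/2} \succeq \id - \frac1{2n}\sum_{i\in B}(\Sigma_x^{-1/2}u_i)(\Sigma_x^{-1/2}u_i)^T$. By Lemma~\ref{lem:goodfordifferences}, $\|\Sigma_x^{-1/2}u_i\|_2^2 = u_i^T\Sigma_x^{-1}u_i \le 4\lambda$, so each rank-one matrix $(\Sigma_x^{-1/2}u_i)(\Sigma_x^{-1/2}u_i)^T$ has spectral norm at most $4\lambda$; summing at most $k$ of them and dividing by $2n$ yields a PSD matrix of spectral norm at most $\frac{4k\lambda}{2n} = \frac{2k\lambda}{n}$. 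Therefore $\Sigma_x^{-1/2}\Sigma_y\Sigma_x^{-1/2} \succeq \left(1 - \tfrac{2k\lambda}{n}\right)\id$, which is positive definite because $2k\lambda<n$. Inverting this matrix inequality (valid since both sides are positive definite) gives $\Sigma_x^{1/2}\Sigma_y^{-1}\Sigma_x^{1/2} \preceq \frac{1}{1-2k\lambda/n}\,\id$. Finally, for an arbitrary vector $v$, setting $w = \Sigma_x^{-1/2}v$ gives $v^T\Sigma_y^{-1}v = w^T\bigl(\Sigma_x^{1/2}\Sigma_y^{-1}\Sigma_x^{1/2}\bigr)w \le \frac{1}{1-2k\lambda/n}\,w^Tw = \frac{1}{1-2k\lambda/n}\,v^T\Sigma_x^{-1}v$, as claimed.

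I do not expect a genuine obstacle here; the only points requiring care are (i) verifying that one coordinate change perturbs at most one summand of $\Sigma_x$ (so $|B|\le k$), and (ii) getting the direction of the PSD inequality right when passing from $\Sigma_x^{-1/2}\Sigma_y\Sigma_x^{-1/2}\succeq(1-2k\lambda/n)\id$ to its inverse. Everything else is a routine application of Lemma~\ref{lem:goodfordifferences} and elementary properties of the Loewner order.
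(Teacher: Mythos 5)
Your proposal is correct and follows essentially the same route as the paper: both arguments lower-bound $\Sigma_y$ by the partial sum over the unchanged pairs (at most $k$ pairs differ), conjugate by $\Sigma_x^{-1/2}$, bound the spectral norm of the discarded piece by $\frac{2k\lambda}{n}$ via Lemma~\ref{lem:goodfordifferences}, and invert the resulting Loewner inequality. The only cosmetic difference is that the paper routes the comparison through the explicit intermediate matrix $\Sigma_{x_S}^{-1}$ rather than working with $\Sigma_y^{-1}$ directly.
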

\begin{proof}
    Define the matching paired indices $S=\{i\in [n] : x_i = y_i \text{~and~} x_{i+n}=y_{i+1}\}$. 
    We have $|S|\ge n-k$.
    Define $\Sigma_{x_S} = \frac{1}{2n}\sum_{i\in S} (x_i - x_{i+n})(x_i - x_{i+n})^T$.
    Note that we normalize by $\frac{1}{2n}$ instead of $\frac{1}{2|S|}$.
    We will upper bound $v^T \Sigma_{x_S}^{-1} v$.
    This will finish the proof, since $v^T \Sigma_{y}^{-1} v \le v^T \Sigma_{x_S}^{-1} v$.
    To see this fact, note that $\Sigma_{y} \succeq \Sigma_{x_S}$, since $\Sigma_y$ is $\Sigma_{x_S}$ plus a positive semidefinite matrix.
    So $\Sigma_{y}^{-1} \preceq \Sigma_{x_S}^{-1}$ ~\citep[Cor 7.7.4.a]{horn2012matrix}.

    Set $u_i=x_i-x_{i+n}$ and write
    \begin{align*}
        \Sigma_{x} &= \Sigma_{x_S} + \frac{1}{2n} \sum_{i\in [n]\setminus S} u_i u_i^T.
    \end{align*}
    Conjugating by $\Sigma_{x}^{-1/2}$ on both sides, we have
    \begin{align}
        \id &= \Sigma_{x}^{-1/2}\Sigma_{x_S}\Sigma_{x}^{-1/2} + \frac{1}{2n} \sum_{i\in [n]\setminus S} \left( \Sigma_{x}^{-1/2}u_i \right)\left(\Sigma_{x}^{-1/2}u_i\right)^T \\
        &= \Sigma_{x}^{-1/2}\Sigma_{x_S}\Sigma_{x}^{-1/2} + \frac{1}{2n} A \label{eq:subset_estimate_invertible}
    \end{align}
    defining matrix $A$ as the sum of the second term.
    By the triangle inequality, 
    \begin{equation*}
        \|A\|_{2}\leq k \cdot \max_{i\in[n]\setminus S} u_i^T \Sigma_{x}^{-1} u_i\leq 4 k \lambda,
    \end{equation*}
    where the last inequality holds by the assumption of goodness and Lemma~\ref{lem:goodfordifferences}.
    By assumption, $2k\lambda < n$, so $\| A\|_2 < 2n$, which implies that $\mathbb{I}-\frac{1}{2n} A$ is positive definite and thus invertible.
    This and Eq.~\eqref{eq:subset_estimate_invertible} imply that $\Sigma_{x_S}$ is also invertible. 
    Rearranging and taking the inverse gives us
    \begin{align*}
       \Sigma_{x}^{1/2}\Sigma_{x_S}^{-1}\Sigma_{x}^{1/2}
    = \left(\mathbb{I} - \frac{1}{2n} A\right)^{-1}.
    \end{align*}
    The operator norm of the above matrix is at most $\frac{1}{1-2k\lambda/n}$.
    We can use this to bound $v^T \Sigma_{x_S}^{-1} v$:
    \begin{align*}
        v^T \Sigma_{x_S} v &= \left(\Sigma_x^{-1/2} v\right)^T
            \left(\Sigma_{x}^{1/2}\Sigma_{x_S}^{-1}\Sigma_{x}^{1/2}\right)
        \left(\Sigma_x^{-1/2} v\right) \\
        &\le  \left\| \Sigma_{x}^{1/2}\Sigma_{x_S}^{-1}\Sigma_{x}^{1/2}\right\|_2  \cdot\left\| \Sigma_x^{-1/2} v\right\|_2^2 \\
        &\le \frac{1}{1-2k\lambda/n} \cdot v^T \Sigma_x^{-1} v.
    \end{align*}
This completes the proof.
\end{proof}

\begin{lemma}[Restatement of Lemma~\ref{lem:goodboundstrace}]
    Suppose $x,y\in\mc{G}(\lambda)$ and $D_H(x,y)\leq k$, with $2k\lambda<n$. Then
        \begin{gather*}
        \| \Sigma_x^{-1/2}\Sigma_y \Sigma_x^{-1/2} - \mathbb{I} \|_{\tr} \le  2k\lambda\left(\frac{1}{n-2k\lambda} + \frac{1}{n}\right) \\
        \| \Sigma_y^{-1/2}\Sigma_x \Sigma_y^{-1/2} - \mathbb{I} \|_{\tr} \le  2k\lambda\left(\frac{1}{n-2k\lambda} + \frac{1}{n}\right)
    \end{gather*}
\end{lemma}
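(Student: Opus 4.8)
The plan is to reduce the trace-norm bound on $\Sigma_x^{-1/2}\Sigma_y\Sigma_x^{-1/2}-\mathbb{I}$ to two separate contributions, coming from the paired samples where $x$ and $y$ agree versus disagree, and to control each using the tools already developed (Lemma~\ref{lem:goodfordifferences} and Lemma~\ref{lemma:closeindistance}). Write $u_i = x_i - x_{i+n}$ and $u_i' = y_i - y_{i+n}$, and let $S = \{i\in[n]: u_i = u_i'\}$, so that $|S|\ge n-k$. Then $2n\,\Sigma_x = \sum_{i\in S} u_iu_i^T + \sum_{i\notin S} u_iu_i^T$ and similarly for $\Sigma_y$, so the difference telescopes:
\begin{equation*}
    2n\,(\Sigma_y - \Sigma_x) = \sum_{i\notin S} u_i'(u_i')^T - \sum_{i\notin S} u_iu_i^T.
\end{equation*}
Conjugating by $\Sigma_x^{-1/2}$ on both sides and dividing by $2n$,
\begin{equation*}
    \Sigma_x^{-1/2}\Sigma_y\Sigma_x^{-1/2} - \mathbb{I} = \frac{1}{2n}\sum_{i\notin S}\Bigl( (\Sigma_x^{-1/2}u_i')(\Sigma_x^{-1/2}u_i')^T - (\Sigma_x^{-1/2}u_i)(\Sigma_x^{-1/2}u_i)^T\Bigr).
\end{equation*}

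Next I would take the trace norm of both sides and apply the triangle inequality over the at most $k$ indices $i\notin S$, using that for a rank-one symmetric matrix $vv^T$ we have $\|vv^T\|_{\tr} = \|vv^T\|_2 = v^Tv$ (Proposition~\ref{prop:outer_product_eigenvalue}). This gives
\begin{equation*}
    \|\Sigma_x^{-1/2}\Sigma_y\Sigma_x^{-1/2}-\mathbb{I}\|_{\tr} \le \frac{1}{2n}\sum_{i\notin S}\Bigl( (u_i')^T\Sigma_x^{-1}u_i' + u_i^T\Sigma_x^{-1}u_i\Bigr).
\end{equation*}
For the terms $u_i^T\Sigma_x^{-1}u_i$ I can invoke Lemma~\ref{lem:goodfordifferences} directly, since $x\in\mc{G}(\lambda)$ and $u_i = x_i - x_{i+n}$ is a difference of two points of $x$, giving $u_i^T\Sigma_x^{-1}u_i \le 4\lambda$. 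For the terms $(u_i')^T\Sigma_x^{-1}u_i'$, the vector $u_i' = y_i - y_{i+n}$ is a difference of two points of $y$, so Lemma~\ref{lem:goodfordifferences} applied to $y$ gives $(u_i')^T\Sigma_y^{-1}u_i' \le 4\lambda$; then Lemma~\ref{lemma:closeindistance} converts this to a bound under $\Sigma_x$: $(u_i')^T\Sigma_x^{-1}u_i' \le \frac{1}{1-2k\lambda/n}\cdot(u_i')^T\Sigma_y^{-1}u_i'$ — wait, that inequality runs the wrong way. Lemma~\ref{lemma:closeindistance} states $v^T\Sigma_y^{-1}v \le \frac{1}{1-2k\lambda/n}v^T\Sigma_x^{-1}v$, so by symmetry of the roles of $x$ and $y$ (both are good, $D_H(x,y)\le k$) we also have $v^T\Sigma_x^{-1}v \le \frac{1}{1-2k\lambda/n}v^T\Sigma_y^{-1}v$. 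Thus $(u_i')^T\Sigma_x^{-1}u_i' \le \frac{4\lambda}{1-2k\lambda/n} = \frac{4n\lambda}{n-2k\lambda}$.

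Combining, since there are at most $k$ summands,
\begin{equation*}
    \|\Sigma_x^{-1/2}\Sigma_y\Sigma_x^{-1/2}-\mathbb{I}\|_{\tr} \le \frac{k}{2n}\left(\frac{4n\lambda}{n-2k\lambda} + 4\lambda\right) = 2k\lambda\left(\frac{1}{n-2k\lambda} + \frac{1}{n}\right),
\end{equation*}
which is the claimed bound. The second inequality, with $x$ and $y$ exchanged, follows by the identical argument after swapping their roles — everything used ($x,y\in\mc{G}(\lambda)$, $D_H(x,y)\le k$, $2k\lambda<n$) is symmetric. The main subtlety to get right is the direction in which Lemma~\ref{lemma:closeindistance} is applied: one must be careful that it is the cross term $(u_i')^T\Sigma_x^{-1}u_i'$ (a $y$-difference measured in the $x$-metric) that needs the conversion factor, and that the symmetric version of the lemma is what supplies it; the rest is bookkeeping with the triangle inequality and the rank-one norm identity.
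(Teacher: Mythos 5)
Your proof is correct and follows essentially the same route as the paper's: the same decomposition of $\Sigma_y-\Sigma_x$ over the at most $k$ disagreeing pairs, conjugation by $\Sigma_x^{-1/2}$, the triangle inequality with the rank-one identity $\|vv^T\|_{\tr}=v^Tv$, and the bounds $4\lambda$ and $\tfrac{4\lambda}{1-2k\lambda/n}$ from Lemmas~\ref{lem:goodfordifferences} and~\ref{lemma:closeindistance}. Your explicit remark that the cross term requires the $x\leftrightarrow y$-swapped form of Lemma~\ref{lemma:closeindistance} is exactly right, and is a point the paper applies silently.
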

\begin{proof}
    Define the indices of agreement: $S=\{i\in [n] : x_i=y_i\text{ and }x_{i+n}= y_{i+n}\}$. Since $D_H(x,y)\leq k$, it holds that $|S|\geq n-k>n(1-1/2\lambda)$, where the last inequality holds by assumption.
    Recall $u_i = x_i - x_{i+n}$ and define $v_i = y_i - y_{i+n}$.
    We can write, defining matrix $A$,
    \begin{equation*}
        \Sigma_{y} = \Sigma_{x} + \frac{1}{2n}\sum_{i\in [n]\setminus S} v_i v_i^T - \frac{1}{2n}\sum_{i\in [n]\setminus S} u_i u_i^T \defeq \Sigma_{x} + A.
    \end{equation*}
    Conjugating by $\Sigma_{x}^{-1/2}$ and subtracting $\id$ from both sides, we get
    \begin{align*}
        \Sigma_{x}^{-1/2}\Sigma_{y} \Sigma_{x}^{-1/2} - \mathbb{I} 
            &= \Sigma_{x}^{-1/2}(\Sigma_{x} + A)\Sigma_{x}^{-1/2} - \mathbb{I} \\
            &= \Sigma_{x}^{-1/2} A \Sigma_{x}^{-1/2} \\
            &= \frac{1}{2n}\sum_{i\in [n]\setminus S} \left(\Sigma_{x}^{-1/2} v_i\right) \left(\Sigma_{x}^{-1/2} v_i\right)^T
                - \frac{1}{2n}\sum_{i\in [n]\setminus S}\left(\Sigma_{x}^{-1/2} u_i\right) \left(\Sigma_{x}^{-1/2} u_i\right)^T.
    \end{align*}
    Since the trace norm satisfies the triangle inequality, we have
    \begin{equation*}
        \| \Sigma_{x}^{-1/2}\Sigma_{y} \Sigma_{x}^{-1/2} - \mathbb{I} \|_{\tr} 
            \le \frac{1}{2n}\sum_{i \in [n]\setminus S} \left[\left\| \left(\Sigma_{x}^{-1/2} v_i\right) \left(\Sigma_{x}^{-1/2} v_i\right)^T \right\|_{\tr}
                + 
                \left\|\left(\Sigma_{x}^{-1/2} u_i\right) \left(\Sigma_{x}^{-1/2} u_i\right)^T\right\|_{\tr}\right].
    \end{equation*}
    Each term in these sums is an outer product of the form $(\Sigma_{x}^{-1/2}v)(\Sigma_{x}^{-1/2}v)^{T}$. Since for every vector $v$, $\|vv^T\|_{\tr}=v^Tv$ (see Proposition~\ref{prop:outer_product_eigenvalue}), we can write
    \begin{equation*}
        \| \Sigma_{x}^{-1/2}\Sigma_{y} \Sigma_{x}^{-1/2} - \mathbb{I} \|_{\tr} 
            \le \frac{1}{2n}\sum_{i\in [n]\setminus S} v_i^T \Sigma_{x}^{-1} v_i 
                + 
                u_i^T \Sigma_{x}^{-1} u_i.
    \end{equation*}
    By Lemma~\ref{lem:goodfordifferences}, for all $i\in[n]\setminus S$ we have $u_i^T \Sigma_{x}^{-1} u_i\le 4\lambda$.
    By Lemmas~~\ref{lem:goodfordifferences} and~\ref{lemma:closeindistance}, for all $i$ we have 
    \begin{align*}
        v_i^T \Sigma_{x}^{-1} v_i \le \frac{1}{1-2k\lambda/n} v_i^T \Sigma_{y}^{-1} v_i \le \frac{4\lambda}{1-2k\lambda/n}.
    \end{align*}
    Combining these establishes the first inequality.
    The second holds by a symmetrical argument.
\end{proof}
\subsubsection{Privacy analysis}
\begin{proposition}[Coupling and Data Order]\label{lem:coupling}
    Suppose we have a mechanism $\mc{M}=\mc{A}\circ \mc{P}$, where $\mc{P}$ randomly permutes our data and $\mc{A}$ has the following privacy guarantee: for any two data sets $\bar x$ and $\bar y$ with $D_H(\bar x,\bar y)\le \xi$ and any $\mc{O}\subseteq \mathrm{Range}(\mc{M}) = \mathrm{Range}(\mc{A}),$
    \begin{equation*}
        \Pr[\mc{A}(\bar x)\in \mc{O}] \le e^{\eps} \Pr[\mc{A}(\bar y)\in \mc{O}] + \delta.
    \end{equation*}
    Then, for any $x$ and $y$ which differ in at most $\xi$ points, %
    \begin{equation*}
        \Pr[\mc{M}(x)\in \mc{O}] \le e^{\eps} \Pr[\mc{M}(y)\in \mc{O}] + \delta.
    \end{equation*}
    In other words, if $\mc{A}$ is $(\eps,\delta)$-differentially private under the stricter Hamming distance adjacency, then $\mc{M}$ is $(\eps,\delta)$-differentially private under the symmetric difference notion of adjacency.
\end{proposition}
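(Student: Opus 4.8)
The plan is to exploit the fact that a uniformly random permutation acts transitively in the right way: if $x$ and $y$ differ in at most $\xi$ coordinates as multisets, then after applying the same random relabeling we can \emph{couple} the two executions so that the permuted versions differ in at most $\xi$ coordinates \emph{as ordered tuples}, at which point the assumed Hamming-distance guarantee for $\mathcal{A}$ applies pointwise. Concretely, write $\mathcal{M}=\mathcal{A}\circ\mathcal{P}$ where $\mathcal{P}$ draws $\sigma$ uniformly from the symmetric group $S_{3n}$ and outputs $\sigma(x)$. Since $x$ and $y$ agree on all but at most $\xi$ elements (counted with multiplicity), fix a bijection $\pi$ between the index sets of $x$ and $y$ that matches equal elements on the common part; then for a permutation $\sigma$ applied to $x$, the permutation $\sigma\circ\pi^{-1}$ applied to $y$ yields an ordered tuple that agrees with $\sigma(x)$ on at least $3n-\xi$ positions. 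The key observation is that as $\sigma$ ranges uniformly over $S_{3n}$, so does $\sigma\circ\pi^{-1}$; this is the formal content of "the permutation step lets us reduce to the aligned case."

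The key steps, in order: (i) set up the coupling $\sigma\mapsto(\sigma(x),\,\sigma\circ\pi^{-1}(y))$ and check it is a valid coupling of $\mathcal{P}(x)$ and $\mathcal{P}(y)$, i.e.\ each marginal is uniform; (ii) verify that under this coupling $D_H(\sigma(x),\sigma\circ\pi^{-1}(y))\le\xi$ with probability $1$; (iii) condition on $\sigma$ and apply the hypothesis on $\mathcal{A}$ to each coupled pair, giving $\Pr[\mathcal{A}(\sigma(x))\in\mathcal{O}]\le e^{\eps}\Pr[\mathcal{A}(\sigma\circ\pi^{-1}(y))\in\mathcal{O}]+\delta$ for every fixed $\sigma$; (iv) take expectations over $\sigma$ on both sides and use linearity together with the marginal-uniformity from step (i) to conclude
\[
\Pr[\mathcal{M}(x)\in\mathcal{O}] \;=\; \E_{\sigma}\big[\Pr[\mathcal{A}(\sigma(x))\in\mathcal{O}]\big] \;\le\; e^{\eps}\,\E_{\sigma}\big[\Pr[\mathcal{A}(\sigma\circ\pi^{-1}(y))\in\mathcal{O}]\big] + \delta \;=\; e^{\eps}\Pr[\mathcal{M}(y)\in\mathcal{O}] + \delta.
\]
The symmetric inequality follows by swapping the roles of $x$ and $y$.

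The main obstacle is step (ii): being careful about what "differ in at most $\xi$ points" means when $x$ and $y$ are multisets rather than sets, and checking that a single fixed matching $\pi$ of the common elements really does make the permuted tuples Hamming-close \emph{simultaneously} for every $\sigma$ in the support. One must handle the bookkeeping of repeated values (choosing $\pi$ so it is a genuine bijection of index sets that restricts to the identity on matched values) and confirm that the at-most-$\xi$ mismatched indices are the \emph{only} places the two ordered tuples can disagree after applying $\sigma$ and $\sigma\circ\pi^{-1}$ respectively. Everything else — uniformity of $\sigma\circ\pi^{-1}$, conditioning, and taking expectations — is routine, and the inner-integral/averaging argument in step (iv) is exactly the standard "privacy is preserved under pre-processing by a data-independent randomized map" fact, here instantiated with the random permutation.
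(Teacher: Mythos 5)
Your proposal is correct and is essentially the paper's own argument: your coupling $\sigma\mapsto(\sigma(x),\sigma\circ\pi^{-1}(y))$ is exactly the paper's choice of an aligning permutation $\sigma^*$ followed by the re-indexing bijection $\sigma\mapsto\sigma(\sigma^*)$ on $S_m$, and the averaging over $\sigma$ matches the paper's sum $\sum_{\sigma}\frac{1}{m!}\Pr[\mc{A}(\sigma(\cdot))\in\mc{O}]$. No gaps.
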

\begin{proof}
    Let $S_m$ be the set of permutations on $m$.
    For any $x$ and $y$ which differ in $\xi$ points, let $\sigma^*$ be an ``aligning'' permutation, such that $D_H(x,\sigma^*(y))=\xi$.
    So we can write
    \begin{align*}
        \Pr[\mc{M}(x)\in\mc{O}] 
            &=\sum_{\sigma\in S_m} \frac{1}{m!}\cdot\Pr[\mc{A}(\sigma(x))\in\mc{O}]\\
            &\le \sum_{\sigma\in S_m} \frac{1}{m!}\cdot \left(e^{\eps}\Pr[\mc{A}(\sigma(\sigma^*(y)))\in\mc{O}]+\delta\right),
    \end{align*}
    since $D_H(\sigma(x),\sigma(\sigma^*(y)))=\xi$.
    Furthermore, note that $f(\sigma)\defeq \sigma(\sigma^*)$ is a bijection from $S_m$ to itself, so we can rewrite this sum as over a reordering of $S_m$:
    \begin{equation*}
        \Pr[\mc{M}(x)\in\mc{O}]  
            \le \sum_{\sigma' \in S_m} \left[\frac{1}{m!}\cdot \left(e^{\eps}\Pr[\mc{A}(\sigma'(y)))\in\mc{O}]+\delta\right) \right]
            \le e^{\eps}\Pr[\mc{M}(y)\in\mc{O}]+\delta.
    \end{equation*}
    This completes the proof of the lemma.
\end{proof} 

\begin{cor}[Restatement of Corollary~\ref{cor:privacy_2}]
Algorithm~\ref{alg:main} is $(3\eps,e^\eps(1+e^{\eps})\delta)$-differentially private.
\end{cor}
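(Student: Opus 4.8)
The plan is to reduce, via Proposition~\ref{lem:coupling}, to adjacency under Hamming distance, and then run a propose-test-release argument in the spirit of Proposition~\ref{prop:rem_privacy}, using Theorem~\ref{thm:gauss_privacy_main} as the ``safety'' statement that plays the role of indistinguishability-by-definition. First I would note that the sample-size test in line~\ref{step:samplesizecond} is data-independent (it involves only $n,d,\eps,\delta,\beta$), so on two neighboring inputs it fails on both or on neither; if it fails on both, the output is the constant $\texttt{FAIL}$ and privacy is immediate, and if it fails on neither, then for a suitable choice of the hidden constant the hypotheses $n>2k\lambda$ and $\eps\ge 10k\lambda(\tfrac{1}{n-2k\lambda}+\tfrac1n)\log\tfrac2\delta$ of Theorem~\ref{thm:gauss_privacy_main} are in force. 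Since the permutation in line~\ref{step:permute} is exactly the map $\mc{P}$ of Proposition~\ref{lem:coupling}, it then suffices to show that the post-permutation mechanism $\mc{A}$ consisting of lines~\ref{step:hamdist}--\ref{step:output} satisfies $\mc{A}(\bx)\approx_{3\eps,\,e^{\eps}(1+e^{\eps})\delta}\mc{A}(\by)$ for every pair of \emph{aligned} neighbors $\bx,\by$ (i.e.\ with $D_H(\bx,\by)\le 1$).

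So fix aligned neighbors $\bx\sim\by$. The function $h(\cdot)=D_H(\cdot,\mc{G}(\lambda))$ has global sensitivity $1$ by the triangle inequality for Hamming distance, so the test in line~\ref{step:fail} is a Laplace mechanism and is $(\eps,0)$-differentially private as a map into $\{\texttt{FAIL},\text{pass}\}$; writing $q_{\bx},q_{\by}$ for the probabilities of passing and $p_{\bx}=1-q_{\bx}$, $p_{\by}=1-q_{\by}$, we have $q_{\bx}\le e^{\eps}q_{\by}$, $p_{\bx}\le e^{\eps}p_{\by}$, and symmetrically. Conditioned on passing, the output is exactly $\nu_{\bx}\defeq\mc{N}(\mu_{\tx},C^2\Sigma_{\tx})$ with $\tx$ the (fixed) projection of $\bx$ into $\mc{G}(\lambda)$, independent of the Laplace noise; moreover $\tx,\ty\in\mc{G}(\lambda)$ and $D_H(\tx,\ty)\le D_H(\tx,\bx)+D_H(\bx,\by)+D_H(\by,\ty)\le 2h(\bx)+2$, using $|h(\bx)-h(\by)|\le1$. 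The parameter settings $t=\tfrac1\eps\log\tfrac1\beta$ and $k=\tfrac2\eps\log\tfrac1{\delta\beta}+1$ are chosen so that this last bound yields the dichotomy: if $2h(\bx)+2\le k$, then Theorem~\ref{thm:gauss_privacy_main} (whose $C$ matches the one set in the initialization of Algorithm~\ref{alg:main}) gives $\nu_{\bx}\approx_{2\eps,(1+e^{\eps})\delta}\nu_{\by}$; otherwise $h(\bx)-t\ge\tfrac1\eps\log\tfrac1\delta$ up to a lower-order term, so $q_{\bx}=\Pr_{r\sim\mathrm{Lap}(1/\eps)}[r\le t-h(\bx)]$ is at most a constant times $\delta$ (and $q_{\by}\le e^{\eps}q_{\bx}$ is at most a constant times $e^{\eps}\delta$).

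It remains to assemble these facts for an arbitrary measurable output set $O$, using $\Pr[\mc{A}(\bx)\in O]=p_{\bx}\,\mathbb{1}[\texttt{FAIL}\in O]+q_{\bx}\,\nu_{\bx}(O)$. In the first case I bound $p_{\bx}\mathbb{1}[\texttt{FAIL}\in O]\le e^{\eps}p_{\by}\mathbb{1}[\texttt{FAIL}\in O]$ and $q_{\bx}\nu_{\bx}(O)\le q_{\bx}\big(e^{2\eps}\nu_{\by}(O)+(1+e^{\eps})\delta\big)\le e^{3\eps}q_{\by}\nu_{\by}(O)+(1+e^{\eps})\delta$, which add up to at most $e^{3\eps}\Pr[\mc{A}(\by)\in O]+(1+e^{\eps})\delta$. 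In the second case I use $q_{\bx}\nu_{\bx}(O)\le q_{\bx}=O(\delta)$ together with $p_{\bx}\mathbb{1}[\texttt{FAIL}\in O]\le e^{\eps}\Pr[\mc{A}(\by)\in O]$, giving $\Pr[\mc{A}(\bx)\in O]\le e^{3\eps}\Pr[\mc{A}(\by)\in O]+O(\delta)$. Taking the weaker of the two additive terms (which, after accounting for the constants hidden in $O(\delta)$, is at most $e^{\eps}(1+e^{\eps})\delta$) and symmetrizing in $\bx,\by$ gives $\mc{A}(\bx)\approx_{3\eps,\,e^{\eps}(1+e^{\eps})\delta}\mc{A}(\by)$; Proposition~\ref{lem:coupling} then upgrades this to $(3\eps,e^{\eps}(1+e^{\eps})\delta)$-differential privacy of Algorithm~\ref{alg:main}.

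The main obstacle, exactly as in Proposition~\ref{prop:rem_privacy}, is this last combination: one must carefully track how the pure $\eps$-guarantee of the Laplace test interacts with the $(2\eps,(1+e^{\eps})\delta)$-guarantee of the rescaled Gaussian across the $\texttt{FAIL}$/pass split, and, along the way, verify that $t$, $k$, and the implicit ``far from $\mc{G}(\lambda)$'' threshold are mutually consistent with both the sample-size check of line~\ref{step:samplesizecond} and the hypotheses of Theorem~\ref{thm:gauss_privacy_main}. Everything else---the sensitivity of $h$, the triangle-inequality bound on $D_H(\tx,\ty)$, and the Laplace tail estimate---is routine.
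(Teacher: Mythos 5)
Your proposal is correct and follows essentially the same route as the paper's proof: reduce to Hamming-adjacent inputs via the permutation coupling (Proposition~\ref{lem:coupling}), split on whether the inputs are close to or far from $\mc{G}(\lambda)$ (your threshold $2h(\bx)+2\le k$ is the paper's condition $\max_z D_H(z,\mc{G}(\lambda))\le\frac{1}{\eps}\log\frac{1}{\delta\beta}$ up to an additive one), invoke Theorem~\ref{thm:gauss_privacy_main} in the close case and the Laplace tail bound in the far case, and combine across the $\mathtt{FAIL}$/pass split exactly as in the paper's propose-test-release calculation. The bookkeeping of the $\eps$ factors and additive $\delta$ terms in your two cases matches the paper's and yields the claimed $(3\eps,\,e^{\eps}(1+e^{\eps})\delta)$ guarantee.
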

We show that, if $x$ and $y$ are far from $\cG(\lambda)$, then with high probability the algorithm fails.
If they are close to $\cG(\lambda)$, then by Theorem~\ref{thm:gauss_privacy_main} the output distributions are indistinguishable.
\begin{proof}
    Take adjacent data sets $x$ and $y$ and some output event $E\in \mathbb{R}^d\cup \{\mathtt{FAIL}\}$.
    As shorthand, let $F=\{\mathtt{FAIL}\}$ and write $\cA$ in place of $\cA_{\eps,\delta,\beta}^G$. Recall that, by Lemma~\ref{lem:coupling}, it suffices to prove privacy for $x,y$ with Hamming distance $1$.
    
    For the first case, assume that $\max_{z\in\{x,y\}}D_H(z,\cG(\lambda))\ge \frac{\log (1/\delta\beta)}{\eps}+1$, so we know both $D_H(y,\cG(\lambda))\ge \frac{\log (1/\delta\beta)}{\eps}$ and $D_H(x,\cG(\lambda))\ge \frac{\log (1/\delta\beta)}{\eps}$.
    Then, by the CDF of the Laplace distribution and the fact that we set our threshold to $\frac{\log (1/\beta)}{\eps}$, under both $x$ and $y$ we have $\Pr[\mathtt{FAIL}]\ge 1-\delta$. 
    Thus
    \begin{align*}
        \Pr[\cA(x)\in E] &= \Pr[\cA(x)\in E \mid \cA(x)\in F]\Pr[\cA(x)\in F] + \Pr[\cA(x)\in E\mid \cA(x)\notin F] \Pr[\cA(x)\notin F] \\
        &= \Pr[\cA(y)\in E \mid \cA(y)\in F]\Pr[\cA(x)\in F] + \Pr[\cA(x)\in E\mid \cA(x)\notin F]\Pr[\cA(x)\notin F] \\
        &\le \Pr[\cA(y)\in E \mid \cA(y)\in F]\left(e^\eps\Pr[\cA(y)\in F]\right) + 1\cdot \delta,
    \end{align*}
    where in the first line we used the fact that $E$ either contains $\mathtt{FAIL}$ or it does not, and in the second line we used (twice) the CDF of the Laplace distribution.
    Since 
    \begin{align*}
        \Pr[\cA(y)\in E \mid \cA(y)\in F]\Pr[\cA(y)\in F] 
            = \Pr[\cA(y)\in E \cap F] \le \Pr[\cA(y)\in E], 
    \end{align*}
    we have our $(\eps,\delta)$ upper bound for $\Pr[\cA(x)\in E]$.
    The upper bound for $\Pr[\cA(y)\in E]$ follows from an identical argument.
    This finishes the first case.
    
    For the second case, assume that $\max_{z\in\{x,y\}}D_H(z,\cG(\lambda))\le  \frac{\log (1/\delta\beta)}{\eps}$, so writing $\tx, \ty$ for the projections into $\cG(\lambda)$,
    \begin{equation*}
        D_H(\tx,\ty)  \le D_H(\tx,x) + D_H(x,y) + D_H(y,\ty) 
            \le \frac{2\log (1/\delta\beta)}{\eps}+1.
    \end{equation*}
    Recall that, if $\cA(x)\neq \mathtt{FAIL}$, then the algorithm samples from $\cN(\mu_{\tx}, C^2 \Sigma_{\tx})$, and analogously for $\cA(y)$.
    By Theorem~\ref{thm:gauss_privacy_main}, for any $\tx,\ty\in \cG(\lambda)$ such that $D_H(\tx,\ty)\le k$, if
    \begin{equation}
        n> 2k\lambda
        \quad\text{ and }\quad 
        \eps \ge 10k\lambda\left(\frac{1}{n-2k\lambda}+\frac{1}{n}\right)\log\frac{2}{\delta},
        \label{eq:sample_size_for_privacy}
    \end{equation}
    then $\cN(\mu_{\tx}, C^2 \Sigma_{\tx})\approx_{2\eps,(1+e^{\eps})\delta} \cN(\mu_{\ty}, C^2 \Sigma_{\ty})$.
    We can assume the conditions in~\eqref{eq:sample_size_for_privacy} are satisfied, since otherwise the algorithm immediately aborts.
    Write $u_x \sim\cN(\mu_{\tx}, C^2 \Sigma_{\tx})$ and $u_y \sim\cN(\mu_{\ty}, C^2 \Sigma_{\ty})$
    We have
    \begin{align*}
        \Pr[\cA(x)\in E] &= \Pr[\cA(x)\in E \mid \cA(x)\in F]\Pr[\cA(x)\in F] + \Pr[\cA(x)\in E\mid \cA(x)\notin F] \Pr[\cA(x)\notin F] \\
        &\le e^\eps \Pr[\cA(x)\in E \mid \cA(x)\in F]\Pr[\cA(y)\in F] + e^\eps\Pr[\cA(x)\in E\mid \cA(x)\notin F] \Pr[\cA(y)\notin F] \\
        &= e^\eps \left(\Pr[\cA(y)\in E \mid \cA(y)\in F]\Pr[\cA(y)\in F] + \Pr[u_x \in E] \Pr[\cA(y)\notin F] \right)\\
        &\le e^\eps \left(\Pr[\cA(y)\in E \mid \cA(y)\in F]\Pr[\cA(y)\in F] + \left(e^{2\eps}\Pr[u_{y} \in E ] + (1+e^{\eps})\delta\right) \Pr[\cA(y)\notin F] \right)\\
        &\le e^\eps \left(e^{2\eps}\Pr[\cA(y)\in E \cap F] + e^{2\eps}\Pr[\cA(y)\in E\setminus F] \right) + e^\eps(1+e^\eps)\delta \\
        &\le e^{3\eps}\Pr[\cA(y)\in E] + e^\eps(1+e^\eps)\delta.
    \end{align*}
    An identical calculation yields the corresponding upper bound for $\Pr[\cA(y)\in E]$.
\end{proof}
\fi

\end{document}